\newcommand{\Rmnum}[1]{\expandafter\@slowromancap\romannumeral #1@}
\newtheorem{theorem}{Theorem}
\newtheorem*{theorem*}{Theorem}
\newtheorem{definition}{Definition}
\newtheorem{lemma}{Lemma}
\newtheorem*{lemma*}{Lemma}
\newtheorem{claim}{Claim}
\newtheorem*{cor*}{Corollary}
\newtheorem{remark}{Remark}
\newtheorem{proposition}{Proposition}
\newtheorem{fact}{Fact}
\newtheorem*{fact*}{Fact}
\newcommand{\TODO}[1]{{{\color{red}#1}}}
\newcommand{\namedref}[2]{\hyperref[#2]{#1~\ref*{#2}}}
\newcommand{\bzero}{\ensuremath{{\bf 0}}\xspace}
\newcommand{\ba}{\ensuremath{{\bf a}}\xspace}
\newcommand{\bb}{\ensuremath{{\bf b}}\xspace}
\newcommand{\bg}{\ensuremath{{\bf g}}\xspace}
\newcommand{\bx}{\ensuremath{{\bf x}}\xspace}
\newcommand{\btx}{\ensuremath{\widetilde{\bf x}}\xspace}
\newcommand{\blx}{\ensuremath{\overline{\bf x}}\xspace}
\newcommand{\by}{\ensuremath{{\bf y}}\xspace}
\newcommand{\bs}{\ensuremath{{\bf s}}\xspace}
\newcommand{\bu}{\ensuremath{{\bf u}}\xspace}
\newcommand{\bv}{\ensuremath{{\bf v}}\xspace}
\newcommand{\bI}{\ensuremath{{\bf I}}\xspace}
\newcommand{\bF}{\ensuremath{\boldsymbol{F}}\xspace}
\newcommand{\bX}{\ensuremath{{\bf X}}\xspace}
\newcommand{\bV}{\ensuremath{{\bf V}}\xspace}
\newcommand{\bW}{\ensuremath{{\bf W}}\xspace}
\newcommand{\bhX}{\ensuremath{\widehat{\bf X}}\xspace}
\newcommand{\blX}{\ensuremath{\overline{\bf X}}\xspace}
\newcommand{\bxi}{\ensuremath{\boldsymbol{\xi}}\xspace}
\newcommand{\C}{\ensuremath{\mathcal{C}}\xspace}
\newcommand{\bbE}{\ensuremath{\mathbb{E}}\xspace}
\newcommand{\bbN}{\ensuremath{\mathbb{N}}\xspace}
\newcommand{\bbR}{\ensuremath{\mathbb{R}}\xspace}
\newcommand{\I}{\ensuremath{\mathcal{I}}\xspace}
\newcommand{\R}{\ensuremath{\mathbb{R}}\xspace}
\renewcommand{\paragraph}[1]{\smallskip\noindent{\bf #1}~}
\setlist[itemize]{leftmargin=*,itemsep=1pt,parsep=0.5pt}
\newcommand{\verts}[1]{\left\Vert #1 \right\Vert}
\newcommand{\lragnle}[1]{\left\langle #1 \right\rangle}
\newcommand{\hatt}{\hat{t}}
\renewcommand{\(}{\left(}
\renewcommand{\)}{\right)}
\newtheorem{assumption}{Assumption}
\definecolor{brightmaroon}{rgb}{0.76, 0.13, 0.28}
\definecolor{ceruleanblue}{rgb}{0.16, 0.32, 0.75}
\definecolor{bluepigment}{rgb}{0.2, 0.2, 0.6}
\definecolor{amaranth}{rgb}{0.9, 0.17, 0.31}
\definecolor{auburn}{rgb}{0.43, 0.21, 0.1}
\definecolor{burntumber}{rgb}{0.54, 0.2, 0.14}
\begin{document}
	\title{SQuARM-SGD: Communication-Efficient Momentum SGD for Decentralized Optimization}

\author[1]{Navjot Singh}
\author[1]{Deepesh Data}
\author[2]{Jemin George}
\author[1]{Suhas Diggavi}

\affil[1]{University of California, Los Angeles, USA}
\affil[1] {\text{navjotsingh@ucla.edu, deepesh.data@gmail.com, suhas@ee.ucla.edu}\vspace{0.25cm}}
\affil[2]{US Army Research Lab, Maryland, USA}
\affil[2] {\text{jemin.george.civ@mail.mil}}

\date{\vspace{-5ex}}
\maketitle

\begin{abstract}
	In this paper, we propose and analyze SQuARM-SGD, a communication-efficient algorithm for decentralized training of large-scale machine learning models over a network. In SQuARM-SGD, each node performs a fixed number of local SGD steps using Nesterov's momentum and then sends sparsified and quantized updates to its neighbors regulated by a locally computable triggering criterion. We provide convergence guarantees of our algorithm for general (non-convex) and convex smooth objectives, which, to the best of our knowledge, is the first theoretical analysis for compressed decentralized SGD with momentum updates. We show that the convergence rate of SQuARM-SGD matches that of vanilla SGD. We empirically show that including momentum updates in SQuARM-SGD can lead to better test performance than the current state-of-the-art which does not consider momentum updates.
\end{abstract}


\section{Introduction}\label{intro}
\allowdisplaybreaks
{ As machine learning gets deployed over edge (wireless) devices (in
contrast to datacenter applications), the problem of building learning
models on local (heterogeneous) data with communication-efficient training  becomes important. These applications
motivate learning when data is collected/available locally, but
devices collectively help build a model through wireless links with
significant communication rate (bandwidth) constraints.\footnote{This
is also motivated by federated learning \cite{konevcny2016federated},
which is studied mostly for the client-server model.} 
Several methods have been developed recently to obtain communication-efficiency
in \emph{distributed} stochastic gradient descent (SGD). These methods
can be broadly divided into two categories. In the first one, 
workers \emph{compress} information/gradients before communicating -
either with \emph{sparsification} \cite{speech2,AjiHeafield17,DeepCompICLR18,stich_sparsified_2018,alistarh_convergence_2018}, \emph{quantization} \cite{alistarh-qsgd-17,terngrad,teertha,stich-signsgd-19,bernstein2018signsgd},
or both \cite{basu_qsparse-local-sgd:_2019}. Another way to reduce
communication is to skip communication rounds while performing a
certain number of \emph{local SGD} steps, thus trading-off computation and
communication
time \cite{stich_local_2018,alibaba_local,Coppola15}. Since
momentum-based methods generally converge faster and generalize well,
they have been adopted ubiquitously for training large-scale machine
learning models \cite{UnifiedAnalysis_Momentum18}.

To reduce communication load on the central-coordinator in the
distributed framework, a \emph{decentralized} setting has been
considered in literature \cite{lian2017can}, where the central
coordinator is absent, and training is performed collaboratively
among workers, which are connected by a (sparse) graph.\footnote{This
can also be motivated through learning over local wireless mesh (or ad
hoc) networks.}  Compressed communication has been studied recently
for decentralized training as
well \cite{Tong_decentralized18,singh2019sparq,
koloskova_decentralized_2019,koloskova_decentralized_2019-1,tang2019doublesqueeze}. 
Out of these \cite{Tong_decentralized18,
koloskova_decentralized_2019,koloskova_decentralized_2019-1,tang2019doublesqueeze} 
only employ either quantization or sparsification (without
local iterations or event-triggered communication), whereas, \cite{singh2019sparq} also incorporates event-triggering to achieve communication efficiency; see related work for a detailed comparison. We would like to remark two important aspects of these works: {\sf (i)} They rely on strong set of assumptions for their theoretical analyses: all of them assume a uniform bound on variance of stochastic gradients and also on the gradient dissimilarity across the clients, while \cite{singh2019sparq,koloskova_decentralized_2019,koloskova_decentralized_2019-1, tang2019doublesqueeze} assume a bound on the second moment of stochastic gradients. {\sf (ii)} None of these works incorporates momentum in their theoretical analyses, which has been very successful in achieving good generalization error in training large-scale machine learning models.


In this paper, we propose and analyze SQuARM-SGD,\footnote{Acronym stands for Sparsified
	and Quantized Action Regulated Momentum Stochastic Gradient Descent. See Algorithm \ref{alg_dec_sgd_li} for a description of
	SQuARM-SGD.} a communication efficient SGD algorithm for decentralized optimization that
incorporates Nesterov's momentum, compression and local iterations while considering a much weaker set of assumptions than existing literature. 

For compression, SQuARM-SGD uses both sparsification and quantization.
For event-triggered communication, each worker first performs a
certain number of {\em local SGD} iterations with momentum updates;
then in order to further reduce communication, it only does so if
there is a significant change in the local model parameters (greater
than a prescribed threshold) since its last communication. If there is
a significant model change, the worker communicates a sparsified and
quantized version of (the difference of) its local parameters (model)
to its neighbors. Therefore, this combines lazy updates along with
quantization and sparsification to enable communication-efficient
decentralized training.

\paragraph{Our contributions.} 
In this paper, we propose and analyze SQuARM-SGD, a communication efficient decentralized training algorithm incorporating compression and local iterations. Our analysis is the first to establish convergence rates of compressed decentralized training algorithms with momentum. We provide separate convergence results for SQuARM-SGD with two sets of assumptions: {\sf (i)} Commonly used assumptions in decentralized optimization, including bounded second moment of stochastic gradients \cite{koloskova_decentralized_2019, koloskova_decentralized_2019-1, singh2019sparq} (presented in Section \ref{subsec:strong_assump}),{\sf (ii)} A relatively weaker set of assumptions on the node variance and the gradient dissimilarity across nodes (presented in Section \ref{subsec:weak_assump}). Specifically, the bounds on the variance and the gradient dissimilarity depend on the local geometry of the true gradients; see Assumption~\ref{assump:variance} for the bounded variance assumption and Assumption~\ref{assump:grad-dissim} for the bounded gradient dissimilarity assumption. 
Both these assumptions are strictly weaker than assuming uniform bounds on the respective quantities;
see Remark~\ref{remark:assump-comparison} for a detailed discussion.
 For assumptions set {\sf (i)}, we show a convergence rate of
$\mathcal{O} \left(\nicefrac{1}{\sqrt{nT}}\right)$ for smooth convex and non-convex objectives, where $n$ is the number of worker nodes and $T$ is the
number of iterations, thus matching the convergence rate of vanilla
distributed SGD. Similarly, for the weaker assumption set {\sf (ii)}, we show a convergence rate of $\mathcal{O}\left( \nicefrac{1}{\sqrt{T}} \right)$ for smooth non-convex objectives.
We note that compression and
event triggered communication do affect our convergence rate
expressions for results in both sets of assumptions, but they appear only in the higher order terms; thus, for
a large enough $T$, we can converge at the same rate as that of
distributed vanilla SGD while enjoying the savings in communication
from our method essentially for free; see Theorem~\ref{convergence_relaxed_assump} and Theorem \ref{convergence_results} and comments after that
for details. As mentioned earlier, we use Nesterov's momentum in
SQuARM-SGD and theoretically analyze its convergence rate; a first
theoretical analysis of convergence of such compressed gradient
updates with momentum in the decentralized setting. In order to
achieve this, we had to solve several technical difficulties; see
Section \ref{prelim} and also the related work below.
Our numerical results for decentralized training of ResNet20 \cite{he2015deep} model on CIFAR-10 \cite{cifar} dataset shows that including momentum updates as in SQuARM-SGD can lead to around $2\%$ increase in test accuracy performance in comparison to the recently proposed communication efficient algorithms CHOCO-SGD \cite{koloskova_decentralized_2019} or SPARQ-SGD \cite{singh2019sparq} which do not use momentum.

\paragraph{Related work.}
Communication-efficient decentralized training has received recent attention; see \cite{Tong_decentralized18,singh2019sparq,reisizadeh2018quantized,assran2018stochastic,tatarenko2017non,koloskova_decentralized_2019,Momentum_linear-speedup19, wang2018cooperative,wang2019matcha} and references therein.
CHOCO-SGD proposed by \cite{koloskova_decentralized_2019, koloskova_decentralized_2019-1} was the first to perform arbitrary compressed training for decentralized optimization by considering sparsification or quantization of the model parameters. Recently, in \cite{singh2019sparq} we proposed SPARQ-SGD incorporating compression using both sparsification and quantization and also event-driven communication with local iterations to save on communicated bits.
We remark that \cite{koloskova_decentralized_2019,koloskova_decentralized_2019-1, singh2019sparq} rely on (a strong) assumption of bounded second moment of stochastic gradients for their theoretical analysis and do not incorporate momentum updates,
which has been shown to empirically improve generalization performance in deep learning applications \cite{wilson2017marginal,Momentum_linear-speedup19}.
Our convergence analyses are very different and more involved than CHOCO-SGD or SPARQ-SGD,
as we rely on a much weaker set of assumptions and provide our analyses using virtual sequences, specifically, to handle the use of momentum.
Use of local iterations in decentralized setting with a weaker set of assumptions similar to ours has been considered recently in \cite{KoloskovaDecenRelaxAssum20}, however, without any compression of updates, and importantly, without incorporating momentum in the theoretical analysis.
The use of local iterations with momentum updates in decentralized setting has been studied in \cite{wang2019slowmo},
but without any compression of exchanged
information and with a stronger set of assumptions. \cite{distributed_momentum19} studied momentum SGD with
compressed updates (but no local iterations or event-triggering) for the \emph{distributed} setting only, assuming
that all workers have access to unbiased gradients. 
Extending the
analysis to the \emph{decentralized} setting (where different workers may
have local data, potentially generated from different distributions)
while incorporating momentum, compression, local iterations, and event triggered
communication\footnote{Event-triggered communication with compression and local iterations is also considered in \cite{singh2019sparq}, however, with the strong bounded second moment gradient assumption and without momentum updates in the theoretical analysis. Relaxing the assumptions and incorporating momentum significantly changes the convergence analysis (see Section \ref{prelim}). 
} 
(as in SQuARM-SGD) while assuming a weaker set of assumptions than existing works poses several challenges; see
Section \ref{prelim} for a detailed discussion. The idea of
event-triggering has been explored in the control
community \cite{Paulo-event-triggered12,distri-event-triggered12,event-triggered-consensus13,dynamic-event-triggered15,liu2017asynchronous}
and in the optimization
literature \cite{convex-optimize-event15,event-gradient-sum-consensus16,optimize-dynamic-event-triggered18}. These
papers focus on continuous-time, deterministic optimization algorithms for convex problems; in contrast, our event-driven stochastic
gradient descent algorithm is for both convex and general (non-convex) smooth objectives, e.g., neural network training for large-scale deep learning. \cite{chen2018lag} proposed an adaptive scheme to skip
gradient computations in a \emph{distributed} setting
for \emph{deterministic} gradients; moreover, their focus is on saving
communication rounds, without compressed communication. To the best of our knowledge, ours is the first paper to develop and analyze
convergence of momentum-based decentralized stochastic optimization,
using compressed lazy communication (as described earlier). Moreover,
our numerics demonstrate better test-accuracy performance compared to recently proposed methods for communication efficiency on account of using momentum updates.

\paragraph{Paper organization.}  The problem setup and our algorithm SQuARM-SGD are
described in Section \ref{algo}. Section \ref{main-results} provides
two sets of convergence results, one with weak assumptions (Theorem~\ref{convergence_relaxed_assump}), and the other (a slightly general result) with strong assumptions (Theorem~\ref{convergence_results}). We prove Theorem~\ref{convergence_relaxed_assump} in Section~\ref{sec:relaxed-assump-results} (which is a novel analysis and the main technical contribution of our paper) and defer the proof of Theorem~\ref{convergence_results} to the supplementary material.
Section \ref{experiments} gives numerical results comparing our algorithm to the state-of-the-art.
Omitted proofs/details are provided in appendices.}

\section{Problem Setup and Our Algorithm}\label{algo}
 \allowdisplaybreaks
 {
 We first formalize the decentralized optimization setting that we work with and set up the notation we follow throughout the paper. Consider an undirected connected graph $\mathcal{G}=(\mathcal{V}, \mathcal{E})$ with $\mathcal{V}=[n]:= \{1,2,\hdots, n \}$, where node $i\in[n]$ corresponds to worker $i$ and we denote the neighbors of node $i$ by $\mathcal{N}_i:=\{(i,j):(i,j)\in\mathcal{E}\}$.
To each node $i\in[n]$, we associate a dataset $\mathcal{D}_i$ and an objective function $f_i : \mathbb{R}^d \rightarrow \mathbb{R}$. We allow the datasets and objective functions to be different for each node and assume that for $i \in [n]$, the objective function $f_i$ has the form $f_i (\bx) = \mathbb{E}_{\xi_i \sim \mathcal{D}_i}  [ F_i(\bx, \xi_i)  ] $ where $\xi_i \sim \mathcal{D}_i$ denotes a random sample from $\mathcal{D}_i$, $\bx$ denotes the parameter vector, and $F_i (\bx, \xi_i)$ denotes the risk associated with sample $\xi_i$ with respect to (w.r.t.) the parameter vector $\bx$.
Consider the following empirical risk minimization problem, where $f:\R^d\to\R$ is called the global objective function:
\begin{align}\label{eq:obj-fn}
\arg\min_{\bx\in\R^d} \Big(f(\bx) := \frac{1}{n} \sum_{i=1}^{n} f_i(\bx)\Big),
\end{align}
The nodes in $\mathcal{G}$ wish to minimize \eqref{eq:obj-fn} collaboratively in a communication-efficient manner while incorporating momentum updates of worker nodes. 

We now state the notation relevant to describing our algorithm.
Let $\mathbf{W} \in \mathbb{R}^{n \times n}$ denote the connectivity matrix of $\mathcal{G}$, 
where for every $(i,j) \in \mathcal{E}$, the $(i,j)$'th entry of $\mathbf{W}$ denotes the weight $w_{ij}$ on the edge $(i,j)$ -- e.g., $w_{ij}$ may represent the strength of the connection on the edge $(i,j)$ -- and for other pairs $(i,j) \notin \mathcal{E}$, the weight $w_{ij}$ is zero. We assume that $\mathbf{W}$ is symmetric and doubly stochastic, which means it has non-zero entries with each row and column summing up to 1. Consider the ordered eigenvalues of $\mathbf{W}$, $|\lambda_1(\mathbf{W})| \geq |\lambda_2(\mathbf{W})| \geq \hdots\geq |\lambda_n(\mathbf{W})| $. For such a $\mathbf{W}$ associated with a connected graph $\mathcal{G}$, it is known that $\lambda_1(\mathbf{W}) = 1$ and $\lambda_i(\mathbf{W}) \in (-1,1) $ for all $i \in \{2, \hdots,n\}$. The spectral gap $\delta \in (0,1] $ is defined as $\delta := 1 - |\lambda_2(\mathbf{W})|$. Simple matrices $\mathbf{W}$ having $\delta \in (0,1]$ are known to exist for connected graphs \cite{koloskova_decentralized_2019-1}.

To achieve compression on the communication exchanged between workers, we use arbitrary compression operators as defined next.
\begin{definition}[Compression, \cite{stich_sparsified_2018}]\label{definition:compression}
A (possibly randomized) function $\C:\R^d\to\R^d$ is called a {\em compression} operator, if there exists a positive constant $\omega \in (0,1]$, such that for every $\bx\in\R^d$:
\begin{align}\label{eq:compression}
\textstyle \mathbb{E}_{\C}[\|\bx-\C(\bx)\|_2^2]\leq (1-\omega)\|\bx\|_2^2,
\end{align}
where expectation is taken over the randomness of $\C$.
We assume that $\C(\bzero)=\bzero$.
\end{definition}
We now list some important sparsifiers and quantizers following the above definition of a compression operator: \\
{\sf (i)} $Top_k$ and $Rand_k$ sparsifiers (where only $k$ entries are selected and the rest are set to zero) with $\omega=k/d$ \cite{stich_sparsified_2018},
{\sf (ii)} Stochastic quantizer $Q_s$ from \cite{alistarh-qsgd-17}\footnote{$Q_s:\R^d\to\R^d$ is a stochastic quantizer, if for every $\bx\in\R^d$, we have {\sf(i)} $\mathbb{E}[Q_s(\bx)]=\bx$ and {\sf (ii)} $\mathbb{E}[\|\bx-Q_s(\bx)\|_2^2]\leq\beta_{d,s}\|\bx\|_2^2$. $Q_s$ from \cite{alistarh-qsgd-17} satisfies this definition with $\beta_{d,s}=\min\left\{\frac{d}{s^2},\frac{\sqrt{d}}{s}\right\}$.} with $\omega=(1-\beta_{d,s})$ for $\beta_{d,s}<1$, and 
{\sf (iii)} Deterministic quantizer $\frac{\|\bx\|_1}{d}Sign(\bx)$ from \cite{stich-signsgd-19} with $\omega=\frac{\|\bx\|_1^2}{d\|\bx\|_2^2}$. 
For $Comp_k\in\{Top_k,Rand_k\}$, the following are compression operators\footnote{\cite{basu_qsparse-local-sgd:_2019} show that the composition of sparsification and quantization operators is also a valid compression operator, outperforming its individual components in terms of communication savings while maintaining similar performance.}:
{\sf (iv)} $\frac{1}{(1+\beta_{k,s})}Q_s(Comp_k)$ with $\omega=\left(1-\frac{k}{d(1+\beta_{k,s})}\right)$ for any $\beta_{k,s}\geq0$, and {\sf (v)} $\frac{\|Comp_k(\bx)\|_1Sign(Comp_k(\bx))}{k}$ with $\omega=\max\left\{\frac{1}{d},\frac{k}{d}\left(\frac{\|Comp_k(\bx)\|_1^2}{d\|Comp_k(\bx)\|_2^2}\right)\right\}$ \cite{basu_qsparse-local-sgd:_2019}. 

\subsection{Our Algorithm: SQuARM-SGD}
We propose SQuARM-SGD to minimize \eqref{eq:obj-fn}, which is 
a decentralized algorithm that combines compression and Nesterov's momentum, together with event-driven communication exchange, where compression is achieved by sparsifying \emph{and} quantizing the exchanges. Each worker is required to complete a fixed number of {\em local SGD} steps with {\em momentum}, and communicate {\em compressed} updates to its neighbors when there is a {\em significant change} in its local parameters since the last communication round. 

To realize exchange of compressed parameters between workers, for each node $i \in [n]$, all nodes $j \in \mathcal{N}_i$ maintain an estimate $\hat{\bx}_i $ of $\bx_i$, so, each node $i \in [n]$ has access to $\hat{\bx}_j$ for all $j \in \mathcal{N}_i$. Our algorithm runs for $T$ iterations and the set of synchronization indices is defined as 
$\mathcal{I}_T = \{ 0, H , 2H \hdots, mH, \hdots  \} \subseteq [T]  $ for some constant $H \in \mathbb{N}$ 
, which are same for all workers and denote the time steps at which workers are allowed to communicate, provided they satisfy a triggering condition.\footnote{The Zeno phenomenon \cite{Paulo-event-triggered12} does not occur in our setup as we have a discrete sampling period as well as a fixed number of local iterations, giving a lower bound to the event intervals of at least $H$ times the sampling period.} 


\begin{algorithm}[t]
	\caption{SQuARM-SGD: Sparsified and Quantized Action Regulated Momentum SGD}
	\label{alg_dec_sgd_li}
	{\bf Parameters}: $G = ([n],E)$, $\mathbf{W}$, Compression operator $\C$\\
	\vspace{-0.3cm}
	\begin{algorithmic}[1]
		\STATE {\bf Initialize:} For every $i\in[n]$, set arbitrary $\bx_i^{(0)} \in \mathbb{R}^d$, $\hat{\bx}_i^{(0)} := \bzero$, $\bv_i^{(-1)} := \mathbf{0}$. Fix the momentum coefficient $\beta$, consensus step-size $\gamma$, learning rate $\eta$, triggering thresholds $\{ c_t \}_{t=0}^T$, and synchronization set $\mathcal{I}_T$.
		\FOR{$t=0$ {\bfseries to} $T-1$ in parallel for all workers $i \in [n]$ } 
		\STATE Sample $\xi_i^{(t)}$, compute stochastic gradient $\bg_i^{(t)}:= \nabla F_i(\bx_i^{(t)}, \xi_i^{(t)})$
		\STATE $\bv_i^{(t)} = \beta  \bv_i^{(t-1)} + \bg_i^{(t)} $ 
		\STATE $\bx_i^{(t+\frac{1}{2})} := \bx_i^{(t)} - \eta ( \beta \bv_i^{(t)} + \bg_i^{(t)}) $
		\IF{$(t+1) \in I_T $ }
		\FOR{neighbors $j \in \mathcal{N}_i \cup i $  }
		\IF{ $\Vert \bx_i^{(t+\frac{1}{2})} - \hat{\bx}_i^{(t)} \Vert_2^2 > {c_t \eta^2} $}
		\STATE Compute $\mathbf{q}_i^{(t)} := \C(\bx_i^{(t+\frac{1}{2})} - \hat{\bx}_i^{(t)} )$
		\STATE Send $\mathbf{q}_i^{(t)}$ and receive $\mathbf{q}_j^{(t)}$
		\ELSE 
		\STATE Send $\mathbf{0}$ and receive $\mathbf{q}_j^{(t)}$
		\ENDIF
		\STATE $\hat{\bx}_j^{(t+1)} := \mathbf{q}_j^{(t)} + \hat{\bx}_j^{(t)} $
		\ENDFOR
		\STATE $\bx_i^{(t+1)} = \bx_i^{(t+\frac{1}{2})} + \gamma \sum \limits_{j \in \mathcal{N}_i } w_{ij} (\hat{\bx}_j^{(t+1)} - \hat{\bx}_i^{(t+1)} ) $
		\ELSE
		\STATE $\hat{\bx}_i^{(t+1)} = \hat{\bx}_i^{(t)}$ ,  $\bx_i^{(t+1)} = \bx_i^{(t+\frac{1}{2})}$ for all $i \in [n]$
		\ENDIF
		\ENDFOR
	\end{algorithmic}
\end{algorithm}
For a given connected graph $\mathcal{G}$ with connectivity matrix $\mathbf{W}$, we first initialize a consensus step-size $\gamma$ (see Theorem~\ref{convergence_relaxed_assump} for definition), momentum factor $\beta$, learning rate $\eta$, triggering threshold sequence $\{c_t\}_{t=0}^T$, and momentum vector $\bv_i$ for each node $i$ initialized to $\mathbf{0}$. 
We initialize the copies of all the nodes $\hat{\bx}_i = \mathbf{0}$ 
and allow each node to communicate in the first round. At each time step $t$, each worker $i \in [n]$  samples a stochastic gradient $\nabla F_i(\bx_i^{(t)}, \xi_i) $ and takes a local SGD step on parameter $\bx_i^{(t)}$ using Nesterov's momentum to form an intermediate parameter $\bx_i^{(t+\nicefrac{1}{2})}$ (lines 3-5). If the next iteration corresponds to a synchronization index, i.e., $(t+1) \in \mathcal{I}_T$, then each worker checks the triggering condition (line 8). If satisfied, that worker communicates the compressed change in its copy to all its neighbors $\mathcal{N}_i$ (lines 9-10); otherwise, it does not communicate in that round (denoted by `Send $\mathbf{0}$' in our algorithm for illustration, line 12). After receiving the compressed updates of copies from all its neighbors, the node $i$ updates the locally available copies and its own copy (line 14). With these updated copies, the worker nodes finally take a consensus (line 16) with appropriate weighting decided by entries of $\mathbf{W}$. In the case when $(t+1) \notin \mathcal{I}_T$, the nodes maintain their copies and move on to next iteration (line 18); thus no communication takes place.

\paragraph{Difference from SPARQ-SGD \cite{singh2019sparq}:} There are two major differences between this work and our previous work \cite{singh2019sparq} which uses a similar framework of local iterations, compression and triggering to save on communication. Firstly, and most importantly, the results presented in this work do not use any strong assumptions like the bounded second moment of stochastic gradients used in \cite{singh2019sparq, koloskova_decentralized_2019, koloskova_decentralized_2019-1}: Both the variance bound on stochastic gradients as well as the data heterogeneity bound depend on local geometry of the true gradients (and we allow these to scale with the true gradient norm); and thus, neither of them are assumed to be uniformly bounded, as in \cite{singh2019sparq, koloskova_decentralized_2019, koloskova_decentralized_2019-1}. The assumptions in this work are thus much weaker than the ones in existing decentralized literature; see Section \ref{prelim} for details. Working with these relaxed assumptions calls for completely different and much more nuanced analyses to establish the convergence rates as compared to \cite{singh2019sparq}.    
Secondly, the addition of lines 4-5 in Algorithm \ref{alg_dec_sgd_li} which now incorporate momentum calls for a significantly different analysis than \cite{singh2019sparq} to arrive at the convergence rate even if we consider the same set of assumptions.
Even though momentum updates are almost always used in practice, incorporating them in convergence analyses in modern large-scale settings with communication constraints has received attention only recently, e.g., for distributed training with compressed update exchanges \cite{distributed_momentum19} and for decentralized training without compression or local SGD in \cite{Momentum_linear-speedup19}.
To the best of our knowledge, our work provides the first convergence analysis for compressed decentralized training with momentum using a weaker set of assumptions than existing literature while incorporating the local SGD and event triggered communication framework of \cite{singh2019sparq}. We note the technical challenges that arise and provide a detailed comparison to SPARQ-SGD \cite{singh2019sparq} and other recent works analyzing momentum in Section \ref{prelim}. Furthermore, our experimental results in Section \ref{experiments} show that incorporating momentum can empirically improve the generalization performance of the trained model by about $2$-$3\%$ when compared to training without momentum.

\paragraph{Memory-efficient version of Algorithm \ref{alg_dec_sgd_li}:}
At the first glance, it may seem that in Algorithm~\ref{alg_dec_sgd_li}, every node has to store estimates of all its neighbors' parameters in order to perform the consensus step, which may be impractical in large-scale learning. 
Note that in the consensus step (line 16), nodes only require the weighted sum of their neighbors' parameters. So, it suffices for each node to store only the weighted sum of all its neighbors' parameters (in addition to its own local parameters and its estimate), and thus avoiding the need to store all neighbor parameters.
A memory-efficient version of SQuARM-SGD is given in Appendix I.

\paragraph{Equivalence to error-feedback mechanisms:}
In Algorithm \ref{alg_dec_sgd_li}, though nodes do not explicitly perform local error-compensation (\cite{stich-signsgd-19,basu_qsparse-local-sgd:_2019}), the error-compensation happens implicitly. 
To see this, note that nodes maintain copies of their neighbors' parameters and update them as $\hat{\bx}_j^{(t+1)} = \hat{\bx}_j^{(t)} + \C(\bx_j^{(t+\frac{1}{2})} - \hat{\bx}_j^{(t)})$ (line 14) and then perform consensus (line 16). Thus, the error gets accumulated into $\hat{\bx}_j^{(t)}$ and is compensated by the term $\C(\bx_j^{(t+\frac{1}{2})} - \hat{\bx}_j^{(t)})$ in the next round.

}
\section{Main Results}\label{main-results}
\allowdisplaybreaks
{

In this section we provide the convergence results for SQuARM-SGD (Algorithm \ref{alg_dec_sgd_li}) under two sets of assumptions: We present our results with the weakest set of assumptions available in existing literature in Section~\ref{subsec:weak_assump} and slightly more general results with stronger assumptions in Section~\ref{subsec:strong_assump}.

	
\subsection{Theoretical Results with Relaxed Assumptions} \label{subsec:weak_assump}
	
		\begin{assumption}[Smoothness]\label{assump:smoothness}
			We assume that each local function $f_i$ for $i \in [n]$ is $L$-smooth, i.e., $\forall \bx,\by \in \mathbb{R}^d$, we have $f_i(\by) \leq f_i(\bx) + \langle \nabla f_i(\bx), \by-\bx  \rangle + \frac{L}{2} \Vert \by-\bx \Vert^2$.
		\end{assumption}
		\begin{assumption}[Bounded Variance]\label{assump:variance}
			We assume that there exists finite constants $\sigma,M\geq0$, such that for all $\bx\in\bbR^d$ we have:
			\begin{align}\label{eq:variance}
				\frac{1}{n}\sum_{i=1}^n\bbE_{\xi_i}\|\nabla F_i(\bx_i,\xi_i) - \nabla f_i(\bx_i)\|_2^2 &\leq \sigma^2 + \frac{M^2}{n}\sum_{i=1}^n\|\nabla f_i(\bx_i)\|_2^2,
			\end{align}
			where $\nabla F_i (\bx,\xi_i)$, $i\in[n]$, denotes an unbiased stochastic gradient, i.e., $\mathbb{E}_{\xi_i} [\nabla F_i (\bx ,\xi_{i})] = \nabla f_i (\bx)$.
		\end{assumption}
		\begin{assumption}[Bounded Gradient Dissimilarity]\label{assump:grad-dissim}
			We assume that there exists finite constants $G\geq0$ and $B\geq1$, such that for all $\bx\in\bbR^d$ we have:
			\begin{align}\label{eq:grad-dissim}
				\frac{1}{n}\sum_{i=1}^n\|\nabla f_i(\bx)\|_2^2 &\leq G^2 + B^2\|\nabla f(\bx)\|_2^2.
			\end{align}
		\end{assumption}
\noindent These assumptions have appeared in literature before in \cite{KoloskovaDecenRelaxAssum20} to study decentralized optimization with local iterations; and we extend their results and analyses by incorporating compression and momentum. This extension posed many fundamental technical difficulties, which we describe in detail in Section~\ref{prelim}. 

	\begin{remark}[Comparison with Existing Assumptions]\label{remark:assump-comparison}
		Assumptions~\ref{assump:variance},~\ref{assump:grad-dissim} are weaker than assuming uniform bounds on the variance and the gradient dissimilarity: {\sf (i)} The uniform bound on the variance \cite{Momentum_linear-speedup19}, i.e., $\bbE_{\xi_i}\|\nabla F_i(\bx_i,\xi_i) - \nabla f_i(\bx_i)\|_2^2 \leq \sigma_i^2$ for all $i\in[n]$, implies Assumption~\ref{assump:variance} with $\sigma^2=\frac{1}{n}\sum_{i=1}^n\sigma_i^2$ and $M=0$; and {\sf (ii)} The uniform bound on the gradient similarity \cite{Momentum_linear-speedup19}, i.e., $\frac{1}{n}\sum_{i=1}^n\|\nabla f_i(\bx) - \nabla f(\bx)\|_2^2 \leq \kappa^2$, implies Assumption~\ref{assump:grad-dissim} with $G=\kappa$ and $B=1$ -- this follows from the identity $\frac{1}{n}\sum_{i=1}^n\|\nabla f_i(\bx) - \nabla f(\bx)\|_2^2 = \frac{1}{n}\sum_{i=1}^n\|\nabla f_i(\bx)\|_2^2 - \|\nabla f(\bx)\|_2^2$.
		Both Assumptions~\ref{assump:variance} and \ref{assump:grad-dissim} are weaker than the uniformly bounded second moment assumption $\bbE_{\xi_i}\|\nabla F_i(\bx_i,\xi_i)\|_2^2 \leq G^2$, which has been standard in the stochastic optimization with compressed gradients \cite{stich_sparsified_2018,basu_qsparse-local-sgd:_2019,koloskova_decentralized_2019,distributed_momentum19}.
	\end{remark}

Our convergence result (stated below) is for general smooth (non-convex) objectives; and can be readily extended to convex objectives. We derive this result for SQuARM-SGD under Assumptions~\ref{assump:smoothness}-\ref{assump:grad-dissim} without event-triggered communication; in other words, our analysis is for compressed decentralized momentum SGD with local iterations. We would like to emphasize that incorporating event-triggering component into our analysis can only complicate the calculations and can be done. In order to bring out the novelty of our convergence analysis without adding unnecessary technicality, we present the result in this subsection and its subsequent analysis without incorporating event-triggered communication.

\begin{theorem}\label{convergence_relaxed_assump}
		Let $\C$ be a compression operator with parameter $\omega \in (0,1]$ and  $gap(\I_T)= H$. Consider running SQuARM-SGD for $T$ iterations with consensus step-size $\gamma = \frac{2\delta \omega^3}{ 4 \delta^2 \omega^2 + \delta^2 + 128 \lambda^2 + 24 \omega^2 \lambda^2 }$, $(\text{where }\lambda = \text{max}_i \{ 1- \lambda_i(\mathbf{W}) \})$,
		momentum coefficient $\beta\in[0,1)$, and constant learning rate $\eta=(1-\beta)\sqrt{\frac{n}{T}}$. Let the algorithm generate $\{ \bx_i ^{(t)}  \}_{t=0}^{T-1}$ for $i\in[n]$. 
	 Running the algorithm for $ T \geq U_0 $ for some constant $U_0$ defined in Appendix C-F, the averaged iterates $\blx^{(t)} := \frac{1}{n} \sum_{i=0}^n \bx_i^{(t)}$ satisfy:
	\begin{align*}
		&\frac{\sum_{t=0}^{T-1}  \mathbb{E} \Vert \nabla f(\blx^{(t)}) \Vert_2^2}{T} =  \mathcal{O}  \left(  \frac{J^2 + \sigma^2 + (M^2+n)G^2 }{\sqrt{nT}} \right)  +\mathcal{O} \left( \frac{ (1-\beta)^2 nH^2( (M^2+1)G + \sigma^2) }{T \delta^2 \omega^3  }  \right),
	\end{align*}
	where $J^2 < \infty$ is such that $\mathbb{E}[f(\blx^{(0)})] - f^* \leq J^2 $.
	\end{theorem}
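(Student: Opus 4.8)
The plan is to reduce the momentum dynamics to a standard descent recursion through a \emph{virtual sequence}, and then to control the decentralization (consensus) and compression errors via a coupled Lyapunov recursion. Since $\bW$ is doubly stochastic and $\sum_{j}w_{ij}(\hat{\bx}_j^{(t+1)}-\hat{\bx}_i^{(t+1)})$ averages to $\bzero$, the network-averaged iterate $\blx^{(t)}=\frac1n\sum_i\bx_i^{(t)}$ obeys $\blx^{(t+1)}=\blx^{(t)}-\eta(\beta\overline{\mathbf v}^{(t)}+\overline{\mathbf g}^{(t)})$, where $\overline{\mathbf v}^{(t)}=\frac1n\sum_i\bv_i^{(t)}$ and $\overline{\mathbf g}^{(t)}=\frac1n\sum_i\bg_i^{(t)}$. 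First I would introduce the virtual sequence $\by^{(t)}:=\blx^{(t)}-\frac{\eta\beta^2}{1-\beta}\overline{\mathbf v}^{(t-1)}$; a direct computation using $\overline{\mathbf v}^{(t)}=\beta\overline{\mathbf v}^{(t-1)}+\overline{\mathbf g}^{(t)}$ cancels the momentum vector and yields the clean recursion $\by^{(t+1)}=\by^{(t)}-\frac{\eta}{1-\beta}\overline{\mathbf g}^{(t)}$, i.e. the effective step size is $\eta/(1-\beta)$. Note $\by^{(0)}=\blx^{(0)}$ because $\bv_i^{(-1)}=\bzero$, so the initial-gap bound $J^2$ applies directly to $\by^{(0)}$.

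Next I would apply the $L$-smoothness descent lemma (Assumption~\ref{assump:smoothness}) to $f$ along $\{\by^{(t)}\}$, take conditional expectations, and use unbiasedness $\bbE[\overline{\mathbf g}^{(t)}]=\frac1n\sum_i\nabla f_i(\bx_i^{(t)})$. The inner-product term is split with the identity $\langle a,b\rangle=\tfrac12(\|a\|^2+\|b\|^2-\|a-b\|^2)$ to surface $-\tfrac{\eta}{2(1-\beta)}\|\nabla f(\blx^{(t)})\|^2$; the resulting mismatches --- $\nabla f(\by^{(t)})$ versus $\nabla f(\blx^{(t)})$ (controlled by $\|\by^{(t)}-\blx^{(t)}\|\propto\|\overline{\mathbf v}^{(t-1)}\|$) and $\frac1n\sum_i\nabla f_i(\bx_i^{(t)})$ versus $\nabla f(\blx^{(t)})$ (controlled by the consensus error $\Xi^{(t)}:=\frac1n\sum_i\|\bx_i^{(t)}-\blx^{(t)}\|^2$ via smoothness) --- are absorbed into error terms. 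For the second-moment term I would write $\bbE\|\overline{\mathbf g}^{(t)}\|^2=\|\frac1n\sum_i\nabla f_i(\bx_i^{(t)})\|^2+\frac1{n^2}\sum_i\bbE\|\bg_i^{(t)}-\nabla f_i(\bx_i^{(t)})\|^2$ and invoke Assumption~\ref{assump:variance}, so the stochastic noise enters divided by $n$ (the source of the $1/\sqrt{nT}$ rate). Assumption~\ref{assump:grad-dissim} then converts every $\frac1n\sum_i\|\nabla f_i(\cdot)\|^2$ into $G^2+B^2\|\nabla f(\blx^{(t)})\|^2$ plus consensus-deviation remainders; the coefficient of the surviving $\|\nabla f(\blx^{(t)})\|^2$ (scaling like $\eta L(M^2+B^2)/n$) must be dominated by the $\tfrac{\eta}{2(1-\beta)}$ descent term, which is exactly what forces the lower bound $T\ge U_0$ (equivalently, $\eta$ small enough).

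The technical heart is the control of $\Xi^{(t)}$. I would track, in Frobenius form, the coupled pair consisting of the consensus error $\Xi^{(t)}$ and the compression (error-feedback) error $\frac1n\sum_i\|\bx_i^{(t)}-\hat{\bx}_i^{(t)}\|^2$, following the CHOCO-style potential but now (i) driven by the momentum-augmented increments $\eta(\beta\bv_i^{(t)}+\bg_i^{(t)})$ rather than plain gradients, and (ii) allowed to grow unchecked across the $H$ local steps between the synchronization indices in $\I_T$. The compressed-consensus step contracts this potential at a rate governed by $\gamma,\delta,\omega$ --- the prescribed $\gamma=\frac{2\delta\omega^3}{4\delta^2\omega^2+\delta^2+128\lambda^2+24\omega^2\lambda^2}$ is chosen precisely to make this contraction valid --- while the drift per window is bounded by the accumulated per-step movement. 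Here I must bound the momentum magnitude via $\|\bv_i^{(t)}\|\le\sum_{s\le t}\beta^{t-s}\|\bg_i^{(s)}\|$ (giving a $1/(1-\beta)$ factor) and feed the variance/dissimilarity bounds back in; summing the window drift produces the $H^2$ factor and, after solving the geometric recursion, the $\delta^2\omega^3$ denominator in the higher-order term. I expect this coupled recursion --- simultaneously accommodating momentum accumulation, the $H$-step communication gap, and the gradient-norm-dependent (non-uniform) variance/dissimilarity bounds without letting the $\|\nabla f\|^2$ terms destabilize the contraction --- to be the main obstacle.

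Finally I would telescope the descent inequality over $t=0,\dots,T-1$, use $\bbE[f(\by^{(0)})]-f^*\le J^2$, substitute the bound on $\sum_t\Xi^{(t)}$ obtained above, divide through by the coefficient $\tfrac{\eta}{2(1-\beta)}T$, and plug in $\eta=(1-\beta)\sqrt{n/T}$. The leading terms collect into $\mathcal{O}\!\big((J^2+\sigma^2+(M^2+n)G^2)/\sqrt{nT}\big)$ and the deviation/compression contributions into the stated $\mathcal{O}\!\big((1-\beta)^2 nH^2((M^2+1)G+\sigma^2)/(T\delta^2\omega^3)\big)$ higher-order term, completing the proof.
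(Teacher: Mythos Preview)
Your proposal is correct and follows essentially the same route as the paper: the same virtual sequence $\by^{(t)}=\blx^{(t)}-\frac{\eta\beta^2}{1-\beta}\overline{\mathbf v}^{(t-1)}$ with the same clean recursion, the same descent-lemma expansion with the $P_1/P_2$-style splitting (the paper centers at $\nabla f(\by^{(t)})$ first and then shifts to $\nabla f(\blx^{(t)})$, but this is cosmetic), and the same coupled Lyapunov $S^{(t)}=\Xi^{(t)}+\|\bX^{(t)}-\hat\bX^{(\lfloor t/H\rfloor H)}\|_F^2$ whose contraction at the prescribed $\gamma$ yields the $\delta^2\omega^3$ and $H^2$ factors. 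The only detail you left implicit is that the paper treats the two time-regimes $mH\le t<(m+1)H$ and $(m+1)H\le t<(m+2)H$ separately (Lemmas~\ref{lem:similar-eq16}--\ref{lem:similar-eq17}) before unrolling the recursion, but your description of ``drift per window'' already anticipates this.
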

	\noindent 	We prove Theorem~\ref{convergence_relaxed_assump} in Section~\ref{sec:relaxed-assump-results}. Note that we have used simplified convergence rate expressions in the above result, and derive precise rate expressions in Section~\ref{sec:relaxed-assump-results}.

\subsection{Theoretical Results with Bounded Second Moment of Stochastic Gradients} \label{subsec:strong_assump}	
In this section, we consider a stronger set of assumptions than the ones before along with the smoothness of objectives:\\
{\sf (i)} {\textit{Uniformly bounded variance:}} For every $i\in[n]$, we have $\mathbb{E}_{\xi_i} \Vert \nabla F_i (\bx,\xi_i) - \nabla f_i(\bx) \Vert^2 \leq \sigma_i^2$, for some finite $\sigma_i$,
where $\nabla F_i (\bx,\xi_i)$ denotes an unbiased stochastic gradient at worker $i$ with $\mathbb{E}_{\xi_i} [\nabla F_i (\bx ,\xi_{i})] = \nabla f_i (\bx)$. We define $\bar{\sigma}^2 := \frac{1}{n} \sum_{i=1}^{n}  \sigma_i ^2$.\\ 
{\sf (ii)} {\textit{Uniformly bounded second moment:}} For every $i\in[n]$, we have $\mathbb{E}_{\xi_i} \Vert \nabla F_i (\bx,\xi_{i}) \Vert^2 \leq G^2<\infty$. 


\begin{theorem}\label{convergence_results}
	Let $\C$ be a compression operator with parameter $\omega \in (0,1]$ and  $gap(\I_T)= H$. Consider running SQuARM-SGD for $T$ iterations with consensus step-size $\gamma = \frac{2\delta \omega}{64 \delta + \delta^2 + 16 \lambda^2 + 8 \delta \lambda^2 - 16\delta \omega}$, $(\text{where }\lambda = \text{max}_i \{ 1- \lambda_i(\mathbf{W}) \})$, a threshold sequence $c_t \leq \frac{c_0}{\eta^{1-\epsilon}}$ for all $t$ where $\epsilon \in (0,1)$ and $c_0$ is a constant, momentum coefficient $\beta\in[0,1)$, and constant learning rate $\eta=(1-\beta)\sqrt{\frac{n}{T}}$. Let the algorithm generate $\{ \bx_i ^{(t)}  \}_{t=0}^{T-1}$ for $i\in[n]$. Then, we have:
	\begin{itemize}
		\item{\bf[Non-convex:]} \label{thm_noncvx_fix_li} For $ T \geq \max \{ 16L^2n, \frac{8L^2 \beta^4 n}{(1-\beta)^2}  \} $, the averaged iterates $\blx^{(t)} := \frac{1}{n} \sum_{i=0}^n \bx_i^{(t)}$ satisfy:
		\begin{align*}
		&\frac{\sum_{t=0}^{T-1}  \mathbb{E} \Vert \nabla f(\blx^{(t)}) \Vert_2^2}{T} =  \mathcal{O}  \left(  \frac{J^2 + \bar{\sigma}^2 }{\sqrt{nT}}  \right) +  \mathcal{O}  \left( \frac{c_0 n^{\nicefrac{(1+\epsilon)}{2}}}{ \delta^2 T^{\nicefrac{(1+\epsilon)}{2}}  }  +   \frac{nH^2G^2}{T \delta^4 \omega^2  } + \frac{  \beta^4 \bar{\sigma}^2 }{T (1-\beta)^2 }  \right),
		\end{align*}
		where $J^2 < \infty$ is such that $\mathbb{E}[f(\blx^{(0)})] - f^* \leq J^2 $.
		\item{\bf[Convex:]} \label{thm_cvx_li}  If $\{f_i\}_{i \in [n]}$ are convex, then for $T \geq \max\{ (8L)^2n,\frac{(8\beta^2L)^4n}{(1-\beta)^2}\}$, we have:
		\begin{align*}
		& \mathbb{E}[f(\blx^{(T)}_{avg})] - f^* =  \mathcal{O}\left( \frac{\|\blx^{(0)} - \bx^*\|^2 + \bar{\sigma}^2} {\sqrt{nT}} \right)  + \mathcal{O}\left( \frac{c_0  n^{\nicefrac{(1+\epsilon)}{2}}}{\delta^2 T^{\nicefrac{(1+\epsilon)}{2}}} + \frac{n^{\nicefrac{3}{4}} \beta^2 G^2}{(1-\beta)^{\nicefrac{3}{2}} T^{\nicefrac{3}{4}}}   + \frac{nH^2G^2}{\delta^4 \omega^2T}     \right),
		\end{align*}
		where $\blx^{(T)}_{avg} := \frac{1}{T} \sum_{t=0}^{T-1} \blx^{(t)}$ for $\blx^{(t)} = \frac{1}{n} \sum_{i=1}^{n} \bx^{(t)}_{i}$ and $\bx^{*}$ is an optimizer of $f$ attaining optimal value $f^{*}$.

	\end{itemize}
\end{theorem}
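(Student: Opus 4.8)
The plan is to reduce the momentum iteration to an effective vanilla SGD through a virtual sequence, and then to control the decentralization error with a compression-aware Lyapunov argument, now coupled to momentum, local iterations, and event-triggering. First I would use that $\mathbf{W}$ is doubly stochastic, so the consensus step (line~16) preserves the network average $\blx^{(t)} := \frac{1}{n}\sum_i \bx_i^{(t)}$; hence $\blx^{(t)}$ obeys a centralized momentum recursion driven only by the averaged stochastic gradient $\frac{1}{n}\sum_i \bg_i^{(t)}$. I would then define a virtual sequence $\widetilde{\blx}^{(t)}$ as an appropriate linear combination of $\blx^{(t)}$ and the averaged momentum $\frac{1}{n}\sum_i \bv_i^{(t-1)}$, chosen so that the momentum telescopes and the virtual iterate performs plain SGD with effective step-size $\frac{\eta}{1-\beta}$, i.e. $\widetilde{\blx}^{(t+1)} = \widetilde{\blx}^{(t)} - \frac{\eta}{1-\beta}\cdot\frac{1}{n}\sum_i \bg_i^{(t)}$. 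This device lets the momentum be handled without convexity and explains the scaling $\eta=(1-\beta)\sqrt{n/T}$.

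Next I would apply the descent lemma from Assumption~\ref{assump:smoothness} to $f$ at $\widetilde{\blx}^{(t)}$, take expectation over the samples $\xi_i^{(t)}$, and use unbiasedness together with the uniformly bounded variance. This yields, per step, a descent term $-\frac{\eta}{2(1-\beta)}\|\nabla f(\blx^{(t)})\|_2^2$ (up to constants), a variance term of order $\frac{\eta^2}{(1-\beta)^2}\frac{\bar{\sigma}^2}{n}$ (the $1/n$ coming from averaging $n$ independent gradients), and two residuals: the discrepancy between $\nabla f(\blx^{(t)})$ and $\frac{1}{n}\sum_i\nabla f_i(\bx_i^{(t)})$, which $L$-smoothness bounds by $\frac{L^2}{n}\sum_i\|\bx_i^{(t)}-\blx^{(t)}\|_2^2$ (the consensus error), and the gap $\|\widetilde{\blx}^{(t)}-\blx^{(t)}\|_2$, which is proportional to $\frac{\beta\eta}{1-\beta}$ times the averaged momentum norm and is bounded using the second-moment bound $G^2$. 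This second residual is precisely what produces the momentum-dependent higher-order terms $\frac{\beta^4\bar{\sigma}^2}{T(1-\beta)^2}$ in the non-convex case and $\frac{n^{3/4}\beta^2 G^2}{(1-\beta)^{3/2}T^{3/4}}$ in the convex case.

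The technical heart is the consensus-error bound on $\frac{1}{n}\sum_i\|\bx_i^{(t)}-\blx^{(t)}\|_2^2$. Following the CHOCO-SGD-style argument, I would set up a potential combining $\sum_i\|\bx_i^{(t)}-\blx^{(t)}\|_2^2$ with the accumulated compression error $\sum_i\|\bx_i^{(t+\frac{1}{2})}-\hat{\bx}_i^{(t)}\|_2^2$, and show it contracts at a rate governed by the spectral gap $\delta$, the compression parameter $\omega$, and the consensus step-size $\gamma$; the stated value of $\gamma$ is exactly the choice making this contraction go through. The driving terms are of two kinds: gradient steps contributing order $\eta^2 G^2$, amplified by $H^2$ because updates accumulate over a communication-free window of length $H=gap(\I_T)$; and the event-triggering rule, under which a node that does not trigger (the condition in line~8 fails) satisfies $\|\bx_i^{(t+\frac{1}{2})}-\hat{\bx}_i^{(t)}\|_2^2\le c_t\eta^2\le c_0\eta^{1+\eps}$ by the choice $c_t\le c_0/\eta^{1-\eps}$, so the lazy updates inject only a controlled $c_0\eta^{1+\eps}$ error. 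Unrolling the contracted potential and substituting $\eta=(1-\beta)\sqrt{n/T}$ yields the consensus contributions $\frac{nH^2G^2}{T\delta^4\omega^2}$ and $\frac{c_0 n^{(1+\eps)/2}}{\delta^2 T^{(1+\eps)/2}}$.

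Finally I would assemble the bounds: summing the per-step descent inequality over $t=0,\dots,T-1$ telescopes the function values (bounded by $J^2$), and inserting the consensus-error bound and the step-size, then dividing by the accumulated coefficient of $\|\nabla f(\blx^{(t)})\|_2^2$ (which is $\Theta(\sqrt{nT})$), gives the non-convex statement; the lower bound $T \geq \max\{16L^2n,\,\frac{8L^2\beta^4 n}{(1-\beta)^2}\}$ is what keeps the net descent coefficient positive after absorbing the $L^2\eta^2$ losses. For the convex case I would replace the descent lemma with $f(\blx^{(t)})-f^*\le\langle\nabla f(\blx^{(t)}),\blx^{(t)}-\bx^*\rangle$, track the squared distance $\|\widetilde{\blx}^{(t)}-\bx^*\|_2^2$ of the virtual iterate to the optimizer instead of the function value, and conclude by Jensen's inequality applied to $\blx^{(T)}_{avg}$. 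I expect the main obstacle to be the simultaneous control of momentum and compression: the virtual-sequence reformulation and the compression potential must be made mutually compatible, since momentum perturbs the per-node updates feeding the consensus recursion, and ensuring that every $\beta$-, $\omega$-, $\delta$-, and $H$-dependent quantity lands in the higher-order (faster-decaying) terms rather than in the leading $1/\sqrt{nT}$ term demands careful bookkeeping of the threshold and local-iteration contributions.
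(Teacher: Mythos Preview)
Your proposal is correct and follows essentially the same route as the paper: the virtual sequence $\widetilde{\blx}^{(t)}=\blx^{(t)}-\frac{\eta\beta^2}{1-\beta}\cdot\frac{1}{n}\sum_i\bv_i^{(t-1)}$ to reduce momentum to vanilla SGD with step $\eta/(1-\beta)$, the smoothness descent (resp.\ distance-to-optimum) inequality, and a CHOCO-style coupled recursion for $\sum_i\|\bx_i-\blx\|^2$ and $\sum_i\|\bx_i-\hat\bx_i\|^2$ contracting at the stated $\gamma$ and picking up the $H^2G^2$ and $c_0\eta^{1+\eps}$ driving terms exactly as you describe. The one detail you gloss over is that in the convex case the $T^{3/4}$ momentum term arises from splitting the cross-term $\langle\frac{\eta}{n}\sum_i\bv_i^{(t-1)},\frac{1}{n}\sum_j\nabla f_j(\bx_j^{(t)})\rangle$ as $\langle\eta^{3/4}\cdot(\cdot),\eta^{1/4}\cdot(\cdot)\rangle$ before applying Young's inequality---without this unbalanced split you would get $\eta^2$ rather than $\eta^{5/2}$ and lose the $n$-speedup in the leading term.
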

\noindent We have used simplified convergence rate expressions in the above results, and provide precise rate expressions in the proofs provided in Appendix E and Appendix F for non-convex and convex objectives, respectively. 



\subsection{Effects of parameters on convergence} 
The factors arising due to communication efficiency -- $H$ (and $c_0$ for Theorem~\ref{convergence_results}) for the event-triggered communication, $\omega$ for compression, and $\delta$ for the connectivity of the underlying graph -- do not affect the dominant terms in convergence rate for either Theorem~\ref{convergence_relaxed_assump} or Theorem~\ref{convergence_results} and appear only in the higher order terms. 
This implies that if we run SQuARM-SGD for sufficiently long, precisely, for at least $T_{w_0} = C_{w_0} \times \left( \frac{n^3}{\delta^4 \omega^4} \frac{(1-\beta^2)^2   H^4 \left[  (M^2 +1)G + \sigma^2  \right]^2    }{ \left[ J^2 + \sigma^2 + (M^2 +n)G^2 \right]^2  }\right) $ where $G,\sigma,M$ are defined in the weaker set of assumptions provided in Subsection~\ref{subsec:weak_assump} and $C_{w_0}$ is a sufficiently large constant, then SQuARM-SGD converges at a rate $\mathcal{O} \left( \nicefrac{1}{\sqrt{T}} \right)$ . Similarly, if we consider the stronger set of assumptions stated in Subsection~\ref{subsec:strong_assump}, and run SQuARM-SGD for at least $T_{s_0}:=C_{s_0}\times\max\left\{\left(\frac{c_0^2 n^{(2+\epsilon)}}{(J^2+\bar{\sigma}^2)^{2}  \delta^4 }\right)^{\nicefrac{1}{\epsilon}}{,}  \frac{n}{(J^2+\bar{\sigma}^2)^2} \left(\frac{nG^2H^2}{\omega^2\delta^4 }  {+} \frac{\beta^4 \bar{\sigma}^2}{(1-\beta)^2}  \right)^2 \right\}$ iterations for non-convex objectives and for $T_{s_1} := C_{s_1} \times \max \left\{ \left(\frac{c_0^2 n^{2+\epsilon}}{\delta^4 ( \Vert \blx^{(0)} - \bx^{*} \Vert^2 + \bar{\sigma}^2 )^2}\right)^{\nicefrac{1}{\epsilon}} , \frac{n^3 H^4 G^2}{\delta^8 \omega^4 ( \Vert \blx^{(0)} - \bx^{*} \Vert^2 + \bar{\sigma}^2 )^2} \right. ,$  $ \left. \frac{n^{5} G^8 \beta^8 }{(1-\beta)^6 ( \Vert \blx^{(0)} - \bx^{*} \Vert^2 + \bar{\sigma}^2 )^4 }    \right\}$ for convex objectives with sufficiently large constants $C_{s_0}$ and $C_{s_1}$, respectively, then SQuARM-SGD converges at a rate of $\mathcal{O}  \left(\nicefrac{1}{\sqrt{nT}} \right)$.
Note that this is the convergence rate of distributed {\em vanilla} SGD with the same speed-up w.r.t.\ the number of nodes $n$ in both these settings. 
 Thus, we essentially converge at the same rate as that of vanilla SGD, while saving significantly in terms of total communicated bits; this can also be seen in our numerical results in Section \ref{experiments}.


}

\section{Preliminaries} \label{prelim}
{\allowdisplaybreaks
\noindent In this section, we first establish a matrix notation which would be used throughout the proofs. We then state SQuARM-SGD in matrix notation (which is equivalent to Algorithm~\ref{alg_dec_sgd_li}) and list important facts regarding our updates. We conclude this section with a brief discussion of technical challenges involved in the proofs. 	

\paragraph{Matrix notation.}	
Consider the set of parameters $\{ \bx_i^{(t)} \}_{i=1}^{n} $  at all nodes at timestep $t$ as well as the estimates of the parameters $\{ \hat{\bx}_i^{(t)} \}_{i=1}^{n} $. The matrix notation is given by:
\begin{align*} 
\bX^{(t)} &:= [\bx_1^{(t)}, \hdots, \bx_n^{(t)} ] \in \mathbb{R}^{d \times n } \\
\hat{\bX}^{(t)} &:= [\hat{\bx}_1^{(t)}, \hdots, \hat{\bx}_n^{(t)} ] \in \mathbb{R}^{d \times n } \\
\Bar{\bX}^{(t)} &:= [\Bar{\bx}^{(t)}, \hdots, \Bar{\bx}^{(t)} ] \in \mathbb{R}^{d \times n} \\
\bV^{(t)} &:= [\bv_1^{(t)}, \bv_2^{(t)}, \hdots, \bv_n^{(t)}] \in \mathbb{R}^{d \times n} \\
{\boldsymbol{\nabla F}}(\bX^{(t)}, \boldsymbol{\xi}^{(t)}) &{:=} [ {\nabla F_1}(\bx_1^{(t)}, \xi_1^{(t)}),{\hdots}, {\nabla F_n}(\bx_n^{(t)}, \xi_n^{(t)}) ]  \in \hspace{-0.1cm} \mathbb{R}^{d \times n}
\end{align*}
Here, $\nabla F_i(\bx_i^{(t)}, \xi_i^{(t)})$ denotes the stochastic gradient at node $i$ at timestep $t$ and the vector $  \Bar{\bx}^{(t)}  := \frac{1}{n}\sum_{i=1}^{n} \bx_i^{(t)} $  denotes the average of node parameters at time $t$. Let $\Gamma^{(t)} \subseteq [n] $ be the set of nodes that do not communicate at time $t$. We define $\mathbf{P}^{(t)} \in \mathbb{R}^{n \times n} $,  a diagonal matrix with $\mathbf{P}_{ii}^{(t)} = 0$ for $i \in \Gamma^{(t)} $ and $\mathbf{P}_{ii}^{(t)} = 1$ otherwise.\\ \\
\paragraph{SQuARM-SGD in matrix notation.}  \label{mat_not_li} 
	Consider  Algorithm \ref{alg_dec_sgd_li}  with synchronization indices given by the set 
	$\mathcal{I}_T = \{ 0, H , 2H \hdots, mH, \hdots  \} \subseteq [T]  $ for some constant $H \in \mathbb{N}$. Using the above notation, the sequence of parameters' updates from synchronization index $mH$ to $(m+1)H$ is:
	\begin{align}
	\bV ^{(t)}  &= \beta \bV ^{(t-1)} + {\boldsymbol{\nabla F}}(\bX^{(t)},\boldsymbol{\xi}^{(t)}) \label{mat_not_algo_1} \\
	\bX^{((m+\nicefrac{1}{2})H)} & {=} \bX^{I_{(t)}} {-} \sum_{t' = mH}^{(m+1)H-1} \eta (\beta\bV^{(t')} + {\boldsymbol{\nabla F}}(\bX^{(t')},\boldsymbol{\xi}^{(t')}))  \label{mat_not_algo_4} \\
	\hat{\bX}^{((m+1)H)} & {=} \hat{\bX}^{(mH)} {+} \C((\bX^{((m{+}\nicefrac{1}{2})H)} {-} \hat{\bX}^{(mH)})\mathbf{P}^{((m{+}1)H{-}1)} ) \label{mat_not_algo_3} \\
	\bX^{((m+1)H)} & = \bX^{((m+\nicefrac{1}{2})H)} + \gamma\hat{\bX}^{((m+1)}(\mathbf{W}-\mathbf{I}) \label{mat_not_algo_2}
	\end{align}
	where $\C( . )$ denotes the compression operator applied column-wise to the argument matrix and $\mathbf{I}$ is the identity matrix. 
Note that in the update rule for $\hat{\bX}^{((m+1)H)}$, we used 
{\sf (i)} the fact that $\mathbf{P}$ is a diagonal matrix and that $\C$ is applied column-wise to write $\C(\bX^{((m+\nicefrac{1}{2})H)} - \hat{\bX}^{(mH)})\mathbf{P}^{((m+1)H-1)}  = \C((\bX^{((m+\nicefrac{1}{2})H)} - \hat{\bX}^{(mH)})\mathbf{P}^{((m+1)H-1)} )$, and
{\sf (ii)} that $\hat{\bX}^{((m+1)H-1)} = \hat{\bX}^{(mH)}$, because $\hat{\bX}$ does not change in between the synchronization indices.

We now note some useful properties of the iterates in matrix notation which would be used throughout the paper:
\begin{enumerate}
	\item 	Since $\mathbf{W} \in [0,1]^{n \times n}$ is a doubly stochastic matrix, we have: $\mathbf{W}=\mathbf{W}^T \, , \mathbf{W} \mathbf{\mathbf{1}} = \mathbf{\mathbf{1}}$ and $\mathbf{\mathbf{1}}^T\mathbf{W} = \mathbf{\mathbf{1}}^T$ (where $\mathbf{1}$ is the all ones vector in $\mathbb{R}^n$). This also gives us: 
	\begin{align} \label{mean_prop}
	\Bar{\bX}^{(t)} := \bX^{(t)} \frac{1}{n} \mathbf{\mathbf{1}}\mathbf{\mathbf{1}}^T, \hspace{1cm} \Bar{\bX}^{(t)}\mathbf{W} = \Bar{\bX}^{(t)}
	\end{align}
	where the first expression follows from the definition of $\bar{\bX}^{(t)}$ and the second expression follows because $\mathbf{W} \frac{\mathbf{\mathbf{1}} \mathbf{\mathbf{1}}^T }{n} = \frac{\mathbf{\mathbf{1}} \mathbf{\mathbf{1}}^T }{n} \mathbf{W} = \frac{1}{n} \mathbf{\mathbf{1}}\mathbf{\mathbf{1}}^T $. \\
	\item The average of the iterates in Algorithm \ref{alg_dec_sgd_li}  follows  : 
	\begin{align} \label{mean_seq_iter}
	\Bar{\bX}^{(t+1)} & = \Bar{\bX}^{(t+ \frac{1}{2})} + \mathbf{\mathbf{1}}_{(t+1) \in \mathcal{I}_T} \left[\gamma\hat{\bX}^{{(t+1)}}( \mathbf{W}- \mathbf{I})\frac{1}{n} \mathbf{\mathbf{1}}\mathbf{\mathbf{1}}^T \right]  =   \Bar{\bX}^{(t+ \frac{1}{2})} 
	\end{align}
	where $\mathcal{I}_T $ denotes the set of synchronization indices of Algorithm \ref{alg_dec_sgd_li}. We use $( \mathbf{W}- \mathbf{I})\frac{1}{n}\mathbf{\mathbf{1}}\mathbf{\mathbf{1}}^T = \mathbf{W} \frac{\mathbf{\mathbf{1}} \mathbf{\mathbf{1}}^T }{n} -\frac{\mathbf{\mathbf{1}} \mathbf{\mathbf{1}}^T }{n} = \mathbf{0} $.	
\end{enumerate}


\begin{proposition}[Variance Reduction with Independent Samples]\label{prop:variance-reduction_relaxed}
Consider the variance bound \eqref{eq:variance} on the stochastic gradient for nodes. If ${\boldsymbol{\xi}^{(t)}} = \{ \xi_1^{(t)}, \xi_2^{(t)}, \hdots, \xi_n^{(t)} \}$ denotes the collection of independent stochastic samples for the nodes at any time-step $t$. Then we have:
	\begin{align}\label{eq:variance-reduction_relaxed}
		& \mathbb{E}_{\boldsymbol{\xi}^{(t)}} \verts{\frac{1}{n} \sum_{i=1}^{n} \nabla \big(F_i(\bx^{(t)}_i, \xi^{(t)}_i) - \nabla f_i(\bx^{(t)}_i)\big)}^2  \leq \frac{\sigma^2}{n} + \frac{M^2}{n^2}\sum_{i=1}^n\verts{\nabla f_i(\bx^{(t)}_i)}_2^2.
	\end{align}
\end{proposition}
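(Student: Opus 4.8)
The plan is to use the mutual independence of the node samples $\boldsymbol{\xi}^{(t)}=\{\xi_1^{(t)},\dots,\xi_n^{(t)}\}$ to convert the variance of the averaged noise into an average of per-node variances, and then to invoke Assumption~\ref{assump:variance} at the current iterates. First I would introduce the centered noise vectors
\[
\bz_i := \nabla F_i(\bx_i^{(t)},\xi_i^{(t)}) - \nabla f_i(\bx_i^{(t)}), \qquad i\in[n].
\]
Since $\bx_i^{(t)}$ is fixed once we condition on the history up to time $t$ and the expectation $\mathbb{E}_{\boldsymbol{\xi}^{(t)}}$ is taken only over the fresh samples, unbiasedness gives $\mathbb{E}[\bz_i]=\bzero$, and mutual independence of the $\xi_i^{(t)}$ makes the $\bz_i$ independent across $i$.

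Next I would expand the squared norm of the average as a double sum of inner products,
\[
\mathbb{E}_{\boldsymbol{\xi}^{(t)}}\left\|\frac{1}{n}\sum_{i=1}^n \bz_i\right\|_2^2 = \frac{1}{n^2}\sum_{i=1}^n\sum_{j=1}^n \mathbb{E}[\langle \bz_i,\bz_j\rangle].
\]
For each off-diagonal pair $i\neq j$, independence together with zero mean lets the expectation factor through the bilinear inner product, $\mathbb{E}[\langle \bz_i,\bz_j\rangle]=\langle \mathbb{E}[\bz_i],\mathbb{E}[\bz_j]\rangle=0$, so only the $n$ diagonal terms survive and the right-hand side collapses to $\frac{1}{n^2}\sum_{i=1}^n \mathbb{E}\|\bz_i\|_2^2$.

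Finally, I would bound $\sum_{i=1}^n\mathbb{E}\|\bz_i\|_2^2$ directly by Assumption~\ref{assump:variance} evaluated at the local iterates $\{\bx_i^{(t)}\}$, which reads $\frac{1}{n}\sum_{i=1}^n\mathbb{E}\|\bz_i\|_2^2 \leq \sigma^2 + \frac{M^2}{n}\sum_{i=1}^n\|\nabla f_i(\bx_i^{(t)})\|_2^2$, equivalently $\sum_{i=1}^n\mathbb{E}\|\bz_i\|_2^2 \leq n\sigma^2 + M^2\sum_{i=1}^n\|\nabla f_i(\bx_i^{(t)})\|_2^2$. Substituting this into the collapsed expression yields exactly the claimed bound $\frac{\sigma^2}{n}+\frac{M^2}{n^2}\sum_{i=1}^n\|\nabla f_i(\bx_i^{(t)})\|_2^2$. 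The argument is essentially routine, and I do not expect a substantive obstacle; the only points needing care are the vanishing of the cross terms, which relies simultaneously on unbiasedness and on the independence of the \emph{vector-valued} noises, and the legitimacy of applying the pointwise Assumption~\ref{assump:variance} at the random iterates, which is justified by conditioning on the past so that $\bx_i^{(t)}$ acts as a fixed argument under $\mathbb{E}_{\boldsymbol{\xi}^{(t)}}$.
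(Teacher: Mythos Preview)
Your proposal is correct and follows exactly the approach the paper has in mind; the paper's own proof is a one-line remark that the result ``simply follows from the independence of the randomness used in sampling stochastic gradients at different workers,'' and your write-up spells out precisely that computation (cross terms vanish by independence and unbiasedness, diagonal terms bounded via Assumption~\ref{assump:variance}).
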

\begin{proposition}\label{prop:bound_v}
For any $t$, $\bbE\verts{\bV^{(t)}}_F^2$ is bounded as follows:
\begin{align}\label{eq:bound_v}
(1{-}\beta)\bbE \Vert \bV^{(t)} \Vert_F^2 \leq \Lambda^{(t)} := \sum_{k=0}^{t} \beta^{t{-}k}\bbE \Vert\nabla \bF(\bX^{(k)}, \bxi^{(k)}) \Vert_F^2
\end{align}
\end{proposition}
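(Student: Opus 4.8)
The plan is to prove this by induction on $t$, using the convexity of the squared Frobenius norm to handle the momentum recursion cleanly. The initialization $\bV^{(-1)}=\mathbf{0}$ makes the base case $t=0$ immediate: then $\bV^{(0)} = \nabla\bF(\bX^{(0)},\bxi^{(0)})$, so after taking expectations $(1-\beta)\bbE\|\bV^{(0)}\|_F^2 \le \bbE\|\bV^{(0)}\|_F^2 = \Lambda^{(0)}$.

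For the inductive step, I would start from the update $\bV^{(t)} = \beta\bV^{(t-1)} + \nabla\bF(\bX^{(t)},\bxi^{(t)})$ in \eqref{mat_not_algo_1} and rewrite it as the convex combination $\bV^{(t)} = \beta\,\bV^{(t-1)} + (1-\beta)\cdot\frac{1}{1-\beta}\nabla\bF(\bX^{(t)},\bxi^{(t)})$. Applying Jensen's inequality to the convex map $\|\cdot\|_F^2$ then gives $\|\bV^{(t)}\|_F^2 \le \beta\|\bV^{(t-1)}\|_F^2 + \frac{1}{1-\beta}\|\nabla\bF(\bX^{(t)},\bxi^{(t)})\|_F^2$. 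I would take expectations, multiply through by $(1-\beta)$, and substitute the inductive hypothesis $(1-\beta)\bbE\|\bV^{(t-1)}\|_F^2 \le \Lambda^{(t-1)}$ to obtain $(1-\beta)\bbE\|\bV^{(t)}\|_F^2 \le \beta\Lambda^{(t-1)} + \bbE\|\nabla\bF(\bX^{(t)},\bxi^{(t)})\|_F^2$.

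The last step is a routine re-indexing of the geometric sum: since $\Lambda^{(t-1)} = \sum_{k=0}^{t-1}\beta^{t-1-k}\bbE\|\nabla\bF(\bX^{(k)},\bxi^{(k)})\|_F^2$, multiplying by $\beta$ raises every exponent by one, and adding the $k=t$ term $\bbE\|\nabla\bF(\bX^{(t)},\bxi^{(t)})\|_F^2$ reassembles exactly $\Lambda^{(t)} = \sum_{k=0}^{t}\beta^{t-k}\bbE\|\nabla\bF(\bX^{(k)},\bxi^{(k)})\|_F^2$, closing the induction.

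I do not anticipate a genuine obstacle; the one subtlety worth flagging is the choice of the convex split, namely dividing the gradient term by $1-\beta$ so that the two weights are precisely $\beta$ and $1-\beta$. A crude bound such as $\|a+b\|_F^2 \le 2\|a\|_F^2 + 2\|b\|_F^2$ would destroy the clean $\frac{1}{1-\beta}$ factor and prevent the exponents from telescoping into $\Lambda^{(t)}$. As an equivalent non-inductive route, one could first unroll the recursion to $\bV^{(t)} = \sum_{k=0}^{t}\beta^{t-k}\nabla\bF(\bX^{(k)},\bxi^{(k)})$ and then apply Jensen's inequality directly with the normalized weights $\beta^{t-k}/S_t$, where $S_t = \sum_{j=0}^{t}\beta^j \le \frac{1}{1-\beta}$ bounds the normalizing constant; this reaches the same estimate in a single step.
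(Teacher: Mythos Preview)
Your proof is correct. Your primary inductive argument is a mild variation on the paper's approach, while the ``equivalent non-inductive route'' you sketch at the end is exactly what the paper does: unroll the recursion to $\bV^{(t)} = \sum_{k=0}^{t}\beta^{t-k}\nabla\bF(\bX^{(k)},\bxi^{(k)})$, apply Jensen's inequality with the normalized weights $\beta^{t-k}/\theta_t$ where $\theta_t = \sum_{j=0}^{t}\beta^j$, and then bound $\theta_t \le \frac{1}{1-\beta}$. The two arguments are essentially the same idea packaged differently; your inductive version has the modest advantage of avoiding the explicit closed form for $\bV^{(t)}$, while the direct version is a one-liner once the unrolling is written down.
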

We prove the above propositions in Appendix B.


}

\paragraph{Technical Challenges:}  
We focus on two major aspects of our work to compare with existing literature: {\sf (i)} Analysis of compressed decentralized training with triggered communication with mild assumptions. {\sf (ii)} Performing the resulting analysis by taking into account the momentum updates.

	The assumption on bounded second moment of stochastic gradients is commonly used in communication efficient decentralized training literature\cite{singh2019sparq,koloskova_decentralized_2019,koloskova_decentralized_2019-1,tang2019doublesqueeze}, and is also used to derive the result of Theorem~\ref{convergence_results} in our paper. However, this assumption can be quite strong for settings where the data distribution among clients is heterogeneous, as the gradient dissimilarity between clients can be bounded trivially using the second moment bound (see the note on comparison of assumptions in Remark~\ref{remark:assump-comparison} on page~\pageref{remark:assump-comparison}). In contrast, in Theorem~\ref{convergence_relaxed_assump}, we work with a much weaker set of assumptions (see Section \ref{subsec:weak_assump}) by not assuming any uniform bound on norm of stochastic gradients, and further allow both the gradient diversity and the variance of stochastic gradients to scale with the norm of gradients compared to existing works \cite{Momentum_linear-speedup19}. Performing the analyses with these relaxed assumptions is challenging, as it requires us to carefully consider the error due to quantization and local iterations per communication round and construct a recursion equation for it (see Lemmas~\ref{lem:similar-eq16},~\ref{lem:similar-eq17} on page~\pageref{lem:similar-eq16}) and then delicately handle the recursion to bound the error for any time index (see Lemma~\ref{lem:similar-lem14} on page~\pageref{lem:similar-lem14}).
We remark that the assumptions considered for Theorem~\ref{convergence_relaxed_assump} in our paper have appeared in literature before in \cite{KoloskovaDecenRelaxAssum20} to study decentralized optimization with only local iterations; our work is a significant extension of their results and analyses as we incorporate compression and momentum while achieving a convergence rate of $\mathcal{O} \left( \nicefrac{1}{\sqrt{T}}\right) $.

While momentum updates are almost always used in practice to empirically speedup the training process and to improve generalization performance, it has remained unclear whether convergence with linear speedup with number of nodes $n$ (as in the case of SGD without momentum \cite{lian2017asynchronous,basu_qsparse-local-sgd:_2019,singh2019sparq, KoloskovaDecenRelaxAssum20}) is still possible when using momentum. Recently, \cite{Momentum_linear-speedup19,distributed_momentum19} provided a positive answer to this question, where \cite{Momentum_linear-speedup19} studies local SGD with momentum in a decentralized setup, but {\em without} any compressed or event-triggered communication, and \cite{distributed_momentum19} studies compressed {\em distributed} SGD with momentum for non-convex objectives, but without local iterations or event-triggered communication. Our result in Theorem~\ref{convergence_results} is the first to provide convergence rates showing linear speedup with $n$ for compressed \emph{decentralized} optimization using momentum while incorporating local iteration and triggered communication in the analysis (see Section~\ref{subsec:strong_assump} for the convergence result and the assumptions made). To achieve this, our convergence proofs require the use of virtual sequences as defined in \eqref{virt_seq} on page \pageref{virt_seq}.
Proving convergence results using virtual sequences has been promising lately in stochastic optimization; see, for example, \cite{stich_sparsified_2018,alistarh_convergence_2018,stich-signsgd-19,basu_qsparse-local-sgd:_2019,Momentum_linear-speedup19,distributed_momentum19}.

We would like to emphasize that even without momentum and local iterations, analyzing compression in decentralized optimization \cite{koloskova_decentralized_2019-1,koloskova_decentralized_2019, singh2019sparq} (whose analysis does not require virtual sequences) is significantly more involved and requires different technical tools than analyzing compression in distributed optimization \cite{alistarh_convergence_2018,stich-signsgd-19}. One of the main reasons for this is as follows: In a decentralized setup, we need to separately show that nodes eventually reach to the same parameters (i.e., consensus happens), which happens trivially in a distributed setup, because in each iteration all worker nodes have the same parameters sent by the master node.
	On top of that, incorporating momentum updates (which has only been analyzed with compression in distributed setups so far) in decentralized setting is non-trivial and gives similar challenges.

As a consequence, it is not surprising that our proofs are fundamentally different and significantly more challenging from existing works, including \cite{distributed_momentum19, Momentum_linear-speedup19,koloskova_decentralized_2019-1,koloskova_decentralized_2019,singh2019sparq, KoloskovaDecenRelaxAssum20}, as we study momentum updates for decentralized setup with compression, local iterations and event-triggered communication to save on communication bits. 
Unlike \cite{distributed_momentum19}, we allow {\em heterogeneous} setting, where different nodes may have different datasets. Moreover, with all these, we achieve vanilla SGD like convergence rates for non-convex and convex objectives.

\section{Results with Relaxed Assumptions: Proof of Theorem \ref{convergence_relaxed_assump}}\label{sec:relaxed-assump-results}
\allowdisplaybreaks{

In order to prove Theorem \ref{convergence_relaxed_assump}, we define a virtual sequence ${\btx}_i^{(t)}$ for each node $i\in[n]$, as follows:
\begin{align} \label{virt_seq}
{\btx}_i^{(t)} = {\bx}_i^{(t)} - \frac{\eta \beta^2}{(1-\beta)} \bv_i^{(t-1)}; \qquad \btx_i^{(0)}:=\bx_i^{(0)}.
\end{align}
This remaining section is divided into seven subsections. 
In Section~\ref{subsec:sgd-update-virtual}, we derive an SGD like update rule for the virtual sequence.
In Section~\ref{subsec:proof-outline-relaxed-thm}, we provide a proof-outline of Theorem~\ref{convergence_relaxed_assump}. The remaining subsections are dedicated to prove the lemmas stated in the proof outline given in Section~\ref{subsec:proof-outline-relaxed-thm}.
\subsection{Deriving an SGD-Like Update Rule for the Virtual Sequene}\label{subsec:sgd-update-virtual}
In \eqref{virt_seq}, $\bx_i^{(t)}$ is the true local parameter at node $i$ at the $t$'th iteration, which is equal to (see line 16 of Algorithm \ref{alg_dec_sgd_li}):
\begin{align*}
\bx_i^{(t)} = \bx_i^{(t-\frac{1}{2})} + \mathbbm{1}_{\{t \in \mathcal{I}_T\}}\gamma \sum_{j=1}^n w_{ij} (\hat{\bx}_j^{(t)} - \hat{\bx}_i^{(t)}),
\end{align*}
where $\bx_i^{(t-\frac{1}{2})} = \bx_i ^{(t-1)} - \eta (\beta \bv_i^{(t-1)} + \nabla F_i(\bx_i^{(t-1)}, \xi_i^{(t-1)}))$ (line 5 in Algorithm \ref{alg_dec_sgd_li}).
Note that we changed the summation from $j\in\mathcal{N}_i$ to $j=1$ to $n$; this is because $w_{ij}=0$ whenever $j\notin\mathcal{N}_i$.

Let ${\blx}^{(t)} = \frac{1}{n} \sum_{i=1}^{n} \bx_{i}^{(t)}$ denote the average of the local iterates at time $t$.
Now we argue that ${\blx}^{(t)}={\blx}^{(t-\frac{1}{2})}$.
This trivially holds when $t\notin \I_T$. For the other case, i.e., $t\in \I_T$, this follows because
$\sum_{i=1}^n\sum_{j=1}^n w_{ij} (\hat{\bx}_j^{(t)} - \hat{\bx}_i^{(t)})=0$, which uses the fact that $W$ is a doubly stochastic matrix. 
Thus, we have 
\begin{equation}\label{eq:thm_cvx_interim0}
{\blx}^{(t)} =  {\blx}^{(t-1)} - \frac{\eta}{n}\sum_{i=1}^n \left(\beta \bv_i^{(t-1)} + \nabla F_i(\bx_i^{(t-1)}, \xi_i^{(t-1)})\right).
\end{equation}
Taking average over all the nodes in \eqref{virt_seq} and defining ${\btx}^{(t)} := \frac{1}{n}\sum_{i=1}^n{\btx}_i^{(t)}$, we get
\begin{align*}
{\btx}^{(t)} = {\blx}^{(t)} - \frac{\eta \beta^2}{(1-\beta)} \frac{1}{n} \sum_{i=1}^n \bv_i^{(t-1)}.
\end{align*}
We now note a recurrence relation for the sequence ${\btx}^{(t+1)}$:
\begin{align} \label{virt_seq_rec}
& {\btx}^{(t+1)}  = {\blx}^{(t+1)} - \frac{\eta \beta^2}{(1-\beta)} \frac{1}{n}\sum_{i=1}^n \bv_i^{(t)} \notag \\
& =  {\blx}^{(t)} {-} \frac{\eta}{n}\sum_{i=1}^n \left(\beta \bv_i^{(t)} {+} \nabla F_i(\bx_i^{(t)}, \xi_i^{(t)})\right) {-} \frac{\eta \beta^2}{(1-\beta)} \frac{1}{n}\sum_{i=1}^n \bv_i^{(t)} \notag \\
& =  {\blx}^{(t)} {-} \frac{\eta}{n} \sum_{i=1}^n \nabla F_i(\bx_i^{(t)},\xi_i^{(t)}) {-}  \left( \eta \beta {+} \frac{\eta \beta^2}{(1-\beta)}\right)  \frac{1}{n}\sum_{i=1}^n \bv_i^{(t)} \notag \\
& =  {\blx}^{(t)} {-} \frac{\eta}{n} \sum_{i=1}^n \nabla F_i(\bx_i^{(t)},\xi_i^{(t)}) {-} \frac{\eta \beta}{(1-\beta)} \frac{1}{n}\sum_{i=1}^n \beta\bv_i^{(t-1)} \notag \\
& \qquad  {-} \frac{\eta \beta}{(1-\beta)} \frac{1}{n}\sum_{i=1}^n \nabla F_i(\bx_i^{(t)},\xi_i^{(t)}) \notag \\
& =  {\btx}^{(t)} - \frac{\eta}{(1-\beta)}\frac{1}{n} \sum_{i=1}^n \nabla F_i(\bx_i^{(t)},\xi_i^{(t)})
\end{align}

\subsection{Proof Outline of Theorem~\ref{convergence_relaxed_assump}}\label{subsec:proof-outline-relaxed-thm}
The proof is divided into four lemmas. The first lemma (stated in Lemma~\ref{lem:similar-lem11}) derives the required convergence bound, however, the RHS depends on the deviation of local parameter vectors from the average parameter vector (i.e., $\Xi^{(t)}:=\sum_{i=1}^n\bbE \Vert\bx^{(t)}_i - \blx^{(t)}\Vert_2^2$), which we have to bound. The remaining three lemmas are dedicated to bounding this quantity. 

Note that bounding this in the {\em distributed} setup is not difficult, as at synchronization indices all parameters are the same because it is coordinated by a central server. This means that at any time index $t\in[T]$, there is always a time index $t-H\leq t'\leq t$ when $\bx_i^{(t')}$ for all $i\in[n]$ are the same, and we have a reference point no too far in the past. However, in the decentralized setup, there is no central server for coordinating the updates, and hence there is no reference point in the past when the local parameters are the same. Moreover, our assumptions are arguably the weakest in literature, and we also are working with compression and momentum updates. Thus, bounding $\Xi^{(t)}$ in our setup is highly non-trivial, and is one of the major technical contributions of our work.


\begin{lemma}\label{lem:similar-lem11}
Under the setting of Theorem~\ref{convergence_relaxed_assump}, when $\eta\leq\min\left\{\frac{2(1-\beta)^3}{9\beta^4},\frac{2(1-\beta)^2}{3\beta^2L}\sqrt{\frac{n}{M^2+n}},\frac{(1-\beta)^2}{6\beta^2LB}\sqrt{\frac{n}{2(M^2+n)}}\right\}$, we get:
\begin{align*}
&\frac{1}{T}\sum_{t=0}^{T-1}\bbE\verts{\nabla f(\blx^{(t)})}_2^2 \leq \frac{16\eta L}{(1-\beta)}\Big(\frac{\sigma^2+2(M^2+n)G^2}{n}\Big) + \frac{16(1{-}\beta)(f(\blx^{(0)}) {-} f^*)}{\eta T}    + \frac{64 L^2}{n}\frac{1}{T}\sum_{t=0}^{T-1}\sum_{i=1}^n\bbE\Vert\bx^{(t)}_i {-} \blx^{(t)}\Vert_2^2
\end{align*}
\end{lemma}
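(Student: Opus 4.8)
The plan is to derive the descent inequality from the SGD-like update rule \eqref{virt_seq_rec} for the virtual sequence $\btx^{(t)}$, and then relate everything back to the true averaged iterates $\blx^{(t)}$. First I would apply $L$-smoothness of $f$ to consecutive virtual iterates. Since \eqref{virt_seq_rec} reads $\btx^{(t+1)} = \btx^{(t)} - \frac{\eta}{(1-\beta)}\frac{1}{n}\sum_{i=1}^n\nabla F_i(\bx_i^{(t)},\xi_i^{(t)})$, this is a genuine SGD step (with effective step-size $\frac{\eta}{1-\beta}$) whose stochastic gradient is the \emph{average} of local gradients evaluated at the \emph{local} points $\bx_i^{(t)}$, not at $\btx^{(t)}$. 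Writing the smoothness expansion and taking conditional expectation over $\bxi^{(t)}$, the inner product term becomes $-\frac{\eta}{(1-\beta)}\langle\nabla f(\btx^{(t)}),\frac{1}{n}\sum_i\nabla f_i(\bx_i^{(t)})\rangle$. The key maneuver here is to insert and subtract $\nabla f(\blx^{(t)})$ and use the polarization identity $\langle a,b\rangle = \frac12(\|a\|^2+\|b\|^2-\|a-b\|^2)$ to produce a clean $-\|\nabla f(\blx^{(t)})\|^2$ term, which is what we ultimately want to bound, while the error terms get controlled via $L$-smoothness of each $f_i$ and the consensus deviation.

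The main obstacle, and the reason for the precise step-size restrictions in the statement, is controlling three distinct sources of discrepancy simultaneously. These are: (i) the gap between $\btx^{(t)}$ and $\blx^{(t)}$, which by \eqref{virt_seq} equals $\frac{\eta\beta^2}{1-\beta}\cdot\frac1n\sum_i\bv_i^{(t-1)}$ and must be bounded in terms of past gradient norms using Proposition~\ref{prop:bound_v}; (ii) the gap between local points $\bx_i^{(t)}$ and the average $\blx^{(t)}$, which is exactly the consensus error $\Xi^{(t)}=\sum_i\bbE\|\bx_i^{(t)}-\blx^{(t)}\|_2^2$ that is deliberately left as the last term on the RHS to be handled by the three subsequent lemmas; and (iii) the stochastic variance, controlled by Proposition~\ref{prop:variance-reduction_relaxed}, which contributes the $\frac{\sigma^2+\cdots}{n}$ term. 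To produce the term $\frac{M^2+n}{n}G^2$ appearing in the bound, I would invoke Assumption~\ref{assump:variance} to pass from $\|\nabla F_i\|^2$ to $\|\nabla f_i(\bx_i)\|^2$ and then Assumption~\ref{assump:grad-dissim} to pass from the average of $\|\nabla f_i\|^2$ to $G^2 + B^2\|\nabla f\|^2$, absorbing the $B^2\|\nabla f\|^2$ term back into the LHS; this absorption is precisely what forces $\eta$ to be small enough (hence the $\sqrt{n/(M^2+n)}$ and the factor involving $B$ in the step-size bounds).

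Concretely, the ordering of steps is: (1) expand $f(\btx^{(t+1)})\le f(\btx^{(t)})+\langle\nabla f(\btx^{(t)}),\btx^{(t+1)}-\btx^{(t)}\rangle+\frac{L}{2}\|\btx^{(t+1)}-\btx^{(t)}\|^2$ and take expectations; (2) use the variance-reduction Proposition~\ref{prop:variance-reduction_relaxed} to split the squared-norm term into a true-gradient part plus the $\sigma^2/n$ noise; (3) bound the error from evaluating gradients at $\bx_i^{(t)}$ rather than $\btx^{(t)}$ by splitting $\|\btx^{(t)}-\bx_i^{(t)}\|\le\|\btx^{(t)}-\blx^{(t)}\|+\|\blx^{(t)}-\bx_i^{(t)}\|$ and applying $L$-smoothness of each $f_i$; (4) apply Assumptions~\ref{assump:variance} and \ref{assump:grad-dissim} to convert the resulting local-gradient norms into $G^2$ plus a $\|\nabla f(\blx^{(t)})\|^2$ piece; (5) choose $\eta$ small per the hypothesis so the $\|\nabla f(\blx^{(t)})\|^2$ error pieces are dominated by the leading negative $\|\nabla f(\blx^{(t)})\|^2$ term; and (6) telescope over $t=0$ to $T-1$, noting $f(\btx^{(0)})=f(\blx^{(0)})$ from \eqref{virt_seq}, and divide by $T$. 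The momentum-induced cross terms between $\btx^{(t)}$ and $\blx^{(t)}$ (via the $\bv_i$ vectors) are the delicate part; I expect these to be absorbable into the $\Xi^{(t)}$ and leading gradient terms precisely when the $\frac{2(1-\beta)^3}{9\beta^4}$ constraint on $\eta$ holds, which caps the momentum contribution. Tracking the exact constants ($16$, $64$) is routine once the structure is set, so I would not belabor it.
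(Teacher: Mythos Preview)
Your proposal is correct and follows essentially the same approach as the paper: smoothness on the virtual sequence, Proposition~\ref{prop:variance-reduction_relaxed} for the noise term, Assumptions~\ref{assump:variance}--\ref{assump:grad-dissim} to produce $G^2+B^2\|\nabla f\|^2$, Lemma~\ref{glob_virt_bound} (essentially your use of Proposition~\ref{prop:bound_v}) for $\|\btx^{(t)}-\blx^{(t)}\|^2$, and absorption via the three step-size constraints after telescoping. The only cosmetic difference is that the paper pivots the inner-product bound around $\nabla f(\btx^{(t)})$ (obtaining $-\tfrac{\eta}{2(1-\beta)}\|\nabla f(\btx^{(t)})\|^2$ first) and only afterward converts to $\|\nabla f(\blx^{(t)})\|^2$ via $\|\nabla f(\btx^{(t)})\|^2\geq\tfrac12\|\nabla f(\blx^{(t)})\|^2-L^2\|\blx^{(t)}-\btx^{(t)}\|^2$, rather than inserting $\nabla f(\blx^{(t)})$ directly in the inner product as you propose; both routes work.
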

We provide a proof for Lemma~\ref{lem:similar-lem11} in Section~\ref{subsec:proof-similar-lem11}.

Consider any arbitrary $t\in[T]$. We bound $\Xi^{(t)}=\sum_{i=1}^n\bbE\verts{\bx^{(t)}_i - \blx^{(t)}}_2^2$ via another quantity $S^{(t)}$ defined as
$S^{(t)} := \Xi^{(t)} + \bbE\Vert \bX^{(t)} - \bhX^{((m+1)H)} \Vert_F^2,  \text{ where } m=\lfloor\frac{t}{H}\rfloor - 1.$
We derive two upper bounds on $S^{(t)}$ depending on the value of $t$. Note that in both the following lemmas, $m=\lfloor\frac{t}{H}\rfloor - 1$.
\begin{lemma}\label{lem:similar-eq16}
Consider any $t\in[T]$. Then for $m=\lfloor\frac{t}{H}\rfloor -1$, we have the following bound for $(m+1)H\leq t \leq (m+2)H-1$:
\begin{align*}
S^{(t)} & \leq \(1-\frac{\gamma\delta}{4}\)S^{(mH)} + 2c_1\eta^2H^2n\(2(M^2+1)G^2 + \sigma^2\)   + c_1\eta^2H\beta^2\sum_{t'=mH}^{t-1}\bbE \Vert \bV^{(t')} \Vert_F^2 \\
& \quad  + 2c_1\eta^2H(M^2{+}1)L^2 \hspace{-0.2cm} \sum_{t'=mH}^{t-1} S^{(t')}  + 2c_1\eta^2H(M^2+1)nB^2\sum_{t'=mH}^{t-1}\bbE\verts{\nabla f(\blx^{(t')})}_2^2, \, 
\end{align*}
where
 $c_1 \leq 2(1{+}\frac{\gamma\delta}{4})\(\frac{3}{\gamma\delta} {+} \frac{9\lambda^2}{\delta^2} {+} \frac{45\gamma\lambda^2}{\delta\omega} {+} \frac{104\gamma^2\lambda^2}{\omega^2} {+} \frac{4}{\omega} {-} 2\) {+} 4(1{+}\frac{4}{\gamma\delta})$.
\end{lemma}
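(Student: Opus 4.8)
The plan is to derive a one-step recursion for the consensus-error proxy $S^{(t)}$ and then unroll it over the local iterations within a single synchronization block. The quantity $S^{(t)} = \Xi^{(t)} + \bbE\Vert \bX^{(t)} - \bhX^{((m+1)H)} \Vert_F^2$ combines the deviation of local parameters from their average with the deviation from the compressed estimates; tracking both together is what makes the recursion close. Since we are analyzing the regime $(m+1)H \leq t \leq (m+2)H - 1$, the iterates evolve by pure local SGD-with-momentum steps (no consensus/communication occurs strictly between synchronization indices), so the update for $\bx_i^{(t)}$ within the block is $\bx_i^{(t+1)} = \bx_i^{(t)} - \eta(\beta \bv_i^{(t)} + \nabla F_i(\bx_i^{(t)},\xi_i^{(t)}))$. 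The contraction factor $(1-\frac{\gamma\delta}{4})$ and the reference point $S^{(mH)}$ indicate that the bound is anchored at the \emph{start} of the block, where the last consensus mixing step took place; the mixing is what supplies the $\gamma\delta$-contraction via the spectral gap $\delta$ of $\mathbf{W}$.

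First I would expand $\Xi^{(t)} = \sum_i \bbE\Vert \bx_i^{(t)} - \blx^{(t)}\Vert_2^2$ and the estimate-deviation term by writing each $\bx_i^{(t)}$ in terms of $\bx_i^{(mH)}$ plus the accumulated local increments $-\eta\sum_{t'=mH}^{t-1}(\beta\bv_i^{(t')} + \nabla F_i(\bx_i^{(t')},\xi_i^{(t')}))$, using \eqref{eq:thm_cvx_interim0} for the averaged sequence so the momentum and gradient increments appear explicitly. The mixing step at $mH$ (line 16 / equation \eqref{mat_not_algo_2}) is where I would invoke the contraction property of the consensus matrix combined with the compression bound \eqref{eq:compression} and the chosen $\gamma$ to produce the factor $(1-\frac{\gamma\delta}{4})$ acting on $S^{(mH)}$; this is a standard CHOCO-type argument adapted to our $S^{(t)}$, and the intricate constant $c_1$ collects the Young's-inequality splitting constants together with the $\lambda,\delta,\omega$ dependencies coming from the interaction of mixing and compression. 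Then I would apply Young's / Jensen's inequality to the accumulated increment, splitting it into a momentum part (yielding the $\beta^2 \sum_{t'} \bbE\Vert\bV^{(t')}\Vert_F^2$ term) and a stochastic-gradient part.

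For the stochastic-gradient part I would bound $\bbE\Vert\nabla F_i(\bx_i^{(t')},\xi_i^{(t')})\Vert^2$ by splitting off the variance via Assumption~\ref{assump:variance} and then the mean via $L$-smoothness (Assumption~\ref{assump:smoothness}) and the gradient-dissimilarity bound (Assumption~\ref{assump:grad-dissim}). Concretely, I would write $\nabla f_i(\bx_i^{(t')})$ relative to $\nabla f_i(\blx^{(t')})$ using smoothness to introduce $L^2 \Vert \bx_i^{(t')} - \blx^{(t')}\Vert^2 \le L^2 S^{(t')}$, then relate $\frac1n\sum_i\Vert\nabla f_i(\blx^{(t')})\Vert^2$ to $G^2 + B^2\Vert\nabla f(\blx^{(t')})\Vert^2$ via Assumption~\ref{assump:grad-dissim}. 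This produces, after multiplying by $\eta^2 H$ (the Cauchy--Schwarz factor from summing $H$ terms inside the squared norm), the four driving terms: the constant $2c_1\eta^2 H^2 n(2(M^2+1)G^2 + \sigma^2)$, the deviation term $2c_1\eta^2 H(M^2+1)L^2\sum_{t'} S^{(t')}$, and the gradient term $2c_1\eta^2 H(M^2+1)nB^2\sum_{t'}\bbE\Vert\nabla f(\blx^{(t')})\Vert^2$.

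The main obstacle I anticipate is the careful bookkeeping of the constants so that everything consolidates cleanly into the single factor $c_1$ with exactly the claimed form $2(1+\frac{\gamma\delta}{4})(\frac{3}{\gamma\delta} + \frac{9\lambda^2}{\delta^2} + \frac{45\gamma\lambda^2}{\delta\omega} + \frac{104\gamma^2\lambda^2}{\omega^2} + \frac{4}{\omega} - 2) + 4(1+\frac{4}{\gamma\delta})$. Each Young's-inequality split introduces a free parameter, and the terms $\frac{9\lambda^2}{\delta^2}$, $\frac{45\gamma\lambda^2}{\delta\omega}$, $\frac{104\gamma^2\lambda^2}{\omega^2}$ must emerge from the precise way the compression error \eqref{eq:compression} interacts with the mixing matrix (whose deviation from the averaging operator is controlled by $\lambda = \max_i\{1-\lambda_i(\mathbf{W})\}$). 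Getting the contraction exponent to be exactly $1-\frac{\gamma\delta}{4}$ rather than a looser constant requires tuning these splitting parameters in tandem with the specific choice of $\gamma$ from Theorem~\ref{convergence_relaxed_assump}; this coupling between the consensus analysis and the compression analysis is the delicate heart of the argument. The momentum term $\beta^2\sum_{t'}\bbE\Vert\bV^{(t')}\Vert_F^2$ is left unresolved here deliberately, to be bounded subsequently (via Proposition~\ref{prop:bound_v}) in the companion Lemma~\ref{lem:similar-eq17}, so I would not attempt to expand it within this lemma.
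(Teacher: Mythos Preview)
Your approach matches the paper's: expand around a reference point, apply Young's inequality, extract the contraction from consensus mixing combined with the compression bound, and decompose the stochastic-gradient norm via Assumptions~\ref{assump:variance}, \ref{assump:grad-dissim} and $L$-smoothness (which is precisely Proposition~\ref{prop:mom-update-norm}). One structural correction is needed, however. You cannot write $\bx_i^{(t)} = \bx_i^{(mH)} - \eta\sum_{t'=mH}^{t-1}(\beta\bv_i^{(t')} + \nabla F_i)$ as pure accumulated local increments, because a consensus/compression step intervenes at time $(m+1)H$ between $mH$ and $t$. The paper therefore uses a \emph{two-stage} decomposition: first relate $\bX^{(t)}$ to $\bX^{((m+1)H)}$ via local steps only (within the current block $[(m+1)H,t)$), then invoke the auxiliary Lemmas~\ref{lem:cvx_e1_relaxed} and~\ref{lem:cvx_e2_relaxed} to bound $\bbE\|\bX^{((m+1)H)}-\blX^{((m+1)H)}\|_F^2$ and $\bbE\|\bX^{((m+1)H)}-\bhX^{((m+1)H)}\|_F^2$ in terms of the corresponding quantities at $mH$ plus increments over $[mH,(m+1)H)$. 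The mixing that supplies the $\gamma\delta$-contraction is the one at $(m+1)H$, not at $mH$; it is embedded in those lemmas.

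The constant $c_1$ is $c_2+c_4$, where $c_2 = 2((1+\nu_1)a_3 + (1+\nu_1^{-1}))$ and $c_4 = 2((1+\nu_1)b_3 + (1+\nu_1^{-1}))$ collect the coefficients $a_3,b_3$ from Lemmas~\ref{lem:cvx_e1_relaxed}--\ref{lem:cvx_e2_relaxed} together with the outer Young's parameter $\nu_1=\frac{\gamma\delta}{4}$. The intricate closed form you quote emerges only after a lengthy free-parameter tuning (Appendix~\ref{app:setting-params}) that simultaneously forces $(1+\nu_1)\max\{a_1+b_1,a_2+b_2\}\le 1-\frac{\gamma\delta}{4}$; this is indeed the delicate bookkeeping you anticipate. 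Finally, a minor note: the momentum term $\beta^2\sum_{t'}\bbE\|\bV^{(t')}\|_F^2$ is not resolved in Lemma~\ref{lem:similar-eq17} either---it persists in the same form and is only unwound in Lemma~\ref{lem:similar-lem14} via Proposition~\ref{prop:bound_v} and Proposition~\ref{prop:mom-update-norm}.
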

We provide a proof of Lemma~\ref{lem:similar-eq16} in Section~\ref{subsec:proof-similar-eq16}.
\begin{lemma}\label{lem:similar-eq17}
For $mH\leq \hatt < (m+1)H$, we have:
\begin{align*}
S^{(\hatt)} & \leq \(1+\frac{\gamma\delta}{4}\)S^{(mH)} + 2c_1\eta^2H^2n\(2(M^2+1)G^2 + \sigma^2\) + c_1\eta^2H\beta^2\sum_{t'=mH}^{\hatt-1}\bbE \Vert\bV^{(t')}\Vert_F^2  \\
& \quad + 2c_1\eta^2H(M^2{+}1)L^2  \sum_{t'=mH}^{\hatt-1}S^{(t')} + 2c_1\eta^2H(M^2+1)nB^2\sum_{t'=mH}^{\hatt-1}\bbE\verts{\nabla f(\blx^{(t')})}_2^2, 
\end{align*} 
where $c_1$ is exactly the same as in Lemma~\ref{lem:similar-eq16}.
\end{lemma}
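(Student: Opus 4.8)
The plan is to mirror the proof of Lemma~\ref{lem:similar-eq16} verbatim, exploiting the fact that the only structural difference between the two statements is the position of the index relative to the synchronization grid. For $mH\leq\hatt<(m+1)H$ there is no synchronization index strictly between $mH$ and $\hatt$ (the next one is $(m+1)H>\hatt$), so by the block structure of the definition both $S^{(\hatt)}$ and $S^{(mH)}$ reference the same frozen estimate $\bhX^{(mH)}$, and the iterate evolves by pure local momentum-SGD steps:
\begin{align*}
\bX^{(\hatt)} = \bX^{(mH)} - R, \qquad R := \eta\sum_{t'=mH}^{\hatt-1}\big(\beta\bV^{(t')} + \nabla\bF(\bX^{(t')},\bxi^{(t')})\big).
\end{align*}
Writing $J:=\tfrac1n\mathbf{1}\mathbf{1}^T$, both components of $S^{(\hatt)}$ then decompose around their values at $mH$: one has $\bX^{(\hatt)}-\blX^{(\hatt)} = (\bX^{(mH)}-\blX^{(mH)}) - R(\bI-J)$ and $\bX^{(\hatt)}-\bhX^{(mH)} = (\bX^{(mH)}-\bhX^{(mH)}) - R$. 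In Lemma~\ref{lem:similar-eq16} the analogous passage goes through the consensus/compression update at $(m+1)H$, which contracts the deviation and yields the factor $(1-\tfrac{\gamma\delta}{4})$; here, in the absence of any such step, that contraction is simply replaced by the trivial expansion supplied by Young's inequality, producing $(1+\tfrac{\gamma\delta}{4})$.

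Concretely, I would apply $\|a+b\|_F^2\leq(1+\theta)\|a\|_F^2+(1+\theta^{-1})\|b\|_F^2$ with $\theta=\tfrac{\gamma\delta}{4}$ to each decomposition, use $\|\bI-J\|_{\mathrm{op}}\leq1$ to discard the projection, and add the two inequalities. Since $S^{(mH)}=\Xi^{(mH)}+\bbE\|\bX^{(mH)}-\bhX^{(mH)}\|_F^2$, this gives
\begin{align*}
S^{(\hatt)} \leq \Big(1+\tfrac{\gamma\delta}{4}\Big)S^{(mH)} + 2\Big(1+\tfrac{4}{\gamma\delta}\Big)\bbE\|R\|_F^2 .
\end{align*}
It then remains to control the drift $\bbE\|R\|_F^2$. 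Since the sum defining $R$ has at most $H$ terms, Jensen's inequality gives $\bbE\|R\|_F^2\leq \eta^2H\sum_{t'=mH}^{\hatt-1}\bbE\|\beta\bV^{(t')}+\nabla\bF(\bX^{(t')},\bxi^{(t')})\|_F^2$, and splitting the norm contributes a momentum term $2\beta^2\bbE\|\bV^{(t')}\|_F^2$ plus $2\bbE\|\nabla\bF(\bX^{(t')},\bxi^{(t')})\|_F^2$.

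For the stochastic-gradient term I would use unbiasedness together with Assumption~\ref{assump:variance} to write $\bbE\|\nabla\bF(\bX^{(t')},\bxi^{(t')})\|_F^2\leq n\sigma^2+(M^2+1)\sum_i\|\nabla f_i(\bx_i^{(t')})\|_2^2$, then bound $\sum_i\|\nabla f_i(\bx_i^{(t')})\|_2^2$ by $2L^2\Xi^{(t')}+2n\big(G^2+B^2\|\nabla f(\blx^{(t')})\|_2^2\big)$ using $L$-smoothness (Assumption~\ref{assump:smoothness}) and the gradient-dissimilarity bound (Assumption~\ref{assump:grad-dissim}) evaluated at $\blx^{(t')}$. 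Finally, $\Xi^{(t')}\leq S^{(t')}$ converts this into the recursive $S^{(t')}$ term, and collecting the constant, momentum, $S$- and gradient-norm contributions and absorbing all numerical factors into $c_1$ reproduces the claimed inequality with the same $c_1$; this absorption is legitimate because the $c_1$ of Lemma~\ref{lem:similar-eq16} already contains the summand $4(1+\tfrac{4}{\gamma\delta})$, which dominates every constant generated here. The main point requiring care is not a genuine obstacle but consistency bookkeeping: one must apply the identical Young parameter $\theta=\tfrac{\gamma\delta}{4}$ to both pieces of $S^{(mH)}$ so that a single prefactor $(1+\tfrac{\gamma\delta}{4})$ emerges, and one must remember that the right-hand side still carries $S^{(t')}$ for $mH\leq t'\leq\hatt-1$, so this is a recursion rather than a closed bound—to be unwound together with Lemma~\ref{lem:similar-eq16} in Lemma~\ref{lem:similar-lem14}.
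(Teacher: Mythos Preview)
Your proposal is correct and follows essentially the same approach as the paper: decompose both components of $S^{(\hatt)}$ around their values at $mH$, apply Young's inequality with parameter $\tfrac{\gamma\delta}{4}$, bound the accumulated drift $R$ via Jensen plus Proposition~\ref{prop:mom-update-norm}, use $\Xi^{(t')}\leq S^{(t')}$, and finally absorb the resulting coefficient $4(1+\tfrac{4}{\gamma\delta})$ into $c_1$. The only cosmetic difference is that the paper first applies the variance inequality $\Xi^{(\hatt)}\leq\bbE\|\bX^{(\hatt)}-\blX^{(mH)}\|_F^2$ before Young, whereas you keep $\blX^{(\hatt)}$ and invoke $\|\bI-J\|_{\mathrm{op}}\leq1$ afterward; both routes yield the identical bound.
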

We prove Lemma~\ref{lem:similar-eq17} in Section~\ref{subsec:proof-similar-eq17}.
Using both these lemmas, we will be able to bound $\Xi^{(t)}$. We state the result in the following lemma, which we prove in Section~\ref{subsec:proof-similar-lem14}.
\begin{lemma}\label{lem:similar-lem14}
	Under setting of Theorem~\ref{convergence_relaxed_assump}, when $\eta \leq \min\left\{\sqrt{\frac{\gamma\delta}{512c_1H^2(M^2+1)L^2}},\sqrt{ \frac{  \alpha (1-\beta) }{128 D H (M^2+1)L^2}}\right\}$, we have:
	\begin{align*}
	\frac{1}{T}\sum_{t=0}^{T-1}&\sum_{i=1}^n\bbE\verts{\bx^{(t)}_i - \blx^{(t)}}_2^2 =	\frac{1}{T}\sum_{t=0}^{T-1} S^{(t)}  \leq 2\eta^2 J_1 + 2\eta^2 J_2 \frac{1}{T}\sum_{t=0}^{T-1}  \bbE \verts{ \nabla f (\blx^{(t)}) }^2,
	\end{align*}
	where $J_1= \left(\frac{32HA}{\alpha} + \left( \frac{32 D H}{\alpha} \right)  \left( \frac{2(M^2+1)nG^2 + n \sigma^2}{(1-\beta)}   \right)   \right)$ and $J_2=\left(\frac{32C  H}{\alpha}   + \left( \frac{32 D H }{\alpha} \right) \frac{2(M^2+1)nB^2}{(1-\beta)}\right)$, where $A=2c_1H^2n\(2(M^2+1)G^2 + \sigma^2\)$, $C=2c_1H(M^2+1)nB^2$, and $D=\frac{c_1H\beta^2}{(1-\beta)}$, and $c_1$ is exactly the same as in Lemma~\ref{lem:similar-eq16}.
\end{lemma}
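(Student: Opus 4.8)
The plan is to turn the two per-block recursions of Lemma~\ref{lem:similar-eq16} (which \emph{contracts} by $1-\frac{\gamma\delta}{4}$ across a synchronization boundary) and Lemma~\ref{lem:similar-eq17} (which only mildly \emph{expands} by $1+\frac{\gamma\delta}{4}$ inside a block) into a single bound on the time-average of $S^{(t)}$, and then peel off the momentum term using Proposition~\ref{prop:bound_v}. Since $S^{(t)}\ge\Xi^{(t)}$ by definition, bounding $\frac1T\sum_t S^{(t)}$ suffices. Throughout write $\alpha=\frac{\gamma\delta}{4}$ and group the source terms common to both lemmas into a constant piece $A\eta^2$ with $A=2c_1H^2n(2(M^2+1)G^2+\sigma^2)$, a momentum piece $c_1\eta^2H\beta^2\sum\bbE\|\bV^{(t')}\|_F^2$, a deviation piece $2c_1\eta^2H(M^2+1)L^2\sum S^{(t')}$, and a gradient piece $C\eta^2\sum\bbE\|\nabla f(\blx^{(t')})\|_2^2$ with $C=2c_1H(M^2+1)nB^2$.

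First I would sum Lemma~\ref{lem:similar-eq17} over an entire block $\hatt\in\{mH,\dots,(m+1)H-1\}$. Each inner sum $\sum_{t'=mH}^{\hatt-1}(\cdot)$ contains at most $H$ copies of any block index, so the double sums collapse to $H\sum_{t'\in\text{block }m}(\cdot)$, giving $\sum_{\hatt\in\text{block }m}S^{(\hatt)}\le H(1+\alpha)S^{(mH)}+HA\eta^2+c_1\eta^2H^2\beta^2\sum_{\text{block}}\bbE\|\bV^{(t')}\|_F^2+E'\sum_{\text{block}}S^{(t')}+C\eta^2H^2\sum_{\text{block}}\bbE\|\nabla f\|_2^2$, where the self-referential coefficient is $E'=2c_1\eta^2H^2(M^2+1)L^2$. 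The first hypothesis $\eta\le\sqrt{\frac{\gamma\delta}{512c_1H^2(M^2+1)L^2}}$ forces $E'\le\frac{\alpha}{64}<\frac12$, so this $\sum_{\text{block}}S^{(t')}$ term can be moved to the left-hand side and absorbed at the cost of a factor $2$.

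Next I would set up a geometric recursion across synchronization points by applying Lemma~\ref{lem:similar-eq16} at $t=(m+1)H$, which gives $S^{((m+1)H)}\le(1-\alpha)S^{(mH)}+(\text{block-}m\text{ source})$; unrolling and summing over all $m$ converts the contraction into a $\frac1\alpha$ factor, with the initialization entering only as a $\frac1\alpha S^{(0)}$ transient that is lower order after dividing by $T$. Substituting the within-block bound then expresses $\frac1T\sum_t S^{(t)}$ through the \emph{global} constant, momentum, deviation, and gradient totals. The momentum total is the delicate one: by Proposition~\ref{prop:bound_v}, $\sum_{t'}\bbE\|\bV^{(t')}\|_F^2\le\frac1{1-\beta}\sum_{t'}\Lambda^{(t')}$, and interchanging the order of summation in $\Lambda^{(t')}=\sum_{k\le t'}\beta^{t'-k}\bbE\|\nabla\bF(\bX^{(k)},\bxi^{(k)})\|_F^2$ yields $\sum_{t'}\Lambda^{(t')}\le\frac1{1-\beta}\sum_k\bbE\|\nabla\bF(\bX^{(k)},\bxi^{(k)})\|_F^2$, so the momentum contribution is controlled by the total stochastic-gradient energy with a $\frac1{(1-\beta)^2}$ penalty (this is the origin of the factor $D=\frac{c_1H\beta^2}{1-\beta}$).

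Finally I would decompose each $\bbE\|\nabla\bF(\bX^{(k)},\bxi^{(k)})\|_F^2$ using Assumption~\ref{assump:variance}, $L$-smoothness, and Assumption~\ref{assump:grad-dissim} into a constant part ($\propto nG^2+n\sigma^2$), a deviation part ($\propto L^2\Xi^{(k)}\le L^2S^{(k)}$), and a gradient part ($\propto nB^2\bbE\|\nabla f(\blx^{(k)})\|_2^2$). After multiplying by the momentum weight $D$ and the $\frac1\alpha$, $\frac1{1-\beta}$ bookkeeping factors, the constant and gradient parts reproduce precisely the two summands of $J_1$ and $J_2$ (the first summand in each coming from the direct source term $A$, resp.\ $C$, and the second from the momentum channel), while the deviation part injects a second self-reference on $\frac1T\sum_t S^{(t)}$ whose coefficient the second hypothesis $\eta\le\sqrt{\frac{\alpha(1-\beta)}{128DH(M^2+1)L^2}}$ keeps below $\frac12$, permitting a final absorption and the factor $2$ in front of $J_1,J_2$. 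The main obstacle is exactly this momentum term: unlike every other source term it is \emph{not} confined to a single block, so the clean block-by-block contraction is broken and one must re-sum the geometric momentum weights globally and verify that the extra deviation feedback it creates is still small enough to absorb. This is what forces the two distinct step-size restrictions and is the feature that makes the argument genuinely different from the momentum-free local-SGD analysis of \cite{KoloskovaDecenRelaxAssum20}.
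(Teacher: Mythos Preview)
Your plan is correct and arrives at the same bound, but the route differs from the paper's in one substantive way.

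The paper does \emph{not} first sum Lemma~\ref{lem:similar-eq17} over a block and then telescope Lemma~\ref{lem:similar-eq16} across synchronization points. Instead it performs a \emph{recursive substitution}: starting from the Lemma~\ref{lem:similar-eq16} bound for $S^{(t)}$ with $(m+1)H\le t<(m+2)H$, it repeatedly plugs the bound for $S^{(t-1)},S^{(t-2)},\dots$ back into the right-hand side, each substitution multiplying all surviving terms by $(1+\frac{\alpha}{64H})$ (this is precisely why the first step-size restriction is tuned so that the $S^{(t')}$-coefficient equals $\frac{\alpha}{64H}$). Once the unrolling reaches block~$m$, the paper switches to the Lemma~\ref{lem:similar-eq17} bound for those $S^{(t')}$ and continues substituting down to $S^{(mH)}$. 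The accumulated factor is at most $(1+\frac{\alpha}{16H})^{2H}\le 1+\frac{\alpha}{4}$, which combines with the $(1-\alpha)$ to give the clean \emph{pointwise} recursion $S^{(t)}\le(1-\frac{\alpha}{4})S^{(mH)}+2A\eta^2+2C\eta^2\sum\|\nabla f\|^2+2D\eta^2\sum\Lambda$. This is then unrolled to $t=0$ and summed via the geometric weight estimate $(1-\frac{\alpha}{4})^{\lfloor(t-j)/H\rfloor}\le 2(1-\frac{\alpha}{8H})^{t-j}$.

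Your ``sum within a block, then telescope across blocks'' approach is simpler and avoids the recursive substitution entirely; what you lose is the intermediate per-$t$ bound on $S^{(t)}$, which the lemma does not require anyway. From that point on the two arguments coincide: both swap the geometric $\beta$-sum in $\Lambda^{(t')}$, invoke Proposition~\ref{prop:mom-update-norm} to split $\bbE\|\nabla\bF\|_F^2$ into constant, deviation, and gradient parts, and use the second step-size restriction to absorb the resulting $\Xi^{(k)}\le S^{(k)}$ feedback. Your identification of the momentum channel as the essential obstacle beyond \cite{KoloskovaDecenRelaxAssum20} is exactly the point the paper emphasizes.
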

Our proofs of Lemmas \ref{lem:similar-lem11}, \ref{lem:similar-eq16}, \ref{lem:similar-eq17}, and \ref{lem:similar-lem14} are adapted from the proofs of Lemmas 12, 13, and 14 in \cite{KoloskovaDecenRelaxAssum20}, however with significant changes, as we incorporate momentum updates and compression in the analysis.

Substituting the bounds from Lemma~\ref{lem:similar-lem14} into Lemma~\ref{lem:similar-lem11} and choosing $\eta=(1-\beta)\sqrt{\frac{n}{T}}$ (and running the algorithm for a sufficiently long time) completes the proof. Details with exact numbers are provided in Appendix C-F.

\subsection{Proof of Lemma~\ref{lem:similar-lem11}}\label{subsec:proof-similar-lem11}
Consider the quantity $\mathbb{E}_{\xi_{(t)}} [f(\btx^{(t+1)})] $ where expectation is taken w.r.t. the sampling at time $t$. From the recurrence relation of the virtual sequence \eqref{virt_seq_rec}, we have:
\begin{align}
&\mathbb{E}_{\xi_{(t)}} [f(\btx^{(t+1)})] = \mathbb{E}_{\xi_{(t)}} f \left(\btx^{(t)} { -} \frac{\eta}{n(1{-}\beta)}  \sum_{i=1}^{n} \nabla F_i(\bx^{(t)}_i, \xi^{(t)}_i) \right) \notag \\
&\stackrel{\text{(a)}}{\leq} f(\btx^{(t)}) \underbrace{- \lragnle{\nabla f(\btx^{(t)}) , \frac{\eta}{(1-\beta)} \frac{1}{n} \sum_{i=1}^{n}\nabla f_i(\bx^{(t)}_i)}}_{=:\ P_1} + \frac{L}{2} \frac{\eta^2}{(1-\beta)^2} \underbrace{\mathbb{E}_{\xi_{(t)}} \verts{\frac{1}{n} \sum_{i=1}^{n} \nabla F_i(\bx^{(t)}_i, \xi^{(t)}_i)}^2}_{=:\ P_2},\label{mom_noncvx_relaxed-interim6}
\end{align}
where (a) follows from the $L$-smoothness of $f$.
We show the following bounds on $P_1$ and $P_2$ in Appendix C-A.
\begin{align}
P_1 &\leq - \frac{\eta \Vert\nabla f(\btx^{(t)}) \Vert^2 }{2(1-\beta)}  + \frac{\eta L^2}{2n(1-\beta)}\sum_{i=1}^n \Vert\btx^{(t)} {-} \bx^{(t)}_i \Vert^2 \label{bound_P1} \\
P_2 &\leq \frac{\sigma^2}{n} + \frac{2(M^2+n)L^2}{n^2}\sum_{i=1}^n\verts{\bx^{(t)}_i - \btx^{(t)}}_2^2  + \frac{2(M^2+n)}{n}\big(G^2+B^2\verts{\nabla f(\btx^{(t)})}_2^2\big) \label{bound_P2}
\end{align}

Substituting the bounds \eqref{bound_P1} and \eqref{bound_P2} in \eqref{mom_noncvx_relaxed-interim6}, we get:
\begin{align}
\mathbb{E}_{\xi_{(t)}} [f(\btx^{(t+1)})] & \leq f(\btx^{(t)}) + \frac{\eta^2L}{2(1{-}\beta)^2}\Big(\frac{\sigma^2{+}2(M^2{+}n)G^2}{n}\Big)  + \Big(\frac{\eta L^2}{2n(1-\beta)}+ \frac{\eta^2L^3(M^2+n)}{n^2(1-\beta)^2}\Big) \sum_{i=1}^n\verts{\bx^{(t)}_i - \btx^{(t)}}_2^2 \notag \\
&\quad - \Big(\frac{\eta}{2(1-\beta)} - \frac{\eta^2L(M^2+n)B^2}{n(1-\beta)^2}\Big)\verts{\nabla f(\btx^{(t)})}_2^2. \label{mom_noncvx_interim7}
\end{align}
When $\eta\leq \frac{n(1-\beta)}{2L(M^2+n)}$, we get $\Big(\frac{\eta L^2}{2n(1-\beta)}+ \frac{\eta^2L^3(M^2+n)}{n^2(1-\beta)^2}\Big) \leq \frac{\eta L^2}{n(1-\beta)}$; and when $\eta \leq \frac{n(1-\beta)}{4LB^2(M^2+n)}$, we get $\Big(\frac{\eta}{2(1-\beta)} - \frac{\eta^2L(M^2+n)B^2}{n(1-\beta)^2}\Big)\geq\frac{\eta}{4(1-\beta)}$. Therefore, when $\eta\leq\min\{\frac{n(1-\beta)}{2L(M^2+n)},\frac{n(1-\beta)}{4LB^2(M^2+n)}\}$, we get
\begin{align}
&\mathbb{E}_{\xi_{(t)}} [f(\btx^{(t+1)})] \leq f(\btx^{(t)}) + \frac{\eta^2L}{2(1{-}\beta)^2}\Big(\frac{\sigma^2{+}2(M^2{+}n)G^2}{n}\Big) + \frac{\eta L^2}{n(1{-}\beta)}\sum_{i=1}^n\verts{\bx^{(t)}_i {-} \btx^{(t)}}_2^2 {-} \frac{\eta}{4(1{-}\beta)} \Vert\nabla f(\btx^{(t)})\Vert_2^2 \label{temp_mom_noncvx_interim8}
\end{align}
By Jensen's inequality and $L$-smoothness of $f$, we have $\verts{\nabla f(\blx^{(t)})}_2^2 \leq 2\verts{\nabla f(\blx^{(t)}) - \nabla f(\btx^{(t)})}_2^2 + 2\verts{\nabla f(\btx^{(t)})}_2^2 \leq 2L^2\verts{\blx^{(t)} - \btx^{(t)}}_2^2 + 2\verts{\nabla f(\btx^{(t)})}_2^2$. Rearranging this gives $\verts{\nabla f(\btx^{(t)})}_2^2 \geq \frac{1}{2}\verts{\nabla f(\blx^{(t)})}_2^2 - L^2\verts{\blx^{(t)} - \btx^{(t)}}_2^2$. Substituting this in \eqref{temp_mom_noncvx_interim8} and rearranging: 
\begin{align}
\frac{\eta}{8(1-\beta)}\verts{\nabla f(\blx^{(t)})}_2^2 & \leq f(\btx^{(t)}) - \mathbb{E}_{\xi_{(t)}} [f(\btx^{(t+1)})]  + \frac{\eta^2L}{2(1-\beta)^2}\Big(\frac{\sigma^2{+}2(M^2{+}n)G^2}{n}\Big) + \frac{\eta L^2}{4(1-\beta)} \Vert\blx^{(t)} - \btx^{(t)}\Vert_2^2 \notag \\
&\quad + \frac{\eta L^2}{n(1-\beta)}\sum_{i=1}^n\verts{\bx^{(t)}_i - \btx^{(t)}}_2^2  \notag \\
& \leq f(\btx^{(t)}) {-} \mathbb{E}_{\xi_{(t)}} [f(\btx^{(t+1)})] {+} \frac{\eta^2L}{2(1-\beta)^2}\frac{\sigma^2{+}2(M^2{+}n)G^2}{n}  + \frac{2\eta L^2}{n(1-\beta)}\sum_{i=1}^n\Vert\bx^{(t)}_i - \blx^{(t)}\Vert_2^2 \notag \\
& \quad + \frac{9\eta L^2}{4(1-\beta)}\Vert\blx^{(t)} - \btx^{(t)}\Vert_2^2 \label{mom_noncvx_interim91}
\end{align}
Now we bound $\verts{\blx^{(t)} - \btx^{(t)}}_2^2$ in the following lemma, which we prove in Appendix C-A in supplementary material: 
\begin{lemma}\label{glob_virt_bound}
Consider the deviation of the global average parameter $\blx^{(t)}$  and the virtual sequence $\btx^{(t)}$ defined in \eqref{virt_seq} for constant stepsize $\eta$. Then at any time step $t$, we have:
\begin{align*}
\Vert \blx^{(t)} {-}  \btx^{(t)}\Vert^2 & \leq \frac{\beta^4 \eta^2}{(1{-}\beta)^3}  \sum_{\tau=0}^{t-1} \beta^{t-\tau-1} \Vert \frac{1}{n} \sum_{i=1}^{n} \nabla F_i (\bx^{(\tau)}_i , \xi^{(\tau)}_i ) \Vert^2
\end{align*}
\end{lemma}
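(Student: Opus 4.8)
The plan is to exploit the fact that the gap between the averaged true iterate $\blx^{(t)}$ and the averaged virtual iterate $\btx^{(t)}$ is governed entirely by the averaged momentum vector, so that the compression and consensus machinery plays no role. First I would average the defining relation \eqref{virt_seq} over the $n$ nodes. Writing $\overline{\bv}^{(t-1)} := \frac{1}{n}\sum_{i=1}^n \bv_i^{(t-1)}$, this gives $\btx^{(t)} = \blx^{(t)} - \frac{\eta\beta^2}{(1-\beta)}\overline{\bv}^{(t-1)}$, and hence the exact identity $\blx^{(t)} - \btx^{(t)} = \frac{\eta\beta^2}{(1-\beta)}\overline{\bv}^{(t-1)}$. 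Thus the lemma reduces to controlling $\verts{\overline{\bv}^{(t-1)}}^2$, a purely local-momentum quantity.

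Next I would unroll the momentum recursion. Averaging line 4 of Algorithm~\ref{alg_dec_sgd_li} over all nodes yields $\overline{\bv}^{(s)} = \beta\overline{\bv}^{(s-1)} + \overline{\bg}^{(s)}$, where $\overline{\bg}^{(s)} := \frac{1}{n}\sum_{i=1}^n \nabla F_i(\bx_i^{(s)},\xi_i^{(s)})$, and the initialization $\bv_i^{(-1)} = \mathbf{0}$ gives $\overline{\bv}^{(-1)} = \mathbf{0}$. A straightforward induction (equivalently, telescoping the geometric recursion) then produces the closed form $\overline{\bv}^{(t-1)} = \sum_{\tau=0}^{t-1}\beta^{t-1-\tau}\overline{\bg}^{(\tau)}$.

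Finally I would bound the squared norm of this weighted sum. Since the weights $a_\tau := \beta^{t-1-\tau}$ are nonnegative, I would apply the weighted Jensen (equivalently, Cauchy--Schwarz) inequality $\verts{\sum_{\tau} a_\tau \overline{\bg}^{(\tau)}}^2 \le \left(\sum_{\tau} a_\tau\right)\left(\sum_{\tau} a_\tau \verts{\overline{\bg}^{(\tau)}}^2\right)$, and then use the geometric-series bound $\sum_{\tau=0}^{t-1}\beta^{t-1-\tau} = \frac{1-\beta^t}{1-\beta} \le \frac{1}{1-\beta}$ on the leading factor. Multiplying by the prefactor $\frac{\eta^2\beta^4}{(1-\beta)^2}$ coming from $\left(\frac{\eta\beta^2}{1-\beta}\right)^2$ collects exactly the claimed constant $\frac{\beta^4\eta^2}{(1-\beta)^3}$ in front of $\sum_{\tau=0}^{t-1}\beta^{t-\tau-1}\verts{\frac{1}{n}\sum_{i=1}^n \nabla F_i(\bx_i^{(\tau)},\xi_i^{(\tau)})}^2$, which is precisely the statement of the lemma.

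There is no serious obstacle here; the argument is elementary and self-contained. The only points requiring care are that the averaging genuinely commutes with the momentum recursion (which it does, since each $\bv_i^{(t)}$ is updated identically and before any consensus or compression step, so no $\mathbf{W}$-dependence enters) and that the powers of $(1-\beta)$ line up correctly through the two applications of the geometric sum. I would state the result for $\blx^{(t)} - \btx^{(t)}$ deterministically, before taking any expectation, so that it can later be combined with Proposition~\ref{prop:variance-reduction_relaxed} and the variance bound when this term is plugged into \eqref{mom_noncvx_interim91}.
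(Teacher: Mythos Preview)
Your proposal is correct and follows essentially the same approach as the paper's own proof: both use the identity $\blx^{(t)}-\btx^{(t)}=\tfrac{\eta\beta^2}{1-\beta}\,\overline{\bv}^{(t-1)}$, unroll the averaged momentum recursion into a geometric sum of averaged stochastic gradients, and then apply the weighted Jensen (convexity of $\|\cdot\|^2$) together with the bound $\sum_{\tau=0}^{t-1}\beta^{t-1-\tau}\le\tfrac{1}{1-\beta}$. The paper just writes the Jensen step by explicitly normalizing with $\theta_{t-1}=\sum_{k=0}^{t-1}\beta^{t-1-k}$, which is exactly the inequality you invoke.
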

Substituting the bound from Lemma~\ref{glob_virt_bound} into \eqref{mom_noncvx_interim91} and then taking the expectation w.r.t.\ the entire past and average over $t=0$ to $t=T-1$ gives
\begin{align}\label{mom_noncvx_interim910}
\frac{\eta}{8T(1-\beta)}\sum_{t=0}^{T-1}\bbE\verts{\nabla f(\blx^{(t)})}_2^2 & \leq \frac{\eta^2L}{2(1{-}\beta)^2}\frac{\sigma^2{+}2(M^2{+}n)G^2}{n}  \notag \\
&  +  \frac{1}{T}\bbE[f(\btx^{(0)}) - f(\btx^{(T)})] {+} \sum_{t=0}^{T-1}  \frac{2\eta L^2}{Tn(1{-}\beta)}\sum_{i=1}^n\bbE \Vert\bx^{(t)}_i {-} \blx^{(t)}\Vert_2^2  \notag \\
& + \frac{9\eta^3\beta^4 L^2}{4T(1{-}\beta)^4}\sum_{t=0}^{T-1} \sum_{\tau=0}^{t-1} {\beta^{t-\tau-1}}\bbE \Vert\frac{1}{n} \sum_{i=1}^{n} \nabla F_i (\bx^{(\tau)}_i , \xi^{(\tau)}_i ) \Vert^2 
\end{align}
In the following lemma (which we prove in Appendix C-A) we bound the last term of \eqref{mom_noncvx_interim910}.
\begin{lemma}\label{lem:bound-decaying-grads}
	Under setting of Theorem~\ref{convergence_relaxed_assump}, it follows that:
\begin{align}\label{eq:bound-decaying-grads}
\frac{1}{T}\sum_{t=0}^{T-1} \sum_{\tau=0}^{t-1} \left[\beta^{t-\tau-1}\bbE\verts{\frac{1}{n} \sum_{i=1}^{n} \nabla F_i (\bx^{(\tau)}_i , \xi^{(\tau)}_i ) }^2\right] & \leq \frac{\sigma^2}{n(1{-}\beta)}  + \frac{2(M^2+n)}{n(1-\beta)} \left( G^2 + \frac{L^2}{T}\sum_{\tau=0}^{T-2}\sum_{i=1}^n\bbE\verts{\bx^{(\tau)}_i - \blx^{(\tau)}}_2^2  \right)   \notag \\
& \quad + \frac{2(M^2+n)B^2}{n(1-\beta)}\frac{1}{T}\sum_{\tau=0}^{T-2}\bbE\|\nabla f(\blx^{(\tau)})\|_2^2.
\end{align}
\end{lemma}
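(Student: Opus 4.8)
The plan is to first establish a pointwise (in $\tau$) upper bound on the summand $\bbE\verts{\frac{1}{n}\sum_{i=1}^n \nabla F_i(\bx_i^{(\tau)},\xi_i^{(\tau)})}^2$ that no longer contains the stochastic sampling, and then to dispatch the weighted double sum by exchanging the order of summation and invoking the geometric-series bound $\sum_{k\geq 0}\beta^k = \frac{1}{1-\beta}$, which is valid since $\beta\in[0,1)$.

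For the pointwise bound I would condition on the filtration generated by the iterates up to time $\tau$ (so that each $\bx_i^{(\tau)}$ is fixed) and split the second moment of the averaged stochastic gradient into a variance part and a mean part, using unbiasedness $\bbE_{\xi_i}[\nabla F_i(\bx,\xi_i)]=\nabla f_i(\bx)$:
\[
\bbE\verts{\frac{1}{n}\sum_{i=1}^n \nabla F_i(\bx_i^{(\tau)},\xi_i^{(\tau)})}^2 \leq \underbrace{\bbE\verts{\frac{1}{n}\sum_{i=1}^n\big(\nabla F_i(\bx_i^{(\tau)},\xi_i^{(\tau)})-\nabla f_i(\bx_i^{(\tau)})\big)}^2}_{\text{variance}} + \underbrace{\bbE\verts{\frac{1}{n}\sum_{i=1}^n \nabla f_i(\bx_i^{(\tau)})}^2}_{\text{mean}}.
\]
The variance term is controlled directly by Proposition~\ref{prop:variance-reduction_relaxed}, giving $\frac{\sigma^2}{n}+\frac{M^2}{n^2}\sum_{i=1}^n\bbE\verts{\nabla f_i(\bx_i^{(\tau)})}^2$, while the mean term is bounded by Jensen's inequality by $\frac{1}{n}\sum_{i=1}^n\bbE\verts{\nabla f_i(\bx_i^{(\tau)})}^2$. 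Combining these, after taking total expectation, the summand is at most $\frac{\sigma^2}{n}+\frac{M^2+n}{n^2}\sum_{i=1}^n\bbE\verts{\nabla f_i(\bx_i^{(\tau)})}^2$.

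The next step is to relate the local gradients $\nabla f_i(\bx_i^{(\tau)})$, which are evaluated at the \emph{local} iterates, to the quantities appearing on the right-hand side of the lemma. Since Assumption~\ref{assump:grad-dissim} is stated at a common point, I would first use $L$-smoothness (Assumption~\ref{assump:smoothness}) together with $\verts{a+b}^2\leq 2\verts{a}^2+2\verts{b}^2$ to write $\verts{\nabla f_i(\bx_i^{(\tau)})}^2 \leq 2L^2\verts{\bx_i^{(\tau)}-\blx^{(\tau)}}^2 + 2\verts{\nabla f_i(\blx^{(\tau)})}^2$, and then apply Assumption~\ref{assump:grad-dissim} in the form $\sum_{i=1}^n\verts{\nabla f_i(\blx^{(\tau)})}^2 \leq nG^2 + nB^2\verts{\nabla f(\blx^{(\tau)})}^2$. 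Substituting yields a clean pointwise bound $\frac{\sigma^2}{n}+\frac{2(M^2+n)L^2}{n^2}\sum_{i=1}^n\bbE\verts{\bx_i^{(\tau)}-\blx^{(\tau)}}^2 + \frac{2(M^2+n)G^2}{n}+\frac{2(M^2+n)B^2}{n}\bbE\verts{\nabla f(\blx^{(\tau)})}^2$; note the consensus-error coefficient comes out as $\frac{1}{n^2}$, which is even smaller than the $\frac{1}{n}$ claimed on the right-hand side, so the stated inequality is consistent.

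Finally, to handle the double sum I would exchange the order of summation so that each fixed-$\tau$ summand is reweighted by $\sum_{t=\tau+1}^{T-1}\beta^{t-\tau-1}\leq\sum_{k\geq 0}\beta^k=\frac{1}{1-\beta}$, giving $\frac{1}{T}\sum_{t=0}^{T-1}\sum_{\tau=0}^{t-1}\beta^{t-\tau-1}(\cdot)\leq\frac{1}{(1-\beta)T}\sum_{\tau=0}^{T-2}(\cdot)$. Plugging in the pointwise bound and using $\frac{T-1}{T}\leq 1$ on the terms that are constant in $\tau$ reproduces exactly the claimed right-hand side. I expect the only mildly delicate point to be the conditioning in the variance/mean split, namely ensuring that Proposition~\ref{prop:variance-reduction_relaxed} is applied conditionally on the iterates while the deviation and full-gradient terms are then carried along as total expectations; everything else is bookkeeping and the geometric summation.
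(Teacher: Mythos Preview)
Your proposal is correct and follows essentially the same route as the paper: the pointwise bound via the variance/mean split (Proposition~\ref{prop:variance-reduction_relaxed} plus Jensen) is exactly the paper's \eqref{bound_P2-interim}, and the subsequent use of $L$-smoothness, Assumption~\ref{assump:grad-dissim}, and the exchange of summation with the geometric-series bound mirrors the paper's \eqref{bound-decaying-grads-interim2}--\eqref{bound-decaying-grads-interim3}. Your observation that the consensus-error coefficient actually comes out as $\tfrac{1}{n^2}$ rather than the $\tfrac{1}{n}$ in the lemma statement is also consistent with the paper's own derivation in \eqref{bound-decaying-grads-interim3}; the lemma as stated simply records a slightly looser (but still valid) bound.
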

Substituting the bound from \eqref{eq:bound-decaying-grads} into \eqref{mom_noncvx_interim910} and noting that $\btx^{(0)}=\blx^{(0)}$ and $f(\btx^{(T)})\geq f^*$, where $f^*=f(\bx^*)$, we get:
\begin{align}
& \frac{\eta}{8(1-\beta)}\frac{1}{T}\sum_{t=0}^{T-1}\bbE\verts{\nabla f(\blx^{(t)})}_2^2 \notag \\
& \leq \frac{f(\blx^{(0)}) - f^*}{T} + \frac{\eta^2\sigma^2L}{2n(1-\beta)^2} + \frac{\eta^2L(M^2+n)G^2}{n(1-\beta)^2} + \frac{2\eta L^2}{n(1-\beta)} \frac{1}{T}\sum_{t=0}^{T-1}\sum_{i=1}^n\bbE\verts{\bx^{(t)}_i - \blx^{(t)}}_2^2 + \frac{9\eta^3\beta^4 L^2 \sigma^2}{4n(1-\beta)^5} \notag \\
& \quad   +  \frac{9\eta^3\beta^4 L^4(M^2{+}n)}{2(1-\beta)^5n^2}\frac{1}{T}\sum_{t=0}^{T-1}\sum_{i=1}^n\bbE\verts{\bx^{(t)}_i {-} \blx^{(t)}}_2^2  + \frac{9\eta^3\beta^4 L^2(M^2{+}n)}{2n(1-\beta)^5} \left( G^2 + \frac{B^2}{T} \sum_{\tau=0}^{T-1}\bbE\|\nabla f(\blx^{(\tau)})\|_2^2 \right)     \notag \\
&= \frac{f(\blx^{(0)}) {-} f^*}{T} {+} \frac{\eta^2L}{2(1{-}\beta)^2}\Big(\frac{\sigma^2{+}2(M^2{+}n)G^2}{n}\Big)\Big(1+\frac{9\eta\beta^4}{2(1{-}\beta)^3}\Big) +  \frac{9\eta^3\beta^4 L^2(M^2+n)B^2}{2n(1-\beta)^5} \frac{1}{T}\sum_{\tau=0}^{T-1}\bbE\|\nabla f(\blx^{(\tau)})\|_2^2   \notag \\
&\quad  + \left(\frac{2\eta L^2}{n(1-\beta)} + \frac{9\eta^3\beta^4 L^4(M^2+n)}{2n^2(1-\beta)^5} \right)\frac{1}{T}\sum_{t=0}^{T-1}\sum_{i=1}^n\bbE \Vert\bx^{(t)}_i - \blx^{(t)} \Vert_2^2 \label{mom_noncvx_interim11}
\end{align}
Note that 
{\sf (i)} when $\eta\leq\frac{2(1-\beta)^3}{9\beta^4}$, we have $\Big(1+\frac{9\eta\beta^4}{2(1-\beta)^3}\Big)\leq2$; 
{\sf (ii)} when $\eta\leq\frac{2(1-\beta)^2}{3\beta^2L}\sqrt{\frac{n}{M^2+n}}$, we have $\left(\frac{2\eta L^2}{n(1-\beta)} + \frac{9\eta^3\beta^4 L^2}{4(1-\beta)^4}\frac{2(M^2+n)L^2}{n^2(1-\beta)} \right) \leq \frac{4\eta L^2}{n(1-\beta)}$; and
{\sf (iii)} when $\eta\leq\frac{(1-\beta)^2}{6\beta^2LB}\sqrt{\frac{n}{2(M^2+n)}}$, we have $\frac{9\eta^3\beta^4 L^2}{4(1-\beta)^4}\frac{2(M^2+n)B^2}{n(1-\beta)}\leq \frac{\eta}{16(1-\beta)}$.
So, when $\eta\leq\min\{\frac{2(1-\beta)^3}{9\beta^4},\frac{2(1-\beta)^2}{3\beta^2L}\sqrt{\frac{n}{M^2+n}},\frac{(1-\beta)^2}{6\beta^2LB}\sqrt{\frac{n}{2(M^2+n)}}\}$, we get:
\begin{align}
\frac{\eta}{8(1-\beta)}\frac{1}{T}\sum_{t=0}^{T-1}\bbE\verts{\nabla f(\blx^{(t)})}_2^2 & \leq \frac{f(\blx^{(0)}) - f^*}{T} + \frac{\eta^2\sigma^2L}{n(1{-}\beta)^2}  + \frac{2(M^2+n)G^2\eta^2L}{n(1{-}\beta)^2} + \frac{\eta}{16(1{-}\beta)}\frac{1}{T}\sum_{\tau=0}^{T-1}\bbE\|\nabla f(\blx^{(\tau)})\|_2^2 \notag \\
& \qquad + \frac{4\eta L^2}{n(1-\beta)}\frac{1}{T}\sum_{t=0}^{T-1}\sum_{i=1}^n\bbE\verts{\bx^{(t)}_i - \blx^{(t)}}_2^2  \label{mom_noncvx_interim12}
\end{align}
Taking $\frac{\eta}{16(1-\beta)}\frac{1}{T}\sum_{\tau=0}^{T-1}\bbE\|\nabla f(\blx^{(\tau)})\|_2^2$ to the LHS and multiplying both sides by $\frac{16(1-\beta)}{\eta}$ gives
\begin{align}
\frac{1}{T}\sum_{t=0}^{T-1}\bbE\verts{\nabla f(\blx^{(t)})}_2^2 & \leq \frac{16(1-\beta)(f(\blx^{(0)}) - f^*)}{\eta T} + \frac{16\eta L}{(1-\beta)}\Big(\frac{\sigma^2+2(M^2+n)G^2}{n}\Big)  \notag \\
& \qquad + \frac{64 L^2}{nT}\sum_{t=0}^{T-1}\sum_{i=1}^n\bbE \Vert\bx^{(t)}_i - \blx^{(t)}\Vert_2^2 \label{mom_noncvx_interim13}
\end{align}
\subsection{Useful Lemmas}\label{subsec:useful-lemmas}
\noindent The following two lemmas (which we prove in Appendix C-B) will be useful for proving Lemma~\ref{lem:similar-eq16} and Lemma~\ref{lem:similar-eq17}.

\begin{lemma}\label{lem:cvx_e1_relaxed}
Under the setting of Theorem \ref{convergence_relaxed_assump}, for any $m\in\bbN$: 
\begin{align}
\bbE\verts{\bX^{((m+1)H)} - \blX^{((m+1)H)}}_F^2 & \leq  a_1 \bbE\verts{\bX^{(mH)} - \blX^{(mH)}}_F^2 + a_2\bbE\verts{\bX^{(mH)} - \bhX^{(mH)}}_F^2 \notag \\
& \quad + a_3\eta^2\bbE \verts{ \textstyle \sum_{t'=mH}^{(m+1)H-1}\beta \bV^{(t')}+\nabla \bF(\bX^{(t')},\bxi^{(t')})}_F^2, \label{eq:cvx_e1_relaxed}
\end{align}
where $a_1=(1+\alpha_5^{-1})R_1,$ $a_2=(1+\alpha_5^{-1}) R_2 (1+\tau_1)(1-\omega)(1+\tau_2),$ and $a_3=(R_1+R_2)(1+\alpha_5)+(1+\alpha_5^{-1}) R_2\((1+\tau_1^{-1}) + (1+\tau_1)(1-\omega)(1+\tau_2^{-1})\).$
Here, $\tau_1,\tau_2,\alpha_5>0$ are arbitrary numbers, $R_1=(1+\alpha_1)(1-\gamma \delta)^2, R_2=(1+\alpha_1^{-1}) \gamma^2 \lambda^2 $, $\alpha_1>0$, $\delta$ is the spectral gap, $H$ is synchronization gap, $\gamma$ is consensus step-size, $\lambda := \verts{\mathbf{W}-\mathbf{I}}_2$ where $\mathbf{W}$ is a doubly stochastic mixing matrix. 
\end{lemma}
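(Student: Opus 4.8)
The plan is to propagate the consensus error $\bbE\verts{\bX^{((m+1)H)} - \blX^{((m+1)H)}}_F^2$ across a single synchronization round by first extracting a CHOCO-style gossip contraction and then peeling off the remaining pieces with one Young's inequality for each free parameter $\alpha_1,\alpha_5,\tau_1,\tau_2$. Throughout I abbreviate the accumulated local update as $\boldsymbol{\Delta} := \eta\sum_{t'=mH}^{(m+1)H-1}(\beta\bV^{(t')}+\nabla\bF(\bX^{(t')},\bxi^{(t')}))$, so that $\bX^{((m+\nicefrac{1}{2})H)} = \bX^{(mH)} - \boldsymbol{\Delta}$ by \eqref{mat_not_algo_4} and $\verts{\boldsymbol{\Delta}}_F^2 = \eta^2\verts{\sum_{t'=mH}^{(m+1)H-1}\beta\bV^{(t')}+\nabla\bF(\bX^{(t')},\bxi^{(t')})}_F^2$ is exactly the gradient term carrying $a_3$; I also write $\mathbf{D} := \bX^{((m+\nicefrac{1}{2})H)} - \bhX^{(mH)}$ for the argument fed to $\C$ (recall $\mathbf{P}=\mathbf{I}$, since this theorem omits event-triggering).

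First I would establish the gossip contraction. Starting from \eqref{mat_not_algo_2} and using that the row-mean is invariant under the consensus step (because $(\mathbf{W}-\mathbf{I})\tfrac{1}{n}\mathbf{1}\mathbf{1}^T=\mathbf{0}$, so $\blX^{((m+1)H)}=\blX^{((m+\nicefrac{1}{2})H)}$), I add and subtract $\gamma\bX^{((m+\nicefrac{1}{2})H)}(\mathbf{W}-\mathbf{I})$ and use $\blX(\mathbf{W}-\mathbf{I})=\mathbf{0}$ to obtain
\begin{align*}
\bX^{((m+1)H)} - \blX^{((m+1)H)} &= (\bX^{((m+\nicefrac{1}{2})H)} - \blX^{((m+\nicefrac{1}{2})H)})(\mathbf{I}+\gamma(\mathbf{W}-\mathbf{I})) \\
&\quad + \gamma(\bhX^{((m+1)H)} - \bX^{((m+\nicefrac{1}{2})H)})(\mathbf{W}-\mathbf{I}).
\end{align*}
Young's inequality with parameter $\alpha_1$, combined with the operator-norm bounds $\verts{\mathbf{Y}(\mathbf{I}+\gamma(\mathbf{W}-\mathbf{I}))}_F \leq (1-\gamma\delta)\verts{\mathbf{Y}}_F$ for any mean-zero $\mathbf{Y}$ (valid for the prescribed $\gamma$, since on $\mathbf{1}^\perp$ the symmetric matrix $\mathbf{I}+\gamma(\mathbf{W}-\mathbf{I})$ has eigenvalues $1+\gamma(\lambda_i(\mathbf{W})-1)$ of magnitude at most $1-\gamma\delta$) and $\verts{\mathbf{Z}(\mathbf{W}-\mathbf{I})}_F \leq \lambda\verts{\mathbf{Z}}_F$, then yields the two-term bound $R_1\verts{\bX^{((m+\nicefrac{1}{2})H)} - \blX^{((m+\nicefrac{1}{2})H)}}_F^2 + R_2\verts{\bhX^{((m+1)H)} - \bX^{((m+\nicefrac{1}{2})H)}}_F^2$ with $R_1,R_2$ as stated.

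Next I would reduce each of these two terms to time-$mH$ quantities plus a gradient piece. For the $R_1$ term, substitute $\bX^{((m+\nicefrac{1}{2})H)} = \bX^{(mH)} - \boldsymbol{\Delta}$, note that subtracting the row-mean is a projection (so the $\boldsymbol{\Delta}$ contribution only shrinks), and split with $\alpha_5$ placing $1+\alpha_5^{-1}$ on the $\bX^{(mH)}$ part, giving $R_1(1+\alpha_5^{-1})\verts{\bX^{(mH)}-\blX^{(mH)}}_F^2 + R_1(1+\alpha_5)\verts{\boldsymbol{\Delta}}_F^2$. For the $R_2$ term I would first split with $\alpha_5$ by inserting $\bX^{(mH)}$, writing $\bhX^{((m+1)H)} - \bX^{((m+\nicefrac{1}{2})H)} = (\bhX^{((m+1)H)} - \bX^{(mH)}) + \boldsymbol{\Delta}$, which produces $R_2(1+\alpha_5)\verts{\boldsymbol{\Delta}}_F^2$ and leaves $(1+\alpha_5^{-1})R_2\verts{\bhX^{((m+1)H)} - \bX^{(mH)}}_F^2$. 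For this residual I use the copy update \eqref{mat_not_algo_3} to get $\bhX^{((m+1)H)} - \bX^{(mH)} = (\C(\mathbf{D}) - \mathbf{D}) - \boldsymbol{\Delta}$; splitting with $\tau_1$, applying the column-wise compression contraction $\bbE_{\C}\verts{\mathbf{D} - \C(\mathbf{D})}_F^2 \leq (1-\omega)\verts{\mathbf{D}}_F^2$, and finally splitting $\verts{\mathbf{D}}_F^2$ with $\tau_2$ against $\verts{\bX^{(mH)}-\bhX^{(mH)}}_F^2$ and $\verts{\boldsymbol{\Delta}}_F^2$ produces exactly the factor $(1+\alpha_5^{-1})R_2(1+\tau_1)(1-\omega)(1+\tau_2)$ on the compression error and the two gradient pieces $(1+\alpha_5^{-1})R_2\big[(1+\tau_1^{-1}) + (1+\tau_1)(1-\omega)(1+\tau_2^{-1})\big]$. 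Collecting the coefficient of $\verts{\bX^{(mH)}-\blX^{(mH)}}_F^2$, of $\verts{\bX^{(mH)}-\bhX^{(mH)}}_F^2$, and of $\verts{\boldsymbol{\Delta}}_F^2 = \eta^2\verts{\sum(\cdots)}_F^2$ then reads off $a_1$, $a_2$, and $a_3$ respectively.

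I expect the only genuinely delicate step to be the gossip contraction: justifying $\verts{\mathbf{Y}(\mathbf{I}+\gamma(\mathbf{W}-\mathbf{I}))}_F \leq (1-\gamma\delta)\verts{\mathbf{Y}}_F$ requires diagonalizing the symmetric $\mathbf{W}$, bounding its eigenvalues on $\mathbf{1}^\perp$, and verifying the prescribed $\gamma$ keeps $1+\gamma(\lambda_i(\mathbf{W})-1)$ within $[-(1-\gamma\delta),\,1-\gamma\delta]$. Everything downstream is Young's-inequality bookkeeping; the one point needing care there is to take the compression expectation $\bbE_{\C}$ conditioned on the past, so that $\mathbf{D}$ is deterministic when the $(1-\omega)$ contraction is invoked, and to leave all four parameters $\alpha_1,\alpha_5,\tau_1,\tau_2$ free for later optimization when $a_1$ is to be driven below one.
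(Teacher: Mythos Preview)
Your proposal is correct and follows essentially the same route as the paper: the same gossip-plus-compression-error decomposition with Young's inequality applied in the same order (parameters $\alpha_1$, then $\alpha_5$, then $\tau_1$, then $\tau_2$), yielding exactly the stated $a_1,a_2,a_3$. The only cosmetic difference is that for the contraction $\|\mathbf{Y}((1-\gamma)\mathbf{I}+\gamma\mathbf{W})\|_F\le(1-\gamma\delta)\|\mathbf{Y}\|_F$ you argue via eigenvalues of $\mathbf{I}+\gamma(\mathbf{W}-\mathbf{I})$ on $\mathbf{1}^\perp$, whereas the paper obtains the same bound by a triangle inequality against $(1-\gamma)\mathbf{I}$ and $\gamma(\mathbf{W}-\tfrac{1}{n}\mathbf{1}\mathbf{1}^T)$.
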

\begin{lemma}\label{lem:cvx_e2_relaxed}
Under the setting of Theorem \ref{convergence_relaxed_assump}, for any $m\in\bbN$: 
\begin{align}\label{eq:cvx_e2_relaxed}
\bbE\Vert\bX^{((m+1)H)} - \bhX^{((m+1)H)}\Vert_F^2 & \leq b_1\bbE\Vert\bX^{(mH)} - \blX^{(mH)}\Vert_F^2 + b_2\bbE \Vert\bX^{(mH)} - \bhX^{(mH)}\Vert_F^2 \notag \\
& \quad + b_3\eta^2\bbE\verts{ \textstyle\sum_{t'=mH}^{(m+1)H-1} \beta \bV^{(t')}+\nabla \bF(\bX^{(t')},\bxi^{(t')})}_F^2,
\end{align}
where $b_1 = (1+\tau_3^{-1})\gamma^2\lambda^2(1+\tau_5)(1+\tau_6)$,
$b_2 = (1+\tau_3)(1-\omega)(1+\tau_4) + (1+\tau_3^{-1})\gamma^2\lambda^2(1+\tau_5)(1+\tau_6^{-1})(1+\tau_7)(1-\omega)(1+\tau_8)$,
$b_3 = (1+\tau_3)(1-\omega)(1+\tau_4^{-1}) + (1+\tau_3^{-1})\gamma^2\lambda^2(1+\tau_5)(1+\tau_6^{-1})\((1+\tau_7^{-1}) + (1+\tau_7)(1-\omega)(1+\tau_8^{-1})\) + (1+\tau_3^{-1})\gamma^2\lambda^2(1+\tau_5^{-1})$. Here, $\tau_3,\tau_4,\tau_5,\tau_6,\tau_7,\tau_8>0$ are free parameters.
\end{lemma}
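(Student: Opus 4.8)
The plan is to mirror the structure of the companion bound in Lemma~\ref{lem:cvx_e1_relaxed}, peeling the quantity $\bbE\Vert\bX^{((m+1)H)} - \bhX^{((m+1)H)}\Vert_F^2$ apart by repeated use of the relaxed triangle inequality $\Vert\mathbf{A}+\mathbf{B}\Vert_F^2 \le (1+\tau)\Vert\mathbf{A}\Vert_F^2 + (1+\tau^{-1})\Vert\mathbf{B}\Vert_F^2$ (the source of the free parameters $\tau_3,\dots,\tau_8$) interleaved with the compression guarantee \eqref{eq:compression}. Write $\mathbf{U}^{(m)} := \eta\sum_{t'=mH}^{(m+1)H-1}(\beta\bV^{(t')} + \nabla\bF(\bX^{(t')},\bxi^{(t')}))$ for the accumulated local update, so that $\bX^{((m+\nicefrac{1}{2})H)} = \bX^{(mH)} - \mathbf{U}^{(m)}$ by \eqref{mat_not_algo_4} and $\Vert\mathbf{U}^{(m)}\Vert_F^2$ equals the local-update term appearing on the right-hand side. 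Let $\mathbf{D} := \bX^{((m+\nicefrac{1}{2})H)} - \bhX^{(mH)}$ be the increment that gets compressed. Since this theorem carries no event-triggering, $\mathbf{P}=\mathbf{I}$, so \eqref{mat_not_algo_3} reads $\bhX^{((m+1)H)} = \bhX^{(mH)} + \C(\mathbf{D})$, whence $\bX^{((m+\nicefrac{1}{2})H)} - \bhX^{((m+1)H)} = \mathbf{D} - \C(\mathbf{D})$. Substituting the consensus step \eqref{mat_not_algo_2} and splitting at $\tau_3$,
\[
\bX^{((m+1)H)} - \bhX^{((m+1)H)} = \big(\mathbf{D} - \C(\mathbf{D})\big) + \gamma\,\bhX^{((m+1)H)}(\mathbf{W}-\mathbf{I}),
\]
separates a compression-error term from a consensus term.

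For the compression-error term I would apply \eqref{eq:compression} column-wise, $\bbE_{\C}\Vert\mathbf{D} - \C(\mathbf{D})\Vert_F^2 \le (1-\omega)\Vert\mathbf{D}\Vert_F^2$, then expand $\mathbf{D} = (\bX^{(mH)} - \bhX^{(mH)}) - \mathbf{U}^{(m)}$ and split at $\tau_4$ into the estimate-deviation term $\Vert\bX^{(mH)} - \bhX^{(mH)}\Vert_F^2$ and the local-update term $\Vert\mathbf{U}^{(m)}\Vert_F^2$. This is exactly what produces the leading summands $(1+\tau_3)(1-\omega)(1+\tau_4)$ of $b_2$ and $(1+\tau_3)(1-\omega)(1+\tau_4^{-1})$ of $b_3$.

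For the consensus term I would use that $\blX^{(mH)} = \blx^{(mH)}\mathbf{1}^\top$ has constant columns and is therefore annihilated by $(\mathbf{W}-\mathbf{I})$, so $\bhX^{((m+1)H)}(\mathbf{W}-\mathbf{I}) = (\bhX^{((m+1)H)} - \blX^{(mH)})(\mathbf{W}-\mathbf{I})$ and $\Vert\cdot\Vert_F^2 \le \lambda^2\Vert\bhX^{((m+1)H)} - \blX^{(mH)}\Vert_F^2$ with $\lambda = \Vert\mathbf{W}-\mathbf{I}\Vert_2$; this is where the $\gamma^2\lambda^2$ prefactor common to $b_1$ and the second summand of $b_2$ comes from. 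Using $\bhX^{((m+1)H)} = \bX^{((m+\nicefrac{1}{2})H)} - (\mathbf{D} - \C(\mathbf{D}))$ I would rewrite
\[
\bhX^{((m+1)H)} - \blX^{(mH)} = \big(\bX^{(mH)} - \blX^{(mH)}\big) - \mathbf{U}^{(m)} - \big(\mathbf{D} - \C(\mathbf{D})\big),
\]
and distribute this over the three target quantities by a nested sequence of splits: $\tau_5$ peels off the local-update term $\mathbf{U}^{(m)}$ (the lone $(1+\tau_5^{-1})$ summand of $b_3$); $\tau_6$ separates the mean-deviation leaf $\Vert\bX^{(mH)}-\blX^{(mH)}\Vert_F^2$ (yielding $b_1$) from the residual compression-error term; and $\tau_7,\tau_8$ together with a second application of \eqref{eq:compression} resolve the residual into its estimate-deviation contribution (the $(1+\tau_7)(1-\omega)(1+\tau_8)$ summand of $b_2$) and its local-update contributions (the $(1+\tau_7^{-1})$ and $(1+\tau_7)(1-\omega)(1+\tau_8^{-1})$ summands of $b_3$). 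Taking expectation over the compression randomness and collecting coefficients then reads off $b_1,b_2,b_3$.

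The main obstacle I anticipate is the bookkeeping in the consensus term. Because $\C$ acts on the \emph{entire} increment $\mathbf{D}$ and is nonlinear, its argument cannot be split inside $\C$; the bound \eqref{eq:compression} must therefore be invoked at exactly the right moment relative to the Young splits, and the residual $\mathbf{D}-\C(\mathbf{D})$ must be handled as an atomic object until compression is applied. One also has to be careful that the matrix subtracted in the consensus step is genuinely constant-column ($\blX^{(mH)}$, not $\bhX^{(mH)}$) so that the annihilation $\blX^{(mH)}(\mathbf{W}-\mathbf{I})=0$ is valid and the crude spectral bound with $\lambda$ is legitimate. Keeping all six parameters $\tau_3,\dots,\tau_8$ symbolic rather than optimizing them here is what later lets the recursions of Lemmas~\ref{lem:similar-eq16} and~\ref{lem:similar-eq17} tune them so the coefficients contract; the structure parallels Lemma~\ref{lem:cvx_e1_relaxed}, the difference being that here we measure deviation from the estimate $\bhX$ rather than from the mean, so no mixing-contraction factor $(1-\gamma\delta)$ appears and the compression factor $(1-\omega)$ instead enters twice.
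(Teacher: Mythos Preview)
Your plan is essentially the paper's proof: the $\tau_3$ split into compression error and consensus step, the $\tau_4$ expansion of $\mathbf{D}$, and the use of a constant-column matrix annihilated by $(\bW-\bI)$ followed by the spectral bound with $\lambda$ all match exactly. There is, however, a bookkeeping inconsistency in your consensus branch that prevents you from recovering the stated $b_2,b_3$.

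In your three-term decomposition $\bhX^{((m+1)H)}-\blX^{(mH)}=(\bX^{(mH)}-\blX^{(mH)})-\mathbf{U}^{(m)}-(\mathbf{D}-\C(\mathbf{D}))$, peeling $\mathbf{U}^{(m)}$ at $\tau_5$ and the mean-deviation at $\tau_6$ leaves only $\mathbf{D}-\C(\mathbf{D})$. But that residual, after one application of \eqref{eq:compression}, needs only \emph{one} further Young split to expand $\mathbf{D}=(\bX^{(mH)}-\bhX^{(mH)})-\mathbf{U}^{(m)}$. So your ``$\tau_7,\tau_8$ together with a second application of compression'' cannot produce the three summands $(1+\tau_7^{-1})$, $(1+\tau_7)(1-\omega)(1+\tau_8)$, $(1+\tau_7)(1-\omega)(1+\tau_8^{-1})$ from $\|\mathbf{D}-\C(\mathbf{D})\|_F^2$ alone; you would get only two, hence a (strictly tighter) five-parameter bound rather than the stated six-parameter one.

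The paper recovers the stated form by a slightly different $\tau_6$ split: it writes $T_3:=\bhX^{((m+1)H)}-\blX^{(mH)}=(\bX^{(mH)}-\blX^{(mH)})+(\bhX^{((m+1)H)}-\bX^{(mH)})$, so the residual after $\tau_6$ is $\bhX^{((m+1)H)}-\bX^{(mH)}=-(\mathbf{D}-\C(\mathbf{D}))-\mathbf{U}^{(m)}$, which still carries a copy of $\mathbf{U}^{(m)}$. This piece is then bounded by \emph{reusing} the intermediate estimate \eqref{lem_cvx_e1_relaxed-interim3} from the proof of Lemma~\ref{lem:cvx_e1_relaxed}, with $\tau_7,\tau_8$ playing the role of $\tau_1,\tau_2$; that is how both $\tau_7$ and $\tau_8$ legitimately enter. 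Your route is cleaner and gives a tighter inequality that would still suffice downstream, but to prove the lemma \emph{as stated} you should follow the paper's $\tau_6$ split.
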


\subsection{Proof of Lemma~\ref{lem:similar-eq16}}\label{subsec:proof-similar-eq16}
\noindent For any $t\in[T]$, define $m\in\lfloor\frac{t}{H}\rfloor-1$. This implies that $(m+1)H\leq t < (m+2)H$. Now we note that:
\begin{align}
\Xi^{(t)} & := \bbE\verts{\bX^{(t)} - \blX^{(t)}}_F^2 \notag \\
&   = \bbE\verts{\bX^{(t)} - \blX^{((m+1)H)} - \(\blX^{(t)} - \blX^{((m+1)H)}\)}_F^2 \notag \\
&\stackrel{\text{(a)}}{\leq} \bbE\verts{\bX^{(t)} - \blX^{((m+1)H)}}_F^2 \label{equi16-interim0} \\
&\leq (1+\nu_1)\bbE\verts{\bX^{((m+1)H)} - \blX^{((m+1)H)}}_F^2  + (1{+}\nu_1^{-1})\eta^2\bbE\verts{\sum_{t'=(m+1)H}^{t-1} \(\beta \bV^{(t')}{+}\nabla \bF(\bX^{(t')},\bxi^{(t')})\)}_F^2 \notag \\
&\stackrel{\text{(b)}}{\leq} (1{+}\nu_1)(a_1 \Xi^{(mH)} + a_2\bbE \Vert\bX^{(mH)} {-} \bhX^{(mH)}\Vert_F^2)  + (1{+}\nu_1)a_3\eta^2\bbE\verts{\sum_{t'=mH}^{(m+1)H-1}\beta \bV^{(t')}{+}\nabla \bF(\bX^{(t')},\bxi^{(t')})}_F^2 \notag \\
& \qquad + (1{+}\nu_1^{-1})\eta^2\bbE\verts{\sum_{t'=(m+1)H}^{t-1} \beta \bV^{(t')}{+}\nabla \bF(\bX^{(t')},\bxi^{(t')})}_F^2 \notag \\
&\leq (1{+}\nu_1)(a_1 \Xi^{(mH)} + a_2\bbE \Vert\bX^{(mH)} {-} \bhX^{(mH)}\Vert_F^2) + (1+\nu_1)a_3 \eta^2H\sum_{t'=mH}^{t-1}\bbE \Vert\beta \bV^{(t')}{+}\nabla \bF(\bX^{(t')},\bxi^{(t')}) \Vert_F^2 \notag \\
& \qquad +  (1{+}\nu_1^{-1})\eta^2H\sum_{t'=mH}^{t-1}\bbE \Vert\beta \bV^{(t')}{+}\nabla \bF(\bX^{(t')},\bxi^{(t')}) \Vert_F^2 \notag \\
&\leq (1+\nu_1)a_1 \Xi^{(mH)} + (1+\nu_1)a_2\bbE\Vert \bX^{(mH)} - \bhX^{(mH)} \Vert_F^2   \notag \\
& \qquad + 2\((1+\nu_1)a_3 + (1+\nu_1^{-1})\)\eta^2H\sum_{t'=mH}^{t-1}\bbE \Vert\nabla \bF(\bX^{(t')},\bxi^{(t')}) \Vert_F^2 \notag \\
& \qquad + 2\((1{+}\nu_1)a_3 + (1+\nu_1^{-1})\)\eta^2H\sum_{t'=mH}^{t-1} \beta^2\bbE\verts{\bV^{(t')}}_F^2 
 \label{equi-16-interim1}
\end{align}
Here, (a) follows from the inequality: $\frac{1}{n}\sum_{i=1}^n\verts{\ba_i-\frac{1}{n}\sum_{i=1}^n\ba_i}_2^2 \leq \frac{1}{n}\sum_{i=1}^n\verts{\ba_i}_2^2$ and (b) follows from \eqref{eq:cvx_e1_relaxed} (in Lemma~\ref{lem:cvx_e1_relaxed}). The coefficients $a_1,a_2,a_3$ in the RHS of (b) are defined in Lemma~\ref{lem:cvx_e1_relaxed}.
\begin{proposition}\label{prop:mom-update-norm}
For any $t'$, we have:
\begin{align}\label{eq:mom-update-norm}
&\bbE\verts{\nabla \bF(\bX^{(t')},\bxi^{(t')})}_F^2 \leq 2(M^2+1) (L^2\Xi^{(t')} + nG^2)+ 2(M^2+1)nB^2\bbE\verts{\nabla f(\blx^{(t')})}_2^2 + n\sigma^2
\end{align}
\end{proposition}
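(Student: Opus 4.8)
The plan is to expand the Frobenius norm column-wise and then peel off the stochastic noise before controlling the deterministic gradients. Since $\verts{\nabla\bF(\bX^{(t')},\bxi^{(t')})}_F^2 = \sum_{i=1}^n\verts{\nabla F_i(\bx_i^{(t')},\xi_i^{(t')})}_2^2$, I would first use the unbiasedness $\bbE_{\xi_i}[\nabla F_i(\bx_i^{(t')},\xi_i^{(t')})]=\nabla f_i(\bx_i^{(t')})$ together with the bias--variance split $\bbE_{\xi_i}\verts{\nabla F_i}_2^2 = \bbE_{\xi_i}\verts{\nabla F_i-\nabla f_i}_2^2 + \verts{\nabla f_i}_2^2$. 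Summing over $i$ and invoking Assumption~\ref{assump:variance} to control the first sum by $n\sigma^2 + M^2\sum_i\verts{\nabla f_i(\bx_i^{(t')})}_2^2$, I obtain the clean intermediate bound $\bbE\verts{\nabla\bF}_F^2 \leq n\sigma^2 + (M^2+1)\,\bbE\sum_{i=1}^n\verts{\nabla f_i(\bx_i^{(t')})}_2^2$.

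The remaining task is to bound the deterministic quantity $\sum_i\verts{\nabla f_i(\bx_i^{(t')})}_2^2$, where each gradient is taken at the node's own (differing) parameter. I would introduce the common reference point $\blx^{(t')}$ via $\verts{\ba}_2^2 \leq 2\verts{\ba-\bb}_2^2 + 2\verts{\bb}_2^2$ with $\bb=\nabla f_i(\blx^{(t')})$, then use the $L$-smoothness of each $f_i$ (Assumption~\ref{assump:smoothness}) to turn $\verts{\nabla f_i(\bx_i^{(t')})-\nabla f_i(\blx^{(t')})}_2^2$ into $L^2\verts{\bx_i^{(t')}-\blx^{(t')}}_2^2$. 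After taking expectations this produces exactly $2L^2\Xi^{(t')}$ (by the definition $\Xi^{(t')}=\sum_i\bbE\verts{\bx_i^{(t')}-\blx^{(t')}}_2^2$) plus the term $2\sum_i\verts{\nabla f_i(\blx^{(t')})}_2^2$ with all gradients now evaluated at the same point.

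Finally, I would apply Assumption~\ref{assump:grad-dissim} at $\bx=\blx^{(t')}$ to bound $\sum_i\verts{\nabla f_i(\blx^{(t')})}_2^2 \leq nG^2 + nB^2\verts{\nabla f(\blx^{(t')})}_2^2$. Substituting back yields $\bbE\sum_i\verts{\nabla f_i(\bx_i^{(t')})}_2^2 \leq 2L^2\Xi^{(t')} + 2nG^2 + 2nB^2\,\bbE\verts{\nabla f(\blx^{(t')})}_2^2$, and multiplying through by $(M^2+1)$ and adding $n\sigma^2$ recovers the claimed inequality.

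The calculation has no genuinely hard step; it is a careful chaining of the three assumptions. The only point requiring attention is the ordering: the stochastic variance must be absorbed first (keeping the $M^2$ coefficient attached to the true-gradient sum), and the heterogeneity must then be handled at the common point $\blx^{(t')}$ rather than at the individual $\bx_i^{(t')}$, with smoothness providing the bridge through $\Xi^{(t')}$. Keeping the factor-of-two from the triangle inequality consistent across both the $G^2$ and the $B^2$ terms, so that the final $(M^2+1)$ multiplier lands correctly on both, is the main bookkeeping concern.
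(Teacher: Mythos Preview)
Your proposal is correct and follows essentially the same approach as the paper: bias--variance decomposition to isolate $n\sigma^2 + (M^2+1)\sum_i\bbE\verts{\nabla f_i(\bx_i^{(t')})}_2^2$ via Assumption~\ref{assump:variance}, then add/subtract $\nabla f_i(\blx^{(t')})$, apply $L$-smoothness to produce $\Xi^{(t')}$, and finish with Assumption~\ref{assump:grad-dissim}. The only cosmetic difference is that the paper carries out these steps in matrix/Frobenius-norm notation rather than expanding column-wise first.
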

\noindent Substituting \eqref{eq:mom-update-norm} into \eqref{equi-16-interim1}, for $(m+1)H\leq t < (m+2)H$:
\begin{align}
\Xi^{(t)} & \leq (1+\nu_1)\(a_1 \Xi^{(mH)} + a_2\bbE\verts{\bX^{(mH)} - \bhX^{(mH)}}_F^2\) \notag \\
& \quad + 2c_2\eta^2H^2n\(2(M^2{+}1)G^2 {+} \sigma^2\)   {+} c_2\eta^2H\beta^2\sum_{t'=mH}^{t-1}\bbE\Vert\bV^{(t')} \Vert_F^2  \notag \\
& \quad + 2c_2\eta^2H(M^2{+}1)\sum_{t'=mH}^{t-1} L^2\Xi^{(t')} {+} nB^2\bbE \Vert\nabla f(\blx^{(t')}) \Vert^2   \label{equi-16-interim2}
\end{align}
where 
$c_2=2\((1+\nu_1)a_3 + (1+\nu_1^{-1})\)$.
For any $j\in[T]$ and $m'=\lfloor\frac{j}{H}\rfloor - 1$, define 
\begin{align}\label{Sj-defn}
S^{(j)} := \Xi^{(j)} + \bbE\verts{\bX^{(j)} - \bhX^{((m'+1)H)}}_F^2.
\end{align}
By definition, we have $S^{(mH)} = \Xi^{(mH)} + \bbE\verts{\bX^{(mH)} - \bhX^{(mH)}}_F^2$ and also that $\Xi^{(t')}\leq S^{(t')}$ for any $t'$. Using these in \eqref{equi-16-interim2}, we get
\begin{align}
\Xi^{(t)} & \leq (1+\nu_1)\(a_1 \Xi^{(mH)} + a_2\bbE\verts{\bX^{(mH)} - \bhX^{(mH)}}_F^2\) \notag \\
& \quad + 2c_2\eta^2H^2n\(2(M^2{+}1)G^2{+} \sigma^2\) {+} c_2\eta^2H\beta^2\sum_{t'=mH}^{t-1}\bbE \Vert\bV^{(t')}\Vert_F^2 \notag \\
& \quad + 2c_2\eta^2H(M^2{+}1)\sum_{t'=mH}^{t-1} L^2 S^{(t')}  {+} nB^2\bbE \Vert\nabla f(\blx^{(t')})\Vert_2^2   \label{equi-16-interim3}
\end{align}
Our aim is to get an upper-bound on $S^{(t)}$, which is defined in \eqref{Sj-defn} as $S^{(t)} = \Xi^{(t)} + \bbE\verts{\bX^{(t)} - \bhX^{(\lfloor t/H\rfloor H)}}_F^2$. However, in \eqref{equi-16-interim3}, we have only derived  an upper-bound on $\Xi^{(t)}$ in terms of $S^{(t')}$ for $t'<t$. So,, we need to derive a similar upper-bound on the other term $\bbE\verts{\bX^{(t)} - \bhX^{(\lfloor t/H\rfloor H)}}_F^2$, and then we will add both the upper-bounds to get an upper-bound on $S^{(t)}$. In the following, we derive an upper bound on $\bbE \Vert\bX^{(t)} - \bhX^{(\lfloor t/H\rfloor H)}\Vert_F^2$. 
Let $m=\lfloor\frac{t}{H}\rfloor - 1$, we have:
\begin{align}
& \bbE\verts{\bX^{(t)} - \bhX^{((m+1)H)}}_F^2 \notag \\
&  = \bbE \left\Vert \bX^{((m+1)H)} - \bhX^{((m+1)H)} 	- \eta\sum_{t'=(m+1)H}^{t-1} \(\beta \bV^{(t')}+\nabla \bF(\bX^{(t')},\bxi^{(t')})\)  \right\Vert_F^2 \notag \\
&\leq (1+\nu_1)\bbE\verts{\bX^{((m+1)H)} - \bhX^{((m+1)H)}}_F^2 + (1{+}\nu_1^{-1})\eta^2\bbE\verts{\sum_{t'=(m+1)H}^{t-1} \(\beta \bV^{(t')}{+}\nabla \bF(\bX^{(t')},\bxi^{(t')})\)}_F^2 \notag \\
&\stackrel{\text{(a)}}{\leq} (1{+}\nu_1)(b_1\Xi^{(mH)}   + b_2\bbE \Vert\bX^{(mH)} {-} \bhX^{(mH)} \Vert_F^2 )  + (1{+}\nu_1)b_3\eta^2\bbE\verts{\sum_{t'=mH}^{(m+1)H-1} \(\beta \bV^{(t')}{+}\nabla \bF(\bX^{(t')},\bxi^{(t')})\)}_F^2 \notag \\
&\quad + (1{+}\nu_1^{-1})\eta^2\bbE\verts{\sum_{t'=(m+1)H}^{t-1} \(\beta \bV^{(t')}{+}\nabla \bF(\bX^{(t')},\bxi^{(t')})\)}_F^2 \notag \\
&\leq (1+\nu_1)\(b_1\Xi^{(mH)} + b_2\bbE\verts{\bX^{(mH)} - \bhX^{(mH)}}_F^2\) + 2\((1+\nu_1)b_3 + (1+\nu_1^{-1})\)\eta^2H\sum_{t'=mH}^{t-1}\beta^2\bbE \Vert\bV^{(t')}\Vert_F^2 \notag \\
& \quad + 2\((1{+}\nu_1)b_3 {+} (1{+}\nu_1^{-1})\)\eta^2H \hspace{-0.2cm} \sum_{t'=mH}^{t-1} \hspace{-0.2cm} \bbE \Vert\nabla \bF(\bX^{(t')},\bxi^{(t')}) \Vert_F^2 \notag \\
&\stackrel{\text{(b)}}{\leq} (1+\nu_1)\(b_1\Xi^{(mH)} + b_2\bbE\verts{\bX^{(mH)} - \bhX^{(mH)}}_F^2\) + 2c_4\eta^2H^2n\(2(M^2{+}1)G^2 + \sigma^2\)  \notag \\
& \quad + c_4\eta^2H\beta^2  \sum_{t'=mH}^{t-1} \bbE \Vert\bV^{(t')} \Vert_F^2 + 2c_4\eta^2H(M^2{+}1)\sum_{t'=mH}^{t-1} L^2\Xi^{(t')} {+} nB^2 \bbE \Vert \nabla f(\blx^{(t')}) \Vert_2^2  \label{equi-16-interim4}
\end{align}
where (a) follows from \eqref{eq:cvx_e2_relaxed} in Lemma~\ref{lem:cvx_e2_relaxed} and the coefficients $b_1,b_2,b_3$ in the RHS of (a) are defined in Lemma~\ref{lem:cvx_e2_relaxed}, and (b) follows from substituting the bound from \eqref{eq:mom-update-norm} (in Proposition~\ref{prop:mom-update-norm}). In the RHS of (b), 
$c_4=2\((1+\nu_1)b_3 + (1+\nu_1^{-1})\)$.

\noindent Adding \eqref{equi-16-interim3}, \eqref{equi-16-interim4} for  $S^{(t)}=\Xi^{(t)}+\bbE \Vert\bX^{(t)} - \bhX^{((m+1)H)} \Vert_F^2$:
\begin{align}
S^{(t)} & \leq (1+\nu_1)\max\{a_1+b_1,a_2+b_2\}S^{(mH)} + 2c_1\eta^2H^2\varGamma + c_1\eta^2H\beta^2\sum_{t'=mH}^{t-1}\bbE \Vert\bV^{(t')}\Vert_F^2  \notag \\
& \quad + 2c_1\eta^2H(M^2{+}1)L^2\sum_{t'=mH}^{t-1} S^{(t')} + 2c_1\eta^2H(M^2+1)nB^2 \textstyle \sum_{t'=mH}^{t-1}\bbE\verts{\nabla f(\blx^{(t')})}_2^2 \label{equi-16-interim5}
\end{align}
where $\varGamma=n\(2(M^2+1)G^2 + \sigma^2\)$ and $c_1=c_2+c_4$ with 
$c_2=2\((1+\nu_1)a_3 + (1+\nu_1^{-1})\)$ 
and \\ $c_4=2\((1+\nu_1)b_3 + (1+\nu_1^{-1})\)$. Here, $\nu_1>0$ is a free coefficient, and $a_1,a_2,a_3$ and $b_1,b_2,b_3$ are defined in Lemma~\ref{lem:cvx_e1_relaxed} and Lemma~\ref{lem:cvx_e2_relaxed}, respectively. We will set the free variables such that the coefficients of $S^{(t')}$ for any $t'=mH,...,t-1$ on the RHS become strictly less than one.

In Appendix C-C, we show that if we set the free parameters to be the following:
\begin{align*}
&\tau_i = \frac{\omega}{4}, \text{ for } i=1,2,3,4,5,7,8; \quad \tau_6 = \frac{4}{\omega};  \quad \nu_1=\frac{\gamma^*\delta}{4}; \\
&\alpha_1 = \frac{\gamma\delta}{2}; \quad \alpha_5^{-1} = \frac{\gamma\delta}{2}; \quad \gamma = \frac{2\delta\omega^3}{(128\lambda^2 + 24\lambda^2\omega^2 + 4\delta^2\omega^2)};
\end{align*}
Then we get 
\begin{align}
&(1{+}\nu_1)\max\{a_1{+}b_1,a_2{+}b_2\} \leq 1{-}\frac{\gamma^*\delta}{4} \leq 1{-} \frac{\delta^2\omega^3}{1224}, \label{final-bound_coeff_St} \\
&c_1 \leq 2(1+\frac{\gamma\delta}{4})\(\frac{3}{\gamma\delta} + \frac{9\lambda^2}{\delta^2} + \frac{45\gamma\lambda^2}{\delta\omega} + \frac{104\gamma^2\lambda^2}{\omega^2} + \frac{4}{\omega} - 2\) + 4(1+\frac{4}{\gamma\delta}). \label{final-bound-c2+c4}
\end{align}
Putting these bounds back into \eqref{equi-16-interim5}, we get the following upper bound for $(m+1)H\leq t \leq (m+2)H-1$:
\begin{align}
 S^{(t)} & \leq \(1-\frac{\gamma\delta}{4}\)S^{(mH)} + 2c_1\eta^2H^2n\(2(M^2+1)G^2 + \sigma^2\) \notag \\
& \quad + c_1\eta^2H\beta^2\sum_{t'=mH}^{t-1}\bbE \Vert\bV^{(t')} \Vert_F^2 + 2c_1\eta^2H(M^2{+}1)L^2\sum_{t'=mH}^{t-1}S^{(t')} \notag \\
& \quad + 2c_1\eta^2H(M^2+1)nB^2\sum_{t'=mH}^{t-1}\bbE\verts{\nabla f(\blx^{(t')})}_2^2. \label{temp_equi-16-final-bound}
\end{align} 
\subsection{Proof of Lemma~\ref{lem:similar-eq17}}\label{subsec:proof-similar-eq17}
For any fixed $t\in[T]$ and the corresponding $m\in\lfloor\frac{t}{H}\rfloor-1$, in Section~\ref{subsec:proof-similar-eq16}, we derived an upper-bound on $S^{(\hatt)}$ all $\hatt\in[T]$ such that $(m+1)H\leq \hatt < (m+2)H$ (note that $t$ and $\hatt$ will give exactly the same terms in Section~\ref{subsec:proof-similar-eq16}, so we just kept $t$ everywhere). In this section, we consider the case when $mH\leq \hatt < (m+1)H$.
\begin{align}
& \Xi^{(\hatt)} \stackrel{\text{(a)}}{\leq} \bbE\verts{\bX^{(\hatt)} - \blX^{(mH)}}_F^2 \label{equi-17-interim0} \\
&\leq (1+\nu_3)\bbE\verts{\bX^{(mH)} - \blX^{(mH)}}_F^2  + (1+\nu_3^{-1})\eta^2\bbE\verts{\sum_{t'=mH}^{\hatt-1}\(\beta \bV^{(t')}+\nabla \bF(\bX^{(t')},\bxi^{(t')})\)}_F^2 \notag \\
&\stackrel{\text{(b)}}{\leq} (1+\nu_3)\Xi^{(mH)} + 2(1+\nu_3^{-1})\eta^2H\beta^2\sum_{t'=mH}^{\hatt-1}\bbE\verts{\bV^{(t')}}_F^2  + 2(1+\nu_3^{-1})\eta^2H \sum_{t'=mH}^{\hatt-1}\bbE\verts{\nabla \bF(\bX^{(t')},\bxi^{(t')})}_F^2 \notag \\
&\stackrel{\text{(c)}}{\leq} (1+\nu_3)\Xi^{(mH)} + 2(1+\nu_3^{-1})\eta^2H\beta^2\sum_{t'=mH}^{\hatt-1}\bbE\verts{\bV^{(t')}}_F^2  + 4(M^2+1)(1+\nu_3^{-1})\eta^2H \textstyle \sum_{t'=mH}^{\hatt-1}\(L^2\Xi^{(t')} + nG^2\) \notag \\
& \quad + 2(1+\nu_3^{-1})\eta^2H\sum_{t'=mH}^{\hatt-1}\(2(M^2{+}1)nB^2\bbE\verts{\nabla f(\blx^{(t')})}_2^2 {+} n\sigma^2\) \notag \\
&\leq (1+\nu_3)\Xi^{(mH)} + 2(1+\nu_3^{-1})\eta^2H^2n\(2(M^2+1)G^2 + \sigma^2\)  + 4(1+\nu_3^{-1})\eta^2H(M^2{+}1)\sum_{t'=mH}^{\hatt-1}L^2\Xi^{(t')}  \notag \\
& \quad +nB^2\bbE\Vert\nabla f(\blx^{(t')})\Vert_2^2  + 2(1+\nu_3^{-1})\eta^2H\beta^2 \textstyle \sum_{t'=mH}^{\hatt-1}\bbE \Vert\bV^{(t')}\Vert_F^2 
\label{equi-17-interim1}
\end{align}
where (a) follows from the same reasoning using which we obtained \eqref{equi16-interim0}, (b) uses $\Xi^{(mH)}=\bbE\verts{\bX^{(mH)} - \blX^{(mH)}}_F^2$, and (c) follows from \eqref{eq:mom-update-norm} (in Proposition~\ref{prop:mom-update-norm}).\\
As mentioned in Section~\ref{subsec:proof-similar-eq16}, our aim is to get an upper-bound on $S^{(\hatt)}$, which is defined in \eqref{Sj-defn} as $S^{(\hatt)} = \Xi^{(\hatt)} + \bbE\verts{\bX^{(\hatt)} - \bhX^{(\lfloor \hatt/H\rfloor H)}}_F^2$. However, in \eqref{equi-17-interim1}, we have only derived an upper-bound on $\Xi^{(\hatt)}$. So, we need to derive a similar upper-bound on the other term $\bbE\verts{\bX^{(\hatt)} - \bhX^{(\lfloor \hatt/H\rfloor H)}}_F^2$, and then adding both the upper-bounds gives a bound on $S^{(\hatt)}$. 

\noindent Note that since $mH\leq \hatt < (m+1)H$, we have $\lfloor\frac{\hatt}{H}\rfloor = m$. In order to upper-bound $\bbE\verts{\bX^{(\hatt)} - \bhX^{(mH)}}_F^2$, we can follow the same steps that we used from \eqref{equi-17-interim0} to \eqref{equi-17-interim1} (just replace $\blX^{(mH)}$ with $\bhX^{(mH)}$). This would give the following bound:
\begin{align}
& \bbE \Vert\bX^{(\hatt)} - \bhX^{(mH)}\Vert_F^2  \leq (1+\nu_3)\bbE \Vert\bX^{(mH)} - \bhX^{(mH)}\Vert_F^2  + 2(1{+}\nu_3^{-1})\eta^2H^2   [n\(2(M^2{+}1)G^2 {+} \sigma^2\) {+}  \beta^2  \hspace{-0.3cm}  \sum_{t'=mH}^{\hatt-1}\hspace{-0.2cm}\bbE\Vert\bV^{(t')}\Vert_F^2  ] \notag \\
& \quad + 4(1+\nu_3^{-1})\eta^2H(M^2+1)nB^2\sum_{t'=mH}^{\hatt-1}\bbE\verts{\nabla f(\blx^{(t')})}_2^2  +  4(1+\nu_3^{-1})\eta^2H(M^2+1)L^2\sum_{t'=mH}^{\hatt-1}\Xi^{(t')}  \label{equi-17-interim2}
\end{align}
Adding \eqref{equi-17-interim1} and \eqref{equi-17-interim2}, and using the definition that $S^{(\hatt)} = \Xi^{(\hatt)} + \bbE\verts{\bX^{(\hatt)} - \bhX^{(\lfloor \hatt/H\rfloor H)}}_F^2$ together with that $\Xi^{(t')}\leq S^{(t')}$, and taking $\nu_3=\frac{\gamma\delta}{4}$, we get:
\begin{align}
S^{(\hatt)} & \leq (1+\frac{\gamma\delta}{4})S^{(mH)} + 4(1{+}\frac{4}{\gamma\delta})\eta^2H^2n\(2(M^2{+}1)G^2 {+} \sigma^2\) + 4(1+\frac{4}{\gamma\delta})\eta^2H\beta^2\sum_{t'=mH}^{\hatt-1}\bbE \Vert\bV^{(t')}\Vert_F^2 \notag \\
& \quad + 8(1+\frac{4}{\gamma\delta})\eta^2H(M^2+1)\hspace{-0.2cm}\sum_{t'=mH}^{\hatt-1}(L^2S^{(t')}  {+} nB^2\bbE \Vert\nabla f(\blx^{(t')})\Vert_2^2) \label{equi-17-interim3}
\end{align}
In order to make our calculations less cluttered later, we would like to write all terms (except the first one) in the RHS above in the same form as given in \eqref{temp_equi-16-final-bound}. Indeed, it can be verified easily that $4(1+\frac{4}{\gamma\delta})\leq c_1$, where $c_1$ is exactly the same as in \eqref{temp_equi-16-final-bound}. Substituting this in \eqref{equi-17-interim3} above yields the bound below for $mH\leq \hatt < (m+1)H$, where $m\in\lfloor\frac{t}{H}\rfloor-1$:
\begin{align}
S^{(\hatt)} & \leq (1+\frac{\gamma\delta}{4})S^{(mH)} + 2c_1\eta^2H^2n\(2(M^2+1)G^2 + \sigma^2\) + c_1\eta^2H\beta^2\sum_{t'=mH}^{\hatt-1}\bbE\verts{\bV^{(t')}}_F^2 \notag \\
& \quad + 2c_1\eta^2H(M^2+1)\sum_{t'=mH}^{\hatt-1}(L^2S^{(t')}  + nB^2\bbE\verts{\nabla f(\blx^{(t')})}_2^2) \label{equi-17-interim4}
\end{align} 
where $c_1$ is exactly the same as in \eqref{temp_equi-16-final-bound}.

\subsection{Proof of Lemma~\ref{lem:similar-lem14}}\label{subsec:proof-similar-lem14}
Let $A=2c_1H^2n\(2(M^2{+}1)G^2 {+} \sigma^2\)$, $D=\frac{c_1H\beta^2}{(1-\beta)}$,$C=2c_1H(M^2{+}1)nB^2$, and $\Lambda^{(t')}=(1-\beta)\bbE\verts{\bV^{(t)}}_F^2$, where $c_1$ is the same as in \eqref{temp_equi-16-final-bound}.
Since $\eta\leq\sqrt{\frac{\gamma\delta}{512c_1H^2(M^2+1)L^2}}$, we have $2c_1\eta^2H(M^2+1)L^2\leq \frac{\gamma\delta}{4}\frac{1}{64H}$.

Take any $t\in[T]$ and let $m=\lfloor\frac{t}{H}\rfloor-1$. With these substitutions and letting $\alpha=\frac{\gamma\delta}{4}$, the bound from \eqref{temp_equi-16-final-bound} for any $t$ such that $(m+1)H\leq t\leq(m+2)H-1$ becomes:
	\begin{align}\label{bound1-on-St}
		& S^{(t)} \leq \left(1 - \frac{\alpha}{2} \right) S^{(mH)} + A \eta^2 + \frac{\alpha}{64H} \sum_{t' = mH}^{t-1} S^{(t')} + C \eta^2 \sum_{t' = mH}^{t-1} \bbE \verts{ \nabla f (\blx^{(t')}) }^2 + D \eta^2 \sum_{t' = mH}^{t-1} \Lambda^{(t')}.
	\end{align}
	And for any $\hatt$ such that $mH \leq \hatt < (m+1)H$, the bound from \eqref{equi-17-interim4} becomes:
	\begin{align}\label{bound2-on-St}
		& S^{(\hatt)} \leq \left(1 - \frac{\alpha}{2} \right) S^{(mH)} + A \eta^2 + \frac{\alpha}{64H} \sum_{t' = mH}^{\hatt-1} S^{(t')}  + C \eta^2 \sum_{t' = mH}^{\hatt-1} \bbE \verts{ \nabla f (\blx^{(t')}) }^2 + D \eta^2 \sum_{t' = mH}^{\hatt-1} \Lambda^{(t')}.
	\end{align}	
	Consider \eqref{bound1-on-St}. Substituting the value of $S^{(t-1)}$ recursively in the RHS of \eqref{bound1-on-St}, we get:
	\begin{align*}
		S^{(t)} & \leq \left(1 - \frac{\alpha}{2} \right) S^{(mH)} + A \eta^2 + \frac{\alpha}{64H} \sum_{t' = mH}^{t-2} S^{(t')}  + C \eta^2 \sum_{t' = mH}^{t-1} \bbE \verts{ \nabla f (\blx^{(t')}) }^2 + D \eta^2 \sum_{t' = mH}^{t-1} \Lambda^{(t')} \\
		& \quad + \frac{\alpha}{64H} \left(  \left(1 - \frac{\alpha}{2} \right) S^{(mH)} + A \eta^2 + \frac{\alpha}{64H} \sum_{t' = mH}^{t-2} S^{(t')}  + C \eta^2 \sum_{t' = mH}^{t-2} \bbE \verts{ \nabla f (\blx^{(t')}) }^2 + D \eta^2 \sum_{t' = mH}^{t-2} \Lambda^{(t')}   \right) \\
		& = \left(1 {-} \frac{\alpha}{2} \right) \left( 1 {+} \frac{\alpha}{64H} \right) S^{(mH)} + A \left( 1 + \frac{\alpha}{64H} \right)  \eta^2 + D \eta^2 \Lambda_{t-1}  + \frac{\alpha}{64H} \left( 1 {+} \frac{\alpha}{64H} \right) \sum_{t' = mH}^{t-2} S^{(t')}  \\
		& \quad + \left( 1 + \frac{\alpha}{64H} \right) D \eta^2 \sum_{t' = mH}^{t-2 } \Lambda^{(t')} + \left( 1 {+} \frac{\alpha}{64H} \right) C\eta^2 \sum_{t' = mH}^{t-2} \bbE \verts{ \nabla f (\blx^{(t')}) }^2 {+} C\eta^2 \bbE \verts{ \nabla f (\blx^{(t-1)}) }^2 
	\end{align*}
	Substituting the values in the RHS till $(m+1)H$, we get:
	\begin{align*}
		S^{(t)} & \leq \left(1 - \frac{\alpha}{2} \right) \left( 1 + \frac{\alpha}{64H} \right)^{H} S^{(mH)} + A \left( 1 + \frac{\alpha}{64H} \right)^H  \eta^2  + \frac{\alpha}{64H} \left( 1 + \frac{\alpha}{64H} \right)^H \sum_{t' = mH}^{(m+1)H-1} S^{(t')} \\
		& \quad + \left( 1 + \frac{\alpha}{64H} \right)^H \eta^2 \sum_{t' = mH}^{(m+1)H-1}( C\bbE \verts{ \nabla f (\blx^{(t')}) }^2 + D\Lambda^{(t')}) \\
		& \quad + \eta^2  \sum_{t' = (m+1)H}^{t-1} \left( 1 {+} \frac{\alpha}{64H} \right)^{t-1-t'}( C\bbE \Vert \nabla f (\blx^{(t')}) \Vert^2 + D\Lambda^{(t')})  
	\end{align*}
	Now consider $t'$ such that $mH \leq t' < (m+1)H$. Substituting the value of $S^{((m+1)H -1)}$ from \eqref{bound2-on-St} int the R.H.S above gives: 
	\begin{align*}
		& S^{(t)}   \leq \left(1 - \frac{\alpha}{2} \right) \left( 1 + \frac{\alpha}{64H} \right)^{H} S^{(mH)} + A \left( 1 + \frac{\alpha}{64H} \right)^H  \eta^2  + \frac{\alpha}{64H} \left( 1 + \frac{\alpha}{64H} \right)^H \sum_{t' = mH}^{(m+1)H-2} S^{(t')} \\
		& \quad + \left( 1 + \frac{\alpha}{64H} \right)^H \eta^2 \sum_{t' = mH}^{(m+1)H-1} (C \bbE \verts{ \nabla f (\blx^{(t')}) }^2 +  D\Lambda^{(t')})  \\
		& \quad + \frac{\alpha}{64H} \left( 1 {+} \frac{\alpha}{64H} \right)^H \left[ (1{+}\frac{\alpha}{2}) S^{(mH)}   {+} \frac{\alpha}{64H} \sum_{j = mH}^{(m+1)H-2} \hspace{-0.35cm} S^{(j)} + C\eta^2 \hspace{-0.35cm} \sum_{j = mH}^{(m+1)H-2} \hspace{-0.35cm} \bbE \Vert\nabla f(\blx^{(j)}) \Vert^2 + D \eta^2 \hspace{-0.35cm} \sum_{j=mH}^{(m+1)H-2} \hspace{-0.2cm} \Lambda^{(j)} {+} A \eta^2 \right] \\
		& \quad + \eta^2  \sum_{t' = (m+1)H}^{t-1} \left( 1 {+} \frac{\alpha}{64H} \right)^{t-1-t'}\hspace{-0.2cm} (C \bbE \Vert \nabla f (\blx^{(t')}) \Vert^2 + D \Lambda^{(t')})  \\
		& \leq \left(\left(1 - \frac{\alpha}{2} \right) +  \frac{\alpha}{64H} \left( 1 + \frac{\alpha}{2} \right) \right) \left( 1 + \frac{\alpha}{64H} \right)^{H} S^{(mH)}  + A \left( 1 + \frac{\alpha}{64H} \right)^{H+1}  \eta^2 {+} \frac{\alpha}{64H} \left( 1 {+} \frac{\alpha}{64H} \right)^{H+1} \sum_{t' = mH}^{(m+1)H-2} \hspace{-0.2cm} S^{(t')} \\
		&  + \left( 1 + \frac{\alpha}{64H} \right)^{H+1} \eta^2 \hspace{-0.35cm} \sum_{t' = mH}^{(m+1)H-2} \hspace{-0.2cm} (C\bbE \Vert \nabla f (\blx^{(t')}) \Vert^2 + D\Lambda^{(t')})   + \eta^2 \hspace{-0.35cm} \sum_{t' = (m+1)H}^{t-1} \hspace{-0.35cm} \left( 1 {+} \frac{\alpha}{64H} \right)^{t-1-t'} \hspace{-0.2cm}(C\bbE \verts{ \nabla f (\blx^{(t')}) }^2 {+} D\Lambda^{(t')})   \\
		& \quad + \eta^2  \left( 1 {+} \frac{\alpha}{64H} \right)^{H}  (C\bbE \Vert \nabla f (\blx^{((m+1)H-1)}) \Vert^2 + D\Lambda^{((m+1)H-1)}) 
	\end{align*} 
	Now we note that for $0 < \alpha \leq 1$, $  \frac{\alpha}{64H} \left( 1 + \frac{\alpha}{2} \right)  \leq \left( 1 - \frac{\alpha}{2} \right) \frac{\alpha}{16H}$. Using this fact in the first term and $(1 + \frac{\alpha}{64H}) \leq (1+ \frac{\alpha}{16H})$, and  $\left( 1 + \frac{\alpha}{64H} \right)^{t-1-t'} \leq ( 1+ \frac{\alpha}{16H})^{H}$ for all $t' \in \{(m+1)H,\hdots,t-1\}$ in the R.H.S above gives:
		\begin{align*}
		S^{(t)}  & \leq  \left(1 {-} \frac{\alpha}{2} \right)  \left( 1 + \frac{\alpha}{16H} \right)^{H+1} S^{(mH)}  {+} A \left( 1 {+} \frac{\alpha}{16H} \right)^{H+1}  \eta^2   + \frac{\alpha}{64H} \left( 1 + \frac{\alpha}{16H} \right)^{H+1} \sum_{t' = mH}^{(m+1)H-2} S^{(t')} \\
		& \quad  + \left( 1 + \frac{\alpha}{16H} \right)^{H+1} \eta^2 \sum_{t' = mH}^{(m+1)H-2} (C\bbE \verts{ \nabla f (\blx^{(t')}) }^2+ D\Lambda^{(t')})  \\
		& \quad  + \eta^2  \left( 1 + \frac{\alpha}{16H} \right)^{H}\sum_{t' = (m+1)H}^{t-1}  (C\bbE \verts{ \nabla f (\blx^{(t')}) }^2 + D\Lambda^{(t')})  \\
		& \quad + \eta^2  \left( 1 {+} \frac{\alpha}{16H} \right)^{H} \hspace{-0.2cm} (C\bbE \verts{ \nabla f (\blx^{((m+1)H-1)}) }^2 + D\Lambda^{((m+1)H-1)}) 		
	\end{align*}
	Using $\left( 1 + \frac{\alpha}{16H} \right)^{H}\leq\left( 1 + \frac{\alpha}{16H} \right)^{H+1}$ in the last two terms and then clubbing together terms respectively with $C$ and $D$:
		\begin{align*}
		S^{(t)}  & \leq  \left(1 - \frac{\alpha}{2} \right)  \left( 1 {+} \frac{\alpha}{16H} \right)^{H+1} S^{(mH)} + A \left( 1 + \frac{\alpha}{16H} \right)^{H+1}  \eta^2 + \left( 1 + \frac{\alpha}{16H} \right)^{H+1} \eta^2 \hspace{-0.25cm} \sum_{t' = mH}^{t-1} (C\bbE \verts{ \nabla f (\blx^{(t')}) }^2 {+} D\Lambda^{(t')}) \\
		& \quad + \frac{\alpha}{64H} \left( 1 + \frac{\alpha}{16H} \right)^{H+1} \sum_{t' = mH}^{(m+1)H-2} S^{(t')} 
	\end{align*}
Recursively substituting the values till $mH$ gives us:
	\begin{align*}
		S^{(t)} & \leq   \left(1 - \frac{\alpha}{2} \right)  \left( 1 + \frac{\alpha}{16H} \right)^{2H} S^{(mH)} + A \left( 1 + \frac{\alpha}{16H} \right)^{2H}  \eta^2  + \left( 1 + \frac{\alpha}{16H} \right)^{2H} \eta^2 \hspace{-0.2cm} \sum_{t' = mH}^{t-1} (C \bbE \verts{ \nabla f (\blx^{(t')}) }^2  +D\Lambda^{(t')}) 
	\end{align*}
	For $\alpha \leq 1$, we note that $\left( 1 + \frac{\alpha}{16H} \right)^{2H} \leq e^{\frac{\alpha}{8}} \leq 1 + \frac{\alpha}{4}$. Plugging this in the first term on the RHS and using $\left(1 - \frac{\alpha}{2} \right)\left(1 + \frac{\alpha}{4} \right)\leq \left(1 - \frac{\alpha}{4} \right)$ and $\left( 1 + \frac{\alpha}{16H} \right)^{2H} \leq 1 + \frac{\alpha}{4} \leq 2$ gives us the following recursion equation for any $t \in [T]$:
	\begin{align} \label{lem14-eq1}
		& S^{(t)}  \leq   \left(1 - \frac{\alpha}{4} \right)  S^{(mH)} + 2 A  \eta^2 + 2 C\eta^2 \sum_{t' = mH}^{t-1} \bbE \verts{ \nabla f (\blx^{(t')}) }^2 + 2 D \eta^2 \sum_{t' = mH}^{t-1} \Lambda^{(t')} 
	\end{align}
	Unrolling recursion equation in \eqref{lem14-eq1} for $S^{(mH)}$ till $0$, we get:
	\begin{align}\label{lem14-eq11}
		S^{(t)} & \leq   2 A  \eta^2  \sum_{j=0}^{m-1}  \left(1 - \frac{\alpha}{4} \right)^{j} + 2 D \eta^2 \sum_{j=0}^{t-1} \left(1 - \frac{\alpha}{4} \right)^{\lfloor \frac{t-j}{H} \rfloor}  \Lambda^{(j)}  + 2 C\eta^2  \sum_{j=0}^{t-1}  \left(1 - \frac{\alpha}{4} \right)^{\lfloor \frac{t-j}{H} \rfloor}  \bbE \verts{ \nabla f (\blx^{(j)}) }^2
	\end{align}
	Note that $\sum_{j=0}^{m-1}  \left(1 - \frac{\alpha}{4} \right)^{j} \leq \frac{4}{\alpha}$. Using this and the bound $\left(1 - \frac{\alpha}{4} \right)^{\lfloor \frac{t-j}{H} \rfloor} \leq  2 \left( 1 - \frac{\alpha}{8H} \right)^{t-j}$ (proved in Appendix C-E) into \eqref{lem14-eq11} gives us:
	\begin{align*}
		S^{(t)}  & \leq   \frac{8A\eta^2}{\alpha} 
		+ 4 C\eta^2  \sum_{j=0}^{t-1}  \left(1 - \frac{\alpha}{8H} \right)^{t-j}  \bbE \verts{ \nabla f (\blx^{(j)}) }^2 + 4 D \eta^2 \sum_{j=0}^{t-1} \left(1 - \frac{\alpha}{8H} \right)^{t-j}  \Lambda^{(j)} 
	\end{align*}
	Taking summation from $t=0$ to $T-1$, we get:
	\begin{align} \label{lem14-eq2}
		\sum_{t=0}^{T-1} S^{(t)}  & \leq   
		 4 C\eta^2  \sum_{t=0}^{T-1} \sum_{j=0}^{t-1}  \left(1 - \frac{\alpha}{8H} \right)^{t-j}  \bbE \verts{ \nabla f (\blx^{(j)}) }^2  + 4 D \eta^2 \sum_{t=0}^{T-1} \sum_{j=0}^{t-1} \left(1 - \frac{\alpha}{8H} \right)^{t-j}  \Lambda^{(j)}  + \frac{8A\eta^2}{\alpha} T  \notag \\
		& \leq \frac{8A\eta^2}{\alpha} T + 4C \eta^2 \sum_{j=0}^{T-1} \sum_{t=j+1}^{T-1}  \left(1 - \frac{\alpha}{8H} \right)^{t-j}  \bbE \verts{ \nabla f (\blx^{(j)}) }^2 + 4 D \eta^2 \sum_{j=0}^{T-1} \sum_{t=j+1}^{T-1} \left(1 - \frac{\alpha}{8H} \right)^{t-j}  \Lambda^{(j)} \notag \\
		& \leq \frac{8A\eta^2T}{\alpha} + \frac{32C \eta^2 H}{\alpha} \sum_{t=0}^{T-1}  \bbE \Vert \nabla f (\blx^{(t)}) \Vert^2  + \frac{32 D H \eta^2}{\alpha} \sum_{t=0}^{T-1}  \Lambda^{(t')} 
	\end{align}
	To bound the last term in the RHS of \eqref{lem14-eq2}, from the definition of $\Lambda^{(t')}$ in \eqref{eq:bound_v}, note that:
	\begin{align*}
		\sum_{t=0}^{T-1}  \Lambda^{(t')}  = \sum_{t=0}^{T-1} \sum_{j=0}^{t} \beta^{t-j} \bbE \verts{\nabla \bF(\bX^{(j)},\boldsymbol{\xi}^{(j)})}_F^2
	\end{align*}
	From Proposition \ref{prop:mom-update-norm} (from page~\pageref{prop:mom-update-norm}) to bound the stochastic gradient in the RHS of above equation gives us:
	\begin{align*}
		\sum_{t=0}^{T-1}  \Lambda^{(t')}  & \leq \sum_{t=0}^{T-1} \sum_{j=0}^{t} \beta^{t-j} \left[ 2(M^2+1) (L^2\Xi^{(j)} + nG^2) \right]   +\sum_{t=0}^{T-1} \sum_{j=0}^{t} \beta^{t-j} \left[ 2(M^2+1)nB^2\bbE\verts{\nabla f(\blx^{(j)})}_2^2 + n\sigma^2 \right] \\
		& \leq \frac{2(M^2+1)nG^2 + n \sigma^2}{(1-\beta)} T + \frac{2(M^2+1)L^2}{(1-\beta)} \sum_{t=0}^{T-1} \Xi^{(t)}  + \frac{2(M^2+1)nB^2}{(1-\beta)} \sum_{t=0}^{T-1} \bbE\verts{\nabla f(\blx^{(t)})}_2^2
	\end{align*}
	Substituting the above bound in \eqref{lem14-eq2}, we have:
	\begin{align*}
		\sum_{t=0}^{T-1} S^{(t)}  & \leq \eta^2 T \left(\frac{8A\eta^2}{\alpha} {+} \left( \frac{32 D H}{\alpha} \right)  \left( \frac{2(M^2+1)nG^2 {+} n \sigma^2}{(1-\beta)}   \right)   \right) + \frac{64 D H (M^2+1)L^2\eta^2 }{\alpha(1-\beta)}  \sum_{t=0}^{T-1}  \Xi^{(t)}  \\
		& \quad + \eta^2 \left(\frac{32C  H}{\alpha}   + \left( \frac{32 D H }{\alpha} \right) \frac{2(M^2{+}1)nB^2}{(1-\beta)}   \right) \sum_{t=0}^{T-1}  \bbE \Vert \nabla f (\blx^{(t)}) \Vert^2 
	\end{align*}
	Choose $\eta \leq \sqrt{ \frac{  \alpha (1-\beta) }{128 D H (M^2+1)L^2}}$ and using that fact that $\Xi^{(t)} \leq S^{(t)}$ for all $t \in [T]$ and rearranging the summation term gives:
	\begin{align}\label{upper-bound-avg-St}
		\frac{1}{T}\sum_{t=0}^{T-1} S^{(t)} & \leq 2\eta^2 J_1 + 2\eta^2 J_2 \frac{1}{T}\sum_{t=0}^{T-1}  \bbE \verts{ \nabla f (\blx^{(t)}) }^2,
	\end{align}
	where $J_1= \left(\frac{8A\eta^2}{\alpha} + \left( \frac{32 D H}{\alpha} \right)  \left( \frac{2(M^2+1)nG^2 + n \sigma^2}{(1-\beta)}   \right)   \right)$ and $J_2=\left(\frac{32C  H}{\alpha}   + \left( \frac{32 D H }{\alpha} \right) \frac{2(M^2+1)nB^2}{(1-\beta)}   \right)$.
	

}

\section{Experiments}\label{experiments}

\allowdisplaybreaks
{


\noindent In this section, we provide comparison of our proposed algorithm SQuARM-SGD, which uses momentum updates to CHOCO-SGD \cite{koloskova_decentralized_2019} and SPARQ-SGD \cite{singh2019sparq} which consider compressed decentralized training (and local SGD, triggered communication for \cite{singh2019sparq}) but do not incorporate momentum in their algorithms. 
We empirically demonstrate that using momentum based updates can increase the test performance of the learned model in large-scale decentralized training.


\begin{figure}[tp!]
	\centering
	\begin{subfigure}{.5\textwidth}
		\centering
		\includegraphics[width=.6\linewidth]{./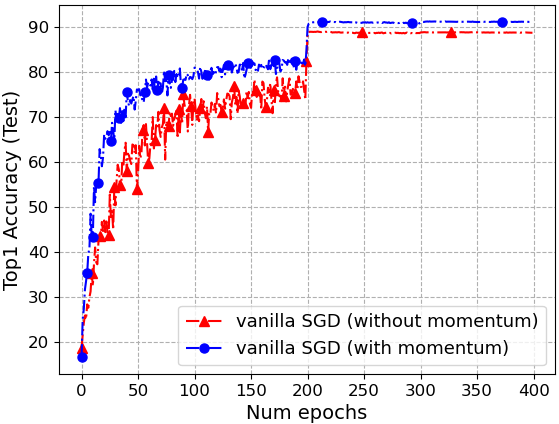}
		\caption{For vanilla SGD}
		\label{fig:1a}
	\end{subfigure}%
	\begin{subfigure}{.5\textwidth}
		\centering
		\includegraphics[width=.6\linewidth]{./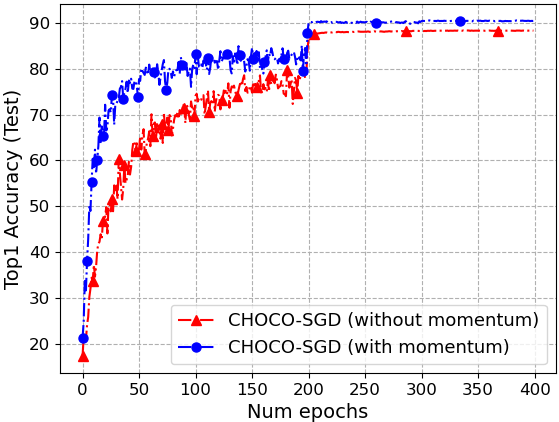}
		\caption{For CHOCO-SGD ($TopK$)}
		\label{fig:1b}
	\end{subfigure}
	\caption{Increase in test accuracy when using momentum updates.}
	\label{fig:1}
\end{figure}

%
%
%
%

\begin{figure*}[tp!]
	\begin{minipage}[t]{.5\linewidth}
		\begin{subfigure}[t]{0.488\textwidth}
			\centering
			\includegraphics[width=\textwidth]{./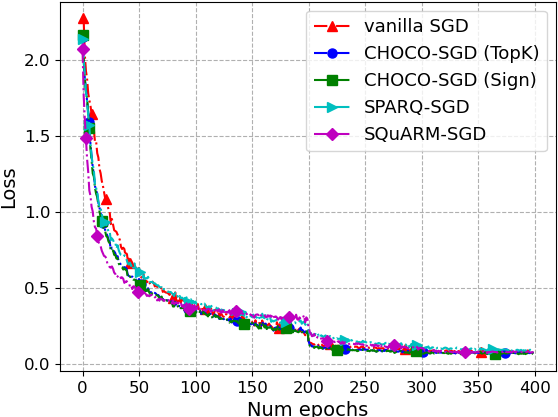}
			\centering
			\caption{\centering Comparison of training loss}\label{rev1:fig1.1}
		\end{subfigure}
		\begin{subfigure}[t]{0.488\textwidth}
			\centering
			\includegraphics[width=\textwidth]{./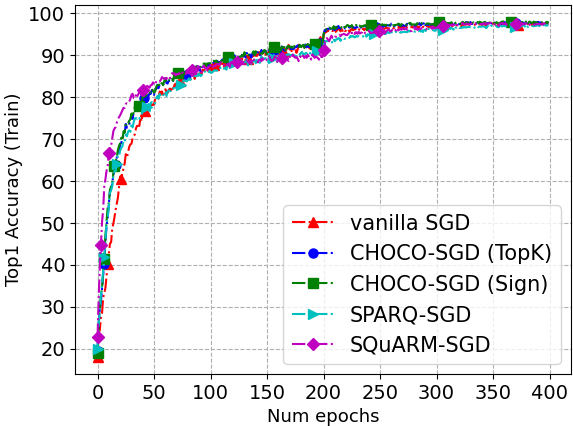}
			\centering
			\caption{ \centering Comparison train accuracy.}\label{rev1:fig1.2}
		\end{subfigure}
		\centering
		\captionsetup{justification=centering}
		\caption{Training metrics for different schemes.} \label{rev1:fig1}
	\end{minipage}
	\begin{minipage}[t]{.5\linewidth}
		\begin{subfigure}[t]{0.49\textwidth}
			\centering
			\includegraphics[width=\textwidth]{./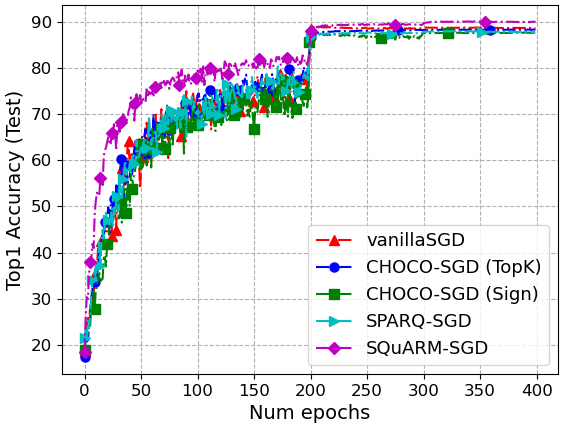}
			\caption{Comparison of test accuracy}\label{fig:2a}
		\end{subfigure}
		\begin{subfigure}[t]{0.49\textwidth}
			\centering
			\includegraphics[width=\textwidth]{./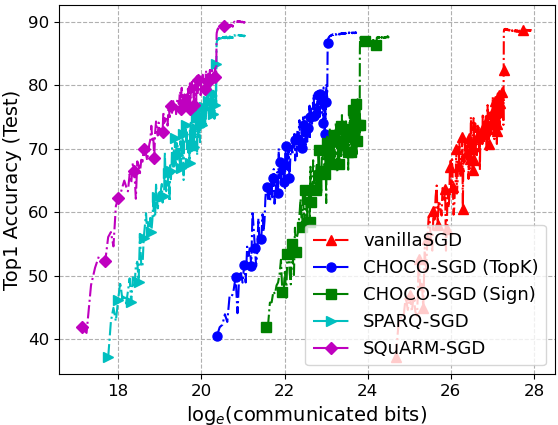}
			\caption{Test accuracy vs. no. of bits}\label{fig:2b}
		\end{subfigure}
		\centering
		\captionsetup{justification=centering}
		\caption{Test performance comparison for different schemes.} \label{fig:2}
	\end{minipage}
\end{figure*}

%
\paragraph{Setup.}
We match the setting in CHOCO-SGD, SPARQ-SGD and train ResNet20 \cite{wen2016learning} models on the CIFAR-10 \cite{cifar} dataset with $n=8$ nodes connected in a ring topology. Learning rate follows a schedule: initialized to $0.2$, warmup period of 5 epochs and has a decay of 10 at epoch 200 and 300; we stop training at epoch 400. For SQuARM-SGD, we use Nesterov momentum with a factor of $\beta = 0.9$ and mini-batch size of 256. For either SPARQ-SGD \cite{singh2019sparq} or CHOCO-SGD \cite{koloskova_decentralized_2019}, we do not use momentum.\footnote{{We note that while experimental results in \cite{singh2019sparq,koloskova_decentralized_2019} were provided with momentum, they do not consider momentum in their analysis. Thus for a fair comparison, we consider our algorithm SQuARM-SGD with momentum updates while SPARQ-SGD, CHOCO-SGD are evaluated without momentum.}} Matching \cite{singh2019sparq}, SQuARM-SGD consists of $H=5$ local iterations and we take top $1\%$ elements of each tensor and only transmit the sign and norm of the result. The triggering threshold follows a schedule piecewise constant: initialized to $2.5$ and increases by $1.5$ after every $20$ epochs till $350$ epochs are complete, while maintaining that $c_t < \nicefrac{1}{\eta}$ for all $t$. We compare performance of SQuARM-SGD against SPARQ-SGD (which uses $SignTopK$ compression, local iterations and threshold based communication), CHOCO-SGD with $Sign$, $TopK$ compression (taking top $1\%$ of elements of the tensor) and decentralized vanilla SGD \cite{lian2017can}.

\paragraph{Results.}
We first demonstrate that performing momentum updates can lead to better test performance
when training large scale machine learning models. Figure~\ref{fig:1a} and Figure~\ref{fig:1b} show test accuracy with and without momentum for vanilla SGD decentralized training and CHOCO-SGD (with $TopK$ compression), respectively. We observe that training with momentum updates improves test performance by $2$-$3\%$. Figure~\ref{rev1:fig1} shows the training loss and training accuracy performance of all the schemes, and  Figure~\ref{fig:2} compares the test performance. In our numerics, SQuARM-SGD incorporates momentum updates (also theoretically analyzed) while CHOCO-SGD ($Sign$ or $TopK$ compression) and SPARQ-SGD ($SignTopK$ compression and local iterations) do not.  From Figure~\ref{rev1:fig1}, we observe that each scheme is able to train the ResNet-20 model well over the CIFAR-10 dataset. Figure~\ref{fig:2a} shows that SQuARM-SGD has a better test performance than other methods by around $2\%$ owing to momentum updates. Moreover, SQuARM-SGD reaches a higher test accuracy in relatively fewer epochs due to speedup by momentum. As SQuARM uses $SignTopK$ compression along with local iterations and triggering, it also achieves the target test accuracy of about 90\% using significantly less communication bits\footnote{As SPARQ-SGD \cite{singh2019sparq} also uses $SignTopK$ compression with local iterations and event-triggering, it uses the same amount of communication bits as SQuARM-SGD although with an inferior test performance due to absence of momentum updates.} than either CHOCO-SGD or vanilla SGD training as demonstrated in Figure \ref{fig:2b}.

\paragraph{Wall clock comparison.}
Figure \ref{rev1:fig2.1} shows the wall-clock time for training the ResNet-20 model for all the schemes logged in at each epoch. It can be seen that performing the encoding/decoding process for CHOCO-SGD (Sign/TopK)\cite{koloskova_decentralized_2019-1} can be expensive, and takes more time than vanilla SGD. For SPARQ-SGD and SQuARM-SGD, we consider 10 local iterations, and thus the nodes only need to perform the encoding decoding process once in every 10 iterations as compared to each iteartion in vanilla SGD or CHOCO-SGD. The time take for SQuARM-SGD is a bit higher than SPARQ-SGD on account on performing more computation with the momentum updates. \\
Figure \ref{rev1:fig2.2} shows the test error performance as a function of the wall clock time elapsed during training. It can be seen that on account of using momentum and local iterations, SQuARM-SGD achieves a higher test performance while taking about $0.5 \times$ the time compared to CHOCO-SGD for training, and about $0.75 \times$ the time compared to vanilla-SGD.

\begin{figure}[tp!]
	\centering
	\begin{subfigure}{.5\textwidth}
		\centering
		\includegraphics[width=.7\linewidth]{./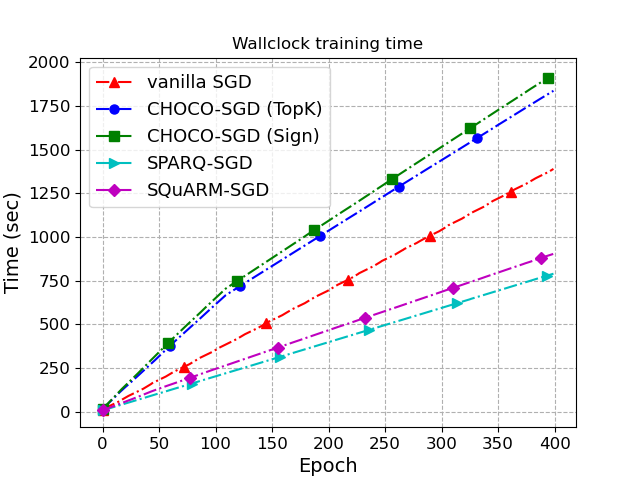}
		\caption{\centering Wall-clock training time logged at each epoch.}
		\label{rev1:fig2.1}
	\end{subfigure}%
	\begin{subfigure}{.5\textwidth}
		\centering
		\includegraphics[width=.7\linewidth]{./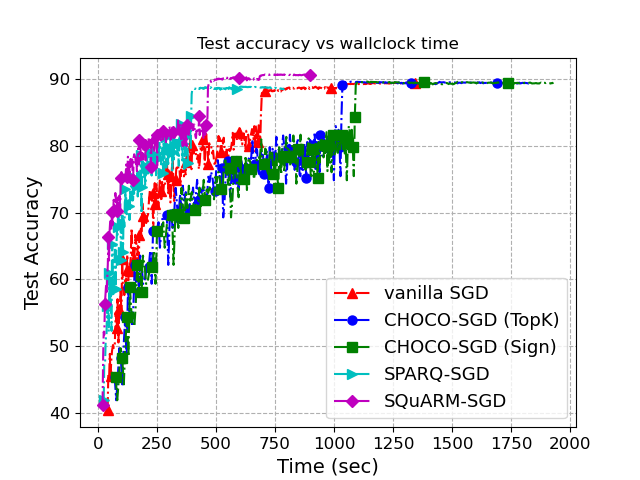}
		\caption{\centering Test accuracy vs wall-clock time.}
		\label{rev1:fig2.2}
	\end{subfigure}
	\caption{Comparing performance of schemes with wall-clock training time.}
	\label{rev1:fig2}
\end{figure}

}
\section*{Acknowledgment}\label{ack}
{\small This work was supported in part by NSF under
	Grant \#2007714 and Grant \#1955632; in part by UC-NL under Grant LFR18-548554; and in part by the Army Research Laboratory under Cooperative
	Agreement under Grant W911NF-17-2-0196. The views and conclusions contained in this document are those of the authors and should not be interpreted as representing the official policies, either expressed or implied, of the Army Research Laboratory or the U.S. Government. The U.S. Government is authorized to reproduce and distribute reprints for Government purposes notwithstanding any copyright notation here on.
}

\bibliographystyle{alpha}
\bibliography{ref}

\appendix
\newpage
\section{Preliminaries} \label{suppl_prelim}

\paragraph{Notation.}
Unless specified otherwise, for a vector $\bu$, we write $\|\bu\|$ to denote the $\ell_2$-norm $\|\bu\|_2$.

\subsection{Vector and matrix inequalities}
\begin{fact}
	Let $\mathbf{M} \in \mathbb{R}^{p \times q} $ be a matrix with entries $[m_{ij}] $, $i \in [p], j \in [q]$. The Frobenius norm of $\mathbf{M}$ is given by : $$ \verts{\mathbf{M}}_F  =  \sqrt{ \sum\limits_{i=1}^{p} \sum\limits_{j=1}^{q} \vert m_{ij} \vert^2  } $$ 
	Consider any two matrices $ \mathbf{A} \in \mathbb{R}^{d \times n}$, $\mathbf{B} \in \mathbb{R}^{n \times n}$. Then the following holds:
	\begin{align} \label{bound_frob_mult}
		\Vert \mathbf{AB} \Vert_F \leq \Vert \mathbf{A} \Vert_F \Vert \mathbf{B} \Vert_2
	\end{align}
\end{fact}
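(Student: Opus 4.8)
The plan is to establish the submultiplicativity bound $\Vert \mathbf{AB} \Vert_F \leq \Vert \mathbf{A} \Vert_F \Vert \mathbf{B} \Vert_2$ by writing the Frobenius norm of the product as a sum over its rows and then controlling each row with the operator-norm interpretation of the spectral norm. First I would record two elementary facts that will be used: that the squared Frobenius norm of a matrix equals the sum of the squared Euclidean norms of its rows, and that the spectral norm is invariant under transposition, i.e.\ $\Vert \mathbf{B} \Vert_2 = \Vert \mathbf{B}^T \Vert_2$.

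Next, let $\mathbf{a}_i^T$ denote the $i$-th row of $\mathbf{A}$ for $i \in \{1,\dots,d\}$, so that the $i$-th row of $\mathbf{AB}$ is $\mathbf{a}_i^T \mathbf{B}$, which as a column vector is $\mathbf{B}^T \mathbf{a}_i$. Decomposing the Frobenius norm row-wise then gives \[ \Vert \mathbf{AB} \Vert_F^2 = \sum_{i=1}^d \Vert \mathbf{B}^T \mathbf{a}_i \Vert_2^2. \] The key step is to bound each summand using the defining property of the spectral norm as the maximal stretching factor of a matrix acting on a vector: $\Vert \mathbf{B}^T \mathbf{a}_i \Vert_2 \leq \Vert \mathbf{B}^T \Vert_2 \Vert \mathbf{a}_i \Vert_2 = \Vert \mathbf{B} \Vert_2 \Vert \mathbf{a}_i \Vert_2$. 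Substituting this and factoring out the vector-independent constant $\Vert \mathbf{B} \Vert_2^2$ yields \[ \Vert \mathbf{AB} \Vert_F^2 \leq \Vert \mathbf{B} \Vert_2^2 \sum_{i=1}^d \Vert \mathbf{a}_i \Vert_2^2 = \Vert \mathbf{B} \Vert_2^2 \Vert \mathbf{A} \Vert_F^2, \] and taking square roots finishes the argument.

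Since each step is a one-line consequence of the definitions, there is no genuine obstacle here; the only point requiring care is the choice to split the product by rows rather than columns. Decomposing by the columns $\mathbf{b}_j$ of $\mathbf{B}$ would instead give $\Vert \mathbf{AB} \Vert_F^2 = \sum_j \Vert \mathbf{A} \mathbf{b}_j \Vert_2^2 \leq \Vert \mathbf{A} \Vert_2^2 \Vert \mathbf{B} \Vert_F^2$, producing the (incomparable) bound $\Vert \mathbf{A} \Vert_2 \Vert \mathbf{B} \Vert_F$; routing the spectral norm onto $\mathbf{B}$ as required by the statement is precisely what forces the row-wise grouping together with the transpose-invariance of $\Vert \cdot \Vert_2$.
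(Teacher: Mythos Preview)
Your proof is correct. The paper states this inequality as a \emph{Fact} in its preliminaries and does not supply any proof, so there is nothing to compare against; your row-wise decomposition together with the transpose-invariance of the spectral norm is a standard and complete argument.
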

\begin{fact}
	For any set of $n$ vectors $ \{ \mathbf{a_1} ,\hdots , \mathbf{a_n}    \}$ where $\mathbf{a_i} \in \mathbb{R}^d$, we have:
	\begin{align} \label{bound_seq_sum}
		\verts{\sum_{i=1}^{n} \mathbf{a_i}}^2 \leq n \sum_{i=1}^{n} \verts{\mathbf{a_i}}^2
	\end{align}
\end{fact}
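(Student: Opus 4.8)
The plan is to recognize this as a standard consequence of the Cauchy--Schwarz inequality (equivalently, of the convexity of the squared Euclidean norm) and to give the shortest self-contained argument. I would first reduce the vector inequality to a scalar one coordinate by coordinate: writing $a_{i,k}$ for the $k$-th entry of $\mathbf{a}_i$, the left-hand side expands as $\sum_{k=1}^d \left(\sum_{i=1}^n a_{i,k}\right)^2$ and the right-hand side as $n \sum_{k=1}^d \sum_{i=1}^n a_{i,k}^2$. Hence it suffices to prove, for each fixed coordinate $k$, the scalar inequality $\left(\sum_{i=1}^n a_{i,k}\right)^2 \le n \sum_{i=1}^n a_{i,k}^2$ and then sum over $k \in [d]$.

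For the scalar step I would apply Cauchy--Schwarz to the vectors $(1,\dots,1)$ and $(a_{1,k},\dots,a_{n,k})$ in $\mathbb{R}^n$, which gives exactly $\left(\sum_{i=1}^n a_{i,k}\right)^2 \le \left(\sum_{i=1}^n 1\right)\left(\sum_{i=1}^n a_{i,k}^2\right) = n\sum_{i=1}^n a_{i,k}^2$. Summing this over all $d$ coordinates recovers the claimed bound, and the identity $\Vert\mathbf{v}\Vert^2 = \sum_k v_k^2$ is all that links the scalar and vector forms.

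Alternatively, I would phrase the whole thing directly at the vector level using convexity: since $\mathbf{x}\mapsto \verts{\mathbf{x}}^2$ is convex, Jensen's inequality gives $\verts{\frac{1}{n}\sum_{i=1}^n \mathbf{a}_i}^2 \le \frac1n \sum_{i=1}^n \verts{\mathbf{a}_i}^2$, and multiplying through by $n^2$ yields the statement. A third, fully elementary route is to expand $\verts{\sum_i \mathbf{a}_i}^2 = \sum_i \verts{\mathbf{a}_i}^2 + \sum_{i\ne j}\langle \mathbf{a}_i,\mathbf{a}_j\rangle$ and bound each cross term by $\langle \mathbf{a}_i,\mathbf{a}_j\rangle \le \frac12(\verts{\mathbf{a}_i}^2 + \verts{\mathbf{a}_j}^2)$; summing the $n(n-1)$ ordered off-diagonal pairs contributes $(n-1)\sum_i \verts{\mathbf{a}_i}^2$, and adding the diagonal produces the factor $n$.

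There is no genuine obstacle here — the statement is a one-line inequality and every route above is routine. The only point requiring a moment's care is the bookkeeping in the direct-expansion approach (correctly counting that each of the $n$ indices appears in exactly $n-1$ ordered cross terms, so the off-diagonal terms total $(n-1)\sum_i\verts{\mathbf{a}_i}^2$ rather than something off by a factor of two). For this reason I would favor the coordinate-wise Cauchy--Schwarz argument, which sidesteps the counting entirely and makes the equality case (all $\mathbf{a}_i$ equal) transparent.
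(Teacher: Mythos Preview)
Your proposal is correct: each of the three routes (coordinate-wise Cauchy--Schwarz, Jensen applied to the convex map $\bx\mapsto\|\bx\|^2$, and direct expansion with $\langle\ba_i,\ba_j\rangle\le\tfrac12(\|\ba_i\|^2+\|\ba_j\|^2)$) yields the inequality, and your off-diagonal count is right. The paper itself states this as a ``Fact'' without proof, so there is no argument to compare against; your write-up simply fills in what the paper treats as standard.
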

\begin{fact}
	For any two vectors $\mathbf{a},\mathbf{b} \in \mathbb{R}^d$, for all $\gamma >0$, we have:
	\begin{align} \label{bound_inner_prod}
		2 \lragnle{\mathbf{a},\mathbf{b}} \leq \gamma \verts{\mathbf{a}}^2 + \gamma^{-1} \verts{\mathbf{b}}^2
	\end{align}
\end{fact}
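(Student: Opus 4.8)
The plan is to derive this weighted inner-product bound (a Young's / completing-the-square inequality) from the single fact that the squared $\ell_2$-norm of any vector is non-negative. The entire content lies in choosing the right vector to square so that its cross term reproduces $2\lragnle{\mathbf{a},\mathbf{b}}$ exactly, while the two square terms pick up the asymmetric weights $\gamma$ and $\gamma^{-1}$.

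First I would fix an arbitrary $\gamma > 0$ and introduce the auxiliary vector $\bu := \sqrt{\gamma}\,\mathbf{a} - \tfrac{1}{\sqrt{\gamma}}\,\mathbf{b} \in \mathbb{R}^d$, which is well defined precisely because $\gamma > 0$. Since norms are non-negative, $\verts{\bu}^2 \geq 0$, and this single inequality already encodes the claim once expanded. Next I would expand using bilinearity and symmetry of the inner product to get $\verts{\bu}^2 = \gamma\verts{\mathbf{a}}^2 - 2\lragnle{\mathbf{a},\mathbf{b}} + \gamma^{-1}\verts{\mathbf{b}}^2$, where the cross term equals $-2\sqrt{\gamma}\cdot\tfrac{1}{\sqrt{\gamma}}\,\lragnle{\mathbf{a},\mathbf{b}} = -2\lragnle{\mathbf{a},\mathbf{b}}$ because the $\gamma$ factors cancel. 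Combining this with $\verts{\bu}^2 \geq 0$ and moving the cross term to the other side yields $2\lragnle{\mathbf{a},\mathbf{b}} \leq \gamma\verts{\mathbf{a}}^2 + \gamma^{-1}\verts{\mathbf{b}}^2$, as required.

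There is no genuine obstacle here: the result is elementary, and the only design decision is the asymmetric rescaling by $\sqrt{\gamma}$ versus $1/\sqrt{\gamma}$, which keeps the cross term scale-free while distributing the weights correctly onto the square terms. An alternative route would be to invoke scalar AM–GM, $2xy \leq \gamma x^2 + \gamma^{-1} y^2$ with $x=\verts{\mathbf{a}}$ and $y=\verts{\mathbf{b}}$, followed by Cauchy–Schwarz $\lragnle{\mathbf{a},\mathbf{b}} \leq \verts{\mathbf{a}}\verts{\mathbf{b}}$; I would nonetheless prefer the completing-the-square argument above, since it is self-contained and does not appeal to Cauchy–Schwarz as a separate ingredient.
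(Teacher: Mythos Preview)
Your proof is correct; the completing-the-square argument via $\bu = \sqrt{\gamma}\,\mathbf{a} - \tfrac{1}{\sqrt{\gamma}}\,\mathbf{b}$ is the standard elementary derivation. The paper itself states this inequality as a bare fact without proof, so there is nothing to compare against.
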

\begin{fact} \label{bound_l2_sum}
	For any two vectors $\mathbf{a},\mathbf{b} \in \mathbb{R}^d$, for all $\alpha >0$, we have:
	\begin{align} 
	\verts{\mathbf{a}+\mathbf{b}}^2 \leq (1+\alpha) \verts{\mathbf{a}}^2 + {(1 + \alpha^{-1})} \verts{\mathbf{b}}^2
	\end{align}
	Similar inequality holds for matrices in Frobenius norm, i.e., for any two matrices $\mathbf{A},\mathbf{B} \in \mathbb{R}^{p \times q} $ and for any $\alpha >0$ , we have
	\begin{align*}
	\verts{\mathbf{A} + \mathbf{B} }_F^2 \leq (1 + \alpha) \verts{\mathbf{A}}_F^2 + (1 + \alpha^{-1})\verts{\mathbf{B}}_F^2		
	\end{align*}
\end{fact}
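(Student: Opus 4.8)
The plan is to prove the statement by expanding the squared norm and invoking the elementary inner-product bound already recorded in \eqref{bound_inner_prod}. First I would write, for the vector case,
\[
\verts{\mathbf{a}+\mathbf{b}}^2 = \verts{\mathbf{a}}^2 + 2\lragnle{\mathbf{a},\mathbf{b}} + \verts{\mathbf{b}}^2,
\]
which is just the bilinearity of the $\ell_2$ inner product together with the identity $\verts{\bu}^2 = \lragnle{\bu,\bu}$.

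Next I would apply \eqref{bound_inner_prod} with its free parameter $\gamma$ specialized to the given $\alpha$, yielding $2\lragnle{\mathbf{a},\mathbf{b}} \leq \alpha\verts{\mathbf{a}}^2 + \alpha^{-1}\verts{\mathbf{b}}^2$. Substituting this into the expansion above and grouping the $\verts{\mathbf{a}}^2$ and $\verts{\mathbf{b}}^2$ terms produces $(1+\alpha)\verts{\mathbf{a}}^2 + (1+\alpha^{-1})\verts{\mathbf{b}}^2$, which is exactly the claimed bound. Since $\alpha>0$ was arbitrary, the inequality holds for every $\alpha$.

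For the matrix statement, the plan is to reduce to the vector case. I would observe that the Frobenius norm of $\mathbf{M}\in\bbR^{p\times q}$ equals the $\ell_2$-norm of its column-stacked vectorization $\mathrm{vec}(\mathbf{M})\in\bbR^{pq}$, and that $\lragnle{\mathbf{A},\mathbf{B}}_F := \mathrm{tr}(\mathbf{A}^\top\mathbf{B}) = \lragnle{\mathrm{vec}(\mathbf{A}),\mathrm{vec}(\mathbf{B})}$ is a genuine inner product inducing $\verts{\cdot}_F$. Then the identical expansion and the same application of \eqref{bound_inner_prod}, now read in $\bbR^{pq}$, give the Frobenius-norm inequality verbatim.

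There is no real obstacle here: the result is the standard Young (Peter--Paul) inequality, and every ingredient is already available in the preceding facts. The only point worth stating carefully is that the inner-product bound \eqref{bound_inner_prod} is quantified over its own free constant, so the substitution $\gamma=\alpha$ is legitimate for each fixed $\alpha>0$; this is precisely what makes the two coefficients come out as $(1+\alpha)$ and $(1+\alpha^{-1})$.
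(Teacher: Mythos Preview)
Your proposal is correct. The paper itself states this result as a \emph{Fact} without supplying a proof, so there is no paper proof to compare against; your argument via the expansion $\verts{\mathbf{a}+\mathbf{b}}^2=\verts{\mathbf{a}}^2+2\lragnle{\mathbf{a},\mathbf{b}}+\verts{\mathbf{b}}^2$ together with the preceding inner-product bound \eqref{bound_inner_prod} (specialized to $\gamma=\alpha$) is exactly the standard derivation and is entirely in the spirit of how the paper organizes its preliminary inequalities.
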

\subsection{Properties of functions}
\begin{definition}[Smoothness]
	A differentiable function $f : \mathbb{R}^d \rightarrow \mathbb{R}$ is L-smooth with parameter $L \geq 0$ if
	\begin{align} \label{l_smooth}
	f(\by) \leq f(\bx) + \langle \nabla f(\bx), \by-\bx  \rangle + \frac{L}{2} \Vert \by-\bx \Vert^2, \hspace{2cm} \forall \bx,\by \in \mathbb{R}^d
	\end{align}
\end{definition}
\begin{lemma}
	Let $f$ be an $L$-smooth function with global minimizer $\bx^*$. We have
	\begin{align} \label{l_smooth_prop}
	\Vert \nabla f(\bx) \Vert^2 \leq 2L( f(\bx) - f(\bx^*) ).
	\end{align}
	
\end{lemma}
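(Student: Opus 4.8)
The plan is to apply the $L$-smoothness inequality \eqref{l_smooth} at the point obtained by taking a single gradient-descent step of size $\nicefrac{1}{L}$ from $\bx$. The only genuinely non-routine choice in the argument is this test point: I would fix an arbitrary $\bx \in \bbR^d$ and set $\by = \bx - \frac{1}{L}\nabla f(\bx)$, which is exactly the global minimizer of the quadratic upper bound $\bx' \mapsto f(\bx) + \langle \nabla f(\bx), \bx' - \bx \rangle + \frac{L}{2}\verts{\bx' - \bx}^2$ appearing on the right-hand side of \eqref{l_smooth}. Everything else is a short computation followed by invoking the definition of the global minimizer.

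Concretely, I would substitute this $\by$ into \eqref{l_smooth}. The inner-product term becomes $\langle \nabla f(\bx), -\frac{1}{L}\nabla f(\bx)\rangle = -\frac{1}{L}\verts{\nabla f(\bx)}^2$, and the quadratic term becomes $\frac{L}{2}\verts{\frac{1}{L}\nabla f(\bx)}^2 = \frac{1}{2L}\verts{\nabla f(\bx)}^2$. Adding these to $f(\bx)$ yields
\begin{align*}
f\Big(\bx - \tfrac{1}{L}\nabla f(\bx)\Big) \leq f(\bx) - \frac{1}{2L}\verts{\nabla f(\bx)}^2.
\end{align*}

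To finish, I would use that $\bx^*$ is a global minimizer, so $f(\bx^*) \leq f(\by)$ for every $\by$, and in particular for $\by = \bx - \frac{1}{L}\nabla f(\bx)$. Chaining this with the displayed bound gives $f(\bx^*) \leq f(\bx) - \frac{1}{2L}\verts{\nabla f(\bx)}^2$, and rearranging and multiplying through by $2L$ produces the claimed inequality $\verts{\nabla f(\bx)}^2 \leq 2L\big(f(\bx) - f(\bx^*)\big)$. I do not expect any real obstacle here: the statement is the standard ``gradient-domination from smoothness'' fact, and the entire content is packed into recognizing that the descent step $\by = \bx - \frac{1}{L}\nabla f(\bx)$ turns the smoothness upper bound into the sharpest possible one. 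The global-minimizer hypothesis is used only through the single inequality $f(\bx^*)\leq f(\by)$; no convexity or differentiability beyond what \eqref{l_smooth} already assumes is needed.
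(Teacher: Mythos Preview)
Your proof is correct and follows essentially the same approach as the paper: both minimize the right-hand side of the smoothness inequality \eqref{l_smooth} over $\by$ and then lower-bound the result by $f(\bx^*)$. The only difference is presentational --- the paper parametrizes $\by = \bx + t\bv$ and minimizes first over $t$ and then over the unit direction $\bv$, whereas you directly substitute the known minimizer $\by = \bx - \frac{1}{L}\nabla f(\bx)$ of the quadratic upper bound, which is slightly more streamlined but not a genuinely different route.
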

\begin{proof}
	By definition of $L$-smoothness, we have
	\begin{align*}
	f(\by) & \leq f(\bx) + \langle \nabla f(\bx), \by-\bx \rangle + \frac{L}{2}\Vert \by-\bx \Vert^2. \\
	\intertext{Taking infimum over y yields:}
	\inf_{\by} f(\by) & \leq \inf_{\by} \left( f(\bx) + \langle \nabla f(\bx), \by-\bx \rangle + \frac{L}{2}\Vert \by-\bx \Vert^2 \right) \\
	& \stackrel{\text{(a)}}{=}  \inf_{\bv: \Vert \bv \Vert = 1} \inf_t \left( f(\bx) + t \langle \nabla f(\bx), \bv \rangle + \frac{L t^2}{2} \right) \\
	& \stackrel{\text{(b)}}{=}  \inf_{\bv: \Vert \bv \Vert = 1} \left( f(\bx) - \frac{1}{2L} \langle \nabla f(\bx),\bv \rangle^2  \right) \\
	& \stackrel{\text{(c)}}{=}  \left( f(\bx) - \frac{1}{2L} \Vert  \nabla f(\bx) \Vert^2  \right) \\
	\end{align*}  
The value of $t$ that minimizes the RHS of (a) is $t=-\frac{1}{L}\langle \nabla f(\bx), \bv \rangle$, this implies (b);
(c) follows from the Cauchy-Schwartz inequality: $\langle \bu, \bv \rangle \leq \| \bu \| \| \bv \|$, where equality is achieved whenever $u=v$.
Now, substituting $\inf \limits_{\by} f(\by) = f(\bx^*)$ in the RHS of (c) yields the result.
\end{proof}

\section{Preliminaries for Convergence with Relaxed Assumptions}\label{app:relaxed-assump-prelims}
\begin{proof}[Proof of Proposition~\ref{prop:variance-reduction_relaxed}]
	This simply follows from the independence of the randomness used in sampling stochastic gradients at different workers.
\end{proof}

\begin{proof}[Proof of Proposition~\ref{prop:bound_v}]
We want to show the following bound on $\bbE\verts{\bV^{(t)}}_F^2$ for any $t$:
\begin{align*}
\bbE\verts{\bV^{(t)}}_F^2 \leq \frac{1}{(1-\beta)}\sum_{k=0}^{t} \beta^{t-k}\bbE\verts{\nabla \bF(\bX^{(k)}, \bxi^{(k)})}_F^2.
\end{align*}
For any $t$, let $\theta_t = \sum_{k=0}^t\beta^{t-k}$.
\begin{align}
\bbE\verts{\bV^{(t)}}_F^2 &= \bbE\verts{\sum_{k=0}^{t} \beta^{t-k}\nabla \bF(\bX^{(k)}, \bxi^{(k)})}_F^2 \notag \\
&= \theta_t^2\bbE\verts{\sum_{k=0}^{t} \frac{\beta^{t-k}}{\theta_t}\nabla \bF(\bX^{(k)}, \bxi^{(k)})}_F^2 \notag \\
&\leq \theta_t\sum_{k=0}^{t} \beta^{t-k}\bbE\verts{\nabla \bF(\bX^{(k)}, \bxi^{(k)})}_F^2 \notag \\
&\leq \frac{1}{1-\beta}\sum_{k=0}^{t} \beta^{t-k}\bbE\verts{\nabla \bF(\bX^{(k)}, \bxi^{(k)})}_F^2.
\end{align}
\end{proof}

\section{Omitted Details from Section~\ref{sec:relaxed-assump-results}}\label{app:relaxed-assump-results}

\subsection{Omitted Details from Section~\ref{subsec:proof-similar-lem11}}\label{app:proof-similar-lem11}

\begin{lemma}
We have the following bounds on $P_1$ and $P_2$ (which are defined in \eqref{mom_noncvx_relaxed-interim6}):
\begin{align*}
P_1 &\leq - \frac{\eta}{2(1-\beta)} \verts{\nabla f(\btx^{(t)})}^2 + \frac{\eta L^2}{2n(1-\beta)}\sum_{i=1}^n\verts{\btx^{(t)} - \bx^{(t)}_i}^2, \\
P_2 &\leq \frac{\sigma^2}{n} + \frac{2(M^2+n)L^2}{n^2}\sum_{i=1}^n\verts{\bx^{(t)}_i - \btx^{(t)}}_2^2 + \frac{2(M^2+n)}{n}\big(G^2+B^2\verts{\nabla f(\btx^{(t)})}_2^2\big).
\end{align*}
\end{lemma}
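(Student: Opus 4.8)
The plan is to prove the two bounds independently, each time splitting the quantity of interest around the virtual point $\btx^{(t)}$ and then invoking $L$-smoothness together with the variance bound \eqref{eq:variance} and the gradient-dissimilarity bound \eqref{eq:grad-dissim}.

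For $P_1$, I would first use $f=\frac{1}{n}\sum_i f_i$, so that $\nabla f(\btx^{(t)})=\frac{1}{n}\sum_i\nabla f_i(\btx^{(t)})$. Abbreviating $\ba:=\nabla f(\btx^{(t)})$ and $\bb:=\frac{1}{n}\sum_i\nabla f_i(\bx_i^{(t)})$, the identity $-\lragnle{\ba,\bb}=-\verts{\ba}^2-\lragnle{\ba,\bb-\ba}$ together with \eqref{bound_inner_prod} applied to the pair $(-\ba,\bb-\ba)$ with $\gamma=1$ gives $-\lragnle{\ba,\bb}\le-\frac{1}{2}\verts{\ba}^2+\frac{1}{2}\verts{\bb-\ba}^2$. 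Since $\bb-\ba=\frac{1}{n}\sum_i\big(\nabla f_i(\bx_i^{(t)})-\nabla f_i(\btx^{(t)})\big)$, applying \eqref{bound_seq_sum} (convexity of $\verts{\cdot}^2$) and then the $L$-smoothness of each $f_i$ yields $\verts{\bb-\ba}^2\le\frac{1}{n}\sum_i\verts{\nabla f_i(\bx_i^{(t)})-\nabla f_i(\btx^{(t)})}^2\le\frac{L^2}{n}\sum_i\verts{\bx_i^{(t)}-\btx^{(t)}}^2$. Multiplying through by $\frac{\eta}{1-\beta}$ (the scalar in front of the inner product defining $P_1$) recovers the claimed bound on $P_1$ exactly.

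For $P_2$, I would begin with the bias–variance decomposition in the sampling randomness: since $\bbE_{\xi_i}[\nabla F_i(\bx_i^{(t)},\xi_i^{(t)})]=\nabla f_i(\bx_i^{(t)})$, we get $P_2=\verts{\frac{1}{n}\sum_i\nabla f_i(\bx_i^{(t)})}^2+\bbE_{\bxi^{(t)}}\verts{\frac{1}{n}\sum_i\big(\nabla F_i(\bx_i^{(t)},\xi_i^{(t)})-\nabla f_i(\bx_i^{(t)})\big)}^2$. The second (variance) term is bounded directly by Proposition~\ref{prop:variance-reduction_relaxed} by $\frac{\sigma^2}{n}+\frac{M^2}{n^2}\sum_i\verts{\nabla f_i(\bx_i^{(t)})}^2$, while the first (mean) term is bounded by \eqref{bound_seq_sum} by $\frac{1}{n}\sum_i\verts{\nabla f_i(\bx_i^{(t)})}^2$. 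Adding these and combining coefficients gives $P_2\le\frac{\sigma^2}{n}+\frac{M^2+n}{n^2}\sum_i\verts{\nabla f_i(\bx_i^{(t)})}^2$.

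It then remains to control $\sum_i\verts{\nabla f_i(\bx_i^{(t)})}^2$, which is the only genuinely delicate step, since it must be reduced to quantities anchored at the virtual iterate $\btx^{(t)}$. I would split $\nabla f_i(\bx_i^{(t)})=\big(\nabla f_i(\bx_i^{(t)})-\nabla f_i(\btx^{(t)})\big)+\nabla f_i(\btx^{(t)})$, apply $\verts{\bu+\bv}^2\le2\verts{\bu}^2+2\verts{\bv}^2$ (the $n=2$ case of \eqref{bound_seq_sum}), bound the first piece by $2L^2\verts{\bx_i^{(t)}-\btx^{(t)}}^2$ via $L$-smoothness, and apply Assumption~\ref{assump:grad-dissim} evaluated at $\bx=\btx^{(t)}$ to get $\sum_i\verts{\nabla f_i(\btx^{(t)})}^2\le n\big(G^2+B^2\verts{\nabla f(\btx^{(t)})}^2\big)$. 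Substituting this back and distributing the prefactor $\frac{M^2+n}{n^2}$ produces exactly the three terms in the stated bound on $P_2$. The two subtleties to watch are that the gradient-dissimilarity assumption must be invoked at the virtual point $\btx^{(t)}$ rather than at the true local iterates $\bx_i^{(t)}$, and that the factor $\frac{M^2+n}{n^2}$ arising from merging the mean and variance contributions be carried consistently through the smoothness split so that the $2L^2$ and $2B^2$ constants land correctly.
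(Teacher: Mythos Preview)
Your proof is correct and follows essentially the same approach as the paper. The only cosmetic difference is in the $P_1$ bound: you apply the inner-product inequality once to $\langle -\ba,\bb-\ba\rangle$ and then Jensen's inequality to $\verts{\bb-\ba}^2$, whereas the paper first distributes the inner product over the sum as $\frac{1}{n}\sum_i\langle\nabla f(\btx^{(t)}),\nabla f_i(\btx^{(t)})-\nabla f_i(\bx_i^{(t)})\rangle$ and then bounds each summand; both yield the identical inequality.
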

\begin{proof}
\begin{align}
P_1 &= - \lragnle{\nabla f(\btx^{(t)}) , \frac{\eta}{(1-\beta)} \frac{1}{n} \sum_{i=1}^{n}\nabla f_i(\bx^{(t)}_i)} \notag \\
&= - \lragnle{\nabla f(\btx^{(t)}) , \frac{\eta}{(1-\beta)} \frac{1}{n} \sum_{i=1}^{n} \Big(\nabla f_i(\bx^{(t)}_i) - \nabla f_i(\btx^{(t)}) + \nabla f_i(\btx^{(t)})\Big)} \notag \\
&= - \lragnle{\nabla f(\btx^{(t)}) , \frac{\eta}{(1-\beta)} \nabla f(\btx^{(t)})} + \frac{\eta}{(1-\beta)} \frac{1}{n} \sum_{i=1}^{n}\lragnle{\nabla f(\btx^{(t)}) ,  \nabla f_i(\btx^{(t)}) - \nabla f_i(\bx^{(t)}_i)} \notag \\
&\stackrel{\text{(b)}}{\leq} - \frac{\eta}{(1-\beta)} \verts{\nabla f(\btx^{(t)})}^2 + \frac{\eta}{2(1-\beta)} \verts{\nabla f(\btx^{(t)})}^2 + \frac{\eta}{2(1-\beta)}\frac{1}{n}\sum_{i=1}^n\verts{\nabla f_i(\btx^{(t)}) - \nabla f_i(\bx^{(t)}_i)}^2 \notag \\
&\stackrel{\text{(c)}}{\leq} - \frac{\eta}{2(1-\beta)} \verts{\nabla f(\btx^{(t)})}^2 + \frac{\eta L^2}{2n(1-\beta)}\sum_{i=1}^n\verts{\btx^{(t)} - \bx^{(t)}_i}^2, \notag
\end{align}
where (b) follows from $\lragnle{\ba,\bb} \leq \frac{1}{2}(\|\ba\|^2+\|\bb\|^2)$ and (c) follows from the $L$-smoothness of $f_i$.

For bounding $P_2$, we will use Proposition~\ref{prop:variance-reduction_relaxed}.
\begin{align}
P_2 &= \mathbb{E}_{\xi_{(t)}} \verts{\frac{1}{n} \sum_{i=1}^{n} \nabla F_i(\bx^{(t)}_i, \xi^{(t)}_i)}^2 \notag \\
&\stackrel{\text{(d)}}{=} \mathbb{E}_{\xi_{(t)}} \verts{\frac{1}{n} \sum_{i=1}^{n} \nabla \big(F_i(\bx^{(t)}_i, \xi^{(t)}_i) - \nabla f_i(\bx^{(t)}_i)\big)}^2 + \verts{\frac{1}{n} \sum_{i=1}^{n} \nabla f_i(\bx^{(t)}_i) }^2 \notag \\
&\stackrel{\text{(e)}}{\leq} \frac{\sigma^2}{n} + \frac{M^2}{n^2}\sum_{i=1}^n\verts{\nabla f_i(\bx^{(t)}_i)}_2^2 + \frac{1}{n} \sum_{i=1}^{n} \verts{\nabla f_i(\bx^{(t)}_i) }^2 \notag \\
&= \frac{\sigma^2}{n} + \frac{(M^2+n)}{n^2}\sum_{i=1}^n\verts{\nabla f_i(\bx^{(t)}_i)}_2^2 \label{bound_P2-interim} \\
&\leq \frac{\sigma^2}{n} + \frac{2(M^2+n)}{n^2}\sum_{i=1}^n\verts{\nabla f_i(\bx^{(t)}_i) - \nabla f_i(\btx^{(t)})}_2^2 + \frac{2(M^2+n)}{n^2}\sum_{i=1}^n\verts{\nabla f_i(\btx^{(t)})}_2^2 \notag \\
&\stackrel{\text{(f)}}{\leq} \frac{\sigma^2}{n} + \frac{2(M^2+n)L^2}{n^2}\sum_{i=1}^n\verts{\bx^{(t)}_i - \btx^{(t)}}_2^2 + \frac{2(M^2+n)}{n}\big(G^2+B^2\verts{\nabla f(\btx^{(t)})}_2^2\big) \notag
\end{align}
Here, (d) follows because the randomness used for sampling the unbiased stochastic gradients across workers is independent of each other, (e) follows from \eqref{eq:variance-reduction_relaxed}, and (f) follows from the $L$-smoothness of $f_i$ and \eqref{eq:grad-dissim}.
\end{proof}

\begin{lemma*}[Restating Lemma~\ref{glob_virt_bound}]
Consider the deviation of the global average parameter $\blx^{(t)}$  and the virtual sequence $\btx^{(t)}$ defined in \eqref{virt_seq} for constant stepsize $\eta$. Then at any time step $t$, the following holds:
\begin{align}\label{eq:glob_virt_bound}
\verts{ \blx^{(t)} -  \btx^{(t)}}^2 & \leq \frac{\beta^4 \eta^2}{(1-\beta)^3}  \sum_{\tau=0}^{t-1} \left[\beta^{t-\tau-1}\verts{\frac{1}{n} \sum_{i=1}^{n} \nabla F_i (\bx^{(\tau)}_i , \xi^{(\tau)}_i ) }^2\right]
\end{align}
\end{lemma*}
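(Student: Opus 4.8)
The plan is to reduce the left-hand side to a bound on the averaged momentum vector and then invoke the same weighted-Jensen argument already used in Proposition~\ref{prop:bound_v}. First I would average the definition \eqref{virt_seq} of the virtual sequence over all nodes. Using $\btx^{(t)} = \frac{1}{n}\sum_{i=1}^n \btx_i^{(t)}$ and $\blx^{(t)} = \frac{1}{n}\sum_{i=1}^n \bx_i^{(t)}$, this yields $\blx^{(t)} - \btx^{(t)} = \frac{\eta\beta^2}{1-\beta}\cdot\frac{1}{n}\sum_{i=1}^n \bv_i^{(t-1)}$, so that
\[
\verts{\blx^{(t)} - \btx^{(t)}}^2 = \frac{\eta^2\beta^4}{(1-\beta)^2}\verts{\frac{1}{n}\sum_{i=1}^n \bv_i^{(t-1)}}^2.
\]
The entire statement therefore reduces to controlling $\bigl\|\frac{1}{n}\sum_{i=1}^n \bv_i^{(t-1)}\bigr\|^2$.

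Next I would unroll the momentum recursion of line 4 of Algorithm~\ref{alg_dec_sgd_li}, namely $\bv_i^{(t-1)} = \beta \bv_i^{(t-2)} + \nabla F_i(\bx_i^{(t-1)}, \xi_i^{(t-1)})$ with initialization $\bv_i^{(-1)} = \bzero$, to obtain the closed form $\bv_i^{(t-1)} = \sum_{\tau=0}^{t-1}\beta^{t-1-\tau}\nabla F_i(\bx_i^{(\tau)},\xi_i^{(\tau)})$. Averaging over the nodes and swapping the order of the two summations gives $\frac{1}{n}\sum_{i=1}^n \bv_i^{(t-1)} = \sum_{\tau=0}^{t-1}\beta^{t-1-\tau}\bg^{(\tau)}$, where I write $\bg^{(\tau)} := \frac{1}{n}\sum_{i=1}^n \nabla F_i(\bx_i^{(\tau)},\xi_i^{(\tau)})$ for the averaged stochastic gradient.

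Finally I would apply the weighted Jensen inequality exactly as in the proof of Proposition~\ref{prop:bound_v}. Setting $\theta := \sum_{\tau=0}^{t-1}\beta^{t-1-\tau} = \frac{1-\beta^t}{1-\beta} \leq \frac{1}{1-\beta}$, the coefficients $\beta^{t-1-\tau}/\theta$ form a probability distribution over $\tau \in \{0,\dots,t-1\}$, so convexity of $\verts{\cdot}^2$ gives
\[
\verts{\sum_{\tau=0}^{t-1}\beta^{t-1-\tau}\bg^{(\tau)}}^2 \leq \theta\sum_{\tau=0}^{t-1}\beta^{t-1-\tau}\verts{\bg^{(\tau)}}^2 \leq \frac{1}{1-\beta}\sum_{\tau=0}^{t-1}\beta^{t-1-\tau}\verts{\bg^{(\tau)}}^2.
\]
Multiplying by the prefactor $\frac{\eta^2\beta^4}{(1-\beta)^2}$ collapses the three powers of $(1-\beta)$ into the claimed $\frac{\eta^2\beta^4}{(1-\beta)^3}$, matching \eqref{eq:glob_virt_bound} after relabeling the summation index. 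There is no genuine obstacle here: the argument is deterministic (no expectation is needed, unlike in Proposition~\ref{prop:bound_v}), and the only non-routine ingredient is the weighted-Jensen step, which I would simply cite as being identical to the one established for the bound on $\bbE\verts{\bV^{(t)}}_F^2$.
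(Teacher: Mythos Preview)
Your proposal is correct and follows essentially the same approach as the paper's proof: reduce to the averaged momentum vector via the definition of the virtual sequence, unroll the momentum recursion, and apply the weighted-Jensen inequality with weights $\beta^{t-1-\tau}/\theta$ followed by $\theta \leq \frac{1}{1-\beta}$. The paper's argument is line-for-line the same, including the reference to the Jensen step used for bounding $\bbE\verts{\bV^{(t)}}_F^2$.
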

\begin{proof}
		Using the definition of $\tilde{\bx}^{(t)}$ as in \eqref{virt_seq}, we have:
		\begin{align*}
		\verts{ \bar{\bx}^{(t)} -  \tilde{\bx}^{(t)}}^2 & = \verts{ \bar{\bx}^{(t)} -  \tilde{\bx}^{(t)}}^2 =  \frac{\beta^4 \eta^2}{(1-\beta)^2} \verts{\frac{1}{n} \sum_{i=1}^{n} \bv^{(t-1)}_i}^2 
		\intertext{Define $\theta_{t-1} = \sum_{k=0}^{t-1} \beta^{1-t-k} = \frac{1-\beta^t}{1-\beta}$. Thus we can expand the term in the norm as: }
		& = \frac{\beta^4 \eta^2}{(1-\beta)^2} \theta_{t-1}^2  \verts{  \sum_{k=0}^{t-1} \frac{\beta^{t-1-k}}{\theta_{t-1}}  \frac{1}{n} \sum_{i=1}^{n} \nabla F(\bx_i^{(k)}, \xi_i^{(k)})  }^2 \\
		& \leq \frac{\beta^4 \eta^2}{(1-\beta)^2} \theta_{t-1}^2   \sum_{k=0}^{t-1} \frac{\beta^{t-1-k}}{\theta_{t-1}}  \verts{ \frac{1}{n} \sum_{i=1}^{n} \nabla F(\bx_i^{(k)}, \xi_i^{(k)})  }^2 \\
		& = \frac{\beta^4 \eta^2}{(1-\beta)^2} \theta_{t-1}   \sum_{k=0}^{t-1} \beta^{t-1-k}  \verts{ \frac{1}{n} \sum_{i=1}^{n} \nabla F(\bx_i^{(k)}, \xi_i^{(k)})  }^2 \\
		& \leq \frac{\beta^4 \eta^2}{(1-\beta)^3}  \sum_{\tau=0}^{t-1} \left[\beta^{t-\tau-1}\verts{\frac{1}{n} \sum_{i=1}^{n} \nabla F_i (\bx^{(\tau)}_i , \xi^{(\tau)}_i ) }^2\right]
		\end{align*}
		Where the first inequality follows from  Jensen's inequality and the second inequality follows from noting that $\theta_{t} \leq \frac{1}{1-\beta}$. This completes the proof.
\end{proof}

\begin{proof}[Proof of Lemma~\ref{lem:bound-decaying-grads}]
We have already bounded the expectation term in \eqref{bound_P2} -- the same bound holds when expectation is taken w.r.t.\ the entire past. Substituting that bound -- i.e., \\ $\bbE\verts{\frac{1}{n} \sum_{i=1}^{n} \nabla F_i (\bx^{(\tau)}_i , \xi^{(\tau)}_i )}^2 \leq \frac{\sigma^2}{n} + \frac{(M^2+n)}{n^2}\sum_{i=1}^n\bbE\verts{\nabla f_i(\bx^{(\tau)}_i)}_2^2$ -- from \eqref{bound_P2-interim} into \eqref{eq:bound-decaying-grads} gives
\begin{align}
&\frac{1}{T}\sum_{t=0}^{T-1} \sum_{\tau=0}^{t-1} \left[\beta^{t-\tau-1}\bbE\verts{\frac{1}{n} \sum_{i=1}^{n} \nabla F_i (\bx^{(\tau)}_i , \xi^{(\tau)}_i ) }^2\right] \leq \frac{1}{T}\sum_{t=0}^{T-1} \sum_{\tau=0}^{t-1} \beta^{t-\tau-1}\frac{\sigma^2}{n} \notag \\
&\hspace{3cm} + \frac{1}{T}\sum_{t=0}^{T-1} \sum_{\tau=0}^{t-1} \beta^{t-\tau-1}\frac{(M^2+n)}{n^2}\sum_{i=1}^n\bbE\verts{\nabla f_i(\bx^{(\tau)}_i)}_2^2 \label{bound-decaying-grads-interim1}
\end{align} 
Now we bound both the terms of \eqref{bound-decaying-grads-interim1} separately.
\begin{align}
\frac{1}{T}\sum_{t=0}^{T-1} \sum_{\tau=0}^{t-1} \beta^{t-\tau-1}\frac{\sigma^2}{n} &= \frac{\sigma^2}{n}\frac{1}{T}\sum_{t=0}^{T-1} \sum_{\tau=0}^{t-1} \beta^{t-\tau-1}\leq\frac{\sigma^2}{n(1-\beta)}. \label{bound-decaying-grads-interim2} \\
\frac{1}{T}\sum_{t=0}^{T-1} \sum_{\tau=0}^{t-1} \beta^{t-\tau-1}\frac{(M^2+n)}{n^2}\sum_{i=1}^n\bbE\verts{\nabla f_i(\bx^{(\tau)}_i)}_2^2 &= \frac{1}{T}\sum_{\tau=0}^{T-2} \sum_{t=\tau+1}^{T-1} \beta^{t-\tau-1}\frac{(M^2+n)}{n^2}\sum_{i=1}^n\bbE\verts{\nabla f_i(\bx^{(\tau)}_i)}_2^2 \notag \\
&\hspace{-7cm}= \frac{(M^2+n)}{n^2}\frac{1}{T}\sum_{\tau=0}^{T-2}\sum_{i=1}^n\bbE\verts{\nabla f_i(\bx^{(\tau)}_i)}_2^2 \sum_{t=\tau+1}^{T-1} \beta^{t-\tau-1} \notag \\
&\hspace{-7cm}\leq \frac{(M^2+n)}{n^2(1-\beta)}\frac{1}{T}\sum_{\tau=0}^{T-2}\sum_{i=1}^n\bbE\verts{\nabla f_i(\bx^{(\tau)}_i)}_2^2 \notag \\
&\hspace{-7cm}\leq \frac{2(M^2+n)}{n^2(1-\beta)}\frac{1}{T}\sum_{\tau=0}^{T-2}\sum_{i=1}^n\bbE\verts{\nabla f_i(\bx^{(\tau)}_i) - \nabla f_i(\blx^{(\tau)})}_2^2 + \frac{2(M^2+n)}{n^2(1-\beta)}\frac{1}{T}\sum_{\tau=0}^{T-2}\sum_{i=1}^n\bbE\verts{\nabla f_i(\blx^{(\tau)})}_2^2 \notag \\
&\hspace{-7cm}\leq \frac{2(M^2+n)}{n^2(1-\beta)}\frac{1}{T}\sum_{\tau=0}^{T-2}\sum_{i=1}^nL^2\bbE\verts{\bx^{(\tau)}_i - \blx^{(\tau)}}_2^2 + \frac{2(M^2+n)}{n(1-\beta)}\frac{1}{T}\sum_{\tau=0}^{T-2}\big(G^2 + B^2\bbE\|\nabla f(\blx^{(\tau)})\|_2^2\big) \notag \\
&\hspace{-7cm}\leq \frac{2(M^2+n)L^2}{n^2(1-\beta)}\frac{1}{T}\sum_{\tau=0}^{T-2}\sum_{i=1}^n\bbE\verts{\bx^{(\tau)}_i - \blx^{(\tau)}}_2^2 + \frac{2(M^2+n)G^2}{n(1-\beta)} + \frac{2(M^2+n)B^2}{n(1-\beta)}\frac{1}{T}\sum_{\tau=0}^{T-2}\bbE\|\nabla f(\blx^{(\tau)})\|_2^2 \label{bound-decaying-grads-interim3}
\end{align}
Substituting the bounds from \eqref{bound-decaying-grads-interim2}, \eqref{bound-decaying-grads-interim3} into \eqref{bound-decaying-grads-interim1} yields \eqref{eq:bound-decaying-grads}, which proves Lemma~\ref{lem:bound-decaying-grads}.
\end{proof}

\subsection{Omitted Details from Section~\ref{subsec:useful-lemmas}}\label{app:useful-lemmas}
\subsubsection{Proof of Lemma~\ref{lem:cvx_e1_relaxed}}\label{app:cvx_e1_relaxed}
In this section we will prove Lemma~\ref{lem:cvx_e1_relaxed}.
\begin{proof}
We show the following bound in Lemma~\ref{supp_lemma_consensus_part1} (provided at the end of this section):
\begin{align}
\bbE\verts{\bX^{((m+1)H)} - \blX^{((m+1)H)}}_F^2 &\leq \vartheta_1 \bbE\verts{\bX^{(mH)} - \blX^{(mH)}}_F^2 +  \vartheta_2\bbE\verts{\bX^{(mH)} - \bhX^{((m+1)H)}}_F^2 \notag \\
&\quad + \vartheta_3\eta^2\bbE\verts{\sum_{t'=mH}^{(m+1)H-1}\beta \bV^{(t')}+\nabla \bF(\bX^{(t')},\bxi^{(t')})}_F^2, \label{lem_cvx_e1_relaxed-interim1}
\end{align}
where $\vartheta_1=(1+\alpha_5^{-1})R_1$, $\vartheta_2=(1+\alpha_5^{-1}) R_2$, and $\vartheta_3=(R_1+R_2)(1+\alpha_5)$.

We want to write the second expectation term $\bbE\verts{\bX^{(mH)} - \bhX^{((m+1)H)}}_F^2$ on the RHS of \eqref{lem_cvx_e1_relaxed-interim1} in terms of $\bbE\verts{\bX^{(mH)} - \bhX^{(mH)}}_F^2$. 
For that, first we define 
\begin{align}\label{eq:sum_momentum-updates}
\bX^{((m+1/2)H)} := \bX^{(mH)} - \eta\sum_{t'=mH}^{(m+1)H-1} \(\beta \bV^{(t')}+\nabla \bF(\bX^{(t')},\bxi^{(t')})\).
\end{align} 
\begin{align}
&\bbE\verts{\bX^{(mH)} - \bhX^{((m+1)H)}}_F^2 = \bbE\verts{\bX^{(mH)} - \(\bhX^{(mH)} + \C\(\bX^{((m+1/2)H)} - \bhX^{(mH)}\)\)}_F^2 \notag \\
&\quad= \bbE\verts{\bX^{((m+1/2)H)} - \bhX^{(mH)} - \C\(\bX^{((m+1/2)H)} - \bhX^{(mH)}\) + \bX^{(mH)} - \bX^{((m+1/2)H)}}_F^2 \notag \\
&\quad\leq (1+\tau_1)(1-\omega)\bbE\verts{\bX^{((m+1/2)H)} - \bhX^{(mH)}}_F^2 + (1+\tau_1^{-1})\bbE\verts{\bX^{(mH)} - \bX^{((m+1/2)H)}}_F^2 \notag \\
&\quad= (1+\tau_1)(1-\omega)\bbE\verts{\bX^{((m+1/2)H)} - \bX^{(mH)} + \bX^{(mH)}- \bhX^{(mH)}}_F^2 \notag \\
&\hspace{3cm}+ (1+\tau_1^{-1})\bbE\verts{\bX^{(mH)} - \bX^{((m+1/2)H)}}_F^2 \notag \\
&\quad\leq (1+\tau_1)(1-\omega)(1+\tau_2)\bbE\verts{\bX^{(mH)} - \bhX^{(mH)}}_F^2 \notag \\
&\hspace{3cm} + \((1+\tau_1^{-1}) + (1+\tau_1)(1-\omega)(1+\tau_2^{-1})\)\bbE\verts{\bX^{(mH)} - \bX^{((m+1/2)H)}}_F^2 \notag \\
&\quad\leq \chi_1\bbE\verts{\bX^{(mH)} - \bhX^{(mH)}}_F^2 + \chi_2\eta^2\bbE\verts{\sum_{t'=mH}^{(m+1)H-1} \(\beta \bV^{(t')}+\nabla \bF(\bX^{(t')},\bxi^{(t')})\)}_F^2, \label{lem_cvx_e1_relaxed-interim3}
\end{align}
where $\chi_1=(1+\tau_1)(1-\omega)(1+\tau_2)$ and $\chi_2=\((1+\tau_1^{-1}) + (1+\tau_1)(1-\omega)(1+\tau_2^{-1})\)$. 

Substituting this back in \eqref{lem_cvx_e1_relaxed-interim1} yields \eqref{eq:cvx_e1_relaxed}, which proves Lemma~\ref{lem:cvx_e1_relaxed}.
\end{proof}

\begin{lemma}\label{supp_lemma_consensus_part1}
We have
	\begin{align*}
		\mathbb{E} \Vert \bX^{((m+1)H)} - \Bar{\bX}^{((m+1)H)} \Vert_F^2 & \leq R_1 (1+\alpha_5^{-1}) \mathbb{E} \left\Vert  \Bar{\bX}^{(mH)} - \bX^{(mH)} \right\Vert^2 + R_2 (1+\alpha_5^{-1}) \mathbb{E} \left\Vert \hat{\bX}^{((m+1)H)} - \bX^{(mH)} \right\Vert^2 \notag \\
		&  + (1+\alpha_5)(R_1+R_2)\eta^2\left\Vert \sum_{t' = (mH)}^{((m+1)H)-1}  ( \beta \bV^{(t')} +   \boldsymbol{\nabla F}(\bX^{(t')}, \boldsymbol{\xi}^{(t')} )) \right\Vert_F^2
	\end{align*}
\end{lemma}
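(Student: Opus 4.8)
The plan is to follow the standard decentralized-consensus bookkeeping: track the deviation of the iterates from their mean across one synchronization round $[mH,(m+1)H]$ and extract the spectral-gap contraction from the gossip step. Throughout I write $\mathbf{J}:=\frac1n\mathbf{1}\mathbf{1}^T$ for the averaging projector, so that $\blX^{(t)}=\bX^{(t)}\mathbf{J}$ and $\bX^{(t)}-\blX^{(t)}=\bX^{(t)}(\mathbf{I}-\mathbf{J})$. First I would record the algebraic facts I need: $\mathbf{W}\mathbf{J}=\mathbf{J}\mathbf{W}=\mathbf{J}$, hence $(\mathbf{W}-\mathbf{I})\mathbf{J}=\mathbf{0}$ and $(\mathbf{W}-\mathbf{I})(\mathbf{I}-\mathbf{J})=\mathbf{W}-\mathbf{I}$; that $\mathbf{I}-\mathbf{J}$, $\mathbf{W}-\mathbf{I}$ and $\mathbf{I}+\gamma(\mathbf{W}-\mathbf{I})$ are symmetric, commute, and are simultaneously diagonalizable (as $\mathbf{W}$ is symmetric); and that the mean is unchanged by the gossip step \eqref{mat_not_algo_2}, i.e.\ $\blX^{((m+1)H)}=\blX^{((m+1/2)H)}$, which follows from $(\mathbf{W}-\mathbf{I})\mathbf{J}=\mathbf{0}$ exactly as in \eqref{mean_seq_iter}.

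Writing $\mathbf{G}:=\eta\sum_{t'=mH}^{(m+1)H-1}\big(\beta\bV^{(t')}+\nabla\bF(\bX^{(t')},\bxi^{(t')})\big)$, so that $\bX^{((m+1/2)H)}=\bX^{(mH)}-\mathbf{G}$ by \eqref{eq:sum_momentum-updates}, the next step is an exact identity for the post-consensus deviation. Subtracting $\blX^{((m+1)H)}=\blX^{((m+1/2)H)}$ from \eqref{mat_not_algo_2}, adding and subtracting $\gamma\bX^{((m+1/2)H)}(\mathbf{W}-\mathbf{I})$, and using the factorization $(\mathbf{I}-\mathbf{J})+\gamma(\mathbf{W}-\mathbf{I})=[\mathbf{I}+\gamma(\mathbf{W}-\mathbf{I})](\mathbf{I}-\mathbf{J})$ gives $\bX^{((m+1)H)}-\blX^{((m+1)H)}=(\bX^{((m+1/2)H)}-\blX^{((m+1/2)H)})[\mathbf{I}+\gamma(\mathbf{W}-\mathbf{I})]+\gamma(\bhX^{((m+1)H)}-\bX^{((m+1/2)H)})(\mathbf{W}-\mathbf{I})$. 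Substituting $\bX^{((m+1/2)H)}=\bX^{(mH)}-\mathbf{G}$ then yields
\begin{align*}
\bX^{((m+1)H)}-\blX^{((m+1)H)} &= \big[(\bX^{(mH)}-\blX^{(mH)})-\mathbf{G}(\mathbf{I}-\mathbf{J})\big]\big[\mathbf{I}+\gamma(\mathbf{W}-\mathbf{I})\big] \\
&\quad + \gamma\big[(\bhX^{((m+1)H)}-\bX^{(mH)})+\mathbf{G}\big](\mathbf{W}-\mathbf{I}).
\end{align*}
The important choice here is to keep $\mathbf{G}$ \emph{distributed} across both summands rather than cancelling it, since this is what ultimately produces the factor $(R_1+R_2)$ in the claim.

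Then I would bound the Frobenius norm in two stages. Applying Fact~\eqref{bound_l2_sum} with parameter $\alpha_1$ to split the two products, then Fact~\eqref{bound_frob_mult} together with the spectral estimates $\verts{[\mathbf{I}+\gamma(\mathbf{W}-\mathbf{I})](\mathbf{I}-\mathbf{J})}_2\le 1-\gamma\delta$ (valid for the prescribed $\gamma$, since the eigenvalues of $\mathbf{I}+\gamma(\mathbf{W}-\mathbf{I})$ on $\mathbf{1}^\perp$ are $1-\gamma(1-\lambda_i(\mathbf{W}))$ with $1-\lambda_i(\mathbf{W})\ge\delta$) and $\verts{\mathbf{W}-\mathbf{I}}_2=\lambda$, gives
\begin{align*}
\bbE\verts{\bX^{((m+1)H)}-\blX^{((m+1)H)}}_F^2 &\le R_1\,\bbE\verts{(\bX^{(mH)}-\blX^{(mH)})-\mathbf{G}(\mathbf{I}-\mathbf{J})}_F^2 \\
&\quad + R_2\,\bbE\verts{(\bhX^{((m+1)H)}-\bX^{(mH)})+\mathbf{G}}_F^2,
\end{align*}
where the contraction factor applies because $(\bX^{(mH)}-\blX^{(mH)})-\mathbf{G}(\mathbf{I}-\mathbf{J})$ lies in the range of $(\mathbf{I}-\mathbf{J})$. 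Applying Fact~\eqref{bound_l2_sum} once more, now with parameter $\alpha_5$, to peel $\mathbf{G}$ off each term, and using $\verts{\mathbf{G}(\mathbf{I}-\mathbf{J})}_F\le\verts{\mathbf{G}}_F$ (since $\verts{\mathbf{I}-\mathbf{J}}_2=1$), collects the drift into $(1+\alpha_5)(R_1+R_2)\verts{\mathbf{G}}_F^2$ and produces exactly the stated inequality. All steps are deterministic conditional on the past, so the expectation passes through and the compression randomness in $\bhX$ needs no special treatment.

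I expect the crux to be the algebraic identity in the second step: factoring out $\mathbf{I}+\gamma(\mathbf{W}-\mathbf{I})$ cleanly through the projector identities while choosing to distribute (rather than cancel) the local-drift term $\mathbf{G}$, so that the two applications of Young's inequality line up with the precise coefficients $R_1(1+\alpha_5^{-1})$, $R_2(1+\alpha_5^{-1})$ and $(1+\alpha_5)(R_1+R_2)$ demanded by the statement rather than with a tighter but differently shaped bound.
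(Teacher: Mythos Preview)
Your proposal is correct and follows essentially the same route as the paper: both derive the identity $\bX^{((m+1)H)}-\blX^{((m+1)H)}=(\bX^{((m+1/2)H)}-\blX^{((m+1/2)H)})[(1-\gamma)\mathbf{I}+\gamma\mathbf{W}]+\gamma(\bhX^{((m+1)H)}-\bX^{((m+1/2)H)})(\mathbf{W}-\mathbf{I})$, apply Young's inequality with $\alpha_1$ and the spectral bounds to obtain the $R_1,R_2$ factors, then substitute $\bX^{((m+1/2)H)}=\bX^{(mH)}-\mathbf{G}$ and apply Young with $\alpha_5$. The only cosmetic differences are that you substitute $\mathbf{G}$ before rather than after the $\alpha_1$ split, and you obtain the contraction $(1-\gamma\delta)$ via an eigenvalue argument whereas the paper uses the triangle inequality $\|(\bX-\blX)\mathbf{W}\|_F=\|(\bX-\blX)(\mathbf{W}-\tfrac{1}{n}\mathbf{1}\mathbf{1}^T)\|_F\le(1-\delta)\|\bX-\blX\|_F$; note that your eigenvalue justification as written only checks the upper bound $1-\gamma(1-\lambda_i)\le 1-\gamma\delta$ and should also verify the lower bound (which holds for $\gamma\le 1$).
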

\begin{proof}
	Using the update equations of $\bX^{((m+1)H)}$ in matrix form given in \eqref{mat_not_algo_1}-\eqref{mat_not_algo_2} in Section \ref{prelim}, we have:
	\begin{align}
		\Vert \bX^{((m+1)H)} - \Bar{\bX}^{((m+1)H)} \Vert_F^2 & = \Vert \bX^{((m+\nicefrac{1}{2})H)} - \Bar{\bX}^{((m+1)H)} + \gamma \hat{\bX}^{((m+1)H)} (\mathbf{W}-\mathbf{I}) \Vert_F^2 \notag 
		\intertext{Noting that $\Bar{\bX}^{((m+1)H)} = \Bar{\bX}^{((m+\nicefrac{1}{2})H)} $ (from (\ref{mean_seq_iter})) and $\Bar{\bX}^{((m+\nicefrac{1}{2})H)} (\mathbf{W}-\mathbf{I})= 0$ (from (\ref{mean_prop})), we get: }
		\Vert \bX^{((m+1)H)} - \Bar{\bX}^{((m+1)H)} \Vert_F^2 & = 
		\Vert (\bX^{((m+\nicefrac{1}{2})H)} - \Bar{\bX}^{((m+\nicefrac{1}{2})H)})((1-\gamma)\mathbf{I}\notag \\
		& \qquad + \gamma \mathbf{W})+ \gamma (\hat{\bX}^{((m+1)H)}-\bX^{((m+\nicefrac{1}{2})H)}) (\mathbf{W}-\mathbf{I}) \Vert_F^2 \notag
	\end{align}
	For any positive constant\footnote{\label{foot:frob_bound}For any two matrices $\mathbf{A},\mathbf{B} \in \mathbb{R}^{p \times q} $ and for any $\alpha >0$ , we have the following relationship for the Frobenius norm:
		\begin{align*}
			\verts{\mathbf{A} + \mathbf{B} }_F^2 \leq (1 + \alpha) \verts{\mathbf{A}}_F^2 + (1 + \alpha^{-1})\verts{\mathbf{B}}_F^2		
	\end{align*}} $\alpha_1$, we have:	
	\begin{align}
		\Vert \bX^{((m+1)H)} - \Bar{\bX}^{((m+1)H)} \Vert_F^2 & \leq (1+\alpha_1)\Vert (\bX^{((m+\nicefrac{1}{2})H)} - \Bar{\bX}^{((m+\nicefrac{1}{2})H)})((1-\gamma)\mathbf{I} + \gamma \mathbf{W})\Vert_F^2 \notag \\
		& \qquad \qquad \qquad  + (1+\alpha_1^{-1}) \Vert \gamma (\hat{\bX}^{((m+1)H)}-\bX^{((m+\nicefrac{1}{2})H)}) (\mathbf{W}-\mathbf{I}) \Vert_F^2 \notag \\
		\intertext{Using $\Vert \mathbf{A} \mathbf{B} \Vert_F \leq \Vert \mathbf{A} \Vert_F \Vert \mathbf{B} \Vert_2 $ for any matrices $\mathbf{A},\mathbf{B}$, we have: }
		\Vert \bX^{((m+1)H)} - \Bar{\bX}^{((m+1)H)} \Vert_F^2 &  \leq (1+\alpha_1)\Vert (\bX^{((m+\nicefrac{1}{2})H)} - \Bar{\bX}^{((m+\nicefrac{1}{2})H)})((1-\gamma)\mathbf{I} + \gamma \mathbf{W})\Vert_F^2 \notag \\
		& \quad  + (1+\alpha_1^{-1}) \gamma^2 \Vert  (\hat{\bX}^{((m+1)H)}-\bX^{((m+\nicefrac{1}{2})H)})\Vert_F^2 .\Vert(\mathbf{W}-\mathbf{I})\Vert_2^2 \label{suppl_cvx_li_temp_eqn}
	\end{align}
	To bound the first term in (\ref{suppl_cvx_li_temp_eqn}), we use the triangle inequality for Frobenius norm, giving us:
	\begin{align*}
		\Vert (\bX^{((m+\nicefrac{1}{2})H)} - \Bar{\bX}^{((m+\nicefrac{1}{2})H)})((1-\gamma)\mathbf{I} + \gamma \mathbf{W})\Vert_F & \leq (1-\gamma)\Vert \bX^{((m+\nicefrac{1}{2})H)} - \Bar{\bX}^{((m+\nicefrac{1}{2})H)} \Vert_F \\
		&+ \gamma \Vert (\bX^{((m+\nicefrac{1}{2})H)} - \Bar{\bX}^{((m+\nicefrac{1}{2})H)} )\mathbf{W}\Vert_F
	\end{align*}
	{Since $\left(\bX^{((m+\nicefrac{1}{2})H)} - \Bar{\bX}^{((m+\nicefrac{1}{2})H)}\right)\frac{\mathbf{1}\mathbf{1}^T}{n} = \bzero$ (from \eqref{mean_prop}), adding this inside the last term above, we get: }
	\begin{align*}
		\Vert (\bX^{((m+\nicefrac{1}{2})H)} - \Bar{\bX}^{((m+\nicefrac{1}{2})H)})((1-\gamma)\mathbf{I} + \gamma \mathbf{W})\Vert_F & \leq (1-\gamma)\Vert \bX^{((m+\nicefrac{1}{2})H)} - \Bar{\bX}^{((m+\nicefrac{1}{2})H)} \Vert_F \\
		& + \gamma \left\Vert (\bX^{((m+\nicefrac{1}{2})H)} - \Bar{\bX}^{((m+\nicefrac{1}{2})H)})\left (\mathbf{W} - \frac{\mathbf{1}\mathbf{1}^T}{n} \right) \right\Vert_F 
	\end{align*}
	Using $\Vert \mathbf{A} \mathbf{B} \Vert_F \leq \Vert \mathbf{A} \Vert_F \Vert \mathbf{B} \Vert_2 $ and then using (\ref{bound_W_mat}) from Fact 3 with $k=1$, we can simplify the above to:
	\begin{align*}
		\Vert (\bX^{((m+\nicefrac{1}{2})H)} - \Bar{\bX}^{((m+\nicefrac{1}{2})H)})((1-\gamma)\mathbf{I} + \gamma \mathbf{W})\Vert_F \leq (1-\gamma \delta) \Vert \bX^{((m+\nicefrac{1}{2})H)} - \Bar{\bX}^{((m+\nicefrac{1}{2})H)} \Vert_F
	\end{align*}
	Substituting the above in (\ref{suppl_cvx_li_temp_eqn}) and using $\lambda = \text{max}_i \{ 1- \lambda_i(\mathbf{W}) \} \Rightarrow \Vert \mathbf{W}-\mathbf{I} \Vert_2^2 \leq \lambda^2 $, we get: 
	\begin{align*}
		\Vert \bX^{((m+1)H)} - \Bar{\bX}^{((m+1)H)} \Vert_F^2 & \leq (1+\alpha_1)(1-\gamma \delta)^2  \Vert \bX^{((m+\nicefrac{1}{2})H)} - \Bar{\bX}^{((m+\nicefrac{1}{2})H)} \Vert_F^2 \\
		& \quad + (1+\alpha_1^{-1}) \gamma^2 \lambda^2 \Vert  {\bX}^{((m+\nicefrac{1}{2})H)}-\hat{\bX}^{((m+1)H)}\Vert_F^2
	\end{align*}
	Taking expectation w.r.t.\ the entire process, we have:
	\begin{align*}
		\mathbb{E} \Vert \bX^{((m+1)H)} - \Bar{\bX}^{((m+1)H)} \Vert_F^2 & \leq (1+\alpha_1)(1-\gamma \delta)^2 \mathbb{E} \Vert \bX^{((m+\nicefrac{1}{2})H)} - \Bar{\bX}^{((m+\nicefrac{1}{2})H)} \Vert_F^2 \\
		& \qquad  + (1+\alpha_1^{-1}) \gamma^2 \lambda^2 \mathbb{E} \Vert  {\bX}^{((m+\nicefrac{1}{2})H)}-\hat{\bX}^{((m+1)H)}\Vert_F^2
	\end{align*}
	Define $R_1=(1+\alpha_1)(1-\gamma \delta)^2 , R_2=(1+\alpha_1^{-1}) \gamma^2 \lambda^2 $.
	Using the update steps of algorithm given in equations \eqref{mat_not_algo_4} and \eqref{mean_seq_iter} (given in Section \ref{prelim}), we have:
	\begin{align*}
		\mathbb{E} \Vert \bX^{((m+1)H)} - \Bar{\bX}^{((m+1)H)} \Vert_F^2 & \leq R_1 \mathbb{E} \left\Vert  \Bar{\bX}^{(mH)} - \bX^{(mH)} -  \sum_{t' = mH}^{(m+1)H-1} \eta ( \beta \bV^{(t')} +  \boldsymbol{\nabla F}(\bX^{(t')}, \boldsymbol{\xi}^{(t')} ))\left(   \frac{\mathbf{1}\mathbf{1}^T}{n} - I \right) \right\Vert_F^2 \notag \\
		& \qquad + R_2 \mathbb{E} \left\Vert   \hat{\bX}^{((m+1)H)} - \bX^{(mH)} + \sum_{t' = mH}^{(m+1)H-1} \eta ( \beta \bV^{(t')} + \boldsymbol{\nabla F}(\bX^{(t')}, \boldsymbol{\xi}^{(t')} )) \right\Vert_F^2
	\end{align*} 	
	Thus, for any $\alpha_5 > 0 $ (using Footnote \ref{foot:frob_bound}), we have:
	\begin{align*}
		\mathbb{E} \Vert \bX^{((m+1)H)} - \Bar{\bX}^{((m+1)H)} \Vert_F^2 & \leq R_1 (1+\alpha_5^{-1}) \mathbb{E} \left\Vert  \Bar{\bX}^{(mH)} - \bX^{(mH)} \right\Vert^2 + R_2 (1+\alpha_5^{-1}) \mathbb{E} \left\Vert \hat{\bX}^{((m+1)H)} - \bX^{(mH)} \right\Vert^2 \notag \\
		& \qquad + R_1(1+\alpha_5)  \mathbb{E} \left\Vert \sum_{t' = (mH)}^{((m+1)H)-1} \eta ( \beta \bV^{(t')} +   \boldsymbol{\nabla F}(\bX^{(t')}, \boldsymbol{\xi}^{(t')} ))\left(   \frac{{\bf 1}{\bf 1}^T}{n} - I \right) \right\Vert_F^2 \notag \\
		& \qquad + R_2 (1+\alpha_5) \mathbb{E} \left\Vert \sum_{t' = (mH)}^{((m+1)H)-1} \eta ( \beta \bV^{(t')} +   \boldsymbol{\nabla F}(\bX^{(t')}, \boldsymbol{\xi}^{(t')} )) \right\Vert_F^2
	\end{align*}
	Using $\verts{\mathbf{AB}}_F \leq \verts{\mathbf{A}}_F \verts{\mathbf{B}}_2$ to split the third term, and then using the bound $\left\| \frac{\mathbf{1}\mathbf{1}^T}{n} - \mathbf{I} \right\|_2 =1$ (which is shown in Claim \ref{J-I_eigenvalue} in Appendix \ref{app:details-stronger-assump} in supplementary), the above can be rewritten as:  
	\begin{align*}
		\mathbb{E} \Vert \bX^{((m+1)H)} - \Bar{\bX}^{((m+1)H)} \Vert_F^2 & \leq R_1 (1+\alpha_5^{-1}) \mathbb{E} \left\Vert  \Bar{\bX}^{(mH)} - \bX^{(mH)} \right\Vert^2 + R_2 (1+\alpha_5^{-1}) \mathbb{E} \left\Vert \hat{\bX}^{((m+1)H)} - \bX^{(mH)} \right\Vert^2 \notag \\
		&  + (1+\alpha_5)(R_1+R_2)\eta^2\left\Vert \sum_{t' = (mH)}^{((m+1)H)-1}  ( \beta \bV^{(t')} +   \boldsymbol{\nabla F}(\bX^{(t')}, \boldsymbol{\xi}^{(t')} )) \right\Vert_F^2
	\end{align*}
\end{proof}

\subsubsection{Proof of Lemma~\ref{lem:cvx_e2_relaxed}}\label{app:cvx_e2_relaxed}
In this section, we prove Lemma~\ref{lem:cvx_e2_relaxed}.
\begin{proof}
\begin{align}
&\bbE\verts{\bX^{((m+1)H)} - \bhX^{((m+1)H)}}_F^2 = \bbE\verts{\bX^{((m+1)H)} - \(\bhX^{(mH)} + \C\(\bX^{((m+1/2)H)} - \bhX^{(mH)}\)\)}_F^2 \notag \\
&\qquad= \bbE\verts{\bX^{((m+1/2)H)} - \bhX^{(mH)} - \C\(\bX^{((m+1/2)H)} - \bhX^{(mH)}\) + \bX^{((m+1)H)} - \bX^{((m+1/2)H)}}_F^2 \notag \\
&\qquad\leq (1+\tau_3)(1-\omega)\underbrace{\bbE\verts{\bX^{((m+1/2)H)} - \bhX^{(mH)}}_F^2}_{=:\ T_1} + (1+\tau_3^{-1})\underbrace{\bbE\verts{\bX^{((m+1)H)} - \bX^{((m+1/2)H)}}_F^2}_{=:\ T_2} \label{cvx_e2_relaxed-interim2}
\end{align}
Now we bound $T_1$ and $T_2$.
\begin{align}
T_1 &= \bbE\verts{\bX^{((m+1/2)H)} - \bhX^{(mH)}}_F^2 \notag \\
&= \bbE\verts{\bX^{(mH)} - \eta\sum_{t'=mH}^{(m+1)H-1} \(\beta \bV^{(t')}+\nabla \bF(\bX^{(t')},\bxi^{(t')})\) - \bhX^{(mH)}}_F^2 \notag \\
&\leq (1+\tau_4)\bbE\verts{\bX^{(mH)} - \bhX^{(mH)}}_F^2 + (1+\tau_4^{-1})\eta^2\bbE\verts{\sum_{t'=mH}^{(m+1)H-1} \(\beta \bV^{(t')}+\nabla \bF(\bX^{(t')},\bxi^{(t')})\)}_F^2 \label{cvx_e2_relaxed-interim3}
\end{align}
\begin{align}
T_2 &= \bbE\verts{\bX^{((m+1)H)} - \bX^{((m+1/2)H)}}_F^2 \notag \\
&= \bbE\verts{\bX^{((m+1/2)H)} + \gamma\bhX^{((m+1)H)}(\bW-\bI) - \bX^{((m+1/2)H)}}_F^2 \notag \\
&= \gamma^2\bbE\verts{\bhX^{((m+1)H)}(\bW-\bI)}_F^2 \notag \\
&= \gamma^2\bbE\verts{\(\bhX^{((m+1)H)} - \blX^{((m+1/2)H)}\)(\bW-\bI)}_F^2 \tag{Since $\blX^{((m+1/2)H)}(\bW-\bI)={\bf 0}$} \\
&\leq \gamma^2\lambda^2\bbE\verts{\bhX^{((m+1)H)} - \blX^{((m+1/2)H)}}_F^2 \tag{Since $\|\bW-\bI\|_2=\lambda$} \\
&= \gamma^2\lambda^2\bbE\verts{\bhX^{((m+1)H)} - \(\blX^{(mH)} - \eta\sum_{t'=mH}^{(m+1)H-1} \(\beta \bV^{(t')}+\nabla \bF(\bX^{(t')},\bxi^{(t')})\)\)}_F^2 \notag \\
&\leq \phi_1\underbrace{\bbE\verts{\bhX^{((m+1)H)} - \blX^{(mH)}}_F^2}_{=:\ T_3} + \phi_2\eta^2\bbE\verts{\sum_{t'=mH}^{(m+1)H-1} \(\beta \bV^{(t')}+\nabla \bF(\bX^{(t')},\bxi^{(t')})\)}_F^2, \label{cvx_e2_relaxed-interim4}
\end{align}
where $\phi_1=\gamma^2\lambda^2(1+\tau_5)$ and $\phi_2=\gamma^2\lambda^2(1+\tau_5^{-1})$.
\begin{align}
T_3 &= \bbE\verts{\bhX^{((m+1)H)} - \blX^{(mH)}}_F^2 \notag \\
&= \bbE\verts{\bhX^{((m+1)H)} - \bX^{(mH)} + \bX^{(mH)} - \blX^{(mH)}}_F^2 \notag \\
&\leq (1+\tau_6)\bbE\verts{\bX^{(mH)} - \blX^{(mH)}}_F^2 + (1+\tau_6^{-1})\bbE\verts{\bhX^{((m+1)H)} - \bX^{(mH)}}_F^2 \notag \\
&\stackrel{\text{(a)}}{\leq} (1+\tau_6)\bbE\verts{\bX^{(mH)} - \blX^{(mH)}}_F^2 + (1+\tau_6^{-1})(1+\tau_7)(1-\omega)(1+\tau_8)\bbE\verts{\bX^{(mH)} - \bhX^{(mH)}}_F^2 \notag \\
&\hspace{1cm} + \phi\eta^2\bbE\verts{\sum_{t'=mH}^{(m+1)H-1} \(\beta \bV^{(t')}+\nabla \bF(\bX^{(t')},\bxi^{(t')})\)}_F^2, \label{lem_cvx_e1_relaxed-interim5}
\end{align}
where $\phi_3=(1+\tau_6^{-1})\((1+\tau_7^{-1}) + (1+\tau_7)(1-\omega)(1+\tau_8^{-1})\)$, (a) follows from \eqref{lem_cvx_e1_relaxed-interim3} for bounding the term $\bbE\Vert \bhX^{((m+1)H)} - \bX^{(mH)} \Vert_F^2$. Observe that since we are bounding this quantity separately for (a), we can use different coefficients here. In the above bound on $\bbE \Vert\bhX^{((m+1)H)} - \bX^{(mH)} \Vert_F^2$ from \eqref{lem_cvx_e1_relaxed-interim3}, instead of using the same $\tau_1,\tau_2$, we used $\tau_7,\tau_8$, respectively.

Substituting the above bound on $T_3$ into \eqref{cvx_e2_relaxed-interim4} and the substituting the resulting bound on $T_2$ from \eqref{cvx_e2_relaxed-interim4} and on $T_1$ from \eqref{cvx_e2_relaxed-interim3} into \eqref{cvx_e2_relaxed-interim2} gives
\begin{align}
\bbE\verts{\bX^{((m+1)H)} - \bhX^{((m+1)H)}}_F^2 &\leq b_1\bbE\verts{\bX^{(mH)} - \blX^{(mH)}}_F^2 + b_2\bbE\verts{\bX^{(mH)} - \bhX^{(mH)}}_F^2 \notag \\
&\quad + b_3\eta^2\bbE\verts{\sum_{t'=mH}^{(m+1)H-1} \(\beta \bV^{(t')}+\nabla \bF(\bX^{(t')},\bxi^{(t')})\)}_F^2,
\end{align}
where $b_1 = (1+\tau_3^{-1})\gamma^2\lambda^2(1+\tau_5)(1+\tau_6)$,
$b_2 = (1+\tau_3)(1-\omega)(1+\tau_4) + (1+\tau_3^{-1})\gamma^2\lambda^2(1+\tau_5)(1+\tau_6^{-1})(1+\tau_7)(1-\omega)(1+\tau_8)$,
$b_3 = (1+\tau_3)(1-\omega)(1+\tau_4^{-1}) + (1+\tau_3^{-1})\gamma^2\lambda^2(1+\tau_5)(1+\tau_6^{-1})\((1+\tau_7^{-1}) + (1+\tau_7)(1-\omega)(1+\tau_8^{-1})\) + (1+\tau_3^{-1})\gamma^2\lambda^2(1+\tau_5^{-1})$.
\end{proof}

\subsection{Setting up parameters}\label{app:setting-params}
We need to set the parameters such that we get $(1+\nu_1)\max\{a_1+b_1,a_2+b_2\}<1$, this will give a contractive recursion in \eqref{equi-16-interim5} and will lead to our convergence results. Recall the definitions of $a_1,a_2$ and $b_1,b_2$ from Lemma~\ref{lem:cvx_e1_relaxed} and Lemma~\ref{lem:cvx_e2_relaxed}, respectively.
\begin{align}
a_1 &= (1+\alpha_5^{-1})(1+\alpha_1)(1-\gamma \delta)^2, \label{params-setup-a1} \\
a_2 &= (1+\alpha_5^{-1}) (1+\alpha_1^{-1}) \gamma^2 \lambda^2 (1+\tau_1)(1-\omega)(1+\tau_2), \label{params-setup-a2} \\
b_1 &= (1+\tau_3^{-1})\gamma^2\lambda^2(1+\tau_5)(1+\tau_6), \label{params-setup-b1} \\
b_2 &= (1+\tau_3)(1-\omega)(1+\tau_4) + (1+\tau_3^{-1})\gamma^2\lambda^2(1+\tau_5)(1+\tau_6^{-1})(1+\tau_7)(1-\omega)(1+\tau_8). \label{params-setup-b2} 
\end{align}
Here, $\omega,\delta,\lambda$ are fixed parameters and are given to us. Among the rest, there is no trade-off when choosing $\alpha_5,\tau_1,\tau_2,\tau_4,\tau_5,\tau_7,\tau_8$, and we can chose them without any constraints. We need to carefully choose the remaining parameters $\alpha_1,\tau_3,\tau_6,\gamma$ as they contribute differently to different terms in the above equations. We will set all these parameters as follows:
\begin{align}
&\tau_i = \frac{\omega}{4}, \text{ for } i=1,2,3,4,5,7,8; \quad \tau_6 = \frac{4}{\omega}; \label{params-setting-taus} \\
&\alpha_1 = \frac{\gamma\delta}{2}; \quad \alpha_5^{-1} = \frac{\gamma\delta}{2}; \quad \gamma^* = \frac{2\delta\omega^3}{(128\lambda^2 + 24\lambda^2\omega^2 + 4\delta^2\omega^2)}. \label{params-setting-alphas-gamma}
\end{align}
Now we substitute these values into \eqref{params-setup-a1}-\eqref{params-setup-b2}. 
\begin{itemize}
\item For $a_1$, we will use $\alpha_5^{-1} \leq \frac{\gamma\delta}{2}$ and $(1+\frac{\gamma\delta}{2})(1-\gamma \delta) \leq (1-\frac{\gamma\delta}{2})$ (since $\gamma\delta\leq1$ which is true for $\gamma=\gamma^*$).
\begin{align}\label{params-setup-a1-bound}
a_1 &\leq (1+\frac{\gamma\delta}{2})^2(1-\gamma \delta)^2 \leq (1-\frac{\gamma\delta}{2})^2.
\end{align}
\item For $a_2$, we will use $\alpha_5^{-1} \leq \frac{\omega}{4}$ (which holds because $\frac{\gamma\delta}{2}\leq\frac{\omega}{4}$ for $\gamma=\gamma^*$), $(1+\frac{\omega}{4})^3(1-\omega) \leq (1-\frac{\omega}{4})$, and $\frac{1}{\gamma\delta}\geq1$.
\begin{align}\label{params-setup-a2-bound}
a_2 &\leq (1+\frac{\omega}{4})(1+\frac{2}{\gamma\delta})\gamma^2 \lambda^2 (1+\frac{\omega}{4})(1-\omega)(1+\frac{\omega}{4}) \leq \frac{3\gamma\lambda^2}{\delta}(1-\frac{\omega}{4}).
\end{align}
\item For $b_1$, we will use $(1+\frac{4}{\omega})\leq\frac{5}{\omega}$, $(1+\frac{\omega}{4})\leq\frac{5}{4}$, and $\frac{125}{4}\leq 32$.
\begin{align}\label{params-setup-b1-bound}
b_1 &= (1+\frac{4}{\omega})\gamma^2\lambda^2(1+\frac{\omega}{4})(1+\frac{4}{\omega}) \leq \gamma^2\lambda^2\frac{25}{\omega^2}\frac{5}{4} \leq \gamma^2\lambda^2\frac{32}{\omega^2}.
\end{align}
\item For $b_2$, we will use $(1+\frac{\omega}{4})^2(1-\omega) \leq (1+\frac{\omega}{4})^3(1-\omega) \leq (1-\frac{\omega}{4})$ in the first inequality, and $(1+\frac{4}{\omega})\leq\frac{5}{\omega}$ and $(1+\frac{\omega}{4})\leq\frac{5}{4}$ in the second inequality.
\begin{align}\label{params-setup-b2-bound}
b_2 &= (1+\frac{\omega}{4})^2(1-\omega) + (1+\frac{4}{\omega})\gamma^2\lambda^2(1+\frac{\omega}{4})^4(1-\omega) \notag \\
&\leq (1-\frac{\omega}{4}) + (1+\frac{4}{\omega})\gamma^2\lambda^2(1+\frac{\omega}{4})(1-\frac{\omega}{4}) \notag \\
&\leq (1-\frac{\omega}{4})\(1+\frac{5}{\omega}\gamma^2\lambda^2\frac{5}{4}\) \notag\\
&= (1-\frac{\omega}{4})\(1+\gamma^2\lambda^2\frac{25}{4\omega}\).
\end{align}
\end{itemize}
\paragraph{Bounding $(a_1+b_1)$.} Adding the bounds in \eqref{params-setup-a1-bound} and \eqref{params-setup-b1-bound}, we get
\begin{align}\label{params-setup-a1+b1-bound-interim1}
a_1+b_1 &\leq \underbrace{(1-\frac{\gamma\delta}{2})^2 + \gamma^2\lambda^2\frac{32}{\omega^2}}_{=:\ h_1(\gamma)}.
\end{align}
It can be verified that $h_1(\gamma)$ is a convex function in $\gamma$ and attains minima at $\gamma'=\frac{2\delta\omega^2}{128\lambda^2+\delta^2\omega^2}$ with value $h_1(\gamma')=\frac{128\lambda^2}{128\lambda^2+\delta^2\omega^2}<1$. 

Putting this $\gamma'$ in the expression for $a_2+b_2$ will not give a quantity that is less than one. In the following, we will derive a value of $\gamma^*$ that works for both $a_1+b_1$ and $a_2+b_2$. Let $\gamma^*=s\gamma'$ for some $s\in[0,1]$. We will derive the value of $s$ (and of $\gamma^*$). 

By the convexity of $h$, we have
\begin{align}\label{params-setup-a1+b1-bound-interim2}
h_1(\gamma^*) &= h_1(s\gamma') = h_1((1-s)0+s\gamma') \notag \\
&\leq (1-s)h_1(0) + sh_1(\gamma') \notag \\
&\leq (1-s) + s\frac{128\lambda^2}{128\lambda^2+\delta^2\omega^2} \notag \\
&= 1 - s\frac{\delta^2\omega^2}{128\lambda^2+\delta^2\omega^2}.
\end{align}
\paragraph{Bounding $(a_2+b_2)$.} Adding the bounds in \eqref{params-setup-a2-bound} and \eqref{params-setup-b2-bound} gives:
\begin{align}\label{params-setup-a2+b2-bound-interim1}
a_2+b_2 &\leq (1-\frac{\omega}{4})\(1+\frac{3\gamma\lambda^2}{\delta} + \gamma^2\lambda^2\frac{25}{4\omega}\) \notag \\
&\leq \underbrace{(1-\frac{\omega}{4})+ \(\frac{3\gamma\lambda^2}{\delta} + \gamma^2\lambda^2\frac{25}{4\omega}\)}_{=:\ h_2(\gamma)}.
\end{align}
Putting $\gamma=\gamma^*=s\gamma' = \frac{2\delta\omega^2s}{D}$, where $D=(128\lambda^2+\delta^2\omega^2)$, we get
\begin{align}\label{params-setup-a2+b2-bound-interim2}
h_2(\gamma^*) &\leq (1-\frac{\omega}{4}) + \(3\lambda^2\frac{2\omega^2s}{D} + \frac{25\lambda^2}{4\omega}\frac{4\delta^2\omega^4s^2}{D^2}\) \notag \\
&\leq (1-\frac{\omega}{4}) + \frac{s}{D}\(6\lambda^2\omega^2 + \frac{25\lambda^2\delta^2\omega^3s}{D}\) \notag \\
&\leq (1-\frac{\omega}{4}) + \frac{s}{D}\(6\lambda^2\omega^2 + 25\lambda^2\) \tag{Since $D\geq\delta^2\omega^2\geq\delta^2\omega^3s$ because $\omega,s\leq1$} \\
&\leq (1-\frac{\omega}{4}) + \frac{s}{D}\(6\lambda^2\omega^2 + 32\lambda^2\).
\end{align}
Equating the upper bounds on $h_1(\gamma^*)$ and $h_2(\gamma^*)$, we get
\begin{align}\label{value-of-s}
1 - s\frac{\delta^2\omega^2}{D} &= (1-\frac{\omega}{4}) + \frac{s}{D}\(6\lambda^2\omega^2 + 32\lambda^2\) \notag \\
\iff \frac{\omega}{4} &= \frac{s}{D}(32\lambda^2 + 6\lambda^2\omega^2 + \delta^2\omega^2) \notag \\
\iff s &= \frac{\omega D}{(128\lambda^2 + 24\lambda^2\omega^2 + 4\delta^2\omega^2)} <1.
\end{align}
With this, we have $\gamma^*=s\gamma'=\frac{2\delta\omega^2s}{D}=\frac{2\delta\omega^3}{(128\lambda^2 + 24\lambda^2\omega^2 + 4\delta^2\omega^2)}$.

Substituting the value of $s$ from \eqref{value-of-s} into \eqref{params-setup-a1+b1-bound-interim2}, we get
\begin{align}\label{final-bound-h_1}
h_1(\gamma^*) \leq 1 - \frac{\delta^2\omega^3}{(128\lambda^2 + 24\lambda^2\omega^2 + 4\delta^2\omega^2)} = 1-\frac{\gamma^*\delta}{2}.
\end{align}
%
Thus we have
\begin{align*}
\max\{a_1+b_1,a_2+b_2\} \leq \max\{h_1(\gamma^*),h_2(\gamma^*)\} \leq 1-\frac{\gamma^*\delta}{2}.
\end{align*}
Taking $\nu_1=\frac{\gamma^*\delta}{4}$ and using the inequality $(1+\nicefrac{x}{2})(1-x)\leq(1-\nicefrac{x}{2})$ (for $x=\frac{\gamma^*\delta}{2}\leq1$), we get
\begin{align}\label{final-bound_coeff_St}
(1+\nu_1)\max\{a_1+b_1,a_2+b_2\} \leq 1-\frac{\gamma^*\delta}{4} \leq 1- \frac{\delta^2\omega^3}{1224},
\end{align}
where the last inequality follows by substituting the trivial upper bounds of
$\lambda\leq2$ and $\delta,\omega\leq1$ in the denominator of the expression of $\gamma^*$.

\paragraph{Bounding $c_2+c_4$ in \eqref{equi-16-interim5}.}
\begin{align}
c_2 &= 2(1+\nu_1)(a_{31}+a_{32}) + 2(1+\nu_1^{-1}), \label{bound-on-c2} \\
c_4 &= 2(1+\nu_1)(b_{31}+b_{32}+b_{33}) + 2(1+\nu_1^{-1}), \label{bound-on-c4}
\end{align}
where
\begin{align}
a_{31} &= (1+\alpha_1)(1-\gamma \delta)^2(1+\alpha_5) + (1+\alpha_1^{-1}) \gamma^2 \lambda^2(1+\alpha_5), \label{bound-on-a31} \\
a_{32} &= (1+\alpha_5^{-1}) (1+\alpha_1^{-1}) \gamma^2 \lambda^2 \((1+\tau_1^{-1}) + (1+\tau_1)(1-\omega)(1+\tau_2^{-1})\), \label{bound-on-a32} \\
b_{31} &= (1+\tau_3)(1-\omega)(1+\tau_4^{-1}), \label{bound-on-b31} \\
b_{32} &= (1+\tau_3^{-1})\gamma^2\lambda^2(1+\tau_5)(1+\tau_6^{-1})\((1+\tau_7^{-1}) + (1+\tau_7)(1-\omega)(1+\tau_8^{-1})\), \label{bound-on-b32} \\
b_{33} &= (1+\tau_3^{-1})\gamma^2\lambda^2(1+\tau_5^{-1}). \label{bound-on-b33}
\end{align}
Now we substituting the parameter setting from \eqref{params-setting-taus}, \eqref{params-setting-alphas-gamma} into the above equations.
\begin{itemize}
\item For $a_{31}$, we will use $(1+\frac{\gamma\delta}{2})(1-\gamma\delta)^2\leq(1-\frac{\gamma\delta}{2})(1-\gamma\delta)\leq1$ and $(1+\frac{2}{\gamma\delta})\leq\frac{3}{\gamma\delta}$ (both follow from $\gamma\delta\leq1$).
\begin{align}\label{bound-on-a31}
a_{31} &= (1+\frac{\gamma\delta}{2})(1-\gamma\delta)^2(1+\frac{2}{\gamma\delta}) + (1+\frac{2}{\gamma\delta})^2\gamma^2\lambda^2 \notag \\
&\leq \frac{3}{\gamma\delta} + (\frac{3}{\gamma\delta})^2\gamma^2\lambda^2 = \frac{3}{\gamma\delta}\(1+\frac{3\gamma\lambda^2}{\delta}\)
\end{align}
\item For $a_{32}$, we will use $(1+\frac{\gamma\delta}{2})\leq\frac{3}{2}$, $(1+\frac{2}{\gamma\delta})\leq\frac{3}{\gamma\delta}$, and $(1+\frac{\omega}{4})(1-\omega)\leq(1-\frac{3\omega}{4})\leq1$ and $(1+\frac{4}{\omega})\leq\frac{5}{\omega}$.
\begin{align}\label{bound-on-a32}
a_{32} &= (1+\frac{\gamma\delta}{2}) (1+\frac{2}{\gamma\delta}) \gamma^2 \lambda^2 \((1+\frac{4}{\omega}) + (1+\frac{\omega}{4})(1-\omega)(1+\frac{4}{\omega})\) \notag \\
&\leq \frac{3}{2} \frac{3}{\gamma\delta} \gamma^2 \lambda^2 \frac{10}{\omega} =\frac{45\gamma\lambda^2}{\delta\omega}.
\end{align}
\item For $b_{31}$, we will use $(1+\frac{\omega}{4})(1-\omega)\leq(1-\frac{3\omega}{4})$.
\begin{align}\label{bound-on-b31}
b_{31} &= (1+\frac{\omega}{4})(1-\omega)(1+\frac{4}{\omega}) \leq (1-\frac{3\omega}{4})(1+\frac{4}{\omega}) \leq \frac{4}{\omega} - 2.
\end{align}
\item For $b_{32}$, we will use $(1+\frac{4}{\omega})\leq \frac{5}{\omega}$, $(1+\frac{\omega}{4})\leq\frac{5}{4}$, and $\((1+\frac{4}{\omega}) + (1+\frac{\omega}{4})(1-\omega)(1+\frac{4}{\omega})\)\leq\frac{10}{\omega}$ as in $a_{32}$.
\begin{align}\label{bound-on-b32}
b_{32} &= (1+\frac{4}{\omega})\gamma^2\lambda^2(1+\frac{\omega}{4})(1+\frac{\omega}{4})\((1+\frac{4}{\omega}) + (1+\frac{\omega}{4})(1-\omega)(1+\frac{4}{\omega})\) \notag \\
&\leq \frac{5}{\omega}\gamma^2\lambda^2(\frac{5}{4})^2\frac{10}{\omega} = \frac{625}{8}\frac{\gamma^2\lambda^2}{\omega^2} \leq \frac{79\gamma^2\lambda^2}{\omega^2}.
\end{align}
\item For $b_{33}$, we will use 
\begin{align}\label{bound-on-b33}
b_{33} &= (1+\frac{4}{\omega})\gamma^2\lambda^2(1+\frac{4}{\omega}) \leq \frac{25\gamma^2\lambda^2}{\omega^2}.
\end{align}
\end{itemize}
Substituting the bounds on $a_{31},a_{32}$ from \eqref{bound-on-a31}, \eqref{bound-on-a32}, respectively, and $\nu_1=\frac{\gamma\delta}{4}$ (where $\gamma=\gamma^*$ is defined in \eqref{params-setting-alphas-gamma}) into \eqref{bound-on-c2}, we get:
\begin{align}\label{final-bound-c2}
c_2 &\leq 2(1+\frac{\gamma\delta}{4})\(\frac{3}{\gamma\delta}\(1+\frac{3\gamma\lambda^2}{\delta}\) + \frac{45\gamma\lambda^2}{\delta\omega}\) + 2(1+\frac{4}{\gamma\delta}).
\end{align}
Similarly, substituting the bounds on $b_{31},b_{32},b_{33}$ from \eqref{bound-on-b31}, \eqref{bound-on-b32}, \eqref{bound-on-b33}, respectively, and $\nu_1=\frac{\gamma\delta}{4}$ (where $\gamma=\gamma^*$ is defined in \eqref{params-setting-alphas-gamma}) into \eqref{bound-on-c4}, we get:
\begin{align}\label{final-bound-c4}
c_4 &\leq 2(1+\frac{\gamma\delta}{4})\(\frac{4}{\omega} - 2 + \frac{104\gamma^2\lambda^2}{\omega^2}\) + 2(1+\frac{4}{\gamma\delta}).
\end{align}
Adding the bounds on $c_2$ and $c_4$ gives
\begin{align}\label{final-bound-c2+c4}
c_2+c_4 &\leq 2(1+\frac{\gamma\delta}{4})\(\frac{3}{\gamma\delta} + \frac{9\lambda^2}{\delta^2} + \frac{45\gamma\lambda^2}{\delta\omega} + \frac{104\gamma^2\lambda^2}{\omega^2} + \frac{4}{\omega} - 2\) + 4(1+\frac{4}{\gamma\delta}).
\end{align}
Putting the bounds from \eqref{final-bound_coeff_St} and \eqref{final-bound-c2+c4} back into \eqref{equi-16-interim5}, we get
\begin{align}
S^{(t)} &\leq \(1-\frac{\gamma\delta}{4}\)S^{(mH)} + 2c_1\eta^2H^2n\(2(M^2+1)G^2 + \sigma^2\) + c_1\eta^2H\beta^2\sum_{t'=mH}^{t-1}\bbE\verts{\bV^{(t')}}_F^2 \notag \\
&\quad + 2c_1\eta^2H(M^2+1)L^2\sum_{t'=mH}^{t-1}S^{(t')} + 2c_1\eta^2H(M^2+1)nB^2\sum_{t'=mH}^{t-1}\bbE\verts{\nabla f(\blx^{(t')})}_2^2, \label{equi-16-final-bound}
\end{align} 
where $c_1=c_2+c_4$ and the bound on $c_2+c_4$ is given in \eqref{final-bound-c2+c4}, and $\gamma=\gamma^*$ is defined in \eqref{params-setting-alphas-gamma}.

\subsection{Omitted Details from Section~\ref{subsec:proof-similar-eq16}}\label{app:proof-similar-eq16}

\begin{proof}[Proof of Proposition~\ref{prop:mom-update-norm}]
\begin{align}
&\bbE\verts{\nabla \bF(\bX^{(t')},\bxi^{(t')})}_F^2 = \bbE\verts{\nabla f(\bX^{(t')})}_F^2 + \bbE\verts{\nabla \bF(\bX^{(t')},\bxi^{(t')}) - \nabla f(\bX^{(t')})}_F^2 \notag \\
&= \bbE\verts{\nabla f(\bX^{(t')})}_F^2 + \bbE\sum_{i=1}^n\verts{\nabla F(\bx_i^{(t')},\xi_i^{(t')}) - \nabla f(\bx_i^{(t')})}_2^2 \notag \\
&\stackrel{\text{(a)}}{\leq} \bbE\verts{\nabla f(\bX^{(t')})}_F^2 + n\sigma^2 + M^2\bbE\verts{\nabla f(\bX^{(t')})}_F^2 \notag \\
&= (M^2+1)\bbE\verts{\nabla f(\bX^{(t')})}_F^2 + n\sigma^2 \notag \\
&= (M^2+1)\bbE\verts{\nabla f(\bX^{(t')}) - \nabla f(\blX^{(t')}) + \nabla f(\blX^{(t')})}_F^2 + n\sigma^2 \tag{Where $\nabla f(\blX^{(t')})=[\nabla f_1(\blx^{(t')})\hdots \nabla f_n(\blx^{(t')})]$} \\
&\leq 2(M^2+1)\(\bbE\verts{\nabla f(\bX^{(t')}) - \nabla f(\blX^{(t')})}_F^2 + \bbE\verts{\nabla f(\blX^{(t')})}_F^2\) + n\sigma^2 \notag \\
&\stackrel{\text{(b)}}{\leq} 2(M^2+1)\(L^2\bbE\verts{\bX^{(t')} - \blX^{(t')}}_F^2 + \bbE\sum_{i=1}^n\verts{\nabla f_i(\blx^{(t')})}_2^2\) + n\sigma^2 \notag \\
&\stackrel{\text{(c)}}{\leq} 2(M^2+1)\(L^2\bbE\verts{\bX^{(t')} - \blX^{(t')}}_F^2 + nG^2+nB^2\bbE\verts{\nabla f(\blx^{(t')})}_2^2\) + n\sigma^2 \notag \\
&= 2(M^2+1)\(L^2\Xi^{(t')} + nG^2+nB^2\bbE\verts{\nabla f(\blx^{(t')})}_2^2\) + n\sigma^2 \notag
\end{align}
where (a) follows from Assumption~\ref{assump:variance}, (b) follows from the $L$-smoothness of $f$, and (c) follows from Assumption~\ref{assump:grad-dissim}.
\end{proof}

\subsection{Omitted Details from Section~\ref{subsec:proof-similar-lem14}}\label{app:proof-similar-lem14}
	\begin{claim}\label{claim:lem14-interim-claim}
	We have $\left(1 - \frac{\alpha}{4} \right)^{\lfloor \frac{t-j}{H} \rfloor} \leq  2 \left( 1 - \frac{\alpha}{8H} \right)^{t-j}$.
	\end{claim}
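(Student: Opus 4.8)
The plan is to strip the claim down to a purely scalar inequality. Writing $k := t-j$, which is a nonnegative integer, it suffices to show that for every integer $k \ge 0$, every $H \in \mathbb{N}$, and every $\alpha \in (0,1]$ we have $\left(1-\frac{\alpha}{4}\right)^{\lfloor k/H\rfloor} \le 2\left(1-\frac{\alpha}{8H}\right)^{k}$. The first step I would take is to dispose of the floor. Since $\lfloor k/H\rfloor \ge \frac{k}{H}-1$ and the base $1-\frac{\alpha}{4}$ lies in $(0,1)$, the map $x \mapsto \left(1-\frac{\alpha}{4}\right)^x$ is decreasing, so $\left(1-\frac{\alpha}{4}\right)^{\lfloor k/H\rfloor} \le \left(1-\frac{\alpha}{4}\right)^{k/H-1} = \left(1-\frac{\alpha}{4}\right)^{-1}\left(1-\frac{\alpha}{4}\right)^{k/H}$. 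Because $\alpha \le 1$ gives $1-\frac{\alpha}{4}\ge \frac34$, the prefactor satisfies $\left(1-\frac{\alpha}{4}\right)^{-1}\le \frac43 \le 2$, which accounts exactly for the constant $2$ on the right-hand side.

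With the factor of $2$ extracted, what remains is to prove $\left(1-\frac{\alpha}{4}\right)^{k/H} \le \left(1-\frac{\alpha}{8H}\right)^{k}$. I would reduce this to the single per-step inequality $\left(1-\frac{\alpha}{4}\right)^{1/H} \le 1-\frac{\alpha}{8H}$ and then raise both sides to the $k$-th power; this is legitimate since both bases are positive (for $\alpha\in(0,1]$, $H\ge1$) and $k\ge 0$, so the map $z\mapsto z^{k}$ is monotone on $[0,\infty)$, and $\left(1-\frac{\alpha}{4}\right)^{k/H}=\bigl[\left(1-\frac{\alpha}{4}\right)^{1/H}\bigr]^{k}$.

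The per-step inequality is the only substantive point, and I would establish it by convexity rather than any delicate estimate. The function $g(y):=\left(1-\frac{\alpha}{4}\right)^{y}$ is convex in $y$, so on $[0,1]$ it lies below its secant through $y=0$ and $y=1$; evaluating at $y=\frac1H$ gives $g\!\left(\tfrac1H\right)\le \left(1-\tfrac1H\right)g(0)+\tfrac1H\,g(1)=\left(1-\tfrac1H\right)+\tfrac1H\left(1-\tfrac{\alpha}{4}\right)=1-\frac{\alpha}{4H}$. Finally $1-\frac{\alpha}{4H}\le 1-\frac{\alpha}{8H}$ since $\frac{\alpha}{4H}\ge\frac{\alpha}{8H}$, completing the per-step bound and hence the chain $\left(1-\frac{\alpha}{4}\right)^{\lfloor k/H\rfloor}\le 2\left(1-\frac{\alpha}{4}\right)^{k/H}\le 2\left(1-\frac{\alpha}{8H}\right)^{k}$. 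I expect the convexity/secant step to be the ``hard part'' only in the sense of being the one idea that is not mere bookkeeping; the $k=0$ edge case is immediate ($1\le 2$), so no separate treatment is needed.
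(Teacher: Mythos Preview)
Your proof is correct. Both you and the paper remove the floor via $\lfloor k/H\rfloor \ge k/H - 1$ and then reduce to the per-step inequality $(1-\alpha/4)^{1/H}\le 1-\alpha/(8H)$, but the two differ in how each piece is handled. For the factor $2$, the paper first passes to the new base and then bounds the leftover $(1-\alpha/(8H))^{-H}$ via $\frac{1}{1-x}\le 1+2x$ and $(1+\alpha/(4H))^H\le e^{\alpha/4}\le 2$; you instead pull out $(1-\alpha/4)^{-1}\le 4/3\le 2$ directly, which is shorter and sharper. For the per-step inequality, the paper chains $1-x\le e^{-x}$ with $e^{-x}\le 1-x/2$ (valid for $x\le \ln 2$), whereas your secant argument from convexity of $y\mapsto(1-\alpha/4)^y$ gives $(1-\alpha/4)^{1/H}\le 1-\alpha/(4H)$ in one stroke and even yields a slightly stronger intermediate bound. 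Net effect: same conclusion, but your route avoids the exponential detour and is a bit more transparent.
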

	\begin{proof}
	First note that $(1-\frac{\alpha}{4})^{1/H}\leq \exp(-\frac{\alpha}{4H})\leq 1-\frac{\alpha}{8H}$ and also that $\lfloor \frac{t-j}{H}\rfloor \geq \frac{t-j}{H} - 1$.
	\begin{align*}
	\(1-\frac{\alpha}{4}\)^{\lfloor \frac{t-j}{H} \rfloor} &= \left[\(1-\frac{\alpha}{4}\)^{1/H}\right]^{H\lfloor \frac{t-j}{H}\rfloor} \leq \(1-\frac{\alpha}{8H}\)^{H\lfloor \frac{t-j}{H}\rfloor} \\
	&\leq \(1-\frac{\alpha}{8H}\)^{t-j}\(1-\frac{\alpha}{8H}\)^{-H} \leq 2\(1-\frac{\alpha}{8H}\)^{t-j}.
	\end{align*}
	In the last inequality we used $\(1-\frac{\alpha}{8H}\)^{-H} \leq 2$, which can be shown as follows:
	\begin{align*}
	\(1-\frac{\alpha}{8H}\)^{-H} &= \(\frac{1}{1-\frac{\alpha}{8H}}\)^{H} \stackrel{\text{(a)}}{\leq} \(1+\frac{\alpha}{4H}\)^H \leq \exp(\frac{\alpha}{4}) \leq 2,
	\end{align*}
	where (a) holds because $\frac{\alpha}{8H}\leq\frac{1}{2}$.
	\end{proof}
	
\subsection{Completing the Convergence Proof}\label{app:completing-proof}
Note that $\Xi^{(t)}\sum_{i=1}^n\bbE\verts{\bx^{(t)}_i - \blx^{(t)}}_2^2 \leq S^{(t)}$ for any $t\in[T]$. Substituting this and the bound from \eqref{upper-bound-avg-St} in the last term of \eqref{mom_noncvx_interim13}, we get
\begin{align}
\frac{1}{T}\sum_{t=0}^{T-1}\bbE\verts{\nabla f(\blx^{(t)})}_2^2 &\leq \frac{16(1-\beta)(f(\blx^{(0)}) - f^*)}{\eta T} + \frac{16\eta L}{(1-\beta)}\Big(\frac{\sigma^2+2(M^2+n)G^2}{n}\Big) \notag \\
&\quad + \eta^2\frac{128 L^2J_1}{n} + \eta^2\frac{128 L^2J_2}{n} \frac{1}{T}\sum_{t=0}^{T-1}  \bbE \verts{ \nabla f (\blx^{(t)}) }^2.
\end{align}
where $J_1= \left(\frac{8A\eta^2}{\alpha} + \left( \frac{32 D H}{\alpha} \right)  \left( \frac{2(M^2+1)nG^2 + n \sigma^2}{(1-\beta)}   \right)   \right)$ and $J_2=\left(\frac{32C  H}{\alpha}   + \left( \frac{32 D H }{\alpha} \right) \frac{2(M^2+1)nB^2}{(1-\beta)}\right)$, \\ $A=2c_1H^2n\(2(M^2+1)G^2 + \sigma^2\)$, $C=2c_1H(M^2+1)nB^2$, and $D=\frac{c_1H\beta^2}{(1-\beta)}$ and $c_1$ defined below. 
If $\eta\leq\sqrt{\frac{n}{256L^2J_2}}$, then taking the last term on the LHS gives
\begin{align}
\frac{1}{T}\sum_{t=0}^{T-1}\bbE\verts{\nabla f(\blx^{(t)})}_2^2 &\leq \frac{32(1-\beta)(f(\blx^{(0)}) - f^*)}{\eta T} + \frac{32\eta L}{(1-\beta)}\Big(\frac{\sigma^2+2(M^2+n)G^2}{n}\Big) \notag \\
&\quad + \eta^2\frac{256 L^2J_1}{n}.
\end{align}
Choosing $\eta = (1-\beta)\sqrt{\frac{n}{T}}$ and running the algorithm for $T \geq \max \{U_1, U_2 , U_3 , U_4 , U_5   \}  $ iterations
completes the proof of Theorem~\ref{convergence_relaxed_assump}. \\
Here, $U_1 =  \frac{81n\beta^8}{4(1-\beta)^4}$, $U_2 =\frac{9(M^2+n)\beta^4 L^2}{4(1-\beta^2)} $, $U_3 = \frac{72 (M^2+n) \beta^2 L^2 B^2}{(1-\beta)^2}$, $U_4 = 256L^2 J_2 (1-\beta)^2$ and $U_5 = \frac{512DH (M^2+1) L^2 (1-\beta)n}{\delta \gamma}$, with
$J_2  = \frac{128CH}{\gamma \delta} + \left(\frac{128DH}{\gamma \delta}\right)\left( \frac{2(M^2+1)nB^2}{1-\beta}  \right) $, $D = \frac{c_1H \beta^2}{(1-\beta)}$,\\ $C = 2c_1 H(M^2+1)nB^2$ and $c_1 = 2 ( 1 + \frac{\gamma \delta}{4} ) \left( \frac{3}{\gamma \delta} + \frac{9 \lambda^2}{\delta^2} + \frac{45 \gamma \lambda^2}{\delta \omega} + \frac{104 \gamma^2 \lambda^2 }{\omega^2} + \frac{4}{\omega} -2 \right) $.

\section{Preliminaries for Convergence with Relaxed Assumptions}\label{app:details-stronger-assump}


{\allowdisplaybreaks

	\begin{fact}\label{bound_var}
		Consider the variance bound on the stochastic gradient for nodes $i \in [n]$: $$\mathbb{E}_{\xi_i} \verts{\nabla F_i (\bx, \xi_i) - \nabla f_i (\bx) }^2 \leq \sigma_i^2,$$ where $\mathbb{E}_{\xi_i} [\nabla F_i(\bx, \xi_i)] = \nabla f_i(\bx) $, then:
		\begin{align}\label{eq:bound_var}
		\mathbb{E}_{\boldsymbol{\xi}^{(t)}} \left\Vert \frac{1}{n}\sum_{j=1}^n \left(\nabla f_j(\bx_j^{(t)}) - \nabla F_j(\bx_j^{(t)},\xi_j^{(t)}) \right)\right\Vert^2 \leq \frac{\Bar{\sigma}^2}{n}
		\end{align}
		where ${\boldsymbol{\xi}^{(t)}} = \{ \xi_1^{(t)}, \xi_2^{(t)}, \hdots, \xi_n^{(t)} \}$ denotes the stochastic sample for the nodes at any timestep $t$ and $\frac{ \sum_{j=1}^n \sigma_j^2 }{n}= {\Bar{\sigma}^2}$
	\end{fact}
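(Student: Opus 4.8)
The plan is to treat this as a standard variance-reduction-through-independence computation, entirely analogous to Proposition~\ref{prop:variance-reduction_relaxed} but carried out under the uniform per-node bound rather than the geometry-dependent one. First I would introduce, for each node $j\in[n]$, the centered random vector
\[
\bz_j := \nabla f_j(\bx_j^{(t)}) - \nabla F_j(\bx_j^{(t)},\xi_j^{(t)}),
\]
and record two facts: by unbiasedness we have $\bbE_{\xi_j}[\bz_j]=\bzero$, and by the assumed variance bound $\bbE_{\xi_j}\verts{\bz_j}^2\leq\sigma_j^2$. The target quantity is then exactly $\bbE_{\bxi^{(t)}}\verts{\frac1n\sum_{j=1}^n\bz_j}^2$.

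Next I would expand the squared norm of the sum into diagonal and off-diagonal contributions,
\[
\bbE_{\bxi^{(t)}}\verts{\tfrac1n\textstyle\sum_{j=1}^n\bz_j}^2 = \frac1{n^2}\sum_{j=1}^n\bbE\verts{\bz_j}^2 + \frac1{n^2}\sum_{j\neq k}\bbE\lragnle{\bz_j,\bz_k}.
\]
The key step is that each cross term vanishes: since the samples $\{\xi_j^{(t)}\}_{j=1}^n$ are drawn independently across nodes, for $j\neq k$ the expectation of the inner product factorizes as $\bbE\lragnle{\bz_j,\bz_k}=\lragnle{\bbE[\bz_j],\bbE[\bz_k]}=\bzero$ by the zero-mean property. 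Hence only the diagonal terms survive, and substituting the per-node bound yields $\frac1{n^2}\sum_{j=1}^n\sigma_j^2=\frac1n\bar{\sigma}^2$ by the definition $\bar{\sigma}^2=\frac1n\sum_{j=1}^n\sigma_j^2$, which is precisely \eqref{eq:bound_var}.

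There is no genuine obstacle here, as the argument is a one-line consequence of cross-term cancellation; the statement is essentially the uniform-variance specialization of Proposition~\ref{prop:variance-reduction_relaxed}. The only point that warrants explicit mention is that the vanishing of the off-diagonal terms relies \emph{simultaneously} on independence of the randomness across workers and on unbiasedness of each stochastic gradient, so I would state both hypotheses when justifying the factorization rather than appealing to independence alone.
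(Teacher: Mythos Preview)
Your proposal is correct and follows essentially the same approach as the paper: expand the squared norm into diagonal and off-diagonal terms, kill the cross terms by independence plus unbiasedness, and bound the surviving diagonal sum by $\frac{1}{n^2}\sum_j\sigma_j^2=\bar\sigma^2/n$. If anything, your remark that both independence \emph{and} unbiasedness are needed for the cross terms to vanish is slightly more explicit than the paper, which attributes the cancellation to independence alone.
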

		\begin{proof}
		\begin{align*}
		&\mathbb{E}_{\xi^{(t)}}\left\Vert \frac{1}{n}\sum_{j=1}^n \nabla f_j(\bx_j^{(t)}) - \frac{1}{n}\sum_{j=1}^n \nabla F_j(\bx_j^{(t)},\xi_j^{(t)}) \right\Vert^2  = 
		\frac{1}{n^2} \sum_{j=1}^n \mathbb{E}_{\xi^{(t)}} \Vert \nabla f_j(\bx_j^{(t)}) - \nabla F_j(\bx_j^{(t)},\xi_j^{(t)})\Vert^2 \\
		& \hspace{4.1cm} + \frac{1}{n^2}\sum_{i \neq j} \mathbb{E}_{\xi^{(t)}} \left\langle \nabla f_i(\bx_i^{(t)}) - \nabla F_i(\bx_i^{(t)},\xi_j^{(t)}), \nabla f_j(\bx_j^{(t)}) - \nabla F_j(\bx_j^{(t)},\xi_j^{(t)}) \right\rangle 
		\end{align*}
		Since $\xi_i$ is independent of $\xi_j$, the second term is zero in expectation, thus the above reduces to:
		\begin{align*}
		\mathbb{E}_{\xi^{(t)}}\left\Vert \frac{1}{n}\sum_{j=1}^n \nabla f_j(\bx_j^{(t)}) - \frac{1}{n}\sum_{j=1}^n \nabla F_j(\bx_j^{(t)},\xi_j^{(t)}) \right\Vert^2 & = \frac{1}{n^2} \sum_{j=1}^n \mathbb{E}_{\xi^{(t)}} \Vert \nabla f_j(\bx_j^{(t)}) - \nabla F_j(\bx_j^{(t)},\xi_j^{(t)})\Vert^2 \\
		& \leq \frac{1}{n^2} \sum_{j=1}^n \sigma_j^2 = \frac{\Bar{\sigma}^2}{n}
		\end{align*}
	\end{proof}

	\begin{fact}\label{fact-bound}
	Consider the set of synchronization indices $\{ I_{(1)},I_{(2)}, \hdots, I_{(k)}, \hdots  \}  \in \mathcal{I}_T $. We assume that the maximum gap between any two consecitive elements in  $\mathcal{I}_T$ is bounded by $H$. Let ${\xi^{(t)}} = \{ \xi_1^{(t)}, \xi_2^{(t)}, \hdots, \xi_n^{(t)} \}$ denote the stochastic samples for the nodes at any timestep $t$. Consider any two consecutive synchronization indices $I_{(k)}$ and $I_{(k+1)}$, then for learning rate $\eta$, we have:
	\begin{align}\label{bound_gap_grad}
	\mathbb{E} \left[\left\Vert \sum_{t' = I_{(k)}}^{I_{(k+1)}-1}\eta ( \beta \bV^{(t')} + \boldsymbol{\nabla F} (\bX^{(t')}, \boldsymbol{\xi}^{(t')})) \right\Vert_F^2\right] \leq 2 n H^2 G^2 \eta^2 \left( 1 + \frac{\beta^2}{(1-\beta)^2} \right).
	\end{align}
	\end{fact}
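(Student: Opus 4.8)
The plan is to reduce the squared Frobenius norm of the block sum to a sum of per-iteration squared norms, and then control each summand using the uniformly bounded second moment assumption (ii) of Section~\ref{subsec:strong_assump} together with Proposition~\ref{prop:bound_v}. First I would observe that the summation ranges over at most $H$ consecutive indices, since by hypothesis $I_{(k+1)} - I_{(k)} \leq H$. Hence, applying the Frobenius-norm version of \eqref{bound_seq_sum} (equivalently, Jensen's inequality over the $H$ terms) and pulling out the factor $\eta^2$,
\begin{align*}
\bbE\verts{\sum_{t'=I_{(k)}}^{I_{(k+1)}-1}\eta\(\beta\bV^{(t')} + \nabla\bF(\bX^{(t')}, \bxi^{(t')})\)}_F^2 \leq \eta^2 H \sum_{t'=I_{(k)}}^{I_{(k+1)}-1}\bbE\verts{\beta\bV^{(t')} + \nabla\bF(\bX^{(t')}, \bxi^{(t')})}_F^2.
\end{align*}

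Next I would split each summand using Fact~\ref{bound_l2_sum} with $\alpha = 1$, giving $\bbE\verts{\beta\bV^{(t')} + \nabla\bF(\bX^{(t')}, \bxi^{(t')})}_F^2 \leq 2\beta^2\bbE\verts{\bV^{(t')}}_F^2 + 2\bbE\verts{\nabla\bF(\bX^{(t')}, \bxi^{(t')})}_F^2$, and bound the two pieces separately. For the stochastic gradient term, the bounded second moment assumption yields directly $\bbE\verts{\nabla\bF(\bX^{(t')}, \bxi^{(t')})}_F^2 = \sum_{i=1}^n\bbE\verts{\nabla F_i(\bx_i^{(t')}, \xi_i^{(t')})}^2 \leq nG^2$. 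For the momentum term, I would invoke Proposition~\ref{prop:bound_v}: from \eqref{eq:bound_v} and the same second moment bound, $\Lambda^{(t')} = \sum_{j=0}^{t'}\beta^{t'-j}\bbE\verts{\nabla\bF(\bX^{(j)}, \bxi^{(j)})}_F^2 \leq nG^2\sum_{j=0}^{t'}\beta^{t'-j} \leq \frac{nG^2}{1-\beta}$, whence $\bbE\verts{\bV^{(t')}}_F^2 \leq \frac{\Lambda^{(t')}}{1-\beta} \leq \frac{nG^2}{(1-\beta)^2}$.

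Combining these two bounds, each summand is at most $2nG^2 + \frac{2\beta^2 nG^2}{(1-\beta)^2} = 2nG^2\(1 + \frac{\beta^2}{(1-\beta)^2}\)$, uniformly in $t'$; summing over the at most $H$ indices and multiplying by the prefactor $\eta^2 H$ reproduces exactly the claimed bound $2nH^2G^2\eta^2\(1 + \frac{\beta^2}{(1-\beta)^2}\)$. There is no substantive obstacle in this argument; the only point demanding care is that the bound on $\bbE\verts{\bV^{(t')}}_F^2$ be uniform over all $t'$ (accumulating momentum from the very start of the run, not merely within the current synchronization block), which is precisely the content supplied by Proposition~\ref{prop:bound_v} once the second moment assumption is in force.
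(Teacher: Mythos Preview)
Your proposal is correct and follows essentially the same approach as the paper: apply \eqref{bound_seq_sum} over the at most $H$ summands, split each term via $\|a+b\|^2\leq 2\|a\|^2+2\|b\|^2$, bound the gradient part by $nG^2$, and bound the momentum part by $nG^2/(1-\beta)^2$. The only cosmetic difference is that you invoke Proposition~\ref{prop:bound_v} to control $\bbE\|\bV^{(t')}\|_F^2$, whereas the paper's proof of this fact re-derives the per-node bound $\bbE\|\bv_i^{(t)}\|^2 \leq G^2/(1-\beta)^2$ inline (via the same Jensen argument underlying Proposition~\ref{prop:bound_v}); the content is identical.
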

	\begin{proof}
		Using the fact that the sequence gap is bounded by $H$, we have $I_{(t+1)}- I_{(t)} \leq H$ for all synchronization indices $I_{(t)} \in \mathcal{I}_T$. Thus we have:
		\begin{align}
		\mathbb{E} \left[\left\Vert \sum_{t' = I_{(k)}}^{I_{(k+1)}-1}\eta ( \beta \bV^{(t')} + \boldsymbol{\nabla F} (\bX^{(t')}, \boldsymbol{\xi}^{(t')})) \right\Vert_F^2\right] & \leq H \eta^2 \sum_{t' = I_{(k)}}^{I_{(k+1)}-1} \mathbb{E} \verts{\beta \bV^{(t')} + \boldsymbol{\nabla F} (\bX^{(t')}, \boldsymbol{\xi}^{(t')})}_F^2 \notag \\
		& \leq 2 H \eta^2 \sum_{t' = I_{(k)}}^{I_{(k+1)}-1} \left[\mathbb{E} \verts{\beta \bV^{(t')}}_F^2 + \mathbb{E}\verts{\boldsymbol{\nabla F} (\bX^{(t')}, \boldsymbol{\xi}^{(t')})}_F^2\right] \notag
		\intertext{Using the bounded gradient assumption and definition of gap $H$, we can bound the above as:}
		\mathbb{E} \left[\left\Vert \sum_{t' = I_{(k)}}^{I_{(k+1)}-1}\eta ( \beta \bV^{(t')} + \boldsymbol{\nabla F} (\bX^{(t')}, \boldsymbol{\xi}^{(t')})) \right\Vert_F^2\right] & \leq 2 H \eta^2 \beta^2 \sum_{t' = I_{(k)}}^{I_{(k+1)}-1} \mathbb{E} \verts{\bV^{(t')}}_F^2 + 2nH^2G^ 2 \eta^2 \notag \\
		=& 2 H \eta^2 \beta^2 \sum_{t' = I_{(k)}}^{I_{(k+1)}-1} \sum_{i=1}^n\mathbb{E} \verts{\bv_i^{(t')}}^2 + 2nH^2G^ 2 \eta^2 \label{fact-bound_interim2}
		\end{align}
		Now we show that $\mathbb{E} \verts{\bv_i^{(t)}}^2 \leq \frac{G^2}{(1-\beta)^2}$ for all $i\in[n]$ and for every $t\geq0$. Fix an arbitrary $i\in[n]$ and $t\geq0$. 
		Define $\theta_{t} = \sum_{k=0}^{t} \beta^k$, we then have:
		\begin{align}
		\mathbb{E} \verts{\bv^{(t)}_i}^2 & =\theta_{t}^2  \mathbb{E}\verts{  \sum_{k=0}^{t} \frac{\beta^{t-k}}{\theta_{t}}  \nabla F(\bx_i^{(k)}, \xi_i^{(k)})  }^2 \notag \\
		& \leq \theta_{t} \sum_{k=0}^{t} \beta^{t-k}  \mathbb{E} \verts{ \nabla F(\bx_i^{(k)}, \xi_i^{(k)})  }^2 \notag \\
		& \leq \theta_t  \sum_{k=0}^{t} \left[\beta^{t-k} G^2\right] \notag \\
		& = G^2 \theta_t^2 \notag 
		\end{align}
		Here the first inequality follows from the Jensen's inequality and the second inequality follows from the bounded gradient assumption. We now note the following bound for $\theta_t$:
		\begin{align*}
		\theta_t & = \sum_{k=0}^{t} \beta^k  \leq  \sum_{k=0}^{\infty} \beta^k  \leq \frac{1}{(1-\beta)} 
		\end{align*}
		Thus, for all $t$ and all $i \in [n]$, we have:
		\begin{align} \label{bound_v_it}
		\mathbb{E} \verts{\bv^{(t)}_i}^2 \leq \frac{G^2}{(1-\beta)^2} 
		\end{align}

		Substituting the bound $\bbE\|\bv_i^{(t)}\|^2 \leq \frac{G^2}{(1-\beta)^2}$ in \eqref{fact-bound_interim2} gives 
		\begin{align*}
		\mathbb{E} \left[\left\Vert \sum_{t' = I_{(k)}}^{I_{(k+1)}-1}\eta ( \beta \bV^{(t')} + \boldsymbol{\nabla F} (\bX^{(t')}, \boldsymbol{\xi}^{(t')})) \right\Vert_F^2\right] & \leq 2 H^2 \eta^2 \beta^2 n \frac{G^2}{(1-\beta)^2} + 2nH^2G^ 2 \eta^2.
		\end{align*}
		This completes the proof of Fact \ref{fact-bound}.
	\end{proof}
	
\begin{fact} [Triggering rule, \cite{singh2019sparq}]
Consider the set of nodes $ {\Gamma}^{(t)}  $ which do not communicate at time $t$. 
For a threshold sequence $\{c_t\}_{t=0}^{T-1}$, the triggering rule in Algorithm \ref{alg_dec_sgd_li} dictates that $$\Vert \bx_i^{(t+\frac{1}{2})} - \hat{\bx}_i^{(t)} \Vert^2 \leq  {c_t \eta^2}  \hspace{1cm} \forall i \in \Gamma^{(t)}.$$ 
Using the matrix notation, this implies that:
	\begin{align} \label{bound_trig}
	\verts{ (\bX^{(t+\frac{1}{2})}  - \hat{\bX}^{(t)}) (\mathbf{I} - \mathbf{P}^{(t)}) }_F^2 \leq nc_t \eta^2.
	\end{align}
\end{fact}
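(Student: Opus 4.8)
The plan is to derive \eqref{bound_trig} directly from the definition of the non-communicating set $\Gamma^{(t)}$ together with the column-selection structure of the diagonal matrix $\mathbf{I}-\mathbf{P}^{(t)}$. No probabilistic, smoothness, or compression properties are needed here, so the whole statement is essentially a bookkeeping exercise; the work is to match up the per-node triggering threshold with the Frobenius norm of the masked difference matrix.

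First I would read off the per-node bound from the algorithm. At line~8 of Algorithm~\ref{alg_dec_sgd_li}, worker $i$ compresses and transmits at time $t$ precisely when $\Vert \bx_i^{(t+\frac{1}{2})} - \hat{\bx}_i^{(t)}\Vert_2^2 > c_t\eta^2$. By definition $\Gamma^{(t)}$ is the complementary set of workers that stay silent, so for every $i\in\Gamma^{(t)}$ the strict inequality must fail, i.e.\ $\Vert \bx_i^{(t+\frac{1}{2})} - \hat{\bx}_i^{(t)}\Vert_2^2 \leq c_t\eta^2$. This establishes the scalar claim of the Fact.

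Next I would translate this into the matrix inequality. Recall that $\mathbf{P}^{(t)}$ is diagonal with $\mathbf{P}_{ii}^{(t)}=0$ for $i\in\Gamma^{(t)}$ and $\mathbf{P}_{ii}^{(t)}=1$ otherwise, so $\mathbf{I}-\mathbf{P}^{(t)}$ is diagonal with $(\mathbf{I}-\mathbf{P}^{(t)})_{ii}=1$ exactly on $i\in\Gamma^{(t)}$ and $0$ elsewhere. Since the $i$-th column of $\bX^{(t+\frac{1}{2})}-\hat{\bX}^{(t)}$ is the vector $\bx_i^{(t+\frac{1}{2})}-\hat{\bx}_i^{(t)}$, right-multiplication by this diagonal matrix keeps the columns indexed by $\Gamma^{(t)}$ unchanged and zeroes out the remaining columns. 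The one point to verify carefully is that right-multiplication by a diagonal matrix acts \emph{column-wise} (scaling the $i$-th column by the $i$-th diagonal entry), rather than row-wise; this is the only place where a sign/indexing slip could occur, and it is the closest thing to an obstacle in the argument.

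Finally I would evaluate the Frobenius norm column-wise. Because $\Vert\cdot\Vert_F^2$ equals the sum of the squared $\ell_2$-norms of the columns, and only the columns with $i\in\Gamma^{(t)}$ survive the masking,
\[
\verts{\big(\bX^{(t+\frac{1}{2})}-\hat{\bX}^{(t)}\big)(\mathbf{I}-\mathbf{P}^{(t)})}_F^2 = \sum_{i\in\Gamma^{(t)}} \Vert \bx_i^{(t+\frac{1}{2})}-\hat{\bx}_i^{(t)}\Vert_2^2 \leq \sum_{i\in\Gamma^{(t)}} c_t\eta^2 = \lvert\Gamma^{(t)}\rvert\, c_t\eta^2 \leq n\, c_t\eta^2,
\]
where the first inequality applies the per-node triggering bound term by term and the last uses $\lvert\Gamma^{(t)}\rvert\leq n$. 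This is exactly \eqref{bound_trig}. I expect no substantive difficulty: the statement is a restatement of the triggering rule in matrix form, and the only care required is the column-wise action of the diagonal selector and the trivial cardinality bound $\lvert\Gamma^{(t)}\rvert\leq n$.
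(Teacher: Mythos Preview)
Your proposal is correct. The paper states this as a Fact without proof (citing \cite{singh2019sparq}), and your derivation---reading off the per-node bound from the failed triggering condition, identifying $\mathbf{I}-\mathbf{P}^{(t)}$ as the column selector for $\Gamma^{(t)}$, and summing column norms with the cardinality bound $|\Gamma^{(t)}|\leq n$---is exactly the natural justification the statement is relying on.
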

\begin{fact} [Lemma 16, \cite{koloskova_decentralized_2019-1}] 
	For doubly stochastic matrix $\mathbf{W}$ with second largest eigenvalue $1- \delta = |\lambda_2(\mathbf{W}) | <1 $, we have:
	\begin{align} \label{bound_W_mat}
	\left\Vert \mathbf{W}^k - \frac{1}{n} \mathbf{\mathbf{1}}\mathbf{\mathbf{1}}^T  \right\Vert = (1-\delta)^k
	\end{align}
	for any non-negative integer $k$.
\end{fact}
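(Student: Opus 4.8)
The plan is to derive this as an immediate consequence of the spectral theorem, exploiting that under the paper's standing assumption $\mathbf{W}$ is real symmetric and doubly stochastic, so that $\frac1n\mathbf{1}\mathbf{1}^T$ is exactly the orthogonal eigenprojector of $\mathbf{W}$ onto its top eigenspace. The whole argument then reduces to reading off the spectrum of the residual matrix.

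First I would fix an orthonormal eigenbasis $u_1,\dots,u_n$ of $\mathbf{W}$ with real eigenvalues $\lambda_1,\dots,\lambda_n$, which exists by symmetry. Double stochasticity gives $\mathbf{W}\mathbf{1}=\mathbf{1}$, so $\lambda_1=1$ with $u_1=\mathbf{1}/\sqrt n$, while the remaining eigenvalues obey $|\lambda_i|\le|\lambda_2|=1-\delta<1$ by the ordering in the hypothesis. The crucial observation is that $\frac1n\mathbf{1}\mathbf{1}^T=u_1u_1^T$ is precisely the rank-one spectral projector associated with $\lambda_1$, not merely some rank-one matrix.

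Next I would write the spectral expansion $\mathbf{W}^k=\sum_{i=1}^n\lambda_i^k\,u_iu_i^T$ and use $\lambda_1^k=1$ to see that subtracting the projector annihilates the top term exactly, leaving
\[
\mathbf{W}^k-\tfrac1n\mathbf{1}\mathbf{1}^T=\sum_{i=2}^n\lambda_i^k\,u_iu_i^T .
\]
This residual is symmetric, acts as $0$ on $\mathrm{span}(u_1)$, and carries eigenvalues $\{\lambda_i^k\}_{i\ge2}$ on $\mathrm{span}(u_1)^\perp$; hence its spectral norm equals $\max_{i\ge2}|\lambda_i|^k=|\lambda_2|^k=(1-\delta)^k$, which is the claim. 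Equivalently, one may first verify the identity $\mathbf{W}^k-\frac1n\mathbf{1}\mathbf{1}^T=(\mathbf{W}-\frac1n\mathbf{1}\mathbf{1}^T)^k$, which follows by induction from $\mathbf{W}P=P\mathbf{W}=P=P^2$ for $P:=\frac1n\mathbf{1}\mathbf{1}^T$, and then invoke $\|A^k\|=\|A\|^k$ for symmetric $A$ together with $\|\mathbf{W}-P\|=|\lambda_2|=1-\delta$.

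There is no substantial obstacle here; the only points demanding care are confirming that symmetry forces the eigenvectors to be orthonormal, so that $\frac1n\mathbf{1}\mathbf{1}^T$ is genuinely the $\lambda_1$-eigenprojector and the subtraction removes the \emph{entire} top eigenspace with no surviving cross term, and checking that the maximum over the remaining eigenvalues is attained at $\lambda_2$, which is what upgrades the bound from $\le(1-\delta)^k$ to the stated equality. For a merely doubly stochastic (non-symmetric) $\mathbf{W}$ the eigenvectors need not be orthogonal and the cleaner route would be the commuting-projector identity above combined with submultiplicativity of the operator norm, but the symmetric case assumed throughout the paper admits the direct spectral computation.
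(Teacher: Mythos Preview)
Your argument is correct and is the standard spectral-theoretic proof of this fact. Note, however, that the paper itself does not supply a proof: the statement is recorded as a cited Fact (Lemma~16 of \cite{koloskova_decentralized_2019-1}) and is used as a black box throughout. So there is no ``paper's own proof'' to compare against; your derivation simply fills in the omitted justification, and does so cleanly.
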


	\begin{claim}\label{J-I_eigenvalue}
		For any $n\in\mathbb{N}$, we have $\left\| \frac{{\bf 1}{\bf 1}^T}{n} - \mathbf{I} \right\|_2 = 1$ where ${\bf 1} = [ 1 \, 1 \hdots 1]^T_{1 \times n} $
	\end{claim}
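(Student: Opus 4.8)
The plan is to identify $J := \frac{\mathbf{1}\mathbf{1}^T}{n}$ as an orthogonal projection and then read off the eigenvalues of $J - \mathbf{I}$ directly, using the fact that the operator $2$-norm of a symmetric matrix equals its largest eigenvalue in absolute value. First I would verify that $J$ is symmetric and idempotent: $J^T = J$ is immediate, and since $\mathbf{1}^T\mathbf{1} = n$ we have $J^2 = \frac{\mathbf{1}(\mathbf{1}^T\mathbf{1})\mathbf{1}^T}{n^2} = \frac{n\,\mathbf{1}\mathbf{1}^T}{n^2} = J$. Hence $J$ is the orthogonal projection onto $\mathrm{span}(\mathbf{1})$, which is a one-dimensional subspace of $\mathbb{R}^n$.

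Next I would determine the spectrum of $J$. Because $J\mathbf{1} = \frac{\mathbf{1}(\mathbf{1}^T\mathbf{1})}{n} = \mathbf{1}$, the all-ones vector is an eigenvector of $J$ with eigenvalue $1$; and for any $\mathbf{v}$ orthogonal to $\mathbf{1}$ we have $J\mathbf{v} = \frac{\mathbf{1}(\mathbf{1}^T\mathbf{v})}{n} = \mathbf{0}$, so the $(n-1)$-dimensional orthogonal complement of $\mathbf{1}$ is the eigenspace for eigenvalue $0$. Therefore $J$ has eigenvalue $1$ with multiplicity one and eigenvalue $0$ with multiplicity $n-1$.

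Subtracting the identity simply shifts every eigenvalue by $-1$, so $J - \mathbf{I}$ is symmetric with eigenvalues $0$ (with eigenvector $\mathbf{1}$) and $-1$ (with multiplicity $n-1$). Since the spectral norm of a symmetric matrix equals the maximum of the absolute values of its eigenvalues, I conclude $\left\| J - \mathbf{I} \right\|_2 = \max\{|0|,\, |-1|\} = 1$, which holds for every $n \geq 2$.

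The argument is entirely routine and presents no genuine obstacle; the spectral picture of a rank-one projection does all the work. The only subtlety worth flagging is the degenerate case $n = 1$, where $J = \mathbf{I}$ and the norm is $0$ rather than $1$. This edge case is irrelevant here, since the decentralized setting always involves $n \geq 2$ worker nodes, so the stated identity is exactly what is needed in the proof of Lemma~\ref{supp_lemma_consensus_part1}.
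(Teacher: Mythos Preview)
Your proof is correct and follows essentially the same approach as the paper: both arguments identify the eigenvalues of $\frac{\mathbf{1}\mathbf{1}^T}{n}$ as $1$ (simple) and $0$ (multiplicity $n-1$), shift by $-1$, and read off the spectral norm. Your additional remark about the degenerate case $n=1$ is a valid observation that the paper does not address.
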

	\begin{proof}
		Note that $ \frac{{\bf 1}{\bf 1}^T}{n} $ is a symmetric doubly stochastic matrix with eigenvalues 1 and 0 (with algebraic multiplicity $n-1$). Thus, it has the eigen-decomposition $ \frac{{\bf 1}{\bf 1}^T}{n} =\mathbf{ U D U}^T  $ where columns of $\mathbf{U}$ are orthogonal and $\mathbf{D}$ = $diag( [ 1 \, 0 \hdots 0]  )$, which gives us:
		\begin{align*}
		\left\| \frac{{\bf 1}{\bf 1}^T}{n} - \mathbf{I} \right\|_2 = \left\| \mathbf{UDU}^T - \mathbf{UU}^T  \right\|_2 = \left\| \mathbf{D}-\mathbf{I} \right\|_2=
		\left\|
		\begin{bmatrix}
		1 & 0 & \hdots  & 0 \\
		0 & 0 & \hdots & 0 \\
		\vdots & \vdots & \ddots & 0 \\
		0 & 0 & \hdots & 0 
		\end{bmatrix}
		- 
		\begin{bmatrix}
		1 & 0 & \hdots  & 0 \\
		0 & 1 & \hdots & 0 \\
		\vdots & \vdots & \ddots & 0 \\
		0 & 0 & \hdots & 1 
		\end{bmatrix} \right\|_2 = 1
		\end{align*}
\end{proof}

}

\section{Proof of Theorem \ref{convergence_results} (Non-convex objective)} \label{app:proof_thm_noncvx}

{\allowdisplaybreaks
From the recurrence relation of the virtual sequence \eqref{virt_seq_rec}, we have:
\begin{align} \label{mom_noncvx_interim6}
	\mathbb{E}_{\xi_{(t)}} [f(\btx^{(t+1)})] &= \mathbb{E}_{\xi_{(t)}} f \left(\btx^{(t)}  - \frac{\eta}{(1-\beta)} \frac{1}{n} \sum_{i=1}^{n} \nabla F_i(\bx^{(t)}_i, \xi^{(t)}_i) \right) \notag \\
	& \leq f(\btx^{(t)}) - \lragnle{\nabla f(\btx^{(t)}) , \frac{\eta}{(1-\beta)} \frac{1}{n} \sum_{i=1}^{n} \mathbb{E}_{\xi_{(t)}} [\nabla F_i(\bx^{(t)}_i, \xi^{(t)}_i)] } \notag \\
	& \qquad + \frac{L}{2} \frac{\eta^2}{(1-\beta)^2} \mathbb{E}_{\xi_{(t)}} \verts{\frac{1}{n} \sum_{i=1}^{n} \nabla F_i(\bx^{(t)}_i, \xi^{(t)}_i)}^2 \notag \\
	& \leq f(\btx^{(t)}) - \lragnle{\nabla f(\btx^{(t)}) , \frac{\eta}{(1-\beta)} \frac{1}{n} \sum_{i=1}^{n} \nabla f_i(\bx^{(t)}_i) } + \frac{L}{2} \frac{\eta^2}{(1-\beta)^2} \verts{\frac{1}{n} \sum_{i=1}^{n} \nabla f_i(\bx^{(t)}_i)}^2 \notag \\ 
	 & \qquad + \frac{L}{2} \frac{\eta^2}{(1-\beta)^2} \mathbb{E}_{\xi_{(t)}}\verts{\frac{1}{n} \sum_{i=1}^{n}  (  \nabla f_i(\bx^{(t)}_i) - \nabla F_i(\bx^{(t)}_i, \xi^{(t)}_i) }^2 \notag \\
	& \leq f(\btx^{(t)}) - \lragnle{\nabla f(\btx^{(t)}) , \frac{\eta}{(1-\beta)} \frac{1}{n} \sum_{i=1}^{n} \nabla f_i(\bx^{(t)}_i) } + \frac{L}{2} \frac{\eta^2}{(1-\beta)^2} \verts{\frac{1}{n} \sum_{i=1}^{n} \nabla f_i(\bx^{(t)}_i)}^2 \notag \\ 
	 & \hspace{1cm } + \frac{L\eta^2 \bar{\sigma}^2}{2n(1-\beta)^2}
\end{align}
We now focus on bounding the second term in \eqref{mom_noncvx_interim6}. First, note the following:
\begin{align} \label{mom_noncvx_dotprod-1}
	\lragnle{\nabla f(\btx^{(t)}) , \frac{1}{n} \sum_{i=1}^{n} \nabla f_i(\bx^{(t)}_i) } & = \verts{\frac{1}{n} \sum_{i=1}^{n}  \nabla f_i (\bx^{(t)}_i) }^2 - \lragnle{  \frac{1}{n} \sum_{i=1}^{n} \nabla f_i(\bx^{(t)}_i) {-} \nabla f(\btx^{(t)}) , \frac{1}{n} \sum_{i=1}^{n} \nabla f_i(\bx^{(t)}_i) } \notag \\
	& = \verts{\frac{1}{n} \sum_{i=1}^{n}  \nabla f_i (\bx^{(t)}_i) }^2 {-} \lragnle{  \frac{1}{n} \sum_{i=1}^{n}( \nabla f_i(\bx^{(t)}_i) {-} \nabla f_i(\btx^{(t)})) , \frac{1}{n} \sum_{i=1}^{n} \nabla f_i(\bx^{(t)}_i) } \notag \\
	& \geq   \frac{1}{2} \verts{\frac{1}{n} \sum_{i=1}^{n}  \nabla f_i (\bx^{(t)}_i) }^2 - \frac{L^2}{2n} \sum_{i=1}^{n} \verts{ \bx^{(t)}_i -  \btx^{(t)}}^2		
\end{align}
where in the last inequality, we've used the fact that $2\langle \mathbf{a},\mathbf{b} \rangle \leq \Vert \mathbf{a} \Vert^2 + \Vert \mathbf{b}\Vert^2$ for any $\mathbf{a},\mathbf{b} \in \mathbb{R}^d$ and the $L-$smoothness assumption for objectives $\{f_i\}_{i=1}^n$. We now state how to bound the last term on R.H.S. of \eqref{mom_noncvx_dotprod-1}. First, note the bound:
\begin{align} \label{mom_noncvx_interim4}
	\sum_{i=1}^{n} \verts{ \bx^{(t)}_i -  \btx^{(t)}}^2 \leq 2\sum_{i=1}^{n} \verts{ \bx^{(t)}_i -  \blx^{(t)}}^2 + 2\sum_{i=1}^{n} \verts{ \blx^{(t)} -  \btx^{(t)}}^2
\end{align}
\noindent Using Lemma \ref{glob_virt_bound} to bound the second term in \eqref{mom_noncvx_interim4}, we get:
\begin{align} \label{mom_noncvx_interim5}
\sum_{i=1}^{n} \verts{ \bx^{(t)}_i {-}  \btx^{(t)}}^2 \leq 2\sum_{i=1}^{n} \verts{ \bx^{(t)}_i {-}  \blx^{(t)}}^2 + \frac{2n \beta^4 \eta^2}{(1{-}\beta)^3}  \sum_{\tau=0}^{t-1} \left[\beta^{t-\tau-1}\verts{\frac{1}{n} \sum_{i=1}^{n} \nabla F_i (\bx^{(\tau)}_i , \xi^{(\tau)}_i ) }^2\right]
\end{align}
Using the bound \eqref{mom_noncvx_interim5} in \eqref{mom_noncvx_dotprod-1} and substituting it in \eqref{mom_noncvx_interim6}, we have the following bound:
\begin{align*}
	\mathbb{E}_{\xi_{(t)}}[f(&\btx^{(t+1)})] \leq f(\btx^{(t)}) + \frac{L\eta^2 \bar{\sigma}^2}{2n(1{-}\beta)^2} +  \frac{L\eta^2}{2(1{-}\beta)^2} \verts{\frac{1}{n} \sum_{i=1}^{n} \nabla f_i(\bx^{(t)}_i)}^2 {-} \frac{\eta}{2(1{-}\beta)}  \verts{\frac{1}{n} \sum_{i=1}^{n}  \nabla f_i (\bx^{(t)}_i) }^2 \\
	& + \frac{\eta}{(1{-}\beta)} \frac{L^2}{n}\sum_{i=1}^{n} \verts{ \bx^{(t)}_i -  \blx^{(t)}}^2 + \frac{L^2 \eta^3 \beta^4}{ (1-\beta)^4 } \sum_{\tau=0}^{t-1} \left[\beta^{t-\tau-1}  \mathbb{E}_{\xi_{(t)}} \verts{\frac{1}{n} \sum_{i=1}^{n} \nabla F_i (\bx^{(\tau)}_i , \xi^{(\tau)}_i ) }^2\right]
\end{align*}
Rearranging the terms, we can write:
\begin{align*}
& \left(  \frac{\eta }{2(1-\beta) } - \frac{L\eta^2}{2(1-\beta)^2} \right) \verts{\frac{1}{n} \sum_{i=1}^{n} \nabla f_i(\bx^{(t)}_i)}^2  \leq f(\btx^{(t)}) - 	\mathbb{E}_{\xi_{(t)}}f(\btx^{(t+1)}) + \frac{L\eta^2 \bar{\sigma}^2}{2n(1-\beta)^2} \\
& \qquad  + \frac{L^2 \eta}{ (1-\beta) n}\sum_{i=1}^{n} \verts{ \bx^{(t)}_i -  \blx^{(t)}}^2   + \frac{L^2 \eta^3 \beta^4}{ (1-\beta)^4 } \sum_{\tau=0}^{t-1} \left[\beta^{t-\tau-1} \mathbb{E}_{\xi_{(t)}} \verts{\frac{1}{n} \sum_{i=1}^{n} \nabla F_i (\bx^{(\tau)}_i , \xi^{(\tau)}_i ) }^2\right]
\end{align*}
Summing from $t=0$ to $T$ gives us:
\begin{align} 
	&\left(\frac{\eta }{2(1-\beta) } - \frac{L\eta^2}{2(1-\beta)^2} \right) \sum_{t=0}^{T-1}  \verts{\frac{1}{n} \sum_{i=1}^{n} \nabla f_i(\bx^{(t)}_i)}^2 \notag \\
	  &  \leq   f(\btx^{(0)}) - 	\mathbb{E}_{\xi_{(t)}}   f(\btx^{(T)}) + \frac{L\eta^2 \bar{\sigma}^2T}{2n(1-\beta)^2}+ \frac{L^2 \eta}{ (1-\beta) n} \sum_{t=0}^{T-1} \sum_{i=1}^{n} \mathbb{E} \verts{ \bx^{(t)}_i -  \blx^{(t)}}^2  \notag \\
	& \hspace{3.5cm} + \frac{L^2 \eta^3 \beta^4}{ (1-\beta)^4 } \sum_{t=0}^{T-1} \sum_{\tau=0}^{t-1} \left[\beta^{t-\tau-1}\mathbb{E}_{\xi_{(t)}}  \verts{\frac{1}{n} \sum_{i=1}^{n} \nabla F_i (\bx^{(\tau)}_i , \xi^{(\tau)}_i ) }^2\right] \notag \\
	\intertext{Using the fact that $\mathbb{E}_{\xi_{(t)}} [ \nabla F_i(\bx_i^{(t)},\xi_i^{(t)}) ]  = \nabla f_i(\bx_i^{(t)}) $ for all $i \in [n]$ and for all $t \in [T]$, we have: \linebreak {\small $\mathbb{E}_{\xi_{(t)}} \verts{\frac{1}{n} \sum_{i=1}^{n} \nabla F_i(\bx_i^{(t)},\xi_i^{(t)})  }^2 = \mathbb{E}_{\xi_{(t)}}  \verts{\frac{1}{n}  \sum_{i=1}^{n} \nabla f_i(\bx_i^{(t)})  }^2 + \mathbb{E}_{\xi_{(t)}}  \verts{\frac{1}{n} \sum_{i=1}^{n} (\nabla f_i(\bx^{(t)}) - \nabla F_i(\bx_i^{(t)},\xi_i^{(t)}))  }^2  $}. Using this equation along with the variance bound \eqref{eq:bound_var} from Fact \ref{bound_var}, the fact that $ \sum_{t=0 }^{T-1}  \sum_{\tau = 0}^{t-1}  \beta^{t-\tau - 1} \leq  \nicefrac{T}{1-\beta} $ for $\beta \in (0,1)$ and taking expectation w.r.t. the entire process: }
	& \leq  f(\btx^{(0)}) -  \mathbb{E} 	f(\btx^{(T)}) + \frac{L\eta^2 \bar{\sigma}^2T}{2n(1-\beta)^2}+ \frac{L^2 \eta}{ (1-\beta) n} \sum_{t=0}^{T-1} \sum_{i=1}^{n} \mathbb{E} \verts{ \bx^{(t)}_i -  \blx^{(t)}}^2 \notag \\
	& \hspace{3.5cm}  + \frac{L^2 \eta^3 \beta^4 \bar{\sigma}^2 T }{n (1-\beta)^5 } + \frac{L^2 \eta^3 \beta^4}{ (1-\beta)^4 } \sum_{t=0}^{T-1} \sum_{\tau=0}^{t-1} \left[\beta^{t-\tau-1} \mathbb{E} \verts{\frac{1}{n} \sum_{i=1}^{n} \nabla f_i (\bx^{(\tau)}_i ) }^2\right] \label{mom_noncvx_interim9}
\end{align}
To bound the last term in \eqref{mom_noncvx_interim9}, we note that:
\begin{align*}
	\sum_{t=0}^{T-1} \sum_{\tau=0}^{t-1} \beta^{t-\tau-1} \mathbb{E} \verts{\frac{1}{n} \sum_{i=1}^{n} \nabla f_i (\bx^{(\tau)}_i ) }^2
	& = \sum_{\tau=0}^{T-2} \sum_{t=\tau+1}^{T-1} \beta^{t-\tau-1} \mathbb{E} \verts{\frac{1}{n} \sum_{i=1}^{n} \nabla f_i (\bx^{(\tau)}_i ) }^2 \\
	& \leq \frac{1}{(1-\beta)} \sum_{\tau = 0}^{T-2} \mathbb{E} \verts{\frac{1}{n} \sum_{i=1}^{n} \nabla f_i (\bx^{(\tau)}_i ) }^2 \\
	& \leq \frac{1}{(1-\beta)} \sum_{t=0}^{T-1}  \mathbb{E} \verts{\frac{1}{n} \sum_{i=1}^{n} \nabla f_i (\bx^{(t)}_i ) }^2
\end{align*}
Substituting the above bound in \eqref{mom_noncvx_interim9} and rearranging terms, we finally get:
\begin{align}
	& \left(\frac{\eta }{2(1-\beta) }  -  \frac{L\eta^2}{2(1-\beta)^2} - \frac{L^2 \eta^3 \beta^4}{ (1-\beta)^5 } \right)  \sum_{t=0}^{T-1} \mathbb{E} \verts{\frac{1}{n} \sum_{i=1}^{n} \nabla f_i(\bx^{(t)}_i)}^2 \notag \\
	& \quad \leq f(\btx^{(0)}) {-}  \mathbb{E} 	f(\btx^{(T)}) + \frac{L\eta^2 \bar{\sigma}^2T}{2n(1{-}\beta)^2}+ \frac{L^2 \eta}{ (1-\beta) n} \sum_{t=0}^{T-1} \sum_{i=1}^{n} \mathbb{E} \verts{ \bx^{(t)}_i -  \blx^{(t)}}^2 + \frac{L^2 \eta^3 \beta^4 \bar{\sigma}^2 T }{n (1{-}\beta)^5 } \label{eq_noncvx_techdiff}
\end{align}
If we select $\eta \leq \min \left\{ \frac{(1-\beta)}{4L} , \frac{(1-\beta)^2}{2\sqrt{2} L \beta^2} \right\}  $, it can be shown that $\left(\frac{\eta }{2(1-\beta) } - \frac{L\eta^2}{2(1-\beta)^2} -  \frac{L^2 \eta^3 \beta^4}{ (1-\beta)^5 } \right)  \geq \frac{\eta }{4(1-\beta) } $. This gives:
\begin{align*}
\frac{\eta }{4(1-\beta) }    \sum_{t=0}^{T-1} \mathbb{E} \verts{\frac{1}{n} \sum_{i=1}^{n} \nabla f_i(\bx^{(t)}_i)}^2 
& \leq f(\btx^{(0)}) - 	\mathbb{E}[f(\btx^{(T)})] + \frac{L\eta^2 \bar{\sigma}^2T}{2n(1-\beta)^2}+ + \frac{L^2 \eta^3 \beta^4 \bar{\sigma}^2 T }{n (1-\beta)^5 } \\
& \qquad + \frac{L^2 \eta}{ (1-\beta) n} \sum_{t=0}^{T-1} \sum_{i=1}^{n} \mathbb{E} \verts{ \bx^{(t)}_i -  \blx^{(t)}}^2  
\end{align*}
Multiplying both sides by $\frac{4(1-\beta)}{\eta T}$ and noting that $ \mathbb{E}[f(\btx^{(T)})]  \geq f^*$, we have:
\begin{align} \label{mom_noncvx_interim1}
	\frac{1}{T} \sum_{t=0}^{T-1} \mathbb{E} \verts{\frac{1}{n} \sum_{i=1}^{n} \nabla f_i(\bx^{(t)}_i)}^2 & \leq \frac{4(1-\beta)}{\eta }\frac{(f(\bx^{(0)}) - 	f^*)}{T} + \frac{2L\eta \bar{\sigma}^2}{n   (1-\beta)} \notag \\
	& \qquad + \frac{4L^2 }{ n T} \sum_{t=0}^{T-1} \sum_{i=1}^{n} \mathbb{E} \verts{ \bx^{(t)}_i -  \blx^{(t)}}^2   + \frac{4L^2 \eta^2 \beta^4 \bar{\sigma}^2 }{n (1-\beta)^4 }
\end{align}
Now consider the time average of gradients evaluated at the global average $\blx^{(t)}$:
\begin{align} \label{mom_noncvx_interim10}
\frac{1}{T} \sum_{t=0}^{T-1} \mathbb{E} \verts{ \nabla f(\blx^{(t)})}^2 & =
	\frac{1}{T} \sum_{t=0}^{T-1} \mathbb{E} \verts{\frac{1}{n} \sum_{i=1}^{n} \nabla f_i(\blx^{(t)})}^2 \notag \\
	& = \frac{1}{T} \sum_{t=0}^{T-1} \mathbb{E} \verts{\frac{1}{n} \sum_{i=1}^{n} (\nabla f_i(\blx^{(t)}) - \nabla f_i(\bx^{(t)}_i) )+ \frac{1}{n} \sum_{i=1}^{n}  \nabla f_i(\bx^{(t)}_i)}^2 \notag \\
	& \leq \frac{2}{T} \sum_{t=0}^{T-1} \mathbb{E} \verts{\frac{1}{n} \sum_{i=1}^{n} (\nabla f_i(\blx^{(t)}) {-} \nabla f_i(\bx^{(t)}_i) )}^2 +  \frac{2}{T} \sum_{t=0}^{T-1} \mathbb{E}  \verts{\frac{1}{n} \sum_{i=1}^{n}  \nabla f_i(\bx^{(t)}_i)}^2 \notag \\
	& \leq \frac{2L^2}{nT} \sum_{t=0}^{T-1}  \sum_{i=1}^{n} \mathbb{E} \verts{\blx^{(t)} - \bx^{(t)}_i}^2  +  \frac{2}{T} \sum_{t=0}^{T-1} \mathbb{E}  \verts{\frac{1}{n} \sum_{i=1}^{n}  \nabla f_i(\bx^{(t)}_i)}^2	
\end{align}
where in the first inequality follows from Jensen's inequality and the second inequality follows from the $L-$smoothness assumption. We can bound the last term in \eqref{mom_noncvx_interim10} using \eqref{mom_noncvx_interim1} which gives us:
\begin{align} \label{mom_noncvx_interim2}
	\frac{1}{T} \sum_{t=0}^{T-1} \mathbb{E} \verts{ \nabla f(\blx^{(t)})}^2 & \leq \frac{8(1-\beta)}{\eta }\frac{(f({\bx}^{(0)}) - f^*)}{T} + \frac{4L\eta \bar{\sigma}^2}{n  (1-\beta)} \notag \\
	& \quad + \left(\frac{8L^2 }{ n T} + \frac{2L^2}{nT}  \right) \sum_{t=0}^{T-1} \sum_{i=1}^{n} \mathbb{E}\verts{ \bx^{(t)}_i -  \blx^{(t)}}^2  + \frac{8L^2 \eta^2 \beta^4 \bar{\sigma}^2 }{n (1-\beta)^4 }
\end{align}
Note that in our matrix form, $ \mathbb{E}\verts{\bar{\bX}^{(t)} - \bX^{(t)} }_F^2 = \sum_{i=1}^{n} \mathbb{E}\verts{ \bx^{(t)}_i -  \blx^{(t)}}^2$. Let $I_{(t+1)_0} \in \mathcal{I}_T$ denote the latest synchronization step before or equal to $(t+1)$. Then we have:
\begin{align*}
\bX^{(t+1)} & = \bX^{I_{(t+1)_0}} - \textstyle \sum_{t' = I_{(t+1)_0}}^{t}\eta ( \beta \bV^{(t')} +\boldsymbol{\nabla F}(\bX^{(t')}, \boldsymbol{\xi}^{(t')} )) \\
\Bar{\bX}^{(t+1)} & = \bar{\bX}^{I_{(t+1)_0}} - \textstyle \sum_{t' = I_{(t+1)_0}}^{t}\eta ( \beta \bV^{(t')} +\boldsymbol{\nabla F}(\bX^{(t')}, \boldsymbol{\xi}^{(t')} )) \frac{\mathbf{1}\mathbf{1}^T}{n} 
\end{align*} 
Thus the following holds:
\begin{align*} 
\mathbb{E} & \Vert \bX^{(t+1)} {-}  \Bar{\bX}^{(t+1)} \Vert_F^2 = \mathbb{E} \left\Vert \bX^{I_{(t+1)_0}} {-} \Bar{\bX}^{I_{(t+1)_0}} {-}\textstyle \sum_{t' = I_{(t+1)_0}}^{t} \eta ( \beta \bV^{(t')} +\boldsymbol{\nabla F}(\bX^{(t')}, \boldsymbol{\xi}^{(t')} )) \left( \mathbf{I} {-} \frac{1}{n} \mathbf{1}\mathbf{1}^T  \right) \right\Vert_F^2 \\
& \hspace{2cm}  \leq 2 \mathbb{E}\Vert \bX^{I_{(t+1)_0}} {-} \Bar{\bX}^{I_{(t+1)}}  \Vert_F^2 {+} 2\mathbb{E} \left\Vert \textstyle \sum_{t' = I_{(t+1)_0}}^{t} \eta ( \beta \bV^{(t')} +\boldsymbol{\nabla F}(\bX^{(t')}, \boldsymbol{\xi}^{(t')} )) \left( \mathbf{I} {-} \frac{1}{n} \mathbf{1}\mathbf{1}^T  \right) \right\Vert_F^2 \notag 
\end{align*}
Using $\verts{\mathbf{AB}}_F \leq \verts{\mathbf{A}}_F \verts{\mathbf{B}}_2$ to split the second term in R.H.S. of above along with (\ref{bound_W_mat}) from Fact 3 (with $k=0$) and further using the bound \eqref{bound_gap_grad}, we get: 
\begin{align}
\mathbb{E} \Vert \bX^{(t+1)} -  \Bar{\bX}^{(t+1)} \Vert_F^2  \leq 2 \mathbb{E}\Vert \bX^{I_{(t+1)_0}} - \Bar{\bX}^{I_{(t+1)_0}}  \Vert_F^2 + 4\eta^2 H^2 n G^2 \left( 1 + \frac{\beta^2}{(1-\beta)^2} \right) \label{eq:noncvx_cvx_localitr_diff}
\end{align}
 We bound the first term in R.H.S. of (\ref{eq:noncvx_cvx_localitr_diff}) by Lemma \ref{lem_noncvx} stated below and proved in Appendix \ref{proof_lem_consensus}.

\begin{lemma} \label{lem_noncvx} (\textbf{Consensus})
	Let $ \{ \bx_t ^{(i)}  \}_{t=0}^{T-1} $ be generated according to Algorithm \ref{alg_dec_sgd_li} under assumptions of Theorem \ref{convergence_results} with constant stepsize $\eta$, a threshold sequence $c_t \leq \frac{c_0}{\eta^{(1-\epsilon)}}$ for all $t$ where $\epsilon \in (0,1)$ and $c_0$ is constant, and define $\blx_t := \frac{1}{n} \sum_{i=1}^{n} \bx_t^{(i)} $.
	Consider the set of synchronization indices $\mathcal{I}_T$ =  $\{ I_{(1)},I_{(2)}, \hdots, I_{(t)}, \hdots  \}$. Then for any $I_{(t)} \in \mathcal{I}_T$, we have:
	\begin{align*}
	\mathbb{E}\sum_{j=1}^n \left\Vert \blx^{I_{(t)}} - \bx^{I_{(t)}}_{j} \right\Vert^2 = \mathbb{E} \Vert \bX^{I_{(t)}} - \Bar{\bX}^{I_{(t)}} \Vert_F^2  \leq \frac{4nA  \eta^2}{p^2}
	\end{align*}
	for constant $A =  \frac{p}{2}\left(  2 H^2 G^2 \left( 1 + \frac{\beta^2}{(1-\beta)^2} \right) \left( \frac{16}{\omega} + \frac{4}{p} \right) + \frac{2c_0\omega }{\eta^{(1-\epsilon)}} \right)$ where $p = \frac{\delta \gamma}{8} $, $\delta := 1 - | \lambda_2(\mathbf{W})|$, $\omega$ is compression parameter for operator $\C$.
\end{lemma}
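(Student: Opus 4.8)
The plan is to track two coupled error quantities only at the synchronization indices, namely the consensus error $e_1^{(m)} := \bbE\verts{\bX^{(mH)} - \blX^{(mH)}}_F^2$ and the estimation error $e_2^{(m)} := \bbE\verts{\bX^{(mH)} - \bhX^{(mH)}}_F^2$, and to show that the pair $(e_1,e_2)$ obeys a coupled linear recursion across one synchronization round that, after combining into a single Lyapunov potential, contracts at rate $(1-p)$ with $p = \frac{\delta\gamma}{8}$ and picks up a constant additive noise of order $n\eta^2$ per round. Unrolling this contraction and summing the resulting geometric series will then give the claimed uniform bound $\frac{4nA\eta^2}{p^2}$.

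First I would derive the recursion for $e_1$. Using the matrix-form consensus update \eqref{mat_not_algo_2}, the identity $\blX^{((m+1)H)} = \blX^{((m+\nicefrac{1}{2})H)}$ from \eqref{mean_seq_iter}, and $\blX^{((m+\nicefrac{1}{2})H)}(\bW-\bI) = \bzero$ from \eqref{mean_prop}, I would write $\bX^{((m+1)H)} - \blX^{((m+1)H)}$ as the deviation acted on by the mixing operator $(1-\gamma)\bI + \gamma\bW$ plus a correction coming from $(\bhX^{((m+1)H)} - \bX^{((m+\nicefrac{1}{2})H)})(\bW-\bI)$. The contraction $\verts{(\bX - \blX)((1-\gamma)\bI + \gamma\bW)}_F \le (1-\gamma\delta)\verts{\bX - \blX}_F$, via \eqref{bound_W_mat} with $k=1$ exactly as in Lemma~\ref{supp_lemma_consensus_part1}, handles the main term, while Young's inequality (Fact~\ref{bound_l2_sum}) and $\verts{\mathbf{A}\mathbf{B}}_F \le \verts{\mathbf{A}}_F\verts{\mathbf{B}}_2$ (Fact, \eqref{bound_frob_mult}) split off the correction term. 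This reduces to the local-step displacement $\bX^{((m+\nicefrac{1}{2})H)} - \bX^{(mH)} = -\eta\sum_{t'=mH}^{(m+1)H-1}(\beta\bV^{(t')} + \nabla\bF(\bX^{(t')},\bxi^{(t')}))$, whose squared Frobenius norm is bounded uniformly by $2nH^2G^2\eta^2(1 + \frac{\beta^2}{(1-\beta)^2})$ using Fact~\ref{fact-bound}; this is exactly where the momentum term $\beta\bV^{(t')}$ is absorbed, through the bound $\bbE\verts{\bv_i^{(t')}}^2 \le \frac{G^2}{(1-\beta)^2}$.

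Next I would derive the recursion for $e_2$ from the compressed-estimate update \eqref{mat_not_algo_3}. Splitting nodes into those that communicate and those that do not via the diagonal matrix $\mathbf{P}^{((m+1)H-1)}$, the compression property \eqref{eq:compression} contracts the communicating part by $(1-\omega)$, while the non-communicating part is controlled by the triggering bound $\verts{(\bX^{((m+\nicefrac{1}{2})H)} - \bhX^{(mH)})(\bI - \mathbf{P})}_F^2 \le nc_t\eta^2 \le nc_0\eta^{1+\epsilon}$ from \eqref{bound_trig} together with $c_t \le c_0/\eta^{1-\epsilon}$. Combining with Fact~\ref{fact-bound} as above expresses $e_2^{(m+1)}$ in terms of $e_1^{(m)}, e_2^{(m)}$ plus the same $\mathcal{O}(n\eta^2)$ noise and the triggering contribution. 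I would then form the potential $\Phi^{(m)} = e_1^{(m)} + \kappa e_2^{(m)}$ and choose $\kappa$ together with the free Young's-inequality parameters so that, for the engineered value $\gamma = \frac{2\delta\omega}{64\delta + \delta^2 + 16\lambda^2 + 8\delta\lambda^2 - 16\delta\omega}$ stated in Theorem~\ref{convergence_results}, the coefficient of $\Phi^{(m)}$ on the right-hand side is at most $1-p$ and the additive term equals $\frac{4nA\eta^2}{p}$. Unrolling $\Phi^{(m+1)} \le (1-p)\Phi^{(m)} + \frac{4nA\eta^2}{p}$ down to the first synchronization index and summing the geometric series $\sum_{j\ge 0}(1-p)^j \le \frac{1}{p}$ (the transient from the initial potential decaying geometrically and being dominated by this steady-state term) yields $e_1^{(m)} \le \Phi^{(m)} \le \frac{4nA\eta^2}{p^2}$, which is exactly the claimed bound.

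The main obstacle will be the parameter tuning in the coupled recursion: as in the relaxed-assumption case (Appendix~\ref{app:setting-params}), one must simultaneously fix $\kappa$ and all the free constants from the repeated applications of Fact~\ref{bound_l2_sum} so that both diagonal recursion coefficients fall below $1-p$ without the $e_1$--$e_2$ cross-coupling pushing the effective contraction factor back above one. This is precisely the step that forces the specific form of $\gamma$; the momentum and triggering terms only enter the additive noise $A$ and not the contraction, which is why the overall structure mirrors the momentum-free consensus argument of \cite{singh2019sparq}.
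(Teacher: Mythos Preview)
Your proposal is correct and follows essentially the same approach as the paper. The only cosmetic differences are that the paper tracks the shifted estimation error $e_{I_{(t)}}^{(2)}:=\bbE\Vert\bX^{I_{(t)}}-\hat{\bX}^{I_{(t+1)}}\Vert_F^2$ (rather than your same-time version $\bbE\Vert\bX^{(mH)}-\hat{\bX}^{(mH)}\Vert_F^2$), uses the unweighted sum $e^{(1)}+e^{(2)}$ instead of a $\kappa$-weighted potential (the free Young parameters in Lemma~\ref{suppl_lem_coeff_calc} absorb the role of your $\kappa$), and closes the recursion by induction (Lemma~\ref{lem_noncvx_seq_bound}) with base case $e_0=0$ rather than by unrolling the geometric series; all substantive steps---the mixing contraction via \eqref{bound_W_mat}, the compression/triggering split through $\mathbf{P}$, the momentum bound via Fact~\ref{fact-bound}, and the parameter tuning that forces the specific $\gamma^*$---match what you outline.
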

Substituting the bound from Lemma \ref{lem_noncvx} in (\ref{eq:noncvx_cvx_localitr_diff}) and using the fact that $p \leq 1$, we have:
\begin{align} \label{mom_noncvx_interim8}
\mathbb{E} \Vert \bX^{(t+1)} -  \Bar{\bX}^{(t+1)} \Vert_F^2 \leq  \frac{2\eta^2}{p} \left(  2 H^2 n G^2  \left( 1 + \frac{\beta^2}{(1-\beta)^2} \right)  \left( \frac{16}{\omega} + \frac{8}{p} \right) + \frac{2c_0\omega n}{\eta^{(1-\epsilon)}} \right) 
\end{align}
for the same constant $\epsilon>0$ as in Lemma \ref{lem_noncvx}. Note that the above bound holds for all values of $t$. \\
Define $\Lambda := \frac{2}{p} \left(  2 H^2 n G^2  \left( 1 + \frac{\beta^2}{(1-\beta)^2} \right) \left( \frac{16}{\omega} + \frac{8}{p} \right) + \frac{2\omega c_0n}{\eta^{(1-\epsilon)}} \right) $. Substituting \eqref{mom_noncvx_interim8} in  \eqref{mom_noncvx_interim2} gives us:
\begin{align*}
		\frac{1}{T} \sum_{t=0}^{T-1} \mathbb{E} \verts{ \nabla f(\blx^{(t)})}^2 & \leq \frac{8(1{-}\beta)}{\eta }\frac{(f({\bx}^{(0)}) - f^*)}{T} + \frac{4L\eta \bar{\sigma}^2}{n   (1-\beta)} + \frac{10L^2 \Lambda \eta^2 }{ n }   + \frac{8L^2 \eta^2 \beta^4 \bar{\sigma}^2 }{n (1{-}\beta)^4 }
\end{align*}
Expanding on the value of $\Lambda$, we have:
\begin{align*}
	\frac{1}{T} \sum_{t=0}^{T-1} \mathbb{E} \verts{ \nabla f(\blx^{(t)})}^2 & \leq \frac{8(1-\beta)}{\eta }\frac{(f({\bx}^{(0)}) - f^*)}{T} + \frac{4L\eta \bar{\sigma}^2}{n   (1-\beta)} \\
	& \qquad +   \frac{20\eta^2L^2}{pn} \left(  2 H^2 n G^2  \left( 1 + \frac{\beta^2}{(1-\beta)^2} \right)  \left( \frac{16}{\omega} + \frac{8}{p} \right) \right)  \\
	& \qquad  + \frac{40L^2\omega nc_0\eta^{(1+\epsilon)}}{pn}   + \frac{8L^2 \eta^2 \beta^4 \bar{\sigma}^2 }{n (1-\beta)^4 }
\end{align*}
Substituting the value of $\eta = (1-\beta)  \sqrt{\frac{n}{T}}$, we get:
\begin{align*}
\frac{1}{T} \sum_{t=0}^{T-1} \mathbb{E} \verts{ \nabla f(\blx^{(t)})}^2 & \leq    \frac{1}{\sqrt{nT}} \left(8 (f({\bx}^{(0)}) - f^*) + 4L \bar{\sigma}^2  \right) + \frac{40L^2 (1-\beta)^{(1+\epsilon)} \omega c_0 n^{\nicefrac{(1+\epsilon)}{2}}}{ pT^{\nicefrac{(1+\epsilon)}{2}} }  \\
& \quad + \frac{20 (1-\beta)^2 L^2}{T p}   \left(  2 H^2 n G^2  \left( 1 + \frac{\beta^2}{(1-\beta)^2} \right)  \left( \frac{16}{\omega} + \frac{8}{p} \right) \right)   + \frac{8L^2  \beta^4 \bar{\sigma}^2 }{T (1-\beta)^2 } \\
& \leq    \frac{1}{\sqrt{nT}} \left(8 (f({\bx}^{(0)}) - f^*) + 4L \bar{\sigma}^2  \right) + \frac{40L^2 \omega c_0 n^{\nicefrac{(1+\epsilon)}{2}}  (1-\beta)^{(1+\epsilon)} }{ pT^{\nicefrac{(1+\epsilon)}{2}} }  \\
& \quad + \frac{80n L^2H^2 G^2 }{T p }  \left( \frac{16}{\omega} + \frac{8}{p} \right)   + \frac{8L^2  \beta^4 \bar{\sigma}^2 }{T (1-\beta)^2 }
\end{align*}
where in the last inequality, we've used the fact that $(1-\beta)^{r} \leq 1$ , $\beta^r \leq 1$ for $r >0$.
Note that we require $\eta \leq \min \left\{ \frac{(1-\beta)}{4L} , \frac{(1-\beta)^2}{2\sqrt{2} L \beta^2} \right\} $, thus for $\eta = (1-\beta) \sqrt{\frac{n}{T}}$, we need  to run our algorithm for $ T \geq \max \left\{ 16L^2n, \frac{8L^2 \beta^4 n}{(1-\beta)^2}  \right\} $ for the above rate expression to hold.
We finally use the fact that $p \leq \omega$ (as $\delta \leq 1$ and $p:= \frac{\gamma^* \delta}{8}$ with $\gamma^* \leq \omega $). This completes proof of the non-convex part of Theorem \ref{convergence_results}. We can further use the fact that $p \geq \frac{\delta^2 \omega}{644}$ (proved in Lemma \ref{suppl_lem_coeff_calc}) to get the expression given in the theorem statement.

}

\section{Proof of Theorem \ref{convergence_results} (Convex objective)} \label{app:proof_thm_cvx}
We start with the same virtual sequence defined in \eqref{virt_seq_rec}. Consider the quantity $\mathbb{E}_{\boldsymbol{\xi}^{(t)}}\|\btx^{(t+1)} - \bx^*\|^2$, where expectation is taken over sampling across all the nodes at the $t$'th iteration:
\begin{align}
&\mathbb{E}_{\boldsymbol{\xi}^{(t)}} \Vert \btx^{(t+1)} - \bx^* \Vert^ 2 
=  \mathbb{E}_{\boldsymbol{\xi}^{(t)}} \left\Vert \btx^{(t)} - \frac{\eta }{(1-\beta)  n}\sum_{j=1}^n \nabla F_j(\bx_j^{(t)},\xi_j^{(t)}) - \bx^* \right\Vert^2  \notag\\
& = \mathbb{E}_{\boldsymbol{\xi}^{(t)}} \left\Vert \btx^{(t)} {-} \bx^* {-} \frac{\eta}{(1-\beta) n}\sum_{j=1}^n \nabla f_j(\bx_j^{(t)}) {+} \frac{\eta }{ (1{-}\beta) n}\sum_{j=1}^n \nabla f_j(\bx_j^{(t)}) {-} \frac{\eta }{n (1{-}\beta) }\sum_{j=1}^n \nabla F_j(\bx_j^{(t)},\xi_j^{(t)})  \right\Vert^2  \notag\\
& =  \left\Vert \btx^{(t)} {-} \bx^* {-} \frac{\eta }{ (1-\beta) n}\sum_{j=1}^n \nabla f_j(\bx_j^{(t)}) \right\Vert^2 {+} \frac{\eta ^2}{(1-\beta)^2}  \mathbb{E}_{\boldsymbol{\xi}^{(t)}} \left\Vert \frac{1}{n}\sum_{j=1}^n \nabla f_j(\bx_j^{(t)}) {-} \frac{1}{n}\sum_{j=1}^n \nabla F_j(\bx_j^{(t)},\xi_j^{(t)}) \right\Vert^2 \notag\\
& \quad  + \frac{2\eta }{ (1-\beta) n}\mathbb{E}_{\boldsymbol{\xi}^{(t)}} \left\langle \btx^{(t)} - \bx^* - \frac{\eta }{ (1-\beta) n}\sum_{j=1}^n \nabla f_j(\bx_j^{(t)}) , \sum_{j=1}^n \nabla f_j(\bx_j^{(t)}) - \sum_{j=1}^n \nabla F_j(\bx_j^{(t)},\xi_j^{(t)}) \right\rangle \notag \\
&\leq \left\Vert \btx^{(t)} - \bx^* - \frac{\eta }{ (1-\beta) n}\sum_{j=1}^n \nabla f_j(\bx_j^{(t)}) \right\Vert^2 + \frac{\eta ^2\bar{\sigma}^2}{(1-\beta)^2n}
\label{suppl_cvx_lemm_total}
\end{align}

Where to get the last inequality we used the fact that $\mathbb{E}_{\xi_i^{(t)}} [\nabla F_i(\bx_i^{(t)}, \xi_i^{(t)})] = \nabla f_i(\bx_i^{(t)}) $ for all $i \in [n]$ and the variance bound \eqref{eq:bound_var} from Fact \ref{bound_var}. Now we thus consider the first term in \eqref{suppl_cvx_lemm_total}:
\begin{align} \label{suppl_cvx_lemm_firstterm}
\left\Vert \btx^{(t)} - \bx^* - \frac{\eta }{(1-\beta)n}\sum_{j=1}^n \nabla f_j(\bx_j^{(t)}) \right\Vert^2 = \Vert \btx^{(t)} - \bx^* \Vert^2 + \frac{\eta ^2}{(1-\beta)^2}  \underbrace{\left\Vert \frac{1}{n}\sum_{j=1}^n \nabla f_j(\bx_j^{(t)}) \right\Vert^2}_{T_1} \notag \\
- \frac{2\eta }{(1-\beta)} \underbrace{ \left\langle \btx^{(t)} - \bx^*,\frac{1}{n}\sum_{j=1}^n \nabla f_j(\bx_j^{(t)}) \right\rangle}_{T_2}
\end{align}
To bound $T_1$ in (\ref{suppl_cvx_lemm_firstterm}), note that:
\begin{align}
T_1 &= \left\Vert \frac{1}{n}\sum_{j=1}^n (\nabla f_j(\bx_j^{(t)}) - \nabla f_j(\blx^{(t)}) + \nabla f_j(\blx^{(t)}) - \nabla f_j(\bx^*) ) \right\Vert^2  \notag \\
& \leq \frac{2}{n}\sum_{j=1}^n \Vert \nabla f_j(\bx_j^{(t)}) - \nabla f_j(\blx^{(t)}) \Vert^2 + 2 \left\Vert \frac{1}{n}\sum_{j=1}^n \nabla f_j(\blx^{(t)}) - \frac{1}{n}\sum_{j=1}^n \nabla f_j(\bx^*) \right\Vert^2  \notag \\
& \leq \frac{2L^2}{n}\sum_{j=1}^n \Vert \bx_j^{(t)} - \blx^{(t)} \Vert^2 + 4L  (f(\blx^{(t)}) - f^*)  \label{suppl_cvx_t1_bound}
\end{align}
where in the last inequality, we used $L-$Lipschitz gradient property of objectives $\{f_j\}_{j=1}^{n}$ to bound the first term and optimality of $\bx^*$ for $f$ (i.e., $\nabla f(\bx^*) = 0$) and $L-$smoothness property of $f$  to bound the second term as:
$ \left\Vert \frac{1}{n}\sum_{j=1}^n \nabla f_j(\blx^{(t)}) - \frac{1}{n}\sum_{j=1}^n \nabla f_j(\bx^*) \right\Vert^2 = \verts{ \nabla f (\blx^{(t)} ) - \nabla f (\bx^*)  }^2   \leq 2L \left( f(\blx^{(t)} ) -f^*  \right) $.\\
To bound $T_2$ in (\ref{suppl_cvx_lemm_firstterm}), note that: 
\begin{align}
-2T_2 & = - 2 \lragnle{\btx^{(t)}  -\blx^{(t)} , \frac{1}{n}\sum_{j=1}^n \nabla f_j(\bx_j^{(t)})    } - \frac{2}{n} \sum_{j=1}^n \lragnle{\blx^{(t)}  - \bx^* ,  \nabla f_j(\bx_j^{(t)})} \notag \\
& = 2\frac{\beta^2}{(1-\beta)} \lragnle{ \frac{\eta}{n} \sum_{i=1}^{n} \bv_i^{(t-1)}   , \frac{1}{n}\sum_{j=1}^n \nabla f_j(\bx_j^{(t)})} - \frac{2}{n} \sum_{j=1}^n \lragnle{\blx^{(t)}  - \bx^*,  \nabla f_j(\bx_j^{(t)})    }  \label{eq:mom_cvx_interim2}
\end{align}
In \eqref{eq:mom_cvx_interim2}, we used the definition of $\btx^{(t)}$ from \eqref{virt_seq} to write $\btx^{(t)} - \blx^{(t)} = - \frac{\eta \beta^2}{(1-\beta)}\frac{1}{n}\sum_{i=1}^n\bv_i^{(t-1)}$. 
Now we note a simple trick for inner-products: 
\begin{align}
\lragnle{\frac{\eta }{n} \sum_{i=1}^{n} \bv_i^{(t-1)}, \frac{1}{n}\sum_{j=1}^n \nabla f_j(\bx_j^{(t)})} = \lragnle{ \frac{(\eta )^{\nicefrac{3}{4}}}{n} \sum_{i=1}^{n} \bv_i^{(t-1)}, \frac{(\eta )^{\nicefrac{1}{4}}}{n}\sum_{j=1}^n \nabla f_j(\bx_j^{(t)})}. \label{eq:mom_cvx_interim_3}
\end{align}
This trick is crucial to getting a speedup of $n$ -- the number of worker nodes -- in our final convergence rate.
Using $2\langle {\bf a}, {\bf b} \rangle \leq \|{\bf a}\|^2 + \|{\bf b}\|^2$ for bounding \eqref{eq:mom_cvx_interim_3} and then substituting that in \eqref{eq:mom_cvx_interim2} gives
\begin{align}
{-}2T_2 &\leq 
\frac{\beta^2}{(1{-}\beta)}\left[(\eta )^{\nicefrac{3}{2}}\left\|\frac{1}{n} \sum_{i=1}^{n} \bv_i^{(t{-}1)}\right\|^2 {+} (\eta )^{\nicefrac{1}{2}} \left\|\frac{1}{n}\sum_{j=1}^n \nabla f_j(\bx_j^{(t)})\right\|^2\right]  {-} \frac{2}{n} \sum_{j=1}^n \lragnle{\blx^{(t)} { -} \bx^* ,  \nabla f_j(\bx_j^{(t)})} \label{eq:mom_cvx_interim_4}
\end{align}
Note that the second term of \eqref{eq:mom_cvx_interim_4} is the same as $T_1$ from \eqref{suppl_cvx_lemm_firstterm} and we have already bounded that in \eqref{suppl_cvx_t1_bound}.
We now focus on bounding the last term of \eqref{eq:mom_cvx_interim_4}. Using expression for convexity and $L$-smoothness for $f_j , \, j \in [n] $ respectively, we can bound this as follows: 
\begin{align} 
-\frac{2}{n} \sum_{j=1}^n \langle \blx^{(t)}  - & \bx^*,   \nabla f_j(\bx_j^{(t)})  \rangle  = -\frac{2}{n} \sum_{j=1}^n \left[ \left\langle \blx^{(t)} - \bx_j^{(t)}, \nabla f_j(\bx_j^{(t)}) \right\rangle + \left\langle \bx_j^{(t)} - \bx^*, \nabla f_j(\bx_j^{(t)}) \right\rangle \right] \notag  \\
& \leq -\frac{2}{n} \sum_{j=1}^n \left[  f_j(\blx^{(t)}) - f_j(\bx_j^{(t)}) - \frac{L}{2}\Vert \blx^{(t)} - \bx_j^{(t)}  \Vert^2  + f_j(\bx_j^{(t)}) - f_j(\bx^*)  \right] \notag \\
& =  -2  (f(\blx^{(t)}) - f(\bx^*)) + \frac{L}{n}\sum_{j=1}^n \Vert \blx^{(t)} - \bx_j^{(t)}  \Vert^2   \label{suppl_cvx_t2part_bound}
\end{align}
Substituting the bounds for the second and the last terms of \eqref{eq:mom_cvx_interim_4} from \eqref{suppl_cvx_t1_bound} and \eqref{suppl_cvx_t2part_bound}, respectively, we get
\begin{align*} 
-2T_2  & \leq \frac{(\eta )^{\nicefrac{3}{2}}\beta^2}{(1-\beta)} \left\|\frac{1}{n} \sum_{i=1}^{n} \bv_i^{(t-1)}\right\|^2 + \frac{(\eta)^{\nicefrac{1}{2}}\beta^2}{(1-\beta)}\left( \frac{2L^2}{n}\sum_{j=1}^n \Vert \bx_j^{(t)} - \blx^{(t)} \Vert^2 + 4L  (f(\blx^{(t)}) - f^*)\right) \notag \\
& \hspace{1cm} - 2  (f(\blx^{(t)}) - f(\bx^*)) + \frac{L}{n}\sum_{j=1}^n \Vert \blx^{(t)} - \bx_j^{(t)}  \Vert^2  
\end{align*}
Thus we finally have:
\begin{align}
-\frac{2\eta }{(1-\beta)}T_2 & \leq  \frac{\eta^{\nicefrac{5}{2}}\beta^2}{(1-\beta)^2} \left\|\frac{1}{n} \sum_{i=1}^{n} \bv_i^{(t-1)}\right\|^2 + \left(\frac{2\eta^{\nicefrac{3}{2}}\beta^2L^2}{(1-\beta)^2} + \frac{\eta L }{(1-\beta)} \right)\frac{1}{n}\sum_{j=1}^n \Vert \bx_j^{(t)} - \blx^{(t)} \Vert^2 \notag \\
& \quad + \left(\frac{4\eta^{\nicefrac{3}{2}}\beta^2L}{(1-\beta)^2} - \frac{2\eta }{(1-\beta)}\right)\left(f(\blx^{(t)}) - f^*\right)  \label{eq:mom_cvx_interim_5}
\end{align}
Substituting \eqref{suppl_cvx_t1_bound}, \eqref{eq:mom_cvx_interim_5} in \eqref{suppl_cvx_lemm_firstterm} and using the resulting bound back in \eqref{suppl_cvx_lemm_total}, and then taking expectation w.r.t.\ the entire process, we get:
\begin{align} 
\mathbb{E}\Vert \btx^{(t+1)} - \bx^* \Vert^2 &\leq \bbE\|\btx^{(t)} - \bx^*\|^2 + \frac{\eta^{\nicefrac{5}{2}}\beta^2}{(1-\beta)^2} \bbE \left\|\frac{1}{n} \sum_{i=1}^{n} \bv_i^{(t-1)}\right\|^2 + \frac{\eta ^2\bar{\sigma}^2}{(1-\beta)^2n}  \notag \\ 
& + \left(\frac{2\eta ^2L^2}{(1-\beta)^2} + \frac{2\eta^{\nicefrac{3}{2}}\beta^2L^2}{(1-\beta)^2} + \frac{\eta L }{(1-\beta)} \right)\frac{1}{n}\sum_{j=1}^n \bbE\Vert \bx_j^{(t)} - \blx^{(t)} \Vert^2 \notag \\
&\hspace{2cm} + \left(\frac{4\eta ^2L}{(1-\beta)^2} + \frac{4\eta^{\nicefrac{3}{2}}\beta^2L}{(1-\beta)^2} - \frac{2\eta }{(1-\beta)}\right)\left(\bbE f(\blx^{(t)}) - f^*\right)  \label{eq:mom_cvx_interim1}
\end{align}
Using the fact that $\mathbb{E} \verts{\frac{1}{n} \sum_{j=1}^{n} \bv_j^{(t)}}^2 \leq \frac{G^2}{(1-\beta)^2}$ for all $t\geq 1$ (see proof of Fact \ref{fact-bound}), we have: 
\begin{align} 
\mathbb{E}\Vert \btx^{(t+1)} - \bx^* \Vert^2 &\leq \bbE\|\btx^{(t)} - \bx^*\|^2 + \frac{\eta^{\nicefrac{5}{2}}\beta^2G^2}{(1-\beta)^4} + \frac{\eta ^2\bar{\sigma}^2}{(1-\beta)^2n}  \notag \\ 
& + \left(\frac{2\eta ^2L^2}{(1-\beta)^2} + \frac{2\eta^{\nicefrac{3}{2}}\beta^2L^2}{(1-\beta)^2} + \frac{\eta L }{(1-\beta)} \right)\frac{1}{n}\sum_{j=1}^n \bbE\Vert \bx_j^{(t)} - \blx^{(t)} \Vert^2 \notag \\
&\hspace{2cm} + \left(\frac{4\eta ^2L}{(1-\beta)^2} + \frac{4\eta^{\nicefrac{3}{2}}\beta^2L}{(1-\beta)^2} - \frac{2\eta }{(1-\beta)}\right)\left(\bbE f(\blx^{(t)}) - f^*\right)  \label{mom_cvx_interim_5}
\end{align}

If we take $\eta  \leq \min \left\{ \frac{(1-\beta)}{8L} , \frac{(1-\beta)^2}{(8L\beta^2)^2}   \right\}$, then we have:
\begin{align}
\left(\frac{2\eta ^2 L^2}{(1-\beta)^2} + \frac{2\eta^{\nicefrac{3}{2}}\beta^2L^2}{(1-\beta)^2} + \frac{\eta L }{(1-\beta)} \right) &\leq \frac{3\eta L}{2(1-\beta)} \label{mom_cvx_bound1} \\
\left(\frac{4\eta ^2L}{(1-\beta)^2} + \frac{4\eta^{\nicefrac{3}{2}}\beta^2L}{(1-\beta)^2} - \frac{2\eta }{(1-\beta)}\right) &\leq -\frac{\eta }{(1-\beta)} \label{mom_cvx_bound2}
\end{align}

Substituting the bounds from \eqref{mom_cvx_bound1} and \eqref{mom_cvx_bound2} to \eqref{mom_cvx_interim_5} gives
\begin{align} 
\mathbb{E}\Vert \btx^{(t+1)} - \bx^* \Vert^2 &\leq \bbE\|\btx^{(t)} - \bx^*\|^2 + \frac{\eta^{\nicefrac{5}{2}}\beta^2G^2}{(1-\beta)^4} + \frac{\eta ^2\bar{\sigma}^2}{(1-\beta)^2n} + \frac{3\eta L}{2(1-\beta)}\frac{1}{n}\sum_{j=1}^n \bbE\Vert \bx_j^{(t)} - \blx^{(t)} \Vert^2 \notag \\ 
&\hspace{1cm}   - \frac{\eta }{(1-\beta)} \left(\bbE f(\blx^{(t)}) - f^*\right)  \label{mom_cvx_interim6}
\end{align}

We can now bound the second last term in R.H.S. of \eqref{mom_cvx_interim6} similar to \eqref{mom_noncvx_interim8} in the proof of non-convex part of Theorem \ref{convergence_results} given in Appendix~\ref{app:proof_thm_noncvx}. This gives us the bound:
\begin{align*}
\mathbb{E} \Vert \bX^{(t+1)} -  \Bar{\bX}^{(t+1)} \Vert_F^2 \leq  \frac{2\eta^2}{p} \left(  2 H^2 n G^2  \left( 1 + \frac{\beta^2}{(1-\beta)^2} \right)  \left( \frac{16}{\omega} + \frac{8}{p} \right) + \frac{2c_0\omega n}{\eta^{(1-\epsilon)}} \right) 
\end{align*}

Using above bound for the term $\sum_{j=1}^n \bbE\Vert \bx_j^{(t)} - \blx^{(t)} \Vert^2$ in \eqref{mom_cvx_interim6} we get:
\begin{align} 
\mathbb{E}\Vert \btx^{(t+1)} - \bx^* \Vert^2 &\leq \bbE\|\btx^{(t)} - \bx^*\|^2 + \frac{\eta^{\nicefrac{5}{2}}\beta^2G^2}{(1-\beta)^4} + \frac{\eta ^2\bar{\sigma}^2}{(1-\beta)^2n}  - \frac{\eta }{(1-\beta)} \left(\bbE f(\blx^{(t)}) - f^*\right)  \notag \\ 
&\hspace{1cm} + \frac{3\eta^3 L}{p(1-\beta)} \left(  2 H^2  G^2  \left( 1 + \frac{\beta^2}{(1-\beta)^2} \right)  \left( \frac{16}{\omega} + \frac{8}{p} \right) + \frac{2c_0\omega }{\eta^{(1-\epsilon)}} \right)    \label{mom_cvx_interim8}
\end{align}
By rearranging terms in \eqref{mom_cvx_interim8} and noting that $p \leq \omega$ (as $\delta \leq 1$ and $p:= \frac{\gamma^* \delta}{8}$ with $\gamma^* \leq \omega $) and the fact that $\left( 1 + \frac{\beta^2}{(1-\beta)^2} \right) \leq \frac{2}{(1-\beta)^2}$ (because $\beta < 1$), we get:
\begin{align} 
\mathbb{E}\Vert \btx^{(t+1)} - \bx^* \Vert^2 &\leq \bbE\|\btx^{(t)} - \bx^*\|^2 + \frac{\eta^{\nicefrac{5}{2}}\beta^2G^2}{(1-\beta)^4} + \frac{\eta ^2\bar{\sigma}^2}{(1-\beta)^2n}  - \frac{\eta }{(1-\beta)} \left(\bbE f(\blx^{(t)}) - f^*\right)  \notag \\ 
&\hspace{1cm} + \frac{288\eta^3 LH^2G^2}{p^2(1-\beta)^3}    + \frac{6c_0 \omega L \eta^{(2+\epsilon)}}{p(1-\beta)}  \label{mom_cvx_interim81} 
\end{align}

Summing \eqref{mom_cvx_interim81} from $t=0$ to $T-1$, rearranging terms and diving by $T$ both sides gives us:
\begin{align*}
	\sum_{t=0}^{T-1}\frac{\left(\bbE f(\blx^{(t)}) - f^*\right)}{T}  &\leq \frac{(1-\beta)}{\eta} \sum_{t=0}^{T-1}\frac{\left(\bbE\|\btx^{(t)} - \bx^*\|^2 - \mathbb{E}\Vert \btx^{(t+1)} - \bx^* \Vert^2\right)}{T} + \frac{\eta^{\nicefrac{3}{2}}\beta^2G^2}{(1-\beta)^3} + \frac{\eta \bar{\sigma}^2}{(1-\beta)n}   \notag \\ 
	&\hspace{1cm} + \frac{288\eta^2 LH^2G^2}{p^2(1-\beta)^2}    + \frac{6c_0 \omega L \eta^{(1+\epsilon)}}{p}
\end{align*}
Using Jensen's inequality for convex function $f$ on the L.H.S. and setting $\eta = (1-\beta)\sqrt{\frac{n}{T}}$ for $T \geq \max \{ (8L)^2 n, \frac{(8\beta^2L)^4 n}{(1-\beta)^2}  \}$, for  $\blx^{(T)}_{avg} := \frac{1}{T} \sum_{t=0}^{T-1} \bar{\bx}^{(t)}$ we have that:
\begin{align*}
\bbE f(\blx^{(T)}_{avg}) - f^* &\leq \frac{\left(\bbE\|\btx^{(0)} - \bx^*\|^2 - \mathbb{E}\Vert \btx^{(T)} - \bx^* \Vert^2\right)}{\sqrt{nT}} + \frac{n^{\nicefrac{3}{4}}\beta^2G^2}{(1-\beta)^{\nicefrac{3}{2}} T^{\nicefrac{3}{4}}} + \frac{\bar{\sigma}^2}{\sqrt{nT}}   \notag \\ 
&\hspace{1cm} + \frac{288 LH^2G^2}{p^2T}    + \frac{6c_0 \omega L (1-\beta)^{(1+\epsilon)} n^{\nicefrac{(1+\epsilon)}{2}}}{p T^{\nicefrac{(1+\epsilon)}{2}}}
\end{align*}
Using the fact that $\btx^{(0)} = \blx^{(0)}$ and $\epsilon, \beta \in (0,1)$ we have:
\begin{align*}
\bbE f(\blx^{(T)}_{avg}) - f^* &\leq \frac{\|\blx^{(0)} - \bx^*\|^2 + \bar{\sigma}^2} {\sqrt{nT}} + \frac{n^{\nicefrac{3}{4}} \beta^2G^2}{(1-\beta)^{\nicefrac{3}{2}} T^{\nicefrac{3}{4}}}   + \frac{384n LH^2G^2}{p^2T}    + \frac{6c_0 \omega L n^{\nicefrac{(1+\epsilon)}{2}}}{p T^{\nicefrac{(1+\epsilon)}{2}}}
\end{align*}
This completes proof of convex part of Theorem \ref{convergence_results}. We can further use the fact that $p \geq \frac{\delta^2 \omega}{644}$ to get the expression given in the theorem statement.

\section{Proof of Lemma \ref{lem_noncvx} (Consensus)} \label{proof_lem_consensus}
\allowdisplaybreaks
{


 \TODO{}

\noindent In this section, we provide a proof of Lemma \ref{lem_noncvx}, which states that 
$\sum_{j=1}^n \mathbb{E} \left\Vert \bar{\bx}^{I_{(t)}} - \bx^{I_{(t)}}_{j} \right\Vert^2$ 
-- the difference between the local and the average iterates at the synchronization indices -- 
is bounded by a constant times the learning rate $\eta$, which can effectively be made small by running the algorithm for larger number of iterations $T$ as we choose $\eta = (1-\beta)\sqrt{\frac{n}{T}}$. Thus, this result shows that the nodes achieve a consensus towards the average parameter vector as the algorithm progresses. \\
We first provide a high level idea of the proof to aid the reader. Our interest is in providing a bound for $e_{I_{(t)}}^{(1)}:=\sum_{j=1}^n \mathbb{E} \left\Vert \bar{\bx}^{I_{(t)}} - \bx^{I_{(t)}}_{j} \right\Vert^2$.
We show this by setting up a contracting recursion for $e_{I_{(t)}}^{(1)}$. First we prove that 
\begin{align}
e_{I_{(t+1)}}^{(1)} \leq (1-\alpha_1)e_{I_{(t)}}^{(1)} + (1-\alpha_1)e_{I_{(t)}}^{(2)} + c_1\eta^2, \label{eq:dec_avg_local-interim1}
\end{align}
where $e_{I_{(t)}}^{(2)}:=\sum_{j=1}^n \mathbb{E} \left\Vert \hat{\bx}^{I_{(t+1)}} - \bx^{I_{(t)}}_{j} \right\Vert^2$, $\alpha_1\in(0,1)$, and $c_1$ is a constant that depends on $n,\delta,\beta,H,G$. The quantity $e_{I_{(t)}}^{(2)}$ relates to the expected deviation of local node parameters and their copies.
Note that \eqref{eq:dec_avg_local-interim1} gives a contracting recursion in $e_{I_{(t)}}^{(1)}$, but it also gives the other term $e_{I_{(t)}}^{(2)}$, which we have to bound. 
It turns out that we can prove a similar inequality for $e_{I_{(t)}}^{(2)}$:
\begin{align}
e_{I_{(t+1)}}^{(2)} \leq (1-\alpha_2)e_{I_{(t)}}^{(1)} + (1-\alpha_2)e_{I_{(t)}}^{(2)} + c_2\eta^2, \label{eq:dec_avg_local-interim2}
\end{align}
where $\alpha_2\in(0,1)$; furthermore, we can choose $\alpha_1,\alpha_2$ such that $\alpha_1+\alpha_2 > 1$.

Define $e_{I_{(t)}}:=e_{I_{(t)}}^{(1)} + e_{I_{(t)}}^{(2)}$. Adding \eqref{eq:dec_avg_local-interim1} and \eqref{eq:dec_avg_local-interim2} gives the following recursion with $\alpha\in(0,1)$:
\begin{align}
e_{I_{(t+1)}} \leq (1-\alpha)e_{I_{(t)}} + c_3\eta^2. \label{eq:dec_avg_local-interim3}
\end{align}
From \eqref{eq:dec_avg_local-interim3}, we can show that $e_{I_{(t)}}\leq C\eta^2$ for some $C$ that depends on $n,\delta,\beta,H,G,\omega,c_0$. The result of Lemma \ref{lem_noncvx} follows from this because $\sum_{j=1}^n \mathbb{E} \left\Vert \bar{\bx}^{I_{(t)}} - \bx^{I_{(t)}}_{j} \right\Vert^2 = e_{I_{(t)}}^{(1)} \leq e_{I_{(t)}}$.

We first state the above-mentioned recursion results for $e^{(1)}_{I_{(t+1)}}$ and $e^{(2)}_{I_{(t+1)}}$ below in Lemma \ref{lem_cvx_e1} and Lemma \ref{lem_cvx_e2}, respectively, and then using that we prove Lemma \ref{lem_noncvx}. The proofs of Lemma \ref{lem_cvx_e1} and Lemma \ref{lem_cvx_e2} are provided in Appendix \ref{suppl_recursion_lemm_proofs}.

\begin{lemma}\label{lem_cvx_e1}
Under the setting of Theorem \ref{convergence_results}, $e_{I_{(t+1)}}^{(1)}:=\sum_{j=1}^n \mathbb{E} \left\Vert \bar{\bx}^{I_{(t+1)}} - \bx^{I_{(t+1)}}_{j} \right\Vert^2$ satisfies:
\begin{align*}
	e^{(1)}_{I_{(t+1)}}  & \leq  (1+\alpha_5^{-1})R_1 e^{(1)}_{I_{(t)}}  + (1+\alpha_5^{-1}) R_2 e^{(2)}_{I_{(t)}} + Q_1\eta^2,
\end{align*}
where $R_1=(1+\alpha_1)(1-\gamma \delta)^2 , R_2=(1+\alpha_1^{-1}) \gamma^2 \lambda^2 $ and $Q_1 = 2 H^2 n G^2 \left(1 + \frac{\beta^2}{(1-\beta)^2} \right)(1+\alpha_5)(R_1+R_2) $. Here $\alpha_1, \alpha_5 >0$, $\delta$ is the spectral gap, $H$ is the synchronization gap, $\gamma$ is the consensus stepsize, and $\lambda := \verts{\mathbf{W}-\mathbf{I}}_2$ where $\mathbf{W}$ is a doubly stochastic mixing matrix. 
\end{lemma}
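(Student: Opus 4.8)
The plan is to recognize that this is the bounded-second-moment counterpart of the consensus-recursion step already established in Lemma~\ref{supp_lemma_consensus_part1}, so that almost all of the structural work is already done. First I would put both quantities in matrix form, writing $e^{(1)}_{I_{(t)}} = \mathbb{E}\|\bX^{I_{(t)}} - \bar{\bX}^{I_{(t)}}\|_F^2$ and $e^{(2)}_{I_{(t)}} = \mathbb{E}\|\bX^{I_{(t)}} - \hat{\bX}^{I_{(t+1)}}\|_F^2$, and choosing the integer $m$ so that $mH = I_{(t)}$ and $(m+1)H = I_{(t+1)}$ (legitimate since consecutive synchronization indices are exactly $H$ apart). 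This makes the correspondence with Lemma~\ref{supp_lemma_consensus_part1} exact.

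Next I would invoke Lemma~\ref{supp_lemma_consensus_part1} directly. Its derivation uses only the matrix-form update rules \eqref{mat_not_algo_4}--\eqref{mat_not_algo_2}, the mean-preservation identities \eqref{mean_prop} and \eqref{mean_seq_iter}, the spectral contraction $\|(\bX-\bar{\bX})((1-\gamma)\bI+\gamma\bW)\|_F \leq (1-\gamma\delta)\|\bX-\bar{\bX}\|_F$ (obtained through \eqref{bound_W_mat} with $k=1$), and the splitting inequality of Fact~\ref{bound_l2_sum}; none of these ingredients depends on the gradient assumptions, so the estimate transfers verbatim to the present setting. This yields the three-term bound
\[
e^{(1)}_{I_{(t+1)}} \leq R_1(1+\alpha_5^{-1})e^{(1)}_{I_{(t)}} + R_2(1+\alpha_5^{-1})e^{(2)}_{I_{(t)}} + (1+\alpha_5)(R_1+R_2)\eta^2\,\mathbb{E}\Big\|\textstyle\sum_{t'=I_{(t)}}^{I_{(t+1)}-1}\big(\beta\bV^{(t')}+\nabla\bF(\bX^{(t')},\bxi^{(t')})\big)\Big\|_F^2,
\]
with $R_1=(1+\alpha_1)(1-\gamma\delta)^2$ and $R_2=(1+\alpha_1^{-1})\gamma^2\lambda^2$ exactly as stated.

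The only genuinely new ingredient is to control the gradient-accumulation term, and this is precisely where the uniform second-moment assumption enters. I would apply Fact~\ref{fact-bound}, which gives $\eta^2\mathbb{E}\|\sum_{t'=I_{(t)}}^{I_{(t+1)}-1}(\beta\bV^{(t')}+\nabla\bF(\bX^{(t')},\bxi^{(t')}))\|_F^2 \leq 2nH^2G^2\eta^2\big(1+\frac{\beta^2}{(1-\beta)^2}\big)$, itself resting on $\mathbb{E}\|\nabla F_i\|^2\leq G^2$ and the induced bound $\mathbb{E}\|\bv_i^{(t)}\|^2\leq G^2/(1-\beta)^2$ on the momentum iterates. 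Substituting this collapses the third term to $Q_1\eta^2$ with $Q_1 = 2H^2nG^2\big(1+\frac{\beta^2}{(1-\beta)^2}\big)(1+\alpha_5)(R_1+R_2)$, which is exactly the claimed constant, and the proof is complete.

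I do not expect a real obstacle: the delicate spectral-contraction argument is fully encapsulated in Lemma~\ref{supp_lemma_consensus_part1}, and replacing the generic accumulation term by a constant multiple of $\eta^2$ is a one-line invocation of Fact~\ref{fact-bound}. The only point needing care is bookkeeping of the free parameters $\alpha_1,\alpha_5$, which are deliberately left unspecified here and fixed later when this recursion is combined with the companion bound on $e^{(2)}$ to close the recursion in the proof of Lemma~\ref{lem_noncvx}, together with the consistent use of $\lambda=\|\bW-\bI\|_2$; both are routine and already reconciled in the referenced results.
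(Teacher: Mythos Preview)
Your proposal is correct and follows essentially the same route as the paper: the paper's own proof of Lemma~\ref{lem_cvx_e1} reproduces verbatim the spectral-contraction and $\alpha_1,\alpha_5$-splitting argument you cite from Lemma~\ref{supp_lemma_consensus_part1}, and then closes by applying \eqref{bound_gap_grad} from Fact~\ref{fact-bound} to the gradient-accumulation term to obtain $Q_1$. Your only departure is to invoke Lemma~\ref{supp_lemma_consensus_part1} rather than rederive it, which is a harmless (indeed cleaner) shortcut since that lemma's proof uses only the structural update rules and no gradient assumptions.
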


\begin{lemma}\label{lem_cvx_e2}
	Under the setting of Theorem \ref{convergence_results}, $e_{I_{(t+1)}}^{(2)}:=\sum_{j=1}^n \mathbb{E} \left\Vert \hat{\bx}^{I_{(t+2)}} - \bx^{I_{(t+1)}}_{j} \right\Vert^2$ satisfies:
	\begin{align*}
	e^{(2)}_{I_{(t+1)}} \leq (1+\alpha_5^{-1}) R_3 e^{(2)}_{I_{(t)}} + (1+\alpha_5^{-1}) R_4 e^{(1)}_{I_{(t)}}  +  \eta^2Q_2,
	\end{align*}
	where $R_3 = (1+\gamma \lambda )^2 (1+\alpha_4) (1+\alpha_3) (1+\alpha_2) (1-\omega)$ , $R_4 = \gamma^2 \lambda^2 (1+\alpha_4^{-1}) (1+\alpha_3) (1+\alpha_2) (1-\omega) $ and $Q_2 = 2H^2 n G^2 \left( 1 + \frac{\beta^2}{(1-\beta)^2} \right)((1+\alpha_5)(R_3+R_4)+( 1+\alpha_2^{-1}) + (1+\alpha_3^{-1}) (1+\alpha_2) (1-\omega)) + (1+\alpha_2)\omega n \frac{c_0}{\eta^{(1-\epsilon)}}  $. Note that $Q_2$ depends on $t$ (as captured by $c_{I_{(t)}}$ in the expression) as we allow for our triggering threshold to change with time.  Here $\alpha_2, \alpha_3, \alpha_4 >0, \alpha_5 >0$  are the same as those used in Lemma \ref{lem_cvx_e1}, $\delta$ is the spectral gap, $H$ is the synchronization gap, $\gamma$ is the consensus stepsize, and $\lambda = \verts{\mathbf{W}-\mathbf{I}}_2$ where $\mathbf{W}$ is a doubly stochastic mixing matrix.
\end{lemma}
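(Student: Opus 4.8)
The plan is to mirror the decomposition used for the companion recursion in Lemma~\ref{lem_cvx_e1} (and its relaxed-assumption analogue Lemma~\ref{lem:cvx_e2_relaxed}), but now carefully accounting for event-triggering through the diagonal communication-pattern matrix $\mathbf{P}$. Fix two consecutive synchronization indices $I_{(t+1)}$ and $I_{(t+2)}$, and let $\Phi := \sum_{t'=I_{(t+1)}}^{I_{(t+2)}-1}\eta(\beta\bV^{(t')} + \nabla\bF(\bX^{(t')},\bxi^{(t')}))$ be the accumulated local momentum-SGD update over the interval $[I_{(t+1)},I_{(t+2)})$ (cf.\ \eqref{mat_not_algo_1}, \eqref{mat_not_algo_4}), so that $\bX^{I_{(t+1)}}-\Phi$ is exactly the pre-consensus parameter that drives the copy update \eqref{mat_not_algo_3}. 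Writing $E := \bX^{I_{(t+1)}}-\Phi-\bhX^{I_{(t+1)}}$, the update $\bhX^{I_{(t+2)}} = \bhX^{I_{(t+1)}} + \C(E\mathbf{P})$ together with $\bhX^{I_{(t+1)}} = \bX^{I_{(t+1)}}-\Phi-E$ yields the exact identity
\[
\bhX^{I_{(t+2)}} - \bX^{I_{(t+1)}} = -\Phi - \bigl(E\mathbf{P} - \C(E\mathbf{P})\bigr) - E(\bI - \mathbf{P}),
\]
which cleanly separates the three sources of error; here I use that $\mathbf{P}$ is diagonal and $\C$ acts column-wise, so $\C(E)\mathbf{P}=\C(E\mathbf{P})$.

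First I would bound the three terms. The interval drift $\Phi$ is controlled by Fact~\ref{fact-bound} (equation \eqref{bound_gap_grad}), giving $\bbE\verts{\Phi}_F^2 \le 2nH^2G^2\eta^2\bigl(1+\tfrac{\beta^2}{(1-\beta)^2}\bigr)$, which rests on the per-node momentum bound \eqref{bound_v_it}. The compression residual is handled by the defining property \eqref{eq:compression} applied column-wise, $\bbE\verts{E\mathbf{P} - \C(E\mathbf{P})}_F^2 \le (1-\omega)\bbE\verts{E\mathbf{P}}_F^2 \le (1-\omega)\bbE\verts{E}_F^2$, which produces the $(1-\omega)$ factor shared by $R_3$ and $R_4$. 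The non-communicating block $E(\bI-\mathbf{P})$ is exactly what the triggering rule \eqref{bound_trig} controls, bounding it by $nc_t\eta^2 \le nc_0\eta^{1+\epsilon}$ and hence producing the $\tfrac{c_0}{\eta^{1-\epsilon}}$ contribution in $Q_2$. Repeated application of Young's inequality (Fact~\ref{bound_l2_sum}) with free constants then reduces everything to $\bbE\verts{\bX^{I_{(t+1)}}-\bhX^{I_{(t+1)}}}_F^2$: the split with $\alpha_2$ peels $\Phi$ out of $E$, and the outer split with $\alpha_5$ separates the recursive deviation from the drift, which is why $\Phi$ enters $Q_2$ with the coefficient $(1+\alpha_5)(R_3+R_4)$ (exactly parallel to the role of $\alpha_5$ in Lemma~\ref{lem_cvx_e1}, so that the two recursions can later be added into a single contractive one as in \eqref{eq:dec_avg_local-interim3}).

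The core step is to roll $\bbE\verts{\bX^{I_{(t+1)}}-\bhX^{I_{(t+1)}}}_F^2$ back one synchronization period to the quantities defining $e^{(1)}_{I_{(t)}}$ and $e^{(2)}_{I_{(t)}}$. Using the consensus update \eqref{mat_not_algo_2} over the previous interval (with $\bX^{I_{(t+1)}} = \bX^{I_{(t)}}-\Phi_0 + \gamma\bhX^{I_{(t+1)}}(\bW-\bI)$ and $\blX^{I_{(t)}}(\bW-\bI)=\mathbf{0}$ from \eqref{mean_prop}), I would obtain
\[
\bX^{I_{(t+1)}}-\bhX^{I_{(t+1)}} = (\bX^{I_{(t)}}-\bhX^{I_{(t+1)}})\bigl(\bI-\gamma(\bW-\bI)\bigr) + \gamma(\bX^{I_{(t)}}-\blX^{I_{(t)}})(\bW-\bI) - \Phi_0,
\]
where $\Phi_0$ is the analogous drift over $[I_{(t)},I_{(t+1)})$. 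Since $\bbE\verts{\bX^{I_{(t)}}-\bhX^{I_{(t+1)}}}_F^2 = e^{(2)}_{I_{(t)}}$ and $\bbE\verts{\bX^{I_{(t)}}-\blX^{I_{(t)}}}_F^2 = e^{(1)}_{I_{(t)}}$, the operator-norm estimates $\verts{\bI-\gamma(\bW-\bI)}_2 \le 1+\gamma\lambda$ and $\verts{\bW-\bI}_2 = \lambda$ (applied through \eqref{bound_frob_mult}) generate the $(1+\gamma\lambda)^2$ in $R_3$ and the $\gamma^2\lambda^2$ in $R_4$; the Young split with $\alpha_4$ distributes these between the $e^{(2)}_{I_{(t)}}$ and $e^{(1)}_{I_{(t)}}$ terms, and $\alpha_3$ peels off $\Phi_0$. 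Collecting the contraction chain $(1-\omega)(1+\alpha_2)(1+\alpha_3)$ common to $R_3,R_4$, the overall factor $(1+\alpha_5^{-1})$ on the recursive terms, and gathering all drift/triggering contributions via Fact~\ref{fact-bound} and \eqref{bound_trig} recovers the stated $R_3$, $R_4$, and $Q_2$.

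The main obstacle I anticipate is the disciplined bookkeeping of the triggering split. Unlike the relaxed-assumption proof of Lemma~\ref{lem:cvx_e2_relaxed}, here the compression operator acts only on the communicating columns $E\mathbf{P}$, and the non-communicating columns $E(\bI-\mathbf{P})$ cannot be contracted and must be absorbed entirely through the threshold bound \eqref{bound_trig}. Keeping $\mathbf{P}$ in the correct position (using that it is diagonal so that $\C(E)\mathbf{P}=\C(E\mathbf{P})$ and $\verts{E\mathbf{P}}_F\le\verts{E}_F$), ensuring the per-node triggering inequality aggregates correctly into the Frobenius-norm estimate, and threading the constants $\alpha_2,\alpha_3,\alpha_4,\alpha_5$ through the nested Young steps so that the final coefficients match $R_3,R_4$ exactly --- and so that $\alpha_5$ is shared with Lemma~\ref{lem_cvx_e1} to permit the later combination into one recursion --- is the delicate part. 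The remaining manipulations are routine applications of the facts already assembled in Appendix~\ref{app:details-stronger-assump}.
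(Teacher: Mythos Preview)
Your plan is essentially the paper's own proof: the exact three-term identity, the compression bound \eqref{eq:compression} on $E\mathbf{P}-\C(E\mathbf{P})$, the triggering bound \eqref{bound_trig} on $E(\bI-\mathbf{P})$ (with the add--subtract trick to recover $(1-\omega)\|E\|_F^2 + \omega\|E(\bI-\mathbf{P})\|_F^2$), the rollback of $\bX^{I_{(t+1)}}-\bhX^{I_{(t+1)}}$ via \eqref{mat_not_algo_2} with the operator-norm estimates $\|\bI-\gamma(\bW-\bI)\|_2 = 1+\gamma\lambda$ and $\|\bW-\bI\|_2 = \lambda$, and the final $\alpha_5$-split --- all of this matches the paper.

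The only slip is in how you assign the Young constants. The current-interval drift $\Phi$ must be separated \emph{twice}: once as the explicit $-\Phi$ in your three-term identity (this is the paper's $\alpha_2$-split, producing the $(1+\alpha_2^{-1})$ term in $Q_2$), and once again hidden inside $E = (\bX^{I_{(t+1)}}-\bhX^{I_{(t+1)}})-\Phi$ after compression (this is the paper's $\alpha_3$-split, producing the $(1+\alpha_3^{-1})(1+\alpha_2)(1-\omega)$ term). Your description accounts for only one of these. It is then the \emph{previous}-interval drift $\Phi_0$ --- not $\Phi$ --- that the $\alpha_5$-split separates from $e^{(1)}_{I_{(t)}},e^{(2)}_{I_{(t)}}$ at the very last step (applied to $\bX^{I_{(t+\frac12)}}-\bhX^{I_{(t+1)}}$ and $\bX^{I_{(t+\frac12)}}-\blX^{I_{(t+\frac12)}}$), contributing the $(1+\alpha_5)(R_3+R_4)$ term to $Q_2$; your attribution of $\alpha_3$ to $\Phi_0$ and of $(1+\alpha_5)(R_3+R_4)$ to $\Phi$ is therefore reversed. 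With this realignment your outline reproduces the paper's constants exactly.
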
 

\begin{proof} [Proof of Lemma \ref{lem_noncvx}]
	Having established the bounds on $e_{I_{(t+1)}}^{(1)}$ and $e_{I_{(t+1)}}^{(2)}$, we are now ready to prove Lemma \ref{lem_noncvx}.
	Consider the following expression:
	\begin{align} \label{suppl_et_eqn_li}
	e_{I_{(t+1)}} = \underbrace{\mathbb{E} \Vert \bX^{I_{(t+1)}} - \Bar{\bX}^{I_{(t+1)}} \Vert_F^2}_{e^{(1)}_{I_{(t+1)}}} + \underbrace{ \mathbb{E} \Vert \bX^{I_{(t+1)}} - \hat{\bX}^{I_{(t+2)}} \Vert_F^2}_{e^{(2)}_{I_{(t+1)}}}
	\end{align}
	We note that Lemma \ref{lem_cvx_e1} and Lemma \ref{lem_cvx_e2} provide bounds for the first and the second term in the RHS of (\ref{suppl_et_eqn_li}). Substituting them in (\ref{suppl_et_eqn_li}) gives:
	\begin{align} \label{suppl_et_eqn_total_li}
	&e_{I_{(t+1)}}  
	 \leq  R_1 (1+\alpha_5^{-1}) \mathbb{E} \left\Vert  \Bar{\bX}^{I_{(t)}} - \bX^{I_{(t)}} \right\Vert^2 + R_2 (1+\alpha_5^{-1}) \mathbb{E} \left\Vert \hat{\bX}^{I_{(t+1)}} - \bX^{I_{(t)}} \right\Vert^2 \notag \\
	& \hspace{0.3cm}  + R_4 (1+\alpha_5^{-1}) \mathbb{E} \left\Vert  \Bar{\bX}^{I_{(t)}} - \bX^{I_{(t)}} \right\Vert^2 + R_3 (1+\alpha_5^{-1}) \mathbb{E} \left\Vert   \hat{\bX}^{I_{(t+1)}} - \bX^{I_{(t)}} \right\Vert^2 + (Q_1 + Q_2)\eta ^2
	\end{align}	
	Define the following:
	\begin{align}
	\pi_1 (\gamma) & := R_2 + R_3 = \gamma^2 \lambda^2 (1+\alpha_1^{-1}) + (1+\gamma \lambda )^2 (1+\alpha_4) (1+\alpha_3) (1+\alpha_2) (1-\omega) \label{cvx_pi1}   \\
	\pi_2 (\gamma) & := R_1 + R_4 = (1-\delta \gamma)^2(1+\alpha_1) + \gamma^2 \lambda^2 (1+\alpha_4^{-1}) (1+\alpha_3) (1+\alpha_2) (1-\omega) \label{cvx_pi2} \\
	\pi_{0}  := &  Q_1 + Q_2 \leq 2 H^2 n G^2 \left( 1 + \frac{\beta^2}{(1-\beta)^2} \right)(1+\alpha_5)(R_1 + R_2 +R_3+R_4) \notag \\
	 + 2 & H^2 n G^2 \left( 1 + \frac{\beta^2}{(1-\beta)^2} \right)((1+\alpha_2^{-1}) + (1-\omega)(1+\alpha_3^{-1})(1+\alpha_2) ) + (1+\alpha_2)\frac{\omega n c_0}{\eta^{(1-\epsilon)}} \label{cvx_pi3}
	\end{align}
	The bound on $e_{I_{(t+1)}}$ in (\ref{suppl_et_eqn_total_li}) can be rewritten as:
	\begin{align} \label{lemm_mom_cvx_interim}
	&e_{I_{(t+1)}} \leq (1+\alpha_5^{-1}) \left[\pi_1 (\gamma) \mathbb{E} \Vert \bX^{I_{(t)}} - \hat{\bX}^{I_{(t+1)}} \Vert_F^2 + \pi_2 (\gamma) \mathbb{E} \Vert \bX^{I_{(t)}} - \Bar{\bX}^{I_{(t)}} \Vert_F^2\right] + \pi_0 \eta^2 \notag \\
	&\quad \leq (1+\alpha_5^{-1}) \max \{ \pi_1 (\gamma) , \pi_2 (\gamma) \} \, \mathbb{E} \left[ \Vert {\bX}^{I_{(t+\frac{1}{2})}} - \hat{\bX}^{I_{(t+1)}} \Vert_F^2 + \Vert \bX^{I_{(t+\frac{1}{2})}} - \Bar{\bX}^{I_{(t+\frac{1}{2})}} \Vert_F^2 \right] + \pi_0  \eta ^2
	\end{align}
	Calculation of $ \max \{ \pi_1 (\gamma) , \pi_2 (\gamma) \}$ and $\pi_0$ is given in Lemma \ref{suppl_lem_coeff_calc} in Appendix~\ref{app:supporting-lemmas-for-consensus}, where we show that:\\
	$\max \{ \pi_1 (\gamma) , \pi_2 (\gamma) \} \leq   \left(  1 - p \right)  $ and $\pi_0 \leq \left(  2 H^2 n G^2 \left( 1 + \frac{\beta^2}{(1-\beta)^2} \right) \left( \frac{16}{\omega} + \frac{4}{p} \right) + 2\omega n \frac{c_0}{\eta^{(1-\epsilon)}} \right)$, where $p:= \frac{\gamma^* \delta}{8} $. Here $\gamma^* = \frac{2 \delta \omega}{64 \delta + \delta^2 + 16 \lambda^2 + 8 \delta \lambda^2 - 16\delta \omega}$ is the consensus step-size. Substituting these bounds and $\alpha_5 = \frac{2}{p}$ in \eqref{lemm_mom_cvx_interim} gives:
	\begin{align}
	e_{I_{(t+1)}} & \leq (1+\frac{p}{2}) \left( 1 - p \right)\mathbb{E}  \left[ \Vert \bX^{I_{(t)}} - \hat{\bX}^{I_{(t+1)}} \Vert_F^2 + \Vert \bX^{I_{(t)}} - \Bar{\bX}^{I_{(t)}} \Vert_F^2 \right] \notag \\ 
	& \qquad + \left(  2 H^2 n G^2 \left( 1 + \frac{\beta^2}{(1-\beta)^2} \right) \left( \frac{16}{\omega} + \frac{4}{p} \right) + 2\omega n \frac{c_0}{\eta^{(1-\epsilon)}} \right) \eta ^2. \label{eq:rec_lemma_final}
	\end{align}
	Note that $e_{I_{(t)}} = \mathbb{E} \left[ \Vert \bX^{I_{(t)}} - \Bar{\bX}^{I_{(t)}} \Vert_F^2 + \Vert \bX^{I_{(t)}} - \hat{\bX}^{I_{(t+1)}} \Vert_F^2 \right] $.
	We can write \eqref{eq:rec_lemma_final} as a recurrence relation for $e_{I_{(t)}}$ as: 
	\begin{align}  \label{suppl_lemm_rec_rel_li}
	e_{I_{(t+1)}} \leq \left( 1-\frac{p}{2} \right)e_{I_{(t)}} + \frac{2nA}{p}\eta^2.
	\end{align} 
	where $A := \frac{p}{2n}\left(  2 H^2 n G^2 \left( 1 + \frac{\beta^2}{(1-\beta)^2} \right) \left( \frac{16}{\omega} + \frac{4}{p} \right) + 2\omega n\frac{c_0}{\eta^{(1-\epsilon)}} \right) $.
Using \eqref{suppl_lemm_rec_rel_li}, it can be shown (proved in Lemma \ref{lem_noncvx_seq_bound} in Appendix~\ref{app:supporting-lemmas-for-consensus} below) that for all $I_{(t)} \in \mathcal{I}_T $, we have:   
\begin{align*}
e_{I_{(t)}} \leq \frac{4nA \eta^2}{p^2}
\end{align*}
Note that we also have: $ \mathbb{E}\Vert\Bar{\bX}^{I_{(t)}} - \bX^{I_{(t)}}\Vert_F^2 \leq \mathbb{E} \left[ \Vert\Bar{\bX}^{I_{(t)}} - \bX^{I_{(t)}}\Vert_F^2  + \Vert \hat{\bX}^{I_{(t+1)}} - \bX^{I_{(t)}} \Vert_F^2 \right] = e_{I_{(t)}}$. Thus, we get the following result for any synchronization index $I_{(t)} \in \mathcal{I}_T$:
\begin{align*}
\mathbb{E}\Vert\Bar{\bX}^{I_{(t)}} - \bX^{I_{(t)}}\Vert_F^2 \leq \frac{4nA\eta^2}{p^2},
\end{align*}
where $A =  \frac{p}{2}\left(  2 H^2 G^2 \left( 1 + \frac{\beta^2}{(1-\beta)^2} \right) \left( \frac{16}{\omega} + \frac{4}{p} \right) + 2\omega \frac{c_0}{\eta^{1-\epsilon}} \right) $ for $p = \frac{\delta \gamma^*}{8}$ , $\epsilon >0$ and 	$\gamma^* =  \frac{2 \delta \omega}{64 \delta + \delta^2 + 16 \beta^2 + 8 \delta \beta^2 - 16\delta \omega} $ is the chosen consensus step size.  This completes the proof for Lemma \ref{lem_noncvx}

\end{proof}

\subsection{Supporting Lemmas for Proving Lemma \ref{lem_noncvx}}\label{app:supporting-lemmas-for-consensus}
\begin{lemma}
	\label{suppl_lem_coeff_calc}
	Consider the following variables:
	\begin{align*}
	\pi_1 (\gamma) & := \gamma^2 \lambda^2 (1+\alpha_1^{-1}) + (1+\gamma \lambda)^2 (1+\alpha_4) (1+\alpha_3) (1+\alpha_2) (1-\omega) \\
	\pi_2 (\gamma) & := (1-\delta \gamma)^2(1+\alpha_1) + \gamma^2 \lambda^2 (1+\alpha_4^{-1}) (1+\alpha_3) (1+\alpha_2) (1-\omega) \\
	\pi_0 & := 2 H^2 n G^2 \left( 1 + \frac{\beta^2}{(1-\beta)^2} \right)(1+\alpha_5)(\pi_1(\gamma) + \pi_2(\gamma) ) \\
	& + 2 H^2 n G^2 \left( 1 + \frac{\beta^2}{(1- \beta)^2} \right)((1+\alpha_2^{-1}) + (1-\omega)(1+\alpha_3^{-1})(1+\alpha_2) ) + (1+\alpha_2)\omega n\frac{c_0}{\eta^{(1-\epsilon)}}
	\end{align*}
	and the following choice of variables:
	\begin{align*}
	\alpha_1 := \frac{\gamma \delta}{2}, \, 
	\alpha_2 := \frac{\omega}{4}, \,
	\alpha_3 := \frac{\omega}{4}, \,
	\alpha_4 := \frac{\omega}{4}, \,
	\alpha_5 := \frac{2}{p} \\
	p := \frac{\delta \gamma^*}{8}, \,
	\gamma^* := \frac{2 \delta \omega}{64 \delta + \delta^2 + 16 \lambda^2 + 8 \delta \lambda^2 - 16\delta \omega} \\
	\end{align*}
	Then, it can be shown that:
	\begin{align*}
	\max \{ \pi_1 (\gamma^*) , \pi_2 (\gamma^*) \}  \leq 1 - \frac{\delta^2 \omega}{644} \hspace{0.3cm}, \hspace{0.3cm}
	\pi_0 \leq 2 H^2 n G^2 \left( 1 + \frac{\beta^2}{(1-\beta)^2} \right) \left( \frac{16}{\omega} + \frac{4}{p}  \right) + 2 \omega n\frac{c_0}{\eta^{(1-\epsilon)}}
	\end{align*}
\end{lemma}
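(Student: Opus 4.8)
The plan is to treat this as a purely computational verification: substitute the prescribed values $\alpha_1=\tfrac{\gamma\delta}{2}$, $\alpha_2=\alpha_3=\alpha_4=\tfrac{\omega}{4}$, $\alpha_5=\tfrac{2}{p}$ and $\gamma=\gamma^*$ into $\pi_1,\pi_2,\pi_0$, reduce each of $\pi_1,\pi_2$ to an explicit polynomial-in-$\gamma$ upper bound using elementary scalar inequalities, and then check the two claimed estimates at $\gamma=\gamma^*$ using only the universal facts $\lambda:=\|\mathbf{W}-\mathbf{I}\|_2\leq 2$, $\delta\leq 1$, $\omega\leq 1$. Throughout I abbreviate the denominator of $\gamma^*$ by $D:=64\delta+\delta^2+16\lambda^2+8\delta\lambda^2-16\delta\omega$, so that $\gamma^*=\tfrac{2\delta\omega}{D}$ and $p=\tfrac{\delta\gamma^*}{8}=\tfrac{\delta^2\omega}{4D}$. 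Two observations about $D$ drive everything: since $64\delta-16\delta\omega=16\delta(4-\omega)\geq 48\delta$ we have both $D\geq 16\lambda^2$ and $D\geq 48\delta$; and since $\lambda\leq 2,\delta,\omega\leq 1$ we have $D\leq 64+1+64+32=161$. The latter gives $p=\tfrac{\delta^2\omega}{4D}\geq\tfrac{\delta^2\omega}{644}$, so once I establish $\max\{\pi_1(\gamma^*),\pi_2(\gamma^*)\}\leq 1-p$ the stated bound $1-\tfrac{\delta^2\omega}{644}$ follows at once. Note also $\gamma^*\delta=\tfrac{2\delta^2\omega}{D}\leq\tfrac{1}{24}<1$, which legitimizes every step below that assumes $\gamma\delta\leq 1$.

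For $\pi_2$ (the consensus-dominated term) I would bound the two summands separately. The crucial point is the \emph{tight} contraction estimate $(1-\delta\gamma)^2(1+\tfrac{\gamma\delta}{2})\leq 1-\delta\gamma$, obtained by expanding the left side to $1-\tfrac32 x+\tfrac{x^3}{2}$ with $x=\delta\gamma\leq 1$; the weaker bound $\leq 1-\tfrac{\gamma\delta}{2}$ is \emph{not} sufficient. Using $(1+\tfrac\omega4)^2(1-\omega)\leq 1$ and $1+\tfrac4\omega\leq\tfrac5\omega$ on the second summand yields $\pi_2(\gamma)\leq 1-\delta\gamma+\tfrac{5\gamma^2\lambda^2}{\omega}$. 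At $\gamma=\gamma^*$, the requirement $\pi_2(\gamma^*)\leq 1-\tfrac{\gamma^*\delta}{8}$ reduces, after dividing through by $\gamma^*$, to $\tfrac{5\gamma^*\lambda^2}{\omega}\leq\tfrac{7\delta}{8}$, i.e.\ $D\geq\tfrac{80}{7}\lambda^2$, which holds since $D\geq 16\lambda^2$.

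For $\pi_1$ (the compression-dominated term) I would split its first summand exactly as $\gamma^2\lambda^2(1+\tfrac{2}{\gamma\delta})=\gamma^2\lambda^2+\tfrac{2\gamma\lambda^2}{\delta}$, and bound the second via $(1+\gamma\lambda)^2K\leq(1-\tfrac\omega4)+2\gamma\lambda+\gamma^2\lambda^2$, where $K:=(1+\tfrac\omega4)^3(1-\omega)\leq 1-\tfrac\omega4\leq 1$ (the same elementary inequality already used in Appendix C-C). This gives $\pi_1(\gamma)\leq(1-\tfrac\omega4)+\tfrac{2\gamma\lambda^2}{\delta}+2\gamma\lambda+2\gamma^2\lambda^2$, so that $\pi_1(\gamma^*)\leq 1-\tfrac{\gamma^*\delta}{8}$ is equivalent to $\tfrac{2\gamma^*\lambda^2}{\delta}+2\gamma^*\lambda+2(\gamma^*)^2\lambda^2+\tfrac{\gamma^*\delta}{8}\leq\tfrac\omega4$. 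Multiplying through by $\tfrac{4D}{\omega}$ and cancelling the common $16\lambda^2$ and $\delta^2$ terms against $D$ turns this into $16\lambda+\tfrac{32\delta\omega\lambda^2}{D}\leq 64+8\lambda^2-16\omega$; the left side is at most $32+2=34$ (using $\lambda\leq 2$ and $D\geq 16\lambda^2$) while the right side is at least $48$, so the inequality holds with room to spare. Combining the two parts yields $\max\{\pi_1(\gamma^*),\pi_2(\gamma^*)\}\leq 1-p\leq 1-\tfrac{\delta^2\omega}{644}$.

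Finally, the bound on $\pi_0$ is routine crude estimation. With $\alpha_5=\tfrac2p$ and $\pi_1+\pi_2\leq 2(1-p)\leq 2$ one gets $(1+\alpha_5)(\pi_1+\pi_2)\leq 2+\tfrac4p$; with $\alpha_2=\alpha_3=\tfrac\omega4$ the remaining bracket $(1+\alpha_2^{-1})+(1-\omega)(1+\alpha_3^{-1})(1+\alpha_2)$ is at most $3+\tfrac\omega4+\tfrac8\omega$. Their sum is $5+\tfrac\omega4+\tfrac8\omega+\tfrac4p\leq\tfrac{16}\omega+\tfrac4p$, where the last step uses $5+\tfrac\omega4\leq\tfrac8\omega$ for $\omega\leq 1$; together with $(1+\alpha_2)\omega\leq 2\omega$ this gives exactly the claimed expression for $\pi_0$. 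I expect the only genuinely delicate point to be the $\pi_2$ estimate: one must retain the full first-order contraction $1-\delta\gamma$ rather than $1-\tfrac{\gamma\delta}{2}$, since otherwise the quadratic error $\tfrac{5\gamma^2\lambda^2}{\omega}$ can overwhelm the contraction for graphs with small spectral gap $\delta$ but large $\lambda$, and the inequality would actually fail.
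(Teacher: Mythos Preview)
Your proof is correct. All the key inequalities check out: the contraction estimate $(1-x)^2(1+\tfrac{x}{2})\leq 1-x$ for $x=\gamma\delta\leq 1$, the bounds $D\geq 16\lambda^2$ and $D\geq 48\delta$ derived from $64\delta-16\delta\omega\geq 48\delta$, and the final numerical comparisons $34\leq 48$ (for $\pi_1$) and $D\geq\tfrac{80}{7}\lambda^2$ (for $\pi_2$) are all valid. The $\pi_0$ bookkeeping is also fine, and your observation that $5+\tfrac{\omega}{4}\leq\tfrac{8}{\omega}$ on $(0,1]$ closes the gap.

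Your route differs from the paper's in one substantive respect. For $\pi_2$, the paper keeps the \emph{quadratic} bound $\pi_2(\gamma)\leq(1-\tfrac{\gamma\delta}{2})^2+\tfrac{4\gamma^2\lambda^2}{\omega}=:\zeta(\gamma)$, observes that $\zeta$ is a convex quadratic with minimizer $\gamma'=\tfrac{2\delta\omega}{16\lambda^2+\delta^2\omega}$, and then uses Jensen's inequality $\zeta(s\gamma')\leq(1-s)\zeta(0)+s\zeta(\gamma')$ with $s$ chosen so that $s\gamma'=\gamma^*$. You instead extract the sharper \emph{linear} contraction $1-\delta\gamma$ (rather than $(1-\tfrac{\gamma\delta}{2})^2$) and verify the target inequality by direct substitution of $\gamma^*$. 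Your diagnosis at the end is exactly right: the weaker first-order term $1-\tfrac{\gamma\delta}{2}$ would force $D\geq\tfrac{80}{3}\lambda^2$, which fails for small $\delta$; the paper sidesteps this by retaining the quadratic and exploiting convexity, whereas you sidestep it by sharpening the first-order coefficient. For $\pi_1$ the two proofs are closer: the paper crudely bounds $(1+\gamma\lambda)^2\leq 1+8\gamma$ via $\lambda\leq 2$, while you expand $(1+\gamma\lambda)^2K$ and track the $\lambda$-dependence through to a clean algebraic comparison against $D$. Both treatments of $\pi_0$ are essentially the same crude estimation. Your argument is a bit more explicit and self-contained; the paper's convexity trick for $\pi_2$ is slightly more elegant but requires finding the minimizer of $\zeta$.
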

\begin{proof}
	We adapt a part of the proof of [Theorem 1]\cite{singh2019sparq} to prove Lemma \ref{suppl_lem_coeff_calc}.	
	Consider:
	\begin{align*}
	(1+\alpha_4) (1+\alpha_3) (1+\alpha_2) (1-\omega) & = (1+\frac{\omega}{4})^3 (1-\omega) \\
	& = \left( 1 - \frac{\omega^4}{64} - \frac{11 \omega^3}{64} - \frac{9 \omega^2}{16} - \frac{\omega}{4} \right) \\
	& \leq \left( 1 - \frac{\omega}{4} \right)
	\end{align*}
	This gives us:
	\begin{align*}
	\pi_1(\gamma) \leq \gamma^2\lambda^2\left(1+\frac{2}{\gamma \delta} \right) + (1+\gamma\lambda)^2 \left( 1 - \frac{\omega}{4} \right)
	\end{align*}
	Noting that $\gamma^2 \leq \gamma $ (for $\gamma \leq 1 $ which is true for $\gamma^*$ ) and $\lambda \leq 2$, we have:
	\begin{align*}
	\pi_1(\gamma) \leq \lambda^2\left(\gamma +\frac{2\gamma}{\delta} \right) + (1+8\gamma) \left( 1 - \frac{\omega}{4} \right)
	\end{align*}
	Substituting value of $\gamma^*$ in above, it can be shown that:
	\begin{align*}
	\pi_1(\gamma^*) \leq 1 - \frac{\delta^2\omega}{4(64 \delta + \delta^2 + 16 \lambda^2 + 8 \delta \lambda^2 - 16\delta \omega)}
	\end{align*}
	Now we note that:
	\begin{align*}
	\pi_2 (\gamma) & =  (1-\delta \gamma)^2 \left(1+\frac{\delta \gamma}{2} \right) + \gamma^2 \lambda^2 \left( 1+\frac{4}{\omega} \right) \left(1+\frac{\omega}{4} \right)^2 (1-\omega) 
	\intertext{Noting the fact that for $x = \delta \gamma \leq 1 $, we have $(1 - x)^2 \left(1 + \frac{x}{2}\right) \leq (1-x)\left( 1 - \frac{x}{2}  \right) $,}
	\pi_2 (\gamma) & \leq \left( 1-\frac{\gamma \delta}{2} \right)^2 + \gamma^2 \lambda^2 \left( 1+\frac{4}{\omega} \right) \left(1+\frac{\omega}{4} \right)^2 (1-\omega)\\
	& = \left( 1-\frac{\gamma \delta}{2} \right)^2 + \gamma^2 \lambda^2 \left(3+\frac{3\omega}{4}+\frac{\omega^2}{16} + \frac{4}{\omega} \right) (1-\omega) \\
	& \leq \left( 1-\frac{\gamma \delta}{2} \right)^2 + \gamma^2 \lambda^2 \frac{4}{\omega} \, =: \zeta(\gamma)
	\end{align*}
	Note that $\zeta(\gamma)$ is convex and quadratic in $\gamma$, and attains minima at $\gamma' = \frac{2 \delta \omega}{16 \lambda^2 + \delta^2 \omega}$ with value $\zeta(\gamma') = \frac{16\lambda^2}{16\lambda^2 + \omega \delta^2}$.
	
	By the Jensen's inequality, we note that for any $s \in [0,1]$
	\begin{align*}
	\zeta(s \gamma') \leq (1-s) \zeta(0) + s \zeta(\gamma') = 1 - s \frac{\delta^2 \omega}{16 \lambda^2 + \delta^2 \omega}
	\end{align*}
	For the choice $s = \frac{16\lambda^2 + \omega \delta^2}{64 \delta + \delta^2 + 16 \lambda^2 + 8 \delta \lambda^2 - 16\delta \omega}$, it can be seen that $s \gamma' = \gamma^*$. Thus we get:
	\begin{align*}
	\pi_2(\gamma^*) \leq  \zeta(s \gamma')  & \leq 1 - \frac{\delta^2\omega}{(64 \delta + \delta^2 + 16 \lambda^2 + 8 \delta \lambda^2 - 16\delta \omega)} \\
	& \leq 1 - \frac{\delta^2\omega}{4(64 \delta + \delta^2 + 16 \lambda^2 + 8 \delta \lambda^2 - 16\delta \omega)}
	\end{align*}
	Thus we have:
	\begin{align*}
	\max \{ \pi_1 (\gamma^*) , \pi_2 (\gamma^*) \} & \leq 1 - \frac{\delta^2\omega}{4(64 \delta + \delta^2 + 16 \lambda^2 + 8 \delta \lambda^2 - 16\delta \omega)}.
	\intertext{Using the value of $\gamma^*$ given in the lemma statement, we have $ \frac{\delta^2\omega}{4(64 \delta + \delta^2 + 16 \lambda^2 + 8 \delta \lambda^2 - 16\delta \omega)} = \frac{\delta \gamma^*}{8}$. Define $p : = \frac{\gamma^* \delta}{8}$. 
		Using crude estimates $\delta \leq 1, \omega \geq 0, \lambda \leq2$, we can lower-bound $p$ as $p \geq \frac{\delta^2 \omega}{644}$. Thus we have }
	\max \{ \pi_1 (\gamma^*) , \pi_2 (\gamma^*) \} & \leq 1 - \frac{\delta^2 \omega}{644}.
	\end{align*}
	Now we upper-bound the value of $\pi_0$:
	\begin{align*}
	\pi_0 & := 2 H^2 n G^2 \left( 1 + \frac{\beta^2}{(1-\beta)^2} \right)(1+\alpha_5)(\pi_1(\gamma) + \pi_2(\gamma) )  + (1+\alpha_2)\omega n\frac{c_0}{\eta^{(1-\epsilon)}} \\
	& \qquad + 2 H^2 n G^2 \left( 1 + \frac{\beta^2}{(1-\beta)^2} \right)((1+\alpha_2^{-1}) + (1-\omega)(1+\alpha_3^{-1})(1+\alpha_2) ) \\
	& \leq 4 H^2 n G^2 \left( 1 + \frac{\beta^2}{(1-\beta)^2} \right)(1+\frac{2}{p})(1-p) + (1+\frac{\omega}{4})\omega n\frac{c_0}{\eta^{(1-\epsilon)}} \\
	& \qquad + 2 H^2 n G^2 \left( 1 + \frac{\beta^2}{(1-\beta)^2} \right)((1+\frac{4}{\omega}) + (1-\omega)(1+\frac{4}{\omega})(1+\frac{\omega}{4}) )  \\
	& \leq 4 H^2 n G^2 \left( 1 + \frac{\beta^2}{(1-\beta)^2} \right)\frac{2}{p} + (1+\frac{\omega}{4})\omega n\frac{c_0}{\eta^{(1-\epsilon)}} + 2 H^2 n G^2 \left( 1 + \frac{\beta^2}{(1-\beta)^2}\right)(1+\frac{8}{\omega} )  \\
	\end{align*}
	Where in the first inequality we have used the fact that $\pi_1(\gamma) + \pi_2(\gamma) \leq 2(1-p) $. In the second inequality, we use the fact that $(1+\frac{2}{p})(1-p) \leq \frac{2}{p} $ and $(1-\omega)(1+\frac{4}{\omega})(1+\frac{\omega}{4}) \leq \frac{4}{\omega}$. Noting that for $\omega \leq 1$, we have $(1+\frac{\omega}{4}) \leq 2$ and $\left(1 + \frac{8}{\omega}\right) \leq \frac{16}{\omega}$. Using these, we have:
	\begin{align*}
	\pi_0 & \leq 2 H^2 n G^2 \left( 1 + \frac{\beta^2}{(1-\beta)^2} \right) \left(\frac{4}{p} + \frac{16}{\omega} \right) + 2\omega nH c_t.
	\end{align*}
	This completes the proof of Lemma \ref{suppl_lem_coeff_calc}.
\end{proof}

\begin{lemma} \label{lem_noncvx_seq_bound}
	Consider the sequence \{$e_{I_{(t)}}$\} given by
	\begin{align*}
	e_{I_{(t+1)}} \leq \left( 1-\frac{p}{2} \right)e_{I_{(t)}} + \frac{2nA}{p}\eta^2,
	\end{align*}
	where $ \mathcal{I}_T = \{ I_{(1)}, I_{(2)} , \hdots , I_{(t)}, \hdots  \} \in [T] $ denotes the set of synchronization indices. For a parameter $p >0$, positive constants $A$ and $\eta$ , we have:
	\begin{align*}
	e_{I_{(t)}} \leq \frac{4nA}{p^2}\eta^2
	\end{align*}
\end{lemma}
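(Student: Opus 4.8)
The plan is to treat the stated inequality as a contractive affine recursion and to observe that the target bound $\frac{4nA}{p^2}\eta^2$ is exactly its fixed point, which is then preserved along the entire sequence by a one-line induction. The contraction factor $1-\frac p2$ lies in $(0,1)$ because $p\in(0,1)$, so the map $e\mapsto(1-\frac p2)e+\frac{2nA}{p}\eta^2$ is monotone and non-expansive on the non-negative reals and pushes every orbit toward its fixed point from above.

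First I would compute the fixed point $E^\star$ of this map: setting $E^\star=(1-\frac p2)E^\star+\frac{2nA}{p}\eta^2$ and cancelling gives $\frac p2 E^\star=\frac{2nA}{p}\eta^2$, i.e. $E^\star=\frac{4nA}{p^2}\eta^2$, which is precisely the claimed bound. Isolating $E^\star$ is the whole point, since the bound we must prove coincides with the stationary value of the recursion and hence the algebra of the inductive step becomes a single cancellation.

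Next comes the induction on the synchronization index $t$. For the inductive step I would assume $e_{I_{(t)}}\le E^\star$; substituting into the recursion and using the fixed-point identity just derived,
\begin{align*}
e_{I_{(t+1)}}\le \Big(1-\tfrac p2\Big)E^\star+\tfrac{2nA}{p}\eta^2 = E^\star,
\end{align*}
so the bound propagates. An equivalent non-inductive route is to unroll the recursion and sum the geometric series $\sum_{k\ge 0}(1-\frac p2)^k=\frac 2p$, obtaining $e_{I_{(t)}}\le(1-\frac p2)^{t-1}e_{I_{(1)}}+\frac{2nA}{p}\eta^2\cdot\frac 2p$, whose second summand is again $E^\star$. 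For the base case I would invoke the initialization of Algorithm~\ref{alg_dec_sgd_li}: at the first synchronization index the local iterates and their stored copies agree with the common starting point, so $e_{I_{(1)}}=0\le E^\star$, which anchors the induction.

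The only genuinely delicate ingredient is this base case: one must ensure $e_{I_{(1)}}\le E^\star$ (ideally $e_{I_{(1)}}=0$), which depends on consensus holding at the first synchronization round rather than on the recursion itself — note that the unrolled form leaves a residual $(1-\frac p2)^{t-1}e_{I_{(1)}}$ that is harmless only once the base value is controlled. Everything else — the monotonicity of the map, and the algebraic cancellation $\frac p2\cdot\frac{4nA}{p^2}=\frac{2nA}{p}$ that makes $E^\star$ stationary — is immediate, so I expect no obstacle beyond pinning down the initial term.
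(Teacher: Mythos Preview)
Your proposal is correct and essentially identical to the paper's proof: the paper also uses induction, notes the base case $e_0=0$, and verifies the inductive step via the same one-line computation $(1-\tfrac{p}{2})\tfrac{4nA\eta^2}{p^2}+\tfrac{2nA\eta^2}{p}=\tfrac{4nA\eta^2}{p^2}$. Your additional framing in terms of the fixed point of the affine map and the optional unrolling argument are nice elaborations but not needed beyond what the paper does.
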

\begin{proof}
	The proof uses an induction argument. Note that the base case is satisfied as $e_0 = 0$. Assuming the bound holds for $e_{I_{(t)}}$, for $e_{I_{(t+1)}}$, we have: 
	\begin{align*}
	e_{I_{(t+1)}} & \leq (1-\frac{p}{2}) \frac{4nA \eta^2}{p^2} + \frac{2nA \eta^2}{p} \\
	& = \frac{4nA \eta^2}{p^2} 
	\end{align*}
	Thus $	e_{I_{(t)}} \leq \frac{4nA}{p^2}\eta^2$ for all $I_{(t)} \in \mathcal{I}_T$ from induction argument, which completes the proof.
\end{proof}

\section{Supporting Lemmas for Proof of Lemma \ref{lem_noncvx}} \label{suppl_recursion_lemm_proofs}

As discussed in Appendix \ref{proof_lem_consensus}, the proof for Lemma \ref{lem_noncvx} relies on establishing a recurrence relation between two quantities of interest:
$e_{I_{(t)}}^{(1)} :=\sum_{j=1}^n \mathbb{E} \left\Vert \bar{\bx}^{I_{(t)}} - \bx^{I_{(t)}}_{j} \right\Vert^2$ -- the average deviation of local parameter copies and the global parameter -- and  $e_{I_{(t)}}^{(2)}:=\sum_{j=1}^n \mathbb{E} \left\Vert \hat{\bx}^{I_{(t+1)}} - \bx^{I_{(t)}}_{j} \right\Vert^2$ -- the average deviation of the local parameter and their copies. In this section, we provide a recursion relation for both $e_{I_{(t+1)}}^{(1)}$ and $e_{I_{(t+1)}}^{(2)}$, each in terms of $e_{I_{(t)}}^{(1)}$ and $e_{I_{(t)}}^{(2)}$. These results are stated in Lemma \ref{lem_cvx_e1} and \ref{lem_cvx_e2}, respectively, which we prove below. In order to prove these lemmas we use some techniques from proof of Lemma 1 and Lemma 2 in \cite{singh2019sparq}. \\
 In matrix notation, these quantities are given by:
\begin{align*}
e_{I_{(t+1)}}^{(1)} &= \mathbb{E} \Vert \bX^{I_{(t+1)}} - \Bar{\bX}^{I_{(t+1)}} \Vert_F^2 \\
e_{I_{(t+1)}}^{(2)} &= \mathbb{E} \Vert \bX^{I_{(t+1)}} -\hat{\bX}^{I_{(t+2)}}\Vert_F^2
\end{align*}

\subsection{Proof of Lemma \ref{lem_cvx_e1}}

	Using the update equations of $\bX^{I_{(t+1)}}$ in matrix form given in \eqref{mat_not_algo_1}-\eqref{mat_not_algo_2} in Section \ref{prelim}, we have:
	\begin{align}
	\Vert \bX^{I_{(t+1)}} - \Bar{\bX}^{I_{(t+1)}} \Vert_F^2 & = \Vert \bX^{I_{(t+\frac{1}{2})}} - \Bar{\bX}^{I_{(t+1)}} + \gamma \hat{\bX}^{I_{(t+1)}} (\mathbf{W}-\mathbf{I}) \Vert_F^2 \notag 
	\intertext{Noting that $\Bar{\bX}^{I_{(t+1)}} = \Bar{\bX}^{I_{(t+\frac{1}{2})}} $ (from (\ref{mean_seq_iter})) and $\Bar{\bX}^{I_{(t+\frac{1}{2})}} (\mathbf{W}-\mathbf{I})= 0$ (from (\ref{mean_prop})), we get: }
	\Vert \bX^{I_{(t+1)}} - \Bar{\bX}^{I_{(t+1)}} \Vert_F^2 & = 
	\Vert (\bX^{I_{(t+\frac{1}{2})}} - \Bar{\bX}^{I_{(t+\frac{1}{2})}})((1-\gamma)\mathbf{I} + \gamma \mathbf{W})+ \gamma (\hat{\bX}^{I_{(t+1)}}-\bX^{I_{(t+\frac{1}{2})}}) (\mathbf{W}-\mathbf{I}) \Vert_F^2 \notag
	\end{align}
	For any positive constant\footnote{\label{foot:frob_bound}For any two matrices $\mathbf{A},\mathbf{B} \in \mathbb{R}^{p \times q} $ and for any $\alpha >0$ , we have the following relationship for the Frobenius norm:
		\begin{align*}
		\verts{\mathbf{A} + \mathbf{B} }_F^2 \leq (1 + \alpha) \verts{\mathbf{A}}_F^2 + (1 + \alpha^{-1})\verts{\mathbf{B}}_F^2		
		\end{align*}} $\alpha_1$, we have:	
	\begin{align}
	\Vert \bX^{I_{(t+1)}} - \Bar{\bX}^{I_{(t+1)}} \Vert_F^2 & \leq (1+\alpha_1)\Vert (\bX^{I_{(t+\frac{1}{2})}} - \Bar{\bX}^{I_{(t+\frac{1}{2})}})((1-\gamma)\mathbf{I} + \gamma \mathbf{W})\Vert_F^2 \notag \\
	& \qquad \qquad \qquad  + (1+\alpha_1^{-1}) \Vert \gamma (\hat{\bX}^{I_{(t+1)}}-\bX^{I_{(t+\frac{1}{2})}}) (\mathbf{W}-\mathbf{I}) \Vert_F^2 \notag \\
	\intertext{Using $\Vert \mathbf{A} \mathbf{B} \Vert_F \leq \Vert \mathbf{A} \Vert_F \Vert \mathbf{B} \Vert_2 $ for any matrices $\mathbf{A},\mathbf{B}$, we have: }
	\Vert \bX^{I_{(t+1)}} - \Bar{\bX}^{I_{(t+1)}} \Vert_F^2 &  \leq (1+\alpha_1)\Vert (\bX^{I_{(t+\frac{1}{2})}} - \Bar{\bX}^{I_{(t+\frac{1}{2})}})((1-\gamma)\mathbf{I} + \gamma \mathbf{W})\Vert_F^2 \notag \\
	& \qquad \qquad \qquad + (1+\alpha_1^{-1}) \gamma^2 \Vert  (\hat{\bX}^{I_{(t+1)}}-\bX^{I_{(t+\frac{1}{2})}})\Vert_F^2 .\Vert(\mathbf{W}-\mathbf{I})\Vert_2^2 \label{suppl_cvx_li_temp_eqn}
	\end{align}
	To bound the first term in (\ref{suppl_cvx_li_temp_eqn}), we use the triangle inequality for Frobenius norm, giving us:
	\begin{align*}
	\Vert (\bX^{I_{(t+\frac{1}{2})}} - \Bar{\bX}^{I_{(t+\frac{1}{2})}})((1-\gamma)\mathbf{I} + \gamma \mathbf{W})\Vert_F & \leq (1-\gamma)\Vert \bX^{I_{(t+\frac{1}{2})}} - \Bar{\bX}^{I_{(t+\frac{1}{2})}} \Vert_F+ \gamma \Vert (\bX^{I_{(t+\frac{1}{2})}} - \Bar{\bX}^{I_{(t+\frac{1}{2})}} )\mathbf{W}\Vert_F
	\end{align*}
	{Since $\left(\bX^{I_{(t+\frac{1}{2})}} - \Bar{\bX}^{I_{(t+\frac{1}{2})}}\right)\frac{\mathbf{1}\mathbf{1}^T}{n} = \bzero$ (from \eqref{mean_prop}), adding this inside the last term above, we get: }
	\begin{align*}
	\Vert (\bX^{I_{(t+\frac{1}{2})}} - \Bar{\bX}^{I_{(t+\frac{1}{2})}})((1-\gamma)\mathbf{I} + \gamma \mathbf{W})\Vert_F & \leq (1-\gamma)\Vert \bX^{I_{(t+\frac{1}{2})}} - \Bar{\bX}^{I_{(t+\frac{1}{2})}} \Vert_F \\
	& + \gamma \left\Vert (\bX^{I_{(t+\frac{1}{2})}} - \Bar{\bX}^{I_{(t+\frac{1}{2})}})\left (\mathbf{W} - \frac{\mathbf{1}\mathbf{1}^T}{n} \right) \right\Vert_F 
	\end{align*}
	Using $\Vert \mathbf{A} \mathbf{B} \Vert_F \leq \Vert \mathbf{A} \Vert_F \Vert \mathbf{B} \Vert_2 $ and then using (\ref{bound_W_mat}) from Fact 3 with $k=1$, we can simplify the above to:
	\begin{align*}
	\Vert (\bX^{I_{(t+\frac{1}{2})}} - \Bar{\bX}^{I_{(t+\frac{1}{2})}})((1-\gamma)\mathbf{I} + \gamma \mathbf{W})\Vert_F \leq (1-\gamma \delta) \Vert \bX^{I_{(t+\frac{1}{2})}} - \Bar{\bX}^{I_{(t+\frac{1}{2})}} \Vert_F
	\end{align*}
	Substituting the above in (\ref{suppl_cvx_li_temp_eqn}) and using $\lambda = \text{max}_i \{ 1- \lambda_i(\mathbf{W}) \} \Rightarrow \Vert \mathbf{W}-\mathbf{I} \Vert_2^2 \leq \lambda^2 $, we get: 
	\begin{align*}
	\Vert \bX^{I_{(t+1)}} - \Bar{\bX}^{I_{(t+1)}} \Vert_F^2 \leq (1+\alpha_1)(1-\gamma \delta)^2  \Vert \bX^{I_{(t+\frac{1}{2})}} - \Bar{\bX}^{I_{(t+\frac{1}{2})}} \Vert_F^2 + (1+\alpha_1^{-1}) \gamma^2 \lambda^2 \Vert  {\bX}^{I_{(t+\frac{1}{2})}}-\hat{\bX}^{I_{(t+1)}}\Vert_F^2
	\end{align*}
	Taking expectation w.r.t.\ the entire process, we have:
	\begin{align*}
	\mathbb{E} \Vert \bX^{I_{(t+1)}} - \Bar{\bX}^{I_{(t+1)}} \Vert_F^2 & \leq (1+\alpha_1)(1-\gamma \delta)^2 \mathbb{E} \Vert \bX^{I_{(t+\frac{1}{2})}} - \Bar{\bX}^{I_{(t+\frac{1}{2})}} \Vert_F^2 + (1+\alpha_1^{-1}) \gamma^2 \lambda^2 \mathbb{E} \Vert  {\bX}^{I_{(t+\frac{1}{2})}}-\hat{\bX}^{I_{(t+1)}}\Vert_F^2
	\end{align*}
	Define $R_1=(1+\alpha_1)(1-\gamma \delta)^2 , R_2=(1+\alpha_1^{-1}) \gamma^2 \lambda^2 $.
	Using the update steps of algorithm given in equations \eqref{mat_not_algo_4} and \eqref{mean_seq_iter} (given in Section \ref{prelim}), we have:
	\begin{align*}
	\mathbb{E} \Vert \bX^{I_{(t+1)}} - \Bar{\bX}^{I_{(t+1)}} \Vert_F^2 & \leq R_1 \mathbb{E} \left\Vert  \Bar{\bX}^{I_{(t)}} - \bX^{I_{(t)}} -  \sum_{t' = I_{(t)}}^{I_{(t+1)}-1} \eta ( \beta \bV^{(t')} +  \boldsymbol{\nabla F}(\bX^{(t')}, \boldsymbol{\xi}^{(t')} ))\left(   \frac{\mathbf{1}\mathbf{1}^T}{n} - I \right) \right\Vert_F^2 \notag \\
	& \qquad + R_2 \mathbb{E} \left\Vert   \hat{\bX}^{I_{(t+1)}} - \bX^{I_{(t)}} + \sum_{t' = I_{(t)}}^{I_{(t+1)}-1} \eta ( \beta \bV^{(t')} + \boldsymbol{\nabla F}(\bX^{(t')}, \boldsymbol{\xi}^{(t')} )) \right\Vert_F^2
	\end{align*} 	
	Thus, for any $\alpha_5 > 0 $ (using Footnote \ref{foot:frob_bound}), we have:
	\begin{align*}
	\mathbb{E} \Vert \bX^{I_{(t+1)}} - \Bar{\bX}^{I_{(t+1)}} \Vert_F^2 & \leq R_1 (1+\alpha_5^{-1}) \mathbb{E} \left\Vert  \Bar{\bX}^{I_{(t)}} - \bX^{I_{(t)}} \right\Vert^2 + R_2 (1+\alpha_5^{-1}) \mathbb{E} \left\Vert \hat{\bX}^{I_{(t+1)}} - \bX^{I_{(t)}} \right\Vert^2 \notag \\
	 & \qquad + R_1(1+\alpha_5)  \mathbb{E} \left\Vert \sum_{t' = I_{(t)}}^{I_{(t+1)}-1} \eta ( \beta \bV^{(t')} +   \boldsymbol{\nabla F}(\bX^{(t')}, \boldsymbol{\xi}^{(t')} ))\left(   \frac{{\bf 1}{\bf 1}^T}{n} - I \right) \right\Vert_F^2 \notag \\
	 & \qquad + R_2 (1+\alpha_5) \mathbb{E} \left\Vert \sum_{t' = I_{(t)}}^{I_{(t+1)}-1} \eta ( \beta \bV^{(t')} +   \boldsymbol{\nabla F}(\bX^{(t')}, \boldsymbol{\xi}^{(t')} )) \right\Vert_F^2
	\end{align*}
	Using $\verts{\mathbf{AB}}_F \leq \verts{\mathbf{A}}_F \verts{\mathbf{B}}_2$ to split the third term, and then using the bound $\left\| \frac{\mathbf{1}\mathbf{1}^T}{n} - \mathbf{I} \right\|_2 =1$ (which is shown in Claim \ref{J-I_eigenvalue} in Appendix \ref{app:details-stronger-assump}), and further using the bound in \eqref{bound_gap_grad} for the third and the fourth terms, the above can be rewritten as:  
	\begin{align*}
	\mathbb{E} \Vert \bX^{I_{(t+1)}} - \Bar{\bX}^{I_{(t+1)}} \Vert_F^2 & \leq R_1 (1+\alpha_5^{-1}) \mathbb{E} \left\Vert  \Bar{\bX}^{I_{(t)}} - \bX^{I_{(t)}} \right\Vert^2 + R_2 (1+\alpha_5^{-1}) \mathbb{E} \left\Vert \hat{\bX}^{I_{(t+1)}} - \bX^{I_{(t)}} \right\Vert^2 \notag \\
	& \qquad + 2\eta^2 H^2 n G^2 \left( 1 + \frac{\beta^2}{(1-\beta)^2} \right)(1+\alpha_5)(R_1+R_2)
	\end{align*}
	Defining $Q_1 = 2 H^2 n G^2 \left( 1 + \frac{\beta^2}{(1-\beta)^2} \right)(1+\alpha_5)(R_1+R_2) $ completes the proof of Lemma \ref{lem_cvx_e1}.

\subsection{Proof of Lemma \ref{lem_cvx_e2}}
	Since $\hat{\bX}^{I_{(t+2)}} = \hat{\bX}^{I_{(t+1)}} + \C ( (\bX^{I_{(t+\frac{3}{2})}} - \hat{\bX}^{I_{(t+1)}}) \mathbf{P}^{(I_{(t+2)}-1)} )$ (from \eqref{mat_not_algo_3} in Section \ref{prelim}),
	we have:
	\begin{align*}
	& e_{I_{(t+1)}}^{(2)} = \mathbb{E} \Vert \bX^{I_{(t+1)}} - \hat{\bX}^{I_{(t+2)}}\Vert_F^2  = \mathbb{E} \Vert \bX^{I_{(t+1)}} - \hat{\bX}^{I_{(t+1)}} - \C ( (\bX^{I_{(t+\frac{3}{2})}} - \hat{\bX}^{I_{(t+1)}}) \mathbf{P}^{(I_{(t+2)}-1)} )\Vert_F^2 \\
	&  \hspace{2.4cm} = \mathbb{E} \Vert \bX^{I_{(t+\frac{3}{2})}} - \hat{\bX}^{I_{(t+1)}}+ \bX^{I_{(t+1)}} - \bX^{I_{(t+\frac{3}{2})}} - \C ( (\bX^{I_{(t+\frac{3}{2})}} - \hat{\bX}^{I_{(t+1)}}) \mathbf{P}^{(I_{(t+2)}-1)} )\Vert_F^2 
	\end{align*}
	For any $\alpha_2 >0$, using result from Footnote \ref{foot:frob_bound}, we have:
	\begin{align}
	\mathbb{E} \Vert \bX^{I_{(t+1)}} - \hat{\bX}^{I_{(t+2)}}\Vert_F^2 & \leq (1+\alpha_2) \mathbb{E} \Vert \bX^{I_{(t+\frac{3}{2})}} - \hat{\bX}^{I_{(t+1)}} - \C ( (\bX^{I_{(t+\frac{3}{2})}}- \hat{\bX}^{I_{(t+1)}}) \mathbf{P}^{(I_{(t+2)}-1)} )\Vert_F^2 \notag \\
	&\hspace{1cm} + (1+\alpha_2^{-1})\mathbb{E}\Vert \bX^{I_{(t+1)}} - \bX^{I_{(t+\frac{3}{2})}} \Vert_F^2 \label{eq:lem_copy_param_bound}
	\end{align} 
	The last term in R.H.S. of \eqref{eq:lem_copy_param_bound} can be bounded by using the update step \eqref{mat_not_algo_4} and then using \eqref{bound_gap_grad} from Fact \ref{fact-bound}, which gives:
	\begin{align}
	\mathbb{E}\Vert \bX^{I_{(t+1)}} - \bX^{I_{(t+\frac{3}{2})}} \Vert_F^2 \leq 2\eta ^2 H^2 n G^2 \left( 1 + \frac{\beta^2}{(1-\beta)^2} \right) \label{eq:lem_copy_bound}
	\end{align}	
	Using the bound \eqref{eq:lem_copy_bound} in \eqref{eq:lem_copy_param_bound}, we get:
	\begin{align*}
	\mathbb{E} \Vert \bX^{I_{(t+1)}} - \hat{\bX}^{I_{(t+2)}}\Vert_F^2 & \leq (1+\alpha_2) \mathbb{E} \Vert \bX^{I_{(t+\frac{3}{2})}} - \hat{\bX}^{I_{(t+1)}} - \C ( (\bX^{I_{(t+\frac{3}{2})}} - \hat{\bX}^{I_{(t+1)}}) \mathbf{P}^{(I_{(t+2)}-1)} )\Vert_F^2 \\
	& \qquad + (1+\alpha_2^{-1}) 2\eta ^2 H^2 n G^2 \left( 1 + \frac{\beta^2}{(1-\beta)^2} \right)
	\end{align*}
	Note that both $\mathbf{P}^{(I_{(t+2)}-1)} $ and $\mathbf{I} - \mathbf{P}^{(I_{(t+2)}-1)} $ are diagonal matrices, with disjoint support on the diagonal entries, which implies that $\mathbb{E} \Vert \bX^{I_{(t+\frac{3}{2})}} - \hat{\bX}^{I_{(t+1)}}\|_F^2 = \mathbb{E} \Vert (\bX^{I_{(t+\frac{3}{2})}} - \hat{\bX}^{I_{(t+1)}})\mathbf{P}^{(I_{(t+2)}-1)}\|_F^2 + \mathbb{E} \Vert (\bX^{I_{(t+\frac{3}{2})}} - \hat{\bX}^{I_{(t+1)}})({\bf I} - \mathbf{P}^{(I_{(t+2)}-1)})\|_F^2$. We get:
	\begin{align*}
	\mathbb{E} \Vert \bX^{I_{(t+1)}} - & \hat{\bX}^{I_{(t+2)}}\Vert_F^2  \leq  (1+\alpha_2)  \mathbb{E} \Vert (\bX^{I_{(t+\frac{3}{2})}} - \hat{\bX}^{I_{(t+1)}})\mathbf{P}^{(I_{(t+2)}-1)} - \C ( (\bX^{I_{(t+\frac{3}{2})}} - \hat{\bX}^{I_{(t+1)}}) \mathbf{P}^{(I_{(t+2)}-1)} )\Vert_F^2  \\
	&   + (1+\alpha_2) \mathbb{E}\Vert (\bX^{I_{(t+\frac{3}{2})}} - \hat{\bX}^{I_{(t+1)}})(\mathbf{I} - \mathbf{P}^{(I_{(t+2)}-1)})\Vert_F^2  + 2 (1+\alpha_2^{-1}) \eta ^2 H^2 n G^2 \left( 1 + \frac{\beta^2}{(1-\beta)^2} \right)
	\end{align*}
	Using the compression property \eqref{eq:compression} of operator $\C$, we have:
	\begin{align*}
	& \mathbb{E} \Vert \bX^{I_{(t+1)}} - \hat{\bX}^{I_{(t+2)}}\Vert_F^2 \leq (1+\alpha_2) (1-\omega) \mathbb{E}  \Vert (\bX^{I_{(t+\frac{3}{2})}} - \hat{\bX}^{I_{(t+1)}})\mathbf{P}^{(I_{(t+2)}-1)} \Vert_F^2 \\
	& \hspace{1cm} +  (1+\alpha_2) \mathbb{E} \Vert (\bX^{I_{(t+\frac{3}{2})}} - \hat{\bX}^{I_{(t+1)}})(\mathbf{I} - \mathbf{P}^{(I_{(t+2)}-1)})\Vert_F^2  + 2 (1+\alpha_2^{-1}) \eta ^2 H^2 n G^2 \left( 1 + \frac{\beta^2}{(1-\beta)^2} \right)
	\end{align*}
	Adding and subtracting $ (1+\alpha_2)(1-\omega)  \mathbb{E}\Vert (\bX^{I_{(t+\frac{3}{2})}} - \hat{\bX}^{I_{(t+1)}})(\mathbf{I} - \mathbf{P}^{(I_{(t+2)}-1)})\Vert_F^2$, we get:
	\begin{align*}
	\mathbb{E} \Vert \bX^{I_{(t+1)}} - \hat{\bX}^{I_{(t+2)}}\Vert_F^2 & \leq (1+\alpha_2) (1-\omega)  \mathbb{E} \Vert \bX^{I_{(t+\frac{3}{2})}} - \hat{\bX}^{I_{(t+1)}} \Vert_F^2  + (1+\alpha_2^{-1})2\eta ^2 H^2 n G^2 \left( 1 + \frac{\beta^2}{(1-\beta)^2} \right)  \\
	& \qquad \qquad +(1+\alpha_2) \omega  \mathbb{E} \Vert (\bX^{I_{(t+\frac{3}{2})}} - \hat{\bX}^{I_{(t+1)}})(\mathbf{I} - \mathbf{P}^{(I_{(t+2)}-1)})\Vert_F^2 
	\end{align*}
	To bound the third term in the RHS above, note that $\hat{\bX}^{I_{(t+2)}-1} = \hat{\bX}^{I_{(t+1)}}$, because $\hat{\bX}$ does not change in between the synchronization indices,
	which implies that $\mathbb{E} \Vert (\bX^{I_{(t+\frac{3}{2})}} - \hat{\bX}^{I_{(t+1)}})(\mathbf{I} - \mathbf{P}^{(I_{(t+2)}-1)})\Vert_F^2 = \mathbb{E} \Vert (\bX^{I_{(t+\frac{3}{2})}} - \hat{\bX}^{I_{(t+2)}-1})(\mathbf{I} - \mathbf{P}^{(I_{(t+2)}-1)})\Vert_F^2$, which we can upper-bound using \eqref{bound_trig} by $nc_{I_{(t+2)}-1}\eta^2$. 
	Using $c_t \leq \frac{c_0}{\eta^{(1-\epsilon)}}$ for all $t$, we get: 
	\begin{align} \label{cvx_lemm_interim2}
	\mathbb{E} \Vert \bX^{I_{(t+1)}} - \hat{\bX}^{I_{(t+2)}}\Vert_F^2  & \leq (1+\alpha_2) (1-\omega) \mathbb{E} \Vert \bX^{I_{(t+\frac{3}{2})}} - \hat{\bX}^{I_{(t+1)}} \Vert_F^2 + (1+\alpha_2)\omega nc_0 \eta^{(1+\epsilon)} \notag \\
	& \qquad + (1+\alpha_2^{-1}) 2\eta ^2 H^2 n G^2 \left( 1 + \frac{\beta^2}{(1-\beta)^2} \right)
	\end{align}
	We now bound the first term in the R.H.S. of \eqref{cvx_lemm_interim2}. From the update equation \eqref{mat_not_algo_4}, we have:  
	\begin{align}
		\mathbb{E} \Vert \bX^{I_{(t+\frac{3}{2})}} - & \hat{\bX}^{I_{(t+1)}} \Vert_F^2  =  \mathbb{E} \left\Vert \bX^{I_{(t+1)}}  - \sum_{t' = I_{(t+1)}}^{I_{(t+2)}-1}\eta (\beta \bV^{(t')}+  \boldsymbol{\nabla F} (\bX^{(t')}, \boldsymbol{\xi}^{(t')})) -  \hat{\bX}^{I_{(t+1)}} \right\Vert_F^2 \notag \\
		& \leq (1+\alpha_3) \mathbb{E} \Vert \bX^{I_{(t+1)}}  -  \hat{\bX}^{I_{(t+1)}} \Vert_F^2 + (1+\alpha_3^{-1}) 2\eta ^2H^2 n G^2 \left( 1 + \frac{\beta^2}{(1-\beta)^2} \right) \label{eq:lemm_bound_copy_param_3}
	\end{align}
	where for the last inequality, $\alpha_3$ is any positive constant (from Footnote \ref{foot:frob_bound}) and we have used \eqref{bound_gap_grad} from Fact \ref{fact-bound}. Substituting the bound \eqref{eq:lemm_bound_copy_param_3} in \eqref{cvx_lemm_interim2}, we have:
	\begin{align}
	\mathbb{E}  \Vert \bX^{I_{(t+1)}} -  \hat{\bX}^{I_{(t+2)}}\Vert_F^2  & \leq (1+\alpha_3) (1+\alpha_2) (1-\omega) \mathbb{E} \Vert \bX^{I_{(t+1)}}  -  \hat{\bX}^{I_{(t+1)}} \Vert_F^2 \notag \\
	& \quad +  (1+\alpha_3^{-1}) (1+\alpha_2) (1-\omega) 2\eta ^2H^2 n G^2 \left( 1 + \frac{\beta^2}{(1-\beta)^2} \right) \notag \\
	& \quad + (1+\alpha_2)\omega n c_0 \eta^{(1+\epsilon)} + (1+\alpha_2^{-1}) 2\eta ^2 H^2 n G^2 \left( 1 + \frac{\beta^2}{(1-\beta)^2} \right) \label{eq:copy_param_bound_3}
	\end{align}
	We now bound the first term in R.H.S. of \eqref{eq:copy_param_bound_3}. From the update equation \eqref{mat_not_algo_2} and using the fact that $ \Bar{\bX}^{I_{(t+\frac{1}{2})}} (\mathbf{W}-\mathbf{I})= 0 $, we have:
	\begin{align}
		&\mathbb{E} \Vert \bX^{I_{(t+1)}}  -  \hat{\bX}^{I_{(t+1)}} \Vert_F^2  = \mathbb{E} \Vert (\bX^{I_{(t+\frac{1}{2})}} - \hat{\bX}^{I_{(t+1)}})((1+\gamma)\mathbf{I} - \gamma \mathbf{W})+ \gamma (\bX^{I_{(t+\frac{1}{2})}}- \Bar{\bX}^{I_{(t+\frac{1}{2})}})(\mathbf{W}-\mathbf{I})\Vert_F^2 \notag \\
	& \quad \leq (1+\alpha_4) (1+\gamma \lambda)^2 \mathbb{E} \Vert \bX^{I_{(t+\frac{1}{2})}} - \hat{\bX}^{I_{(t+1)}}\Vert_F^2 + \gamma^2 \lambda^2 (1+\alpha_4^{-1}) \mathbb{E} \Vert \bX^{I_{(t+\frac{1}{2})}}- \Bar{\bX}^{I_{(t+\frac{1}{2})}} \Vert_F^2 \label{eq:param_copy_temp2}
	\end{align}
	where $\alpha_4$ is any positive constant (from Footnote \ref{foot:frob_bound}) and the fact that $\Vert (1+\gamma)\mathbf{I} - \gamma \mathbf{W} \Vert_2 = \Vert I + \gamma (\mathbf{I} - \mathbf{W})\Vert_2 = 1 +  \gamma \Vert \mathbf{I} - \mathbf{W} \Vert_2  = 1 + \gamma \lambda $ (by definition of $\lambda = \text{max}_i \{ 1- \lambda_i(\mathbf{W}) \}$) and $\verts{\mathbf{I} - \mathbf{W}}_2 = \lambda$  along with $\verts{\mathbf{AB}}_F \leq \verts{\mathbf{A}}_F\verts{\mathbf{B}}_2$. 
	Using the bound from \eqref{eq:param_copy_temp2} in \eqref{eq:copy_param_bound_3}, we get:
	\begin{align*}
	\mathbb{E} \Vert \bX^{I_{(t+1)}} - \hat{\bX}^{I_{(t+2)}}\Vert_F^2	& \leq (1+\gamma \lambda )^2 (1+\alpha_4) (1+\alpha_3) (1+\alpha_2) (1-\omega)\mathbb{E} \Vert \bX^{I_{(t+\frac{1}{2})}} - \hat{\bX}^{I_{(t+1)}}\Vert_F^2 \notag \\
	& \qquad + \gamma^2 \lambda^2 (1+\alpha_4^{-1}) (1+\alpha_3) (1+\alpha_2) (1-\omega) \mathbb{E} \Vert \bX^{I_{(t+\frac{1}{2})}}- \Bar{\bX}^{I_{(t+\frac{1}{2})}} \Vert_F^2 \notag \\
	& \qquad +2\left(  ( 1+\alpha_2^{-1}) + (1+\alpha_3^{-1}) (1+\alpha_2) (1-\omega) \right)\eta ^2 H^2 n G^2 \left( 1 + \frac{\beta^2}{(1-\beta)^2} \right) \\
	& \qquad  + (1+\alpha_2)\omega nc_0 \eta^{(1+\epsilon)}
	\end{align*}
	Define $R_3 = (1+\gamma \lambda )^2 (1+\alpha_4) (1+\alpha_3) (1+\alpha_2) (1-\omega)$ , $R_4 = \gamma^2 \lambda^2 (1+\alpha_4^{-1}) (1+\alpha_3) (1+\alpha_2) (1-\omega) $ and $R_5 = 2\left(  ( 1+\alpha_2^{-1}) + (1+\alpha_3^{-1}) (1+\alpha_2) (1-\omega) \right) H^2 n G^2 \left( 1 + \frac{\beta^2}{(1-\beta)^2} \right) + (1+\alpha_2)\omega n \frac{c_0}{\eta^{(1-\epsilon)} }  $, then the above can be rewritten as :
	\begin{align*}
	\mathbb{E} \Vert \bX^{I_{(t+1)}} - \hat{\bX}^{I_{(t+2)}}\Vert_F^2	& \leq R_3 \mathbb{E} \Vert \bX^{I_{(t+\frac{1}{2})}} - \hat{\bX}^{I_{(t+1)}}\Vert_F^2  + R_4 \mathbb{E} \Vert \bX^{I_{(t+\frac{1}{2})}}- \Bar{\bX}^{I_{(t+\frac{1}{2})}} \Vert_F^2 +  R_5\eta ^2
	\end{align*}
	Using the update steps of algorithm given in equations \eqref{mat_not_algo_4} and \eqref{mean_seq_iter} (given in Section \ref{prelim}):
	\begin{align*}
	\mathbb{E} \Vert \bX^{I_{(t+1)}} - & \hat{\bX}^{I_{(t+2)}}\Vert_F^2 \leq R_3 \mathbb{E} \left\Vert   \hat{\bX}^{I_{(t+1)}} - \bX^{I_{(t)}} + \sum_{t' = I_{(t)}}^{I_{(t+1)}-1} \eta ( \beta \bV^{(t')} +    \boldsymbol{\nabla F}(\bX^{(t')}, \boldsymbol{\xi}^{(t')} )) \right\Vert_F^2 \notag \\
	& + R_4 \mathbb{E} \left\Vert  \Bar{\bX}^{I_{(t)}} - \bX^{I_{(t)}} -  \sum_{t' = I_{(t)}}^{I_{(t+1)}-1}\eta ( \beta \bV^{(t')} +   \boldsymbol{\nabla F}(\bX^{(t')}, \boldsymbol{\xi}^{(t')} ))\left(   \frac{{\bf 1}{\bf 1}^T}{n} - \mathbf{I} \right) \right\Vert_F^2  +  R_5\eta ^2
	\end{align*}
	For the same $\alpha_5>0$ (from result in Footnote \ref{foot:frob_bound}) used in proof of Lemma \ref{lem_cvx_e1}, we get:
	\begin{align*}
	\mathbb{E} \Vert \bX^{I_{(t+1)}} - \hat{\bX}^{I_{(t+2)}}\Vert_F^2	& \leq R_3 (1+\alpha_5^{-1}) \mathbb{E} \left\Vert   \hat{\bX}^{I_{(t+1)}} - \bX^{I_{(t)}} \right\Vert^2 + R_4 (1+\alpha_5^{-1}) \mathbb{E} \left\Vert  \Bar{\bX}^{I_{(t)}} - \bX^{I_{(t)}} \right\Vert^2 \notag \\
	& \qquad  + R_4 (1+\alpha_5) \mathbb{E}\left\Vert \sum_{t' = I_{(t)}}^{I_{(t+1)}-1} \eta ( \beta \bV^{(t')} +   \boldsymbol{\nabla F}(\bX^{(t')}, \boldsymbol{\xi}^{(t')} ))\left(   \frac{{\bf 1}{\bf 1}^T}{n} - \mathbf{I} \right) \right\Vert_F^2  \notag \\
	& \qquad + R_3 (1+\alpha_5)\mathbb{E} \left\Vert \sum_{t' = I_{(t)}}^{I_{(t+1)}-1}\eta ( \beta \bV^{(t')} +   \boldsymbol{\nabla F}(\bX^{(t')}, \boldsymbol{\xi}^{(t')} )) \right\Vert_F^2  +  R_5\eta ^2
	\end{align*}	
	Using $\verts{\mathbf{AB}}_F \leq \verts{\mathbf{A}}_F \verts{\mathbf{B}}_2$ to split the third term and then using $\left\| \frac{{\bf 1}{\bf 1}^T}{n} - \mathbf{I} \right\| \leq 1$ (from Claim~\ref{J-I_eigenvalue} in supplementary material), and further using the bound in (\ref{bound_gap_grad}) for the third and fourth term, the above can be rewritten as:  
	\begin{align*}
	\mathbb{E} \Vert \bX^{I_{(t+1)}} - \Bar{\bX}^{I_{(t+2)}} \Vert_F^2 & \leq R_3 (1+\alpha_5^{-1}) \mathbb{E} \left\Vert \hat{\bX}^{I_{(t+1)}} - \bX^{I_{(t)}} \right\Vert^2 + R_4 (1+\alpha_5^{-1}) \mathbb{E} \left\Vert  \Bar{\bX}^{I_{(t)}} - \bX^{I_{(t)}} \right\Vert^2 \notag \\
	& \qquad + 2\eta^2 H^2 n G^2 \left( 1 + \frac{\beta^2}{(1-\beta)^2} \right)(1+\alpha_5)(R_3+R_4) + R_5\eta ^2
	\end{align*}
	Defining $Q_2 = 2 H^2 n G^2 \left( 1 + \frac{\beta^2}{(1-\beta)^2} \right)(1+\alpha_5)(R_3+R_4) + R_5 $ completes the proof of Lemma~\ref{lem_cvx_e2}.	
\section{Memory-Efficient Version of SQuARM-SGD} \label{suppl_mem_algo}

In this section, we provide our memory efficient version of SQuARM-SGD proposed in the main paper in Algorithm \ref{alg_dec_sgd_li}. 

\begin{algorithm}[H]
	\caption{ Memory-Efficient SQuARM-SGD}
	{\bf Parameters:} $G = ([n],E)$, $W$
	\begin{algorithmic}[1]
		\STATE {\bf Initialize:} For every $i\in[n]$, set arbitrary $\bx_i^{(0)} \in \mathbb{R}^d$, $\hat{\bx}_i^{(0)} := \bzero$, $\bs_i^{(0)} := \bzero$, $\bv_i^{(-1)} := \mathbf{0}$. Fix the momentum coefficient $\beta$, consensus step-size $\gamma$, learning rate $\eta$, triggering thresholds $\{ c_t \}_{t=0}^T$, and synchronization set $\mathcal{I}_T$.
		\FOR{$t=0$ {\bfseries to} $T-1$ in parallel for all workers $i \in [n]$ } 
		\STATE Sample $\xi_i^{(t)}$, stochastic gradient $\bg_i^{(t)}:= \nabla F_i(\bx_i^{(t)}, \xi_i^{(t)})$
		\STATE $\bv_i^{(t)} = \beta  \bv_i^{(t-1)} + \bg_i^{(t)} $ 
		\STATE $\bx_i^{(t+\frac{1}{2})} := \bx_i^{(t)} - \eta ( \beta \bv_i^{(t)} + \bg_i^{(t)}) $
		\IF{$(t+1) \in I_T $ }
		\FOR{neighbors $j \in \mathcal{N}_i $  }
		\IF{ $\Vert \bx_i^{(t+\frac{1}{2})} - \hat{\bx}_i^{(t)} \Vert_2^2 > {c_t \eta^2} $}
		\STATE Compute $\mathbf{q}_i^{(t)} := \C(\bx_i^{(t+\frac{1}{2})} - \hat{\bx}_i^{(t)} )$
		\STATE Send $\mathbf{q}_i^{(t)}$ to worker $j$ and receive $\mathbf{q}_j^{(t)}$
		\ELSE 
		\STATE Assign $\mathbf{q}_i^{(t)} := 0$
		\STATE Send $\mathbf{q}_i^{(t)}$ to worker $j$ and receive $\mathbf{q}_j^{(t)}$
		\ENDIF
		\ENDFOR
		\STATE $\hat{\bx}_i^{(t+1)} := \mathbf{q}_i^{(t)} + \hat{\bx}_j^{(t)} $
		\STATE $\bs_i^{(t+1)} := \bs_i^{(t)} + \sum \limits_{j=1}^{n} w_{ij}\mathbf{q}_j^{(t)} $
		\STATE $\bx_i^{(t+1)} = \bx_i^{(t+\frac{1}{2})} + \gamma  \left(\hat{\bs}_i^{(t+1)} - \hat{\bx}_i^{(t+1)} \right) $
		\ELSE
		\STATE $\hat{\bx}_i^{(t+1)} = \hat{\bx}_i^{(t)}$ ,  $\bx_i^{(t+1)} = \bx_i^{(t+\frac{1}{2})}$, $\bs_i^{(t+1)} = \bs_i^{(t)} $ 
		\ENDIF
		\ENDFOR
	\end{algorithmic}
\end{algorithm}
The parameter $\bs_i^{(t)}$ for $i \in [n]$ stores the weighted sum of all neighbor copies which is then used in the consensus step. Thus, the requirement for storing copies of all neighbors at a node as in algorithm given in main paper is relaxed.

\end{document}


%

%

\onecolumn
\aistatstitle{Instructions for Paper Submissions to AISTATS 2021: \\
Supplementary Materials}

\section{FORMATTING INSTRUCTIONS}

To prepare a supplementary pdf file, we ask the authors to use \texttt{aistats2021.sty} as a style file and to follow the same formatting instructions as in the main paper.
The only difference is that the supplementary material must be in a \emph{single-column} format.
You can use \texttt{supplement.tex} in our starter pack as a starting point, or append the supplementary content to the main paper and split the final PDF into two separate files.

Note that reviewers are under no obligation to examine your supplementary material.

\section{MISSING PROOFS}

The supplementary materials may contain detailed proofs of the results that are missing in the main paper.

\subsection{Proof of Lemma 3}

\textit{In this section, we present the detailed proof of Lemma 3 and then [ ... ]}

\section{ADDITIONAL EXPERIMENTS}

If you have additional experimental results, you may include them in the supplementary materials.

\subsection{The Effect of Regularization Parameter}

\textit{Our algorithm depends on the regularization parameter $\lambda$. Figure 1 below illustrates the effect of this parameter on the performance of our algorithm. As we can see, [ ... ]}

\vfill